\newcommand{\Scal}{\mathcal{S}}
\newcommand{\Acal}{\mathcal{A}}
\newcommand{\eps}{\epsilon}
\newcommand{\cA}{\ensuremath{\mathcal{A}}}
\newcommand{\Aset}{\ensuremath{\mathcal{A}}}
\newcommand{\R}{\ensuremath{\mathbb{R}}}
\newcommand{\E}{\ensuremath{\mathbb{E}}}
\DeclareMathOperator{\argmax}{argmax}
\DeclareMathOperator{\argmin}{{argmin}}
\newcommand{\w}{w}
\newcommand{\ind}{\ensuremath{\mathds{1}}}
\newcommand{\Sset}{\ensuremath{\mathcal{S}}}
\theoremstyle{theorem}
\newtheorem{theorem}{Theorem}[section]
\newtheorem{lemma}{Lemma}[section]
\newtheorem{corollary}{Corollary}[section]
\newtheorem{assumption}{Assumption}[section]
\theoremstyle{definition}
\newtheorem{definition}{Definition}[section]
\newtheorem{remark}{Remark}[section]
\newtheorem*{lemma*}{Lemma}
\newtheorem{proposition}{Proposition}[section]
\renewenvironment{proof}{{\bf Proof:}}{\hfill\rule{2mm}{2mm}}
\newcommand{\card}[1]{\lvert#1\rvert}
\newcommand{\Norm}[1]{\left\lVert#1\right\rVert}
\newcommand{\BigNorm}[1]{\Bigl\lVert#1\Bigr\rVert}
\newcommand{\norm}[1]{\lVert#1\rVert}
\newcommand{\abs}[1]{\lvert#1\rvert}
\newcommand{\Abs}[1]{\left\lvert#1\right\rvert}
\newcommand{\bp}{\ensuremath{\theta}}
\newcommand{\cntidx}{\ensuremath{\alpha}}
\newcommand{\bcnt}{\ensuremath{\bm{\cntidx}}}
\newcommand{\idx}{\ensuremath{\beta}}
\newcommand{\bidx}{\ensuremath{\bm{\idx}}}
\newcommand{\hist}{\ensuremath{\text{hist}}}
\newcommand{\mbound}{\ensuremath{\alpha}}
\newcommand{\pmax}{\ensuremath{\bar{p}}}
\newcommand{\pmin}{\ensuremath{\underline{p}}}
\renewcommand{\S}{\ensuremath{\mathbb{S}}}
\newcommand{\kl}{\ensuremath{\mathrm{KL}}}
\newcommand{\epsopt}{\ensuremath{\eps_{\text{opt}}}}
\newcommand{\wh}{\ensuremath{\widehat w}}
\newcommand{\diag}{\textrm{diag}}
\newcommand{\ones}{\mathbf{1}}
\newcommand{\iden}{\textrm{I}}
\newcommand{\Wcal}{\ensuremath{\mathcal{W}}} 
\setlist[itemize]{leftmargin=*}
\setlist[enumerate]{leftmargin=*}
\renewcommand{\aa}[1]{\textcolor{violet}{AA: #1}}
\newcommand{\sk}[1]{\textsf{\color{magenta} SK: #1}}
\newcommand{\epsstat}{\epsilon_{\mathrm{stat}}}
\newcommand{\epsbias}{\epsilon_{\mathrm{bias}}}
\newcommand{\epsapprox}{\epsilon_{\mathrm{approx}}}
\title{On the Theory of Policy Gradient Methods:\\ Optimality,
  Approximation, and Distribution Shift}
\author{%
	Alekh~Agarwal\thanks{Microsoft Research, Redmond, WA 98052. Email: \texttt{alekha@microsoft.com}}\quad
	Sham~M.~Kakade\thanks{University of Washington, Seattle, WA
          98195 \& Microsoft Research. Email: \texttt{sham@cs.washington.edu}}\quad
	Jason~D.~Lee\thanks{Princeton University, Princeton, NJ 08540. Email: \texttt{jasonlee@princeton.edu}}\quad
	Gaurav~Mahajan\thanks{University of California San Diego, La Jolla, CA 92093. Email: \texttt{gmahajan@eng.ucsd.edu}}
}
\date{}
\begin{document}
\maketitle

\begin{abstract}
  Policy gradient methods are among the most effective methods in
challenging reinforcement learning problems with large state and/or
action spaces. However, little is known about even their most basic
theoretical convergence properties, including: if and how fast they
converge to a globally optimal solution or how they cope with
approximation error due to using a restricted class of parametric
policies.  This work provides provable characterizations of the
computational, approximation, and sample size properties of
policy gradient methods in the context of discounted Markov Decision
Processes (MDPs). We focus on both: ``tabular'' policy
parameterizations, where the optimal policy is contained in the class
and where we show global convergence to the optimal policy; and
parametric policy classes (considering both log-linear and
neural policy classes), which may not contain the optimal policy
and where we provide agnostic learning results.  One central
contribution of this work is in providing approximation guarantees that
are average case --- which avoid explicit worst-case dependencies
on the size of state space --- by making a formal connection to
supervised learning under \emph{distribution shift}. This
characterization shows an important interplay between estimation
error, approximation error, and exploration (as characterized through a
precisely defined condition number).

\end{abstract}

\section{Introduction}
Policy gradient methods have a long history in the reinforcement
learning (RL) literature~\citep{williams1992simple, sutton1999policy,
  konda2000actor, Kakade01} and are an attractive class of algorithms
as they are applicable to any differentiable policy parameterization;
admit easy extensions to function approximation; easily incorporate structured state
and action spaces; are easy to implement in a simulation based, model-free manner. Owing to their flexibility and generality,
there has also been a flurry of improvements and refinements to make
these ideas work robustly with deep neural network based approaches (see e.g.~\cite{schulman2015trust,
  schulman2017proximal}).

Despite the large body of empirical work around these
methods, their convergence properties are only established
at a relatively coarse level; in particular, the
folklore guarantee
is that these methods converge to a stationary point of the objective,
assuming adequate smoothness properties hold and assuming either
exact or unbiased estimates of a gradient can be obtained (with
appropriate regularity conditions on the variance).  However, this
local convergence viewpoint does not address some of the most basic
theoretical convergence questions, including: 1) if and how fast they
converge to a globally optimal solution (say with a sufficiently rich
policy class); 2) how they cope with approximation error due to using
a restricted class of parametric policies; or 3) their finite sample
behavior. These questions are the focus of this work.

Overall, the results of this work place policy gradient methods under a solid
theoretical footing, analogous to the global convergence guarantees of
iterative value function based algorithms.

\subsection{Our Contributions}

This work focuses on first-order and quasi second-order policy gradient
methods which directly work in the space of some parameterized policy
class (rather than value-based approaches).  We characterize
the computational, approximation, and sample size properties of
these methods in the context of a discounted Markov Decision Process
(MDP). We focus on: 1) \emph{tabular policy parameterizations}, where there
is one parameter per state-action pair so the policy class is
complete in that it contains the optimal policy, and 2)
\emph{function approximation}, where we have a
restricted class or parametric policies which may not contain the
globally optimal policy.
Note that policy gradient methods
for discrete action MDPs work in the space of stochastic policies,
which permits the policy class to be differentiable. We now discuss our contributions in the both of these contexts.

\renewcommand{\arraystretch}{1.5}
\begin{table}[!t]
\centering
\begin{tabular}{|>{\centering\arraybackslash}m{9.3cm}|>{\centering\arraybackslash}m{6.5cm}|}
\hline\textbf{Algorithm}&\textbf{Iteration complexity}\\ \hline
\rule{0pt}{6ex} Projected Gradient Ascent on Simplex  (Thm~\ref{thm:proj-gd}) & $O\left(\frac{D_\infty^2 |\Scal| |\Acal|}{(1-\gamma)^6\epsilon^2} \right)$ \\[\medskipamount] \hline
\rule{0pt}{4ex} Policy Gradient, softmax parameterization (Thm~\ref{thm:glb-softmax}) & asymptotic \\[\medskipamount] \hline
\rule{0pt}{4ex} Policy
  Gradient $+$ log barrier regularization,\hspace{1cm} softmax parameterization (Cor~\ref{corollary:entropy}) & $O\left(\frac{ D_\infty^2|\Scal|^2|\Acal|^2}{(1-\gamma)^6\,\epsilon^2} \right)$\\[\medskipamount] \hline
\rule{0pt}{4ex} Natural Policy Gradient (NPG),\hspace{5cm} softmax parameterization (Thm~\ref{thm:npg}) & $\frac{2}{(1-\gamma)^2\epsilon}$ \\[\medskipamount]\hline
\end{tabular}
\vspace{0.2cm}
\caption{\textbf{Iteration Complexities with Exact Gradients for the Tabular
    Case: } A summary of the number of iterations required by different algorithms to find a policy $\pi$ such that
  $V^\star(s_0) - V^\pi(s_0) \leq \epsilon$ for some fixed $s_0$,
    assuming access to \emph{exact policy gradients}. The first three algorithms optimize the objective
  $\E_{s\sim \mu}[V^\pi(s)]$, where $\mu$ is the
  starting state distribution for the algorithms. The MDP has $|\Scal|$ states,
  $|\Acal|$ actions, and discount factor $0\leq \gamma < 1$. The quantity $D_\infty := \max_s
  \Big(\frac{d^{\pi^\star}_{s_0}(s)}{\mu(s)}\Big)$ is termed the
  \emph{distribution mismatch coefficient}, where, roughly speaking, $d^{\pi^\star}_{s_0} (s)$ is the
  fraction of time spent in state $s$ when executing an optimal
  policy $\pi^\star$, starting from the state $s_0$ (see ~\eqref{eqn:dpi}). The NPG
  algorithm directly optimizes $V^\pi(s_0)$ for any state $s_0$. In
  contrast to the complexities of the previous three algorithms, NPG
  has no dependence on the coefficient $D_\infty$, nor does it depend
  on the choice of $s_0$.
  Both the MDP Experts Algorithm~\citep{even-dar2009online}
  and MD-MPI algorithm~\citep{geist2019theory} (see
  Corollary 3 of their paper) also yield guarantees for the same update rule as NPG for
  the softmax parameterization, though at a worse rate. See Section~\ref{section:related} for
  further discussion.}
\label{table:tabular}
\end{table}
\renewcommand{\arraystretch}{1}

\paragraph{Tabular case:}
We consider three algorithms: two of which are first order methods,
projected gradient ascent (on the simplex) and gradient ascent (with a
softmax policy parameterization); and the third algorithm, natural
policy gradient ascent, can be viewed as a quasi second-order method
(or preconditioned first-order method). Table~\ref{table:tabular}
summarizes our main results in this case: upper bounds on the number
of iterations taken by these algorithms to find an $\eps$-optimal
policy, when we have access to exact policy gradients.

Arguably, the most natural starting point for an analysis of policy
gradient methods is to consider directly
doing gradient ascent on the policy simplex itself and then to project back onto
the simplex if the constraint is violated after a gradient update; we
refer to this algorithm as projected
gradient ascent on the simplex. Using a notion of gradient domination~\citep{Pol63},
our results provably show that any
first-order stationary point of the value function results in an
approximately optimal policy, under certain regularity
assumptions; this allows for a global convergence analysis by
directly appealing to standard results in the non-convex optimization literature.

A more practical and commonly used parameterization is the softmax
parameterization, where the simplex constraint is explicitly enforced
by the exponential parameterization, thus avoiding projections. This
work provides the first global convergence guarantees using only
first-order gradient information for the widely-used softmax
parameterization.  Our first result for this parameterization
establishes the asymptotic convergence of the policy gradient
algorithm; the analysis challenge here is that the optimal policy
(which is deterministic) is attained by sending the softmax parameters
to infinity.

In order to establish a finite time, convergence rate to optimality
for the softmax parameterization, we then consider a \emph{log barrier}
regularizer
 and provide an iteration complexity bound that is
polynomial in all relevant quantities. The use of our log barrier
regularizer is critical to avoiding the issue of gradients becomingly
vanishingly small at suboptimal near-deterministic policies, an issue
of significant practical relevance. The log barrier regularizer can
also be viewed as using a \emph{relative} entropy regularizer;
here, we note the general approach of
entropy based regularization is common in practice (e.g. see
~\citep{williams1991function,mnih2016asynchronous,
  Peters+MA:2010,abdolmaleki2018maximum, ahmed2019understanding}). One
notable distinction, which we discuss later, is that our analysis is
for the log barrier regularization rather than the entropy regularization.

For these aforementioned algorithms, our convergence rates depend on
the optimization measure having coverage over the state space, as
measured by the \emph{distribution mismatch coefficient} $D_\infty$ (see Table~\ref{table:tabular}
caption). In particular, for the convergence rates shown
in Table~\ref{table:tabular} (for the aforementioned algorithms), we
assume that the optimization objective is the expected (discounted)
cumulative value where the initial state is sampled under some
distribution, and $D_\infty$
is a measure of the coverage of this initial distribution.  Furthermore, we
provide a lower bound that shows such a dependence is unavoidable for
first-order methods, even when exact gradients are available.

We then consider the Natural Policy Gradient (NPG)
algorithm~\citep{Kakade01} (also see
\cite{Bagnell:2003:CPS:1630659.1630805,Peters:2008:NA:1352927.1352986}),
which can be considered a quasi second-order
method due to the use of its particular preconditioner, and
provide an iteration complexity to achieve an $\eps$-optimal policy
that is at most $\frac{2}{(1-\gamma)^2\epsilon}$ iterations,
improving upon the previous related results
of~\citep{even-dar2009online, geist2019theory} (see
Section~\ref{section:related}).  Note the convergence rate has
\emph{no} dependence on the number of states or the number of actions,
nor does it depend on the distribution mismatch coefficient $D_\infty$.
We provide a simple and concise proof for the
convergence rate analysis by extending the approach developed in
~\citep{even-dar2009online}, which uses a mirror descent style of
analysis~\citep{nemirovsky1983problem, Cesa-Bianchi:2006:PLG:1137817}
and also handles the non-concavity of the policy
optimization problem.

This fast and dimension free convergence rate shows how the
variable preconditioner in the natural gradient method improves over
the standard gradient ascent algorithm.  The dimension free aspect of
this convergence rate is worth reflecting on, especially given the
widespread use of the natural policy gradient algorithm along with
variants such as the Trust Region Policy Optimization (TRPO)
algorithm~\citep{schulman2015trust}; our results may help to provide
analysis of a more general family of entropy based algorithms (see for
example~\cite{DBLP:journals/corr/NeuJG17}).

\renewcommand{\arraystretch}{1.5}
\begin{table}[!ht]
\centering
\begin{tabular}{|>{\centering\arraybackslash}m{5.2cm}|>{\centering\arraybackslash}m{5cm}|>{\flushleft\arraybackslash}m{4.6cm}|}
\hline\textbf{Algorithm}&\textbf{Suboptimality \hspace{2cm}\hfill after $T$ Iterations}&\textbf{\hspace{0.5cm}Relevant Quantities}\\ \hline
\rule{0pt}{4ex} Approx. Value/Policy Iteration {\footnotesize \citep{bertsekas1996neuro} }&
$\frac{\eps_\infty}{(1-\gamma)^2} +\frac{\gamma^T}{(1-\gamma)^2}$&
$\eps_\infty$: $\ell_\infty$ error of values
\\[\medskipamount] \hline \rule{0pt}{4ex}
\rule{0pt}{4ex} Approx. Policy Iteration, \hspace{1cm}\hfill
with  concentrability\hspace{3cm}\hfill
{\footnotesize  \citep{munos2005error,antos2008learning} }
&
$\frac{C_{\infty}\eps_1}{(1-\gamma)^2}+\frac{\gamma^T}{(1-\gamma)^2}$
&
$\eps_1$: an $\ell_1$ average error \hspace*{0.3cm}\vspace*{0.1cm}\hfill
$C_\infty$: concentrability\hspace{2cm}\hfill (max density ratio)
\\
[\medskipamount] \hline \rule{0pt}{4ex}
Conservative Policy Iteration\hspace{2cm}\hfill
{\footnotesize \citep{kakade2002approximately}}\hspace{3cm}\hfill
{\footnotesize Related: PSDP~\citep{NIPS2003_2378}, MD-MPI~\cite{geist2019theory}}
&$ \frac{D_\infty\eps_1}{(1-\gamma)^2}+\frac{1}{(1-\gamma)\sqrt{T}}$
&$\eps_1$: an $\ell_1$ average error \hspace*{0.3cm}\vspace*{0.1cm}\hfill
$D_\infty$: max density ratio to opt., $D_\infty \leq C_\infty$
\\[\medskipamount] \hline
\rule{0pt}{4ex} Natural Policy Gradient (Cor.~\ref{cor:q_npg_fa} and Thm.~\ref{thm:npg_fa}) &
$\sqrt{ \frac{\kappa \epsstat+ D_\infty \epsapprox}{(1-\gamma)^3}}
+\frac{1}{(1-\gamma)\sqrt{T}}$&
$\epsstat$: excess risk\vspace*{0.1cm}\hspace{2cm}\hfill
$\epsapprox$: approx. error \hspace{2cm}\hfill
$\kappa$: a condition number \hspace{1cm}\hfill
$D_\infty$: max density ratio to opt., $D_\infty \leq C_\infty$
\\[\medskipamount] \hline
\end{tabular}
\vspace{0.2cm}
\caption{\textbf{Overview of Approximate Methods:}
The suboptimality, $V^\star(s_0) - V^\pi(s_0)$, after $T$ iterations for various
approximate algorithms, which use different notions of approximation
error (sample complexities are not
  directly considered but instead may be thought of as
  part of $\eps_1$ and $\epsstat$. See
Section~\ref{section:related} for further discussion).  Order notation is used to drop
  constants, and we assume $|\Acal|=2$ for ease of exposition.
For approximate dynamic programming methods, the
  relevant error is the worst case, $\ell_\infty$-error in
  approximating a value function, e.g.
  $\eps_\infty=\max_{s,a} | Q^\pi(s,a) -\widehat Q^\pi(s,a)|$, where
  $\widehat Q^\pi$ is what an estimation oracle returns during the
  course of the algorithm.
  The second row (see Lemma 12 in~\cite{antos2008learning}) is a refinement of this
  approach, where $\eps_1$ is an $\ell_1$-average error in fitting the
  value functions under the fitting (state) distribution $\mu$, and, roughly,
  $C_\infty$ is a worst case density ratio between the state
  visitation distribution of any non-stationary
  policy and the fitting distribution $\mu$.
  For Conservative Policy Iteration, $\eps_1$ is a related
  $\ell_1$-average case fitting error with respect to a fitting distribution $\mu$,
   and  $D_\infty $ is as defined as before, in the caption of
   Table~\ref{table:tabular} (see also
   \citep{kakade2002approximately}); here, $D_\infty \leq C_\infty$
   (e.g. see~\cite{Scherrer:API}).
   For NPG, $\epsstat$ and $\epsapprox$ measure the excess
  risk (the regret) and approximation errors in fitting the values. Roughly
  speaking, $\epsstat$ is the excess squared loss relative
  to the best fit (among an appropriately defined parametric class)
  under our fitting distribution (defined with respect to the state distribution $\mu$).
  Here, $\epsapprox$ is the approximation error: the minimal possible
  error (in our parametric class)
  under our fitting
  distribution.
  The condition number $\kappa$ is
  a relative eigenvalue condition between appropriately defined feature covariances
  with respect to the state visitation distribution of an optimal
  policy, $d^{\pi^\star}_{s_0}$, and the state fitting distribution $\mu$.
See
  text for further discussion, and Section~\ref{section:func_approx}
  for precise statements as well as a more general result not explicitly dependent on $D_\infty$.
}
\label{table:approximation}
\end{table}
\renewcommand{\arraystretch}{1}

\vspace{-2mm}
\paragraph{Function Approximation:}
We now summarize our results with regards to policy gradient methods
in the setting where we work with a restricted policy class, which may
not contain the optimal policy. In this sense, these methods can be
viewed as approximate methods. Table~\ref{table:approximation}
provides a summary along with the comparisons to some relevant
approximate dynamic programming methods.

A long line of work in the function approximation setting focuses on mitigating the
worst-case ``$\ell_\infty$'' guarantees that are inherent to
approximate dynamic programming methods~\citep{bertsekas1996neuro}
(see the first row in Table~\ref{table:approximation}).  The reason to focus on average case guarantees is that it
supports the applicability of \emph{supervised machine learning}
methods to solve the underlying approximation problem. This is
because supervised learning methods, like classification and
regression, typically have bounds on the expected error under a
distribution, as opposed to worst-case guarantees over all possible
inputs.

The existing literature largely consists of two lines of provable guarantees that attempt to
mitigate the explicit $\ell_\infty$ error conditions of approximate
dynamic programming: those methods which utilize
a problem dependent parameter (the concentrability
coefficient~\citep{munos2005error}) to provide more refined dynamic
programming guarantees (e.g. see~\cite{munos2005error, szepesvari2005finite,
  antos2008learning, farahmand2010error}) and those which work with a restricted policy
class, making incremental updates, such as Conservative Policy
Iteration (CPI) \citep{kakade2002approximately,scherrer2014local}, Policy Search by
Dynamic Programming (PSDP)~\citep{NIPS2003_2378}, and MD-MPI
~\cite{geist2019theory}. Both styles of approaches give guarantees based
on worst-case density ratios, i.e. they depend on a maximum ratio
between two different densities over the state space.
As discussed in\citep{Scherrer:API}, the assumptions in the latter
class of algorithms are substantially weaker, in that the worst-case
density ratio only depends on the state visitation distribution of an
optimal policy (also see Table~\ref{table:approximation} caption and
Section~\ref{section:related}).

With regards to function approximation, our main contribution is in
providing performance bounds that, in some cases, have milder
dependence on these density ratios.
We precisely quantify an
\emph{approximation/estimation} error decomposition relevant for the
analysis of the natural gradient method; this decomposition is stated
in terms of the \emph{compatible function approximation error} as
introduced in~\cite{sutton1999policy}. More generally, we quantify our
function approximation results in terms of a precisely quantified
transfer error notion, based on approximation error under \emph{distribution shift}. 
Table~\ref{table:approximation} shows a special case of our
convergence rates of NPG, which is
governed by four quantities: $\epsstat$, $\epsapprox$,
$\kappa$, and $D_\infty$.

Let us discuss the important special case of log-linear policies
(i.e. policies that take the softmax of linear functions in a given
feature space) where the relevant quantities are as follows:
$\epsstat$ is a bound on the excess risk (the estimation error) in
fitting linearly parameterized value functions, which can be driven to
$0$ with more samples (at the usual statistical rate of
$O(1/\sqrt{N})$ where $N$ is the number of samples); $\epsapprox$ is
the usual notion of average squared approximation error where the
target function may not be perfectly representable by a linear
function; $\kappa$ can be upper bounded with an inverse dependence on the
minimal eigenvalue of the feature covariance matrix of the
fitting measure (as such it can be viewed as a dimension dependent
quantity but not necessarily state dependent); and $D_\infty$ is as
before.

For the realizable case, where all policies have values which
are linear in the given features (such as in linear
MDP models of
~\citep{jin2019provably,yang2019sample,jiang2017contextual}), we have
that the approximation error $\epsapprox$ is $0$.
Here, our guarantees
yield a fully polynomial and sample efficient convergence guarantee,
provided the condition number $\kappa$ is bounded. Importantly,
there always exists a good (universal) initial measure that ensures $\kappa$ is
bounded by a quantity that is only polynomial in the dimension of
the features, $d$, as opposed to an explicit dependence on the size of the
(infinite) state space (see Remark~\ref{remark:kappa}). Such a guarantee would not be implied by
algorithms which depend on the coefficients $C_\infty$ or
$D_\infty$.\footnote{Bounding $C_\infty$ would require a restriction on the
  dynamics of the MDP (see \cite{chen2019information} and
  Section~\ref{section:related}). Bounding $D_\infty$ would require an
  initial state distribution that is constructed using knowledge of
  $\pi^\star$, through $d^{\pi^\star}$. In contrast,
  $\kappa$ can be made $O(d)$, with an initial state
  distribution that only depends on the geometry of the features (and
  does not depend on any other properties of the MDP). See Remark~\ref{remark:kappa}.}

Our results are also suggestive that a broader class of incremental
algorithms --- such as CPI~\citep{kakade2002approximately},
PSDP~\citep{NIPS2003_2378}, and MD-MPI~\cite{geist2019theory} which
make small changes to the policy from one iteration to the next ---
may also permit a sharper analysis, where the dependence of worst-case
density ratios can be avoided through an appropriate
approximation/estimation decomposition; this is an interesting direction for
future work (a point which we return to in
Section~\ref{sect:discussion}). One significant advantage of NPG is
that the explicit parametric policy representation in NPG (and other
policy gradient methods) leads to a succinct policy representation in
comparison to CPI, PSDP, or related boosting-style
methods~\citep{scherrer2014local}, where the representation complexity
of the policy of the latter class of methods grows linearly in the
number of iterations (since these methods add one policy to the
ensemble per iteration). This representation complexity is
likely why the latter class of algorithms are less widely used in practice.

\section{Related Work}
\label{section:related}

We now discuss related work, roughly in the order which reflects our presentation of results in the previous section.

For the direct policy parameterization in the tabular case, we make
use of a gradient domination-like property, namely any first-order
stationary point of the policy value is approximately optimal up to a
distribution mismatch coefficient.  A variant of this result also appears in Theorem 2 of~\citet{scherrer2014local}, which itself can be viewed as a generalization of the approach in \cite{kakade2002approximately}.
In contrast to CPI~\citep{kakade2002approximately} and the more
general boosting-based approach in \cite{scherrer2014local}, we phrase
this approach as a Polyak-like gradient domination
property~\citep{Pol63} in order to directly allow for the transfer of
any advances in non-convex optimization to policy optimization in
RL. More broadly, it is worth noting the global convergence of policy
gradients for Linear Quadratic Regulators~\citep{fazel2018global} also
goes through a similar proof approach of gradient domination.

Empirically, the recent work of~\citet{ahmed2019understanding}
studies entropy based regularization and shows
the value of regularization in policy optimization, even with exact gradients. This is related to our use
of the log barrier regularization.

For our convergence results of the natural policy gradient algorithm
in the tabular setting, there are close connections between our
results and the works of~\citet{even-dar2009online,
geist2019theory}. \cite{even-dar2009online} provides provable online
regret guarantees in changing MDPs utilizing experts algorithms (also
see ~\citet{NIPS2010_4048,abbasi2019politex}); as a
special case, their MDP Experts Algorithm is equivalent to the natural
policy gradient algorithm with the softmax policy parameterization.
While the convergence result due to \cite{even-dar2009online} was
not specifically designed for this setting, it is instructive to see what it
implies due to the close connections between optimization and
regret~\citep{Cesa-Bianchi:2006:PLG:1137817, shalev2012online}.  The
Mirror Descent-Modified Policy Iteration (MD-MPI) algorithm
~\citep{geist2019theory} with negative entropy as the Bregman
divergence results is an identical algorithm as NPG for softmax
parameterization in the tabular case; Corollary 3
\citep{geist2019theory} applies to our updates, leading to a bound worse
by a $1/(1-\gamma)$ factor and also has logarithmic dependence on $|\Acal|$. Our proof for this case
is concise and may be of independent interest.  Also
worth noting is the Dynamic Policy Programming
of~\cite{Azar:2012:DPP:2503308.2503344}, which is an actor-critic
algorithm with a softmax parameterization; this algorithm, even though
not identical, comes with similar guarantees in terms of its rate (it
is weaker in terms of an additional $1/(1-\gamma)$ factor) than the NPG
algorithm.

We now turn to function approximation, starting with
a discussion of iterative algorithms which make
incremental updates in which the next policy is effectively constrained to
be close to the previous policy, such as in CPI
and PSDP~\citep{NIPS2003_2378}. Here, the work
in~\cite{scherrer2014local} show how CPI is part of broader family of
boosting-style methods. Also,
with regards to PSDP, the work in~\cite{Scherrer:API} shows how PSDP
actually enjoys an improved iteration complexity over CPI, namely
$O(\log 1/\epsopt)$ vs. $O(1/\epsopt^2)$. It is worthwhile to note
that both NPG and projected gradient ascent are also both incremental algorithms.

We now discuss the approximate
dynamic programming results characterized in terms of the
concentrability coefficient. Broadly we use the term approximate dynamic programming to refer to fitted value iteration, fitted policy iteration and more generally generalized policy iteration schemes such as classification-based policy iteration as well, in addition to the classical approximate value/policy iteration works.
While the approximate
dynamic programming results typically require $\ell_\infty$ bounded
errors, which is quite stringent, the notion of
concentrability (originally due to \citep{munos2003error,munos2005error}) permits sharper bounds in terms
of average case function approximation error, provided that the
concentrability coefficient is bounded (e.g. see
~\cite{munos2005error, szepesvari2005finite, antos2008learning, lazaric2016analysis}).
\citet{chen2019information} provide a more detailed  discussion on
this quantity.  Based on this problem dependent
constant being bounded, \cite{munos2005error, szepesvari2005finite},  \cite{antos2008learning} and~\cite{lazaric2016analysis}
provide meaningful sample size and error bounds for approximate
dynamic programming methods, where there is a data collection policy
(under which value-function fitting occurs) that induces a
concentrability coefficient.
In terms of the concentrability coefficient $C_\infty$ and the
``distribution mismatch coefficient'' $D_\infty$ in
Table~\ref{table:approximation} , we have that
$D_\infty \leq C_\infty$, as discussed in~\citep{Scherrer:API} (also
see the table caption). Also, as discussed in~\citet{chen2019information},
a finite
concentrability coefficient is a restriction on the MDP dynamics
itself, while a bounded $D_\infty$ does not require any restrictions
on the MDP dynamics. The more refined quantities
defined by~\citet{farahmand2010error} (for the approximate policy iteration result) partially alleviate some of
these concerns, but their assumptions still implicitly constrain the MDP dynamics, like the finiteness of the concentrability coefficient.

Assuming bounded concentrability coefficient, there are a notable set of
provable average case guarantees for the MD-MPI
algorithm~\citep{geist2019theory} (see
also~\citep{Azar:2012:DPP:2503308.2503344, scherrer2015approximate}),
which are stated in terms of various norms of function approximation
error. MD-MPI is a class of algorithms for approximate planning
under regularized notions of optimality in MDPs. Specifically,
~\citet{geist2019theory} analyze a family of actor-critic style
algorithms, where there are both approximate value functions updates
and approximate policy updates. As a consequence of utilizing
approximate value function updates for the critic, the guarantees of
\citet{geist2019theory} are stated with dependencies on
concentrability coefficients.

When dealing with function approximation,
computational and statistical complexities are
relevant because they determine the effectiveness of
approximate updates with
finite samples. With regards to sample complexity, the work in
~\cite{szepesvari2005finite, antos2008learning} provide finite
sample rates (as discussed above), further generalized to actor-critic
methods in~\cite{Azar:2012:DPP:2503308.2503344,
scherrer2015approximate}. In our policy
optimization approach, the analysis of both computational
and statistical complexities are straightforward, since we can
leverage known statistical and computational results from the
stochastic approximation literature;  in particular, we use the
stochastic projected gradient ascent to obtain a simple, linear time method for the critic estimation step in the
natural policy gradient algorithm.

In terms of the algorithmic updates for the function approximation
setting, our development of NPG bears similarity to the natural
actor-critic algorithm~\cite{Peters:2008:NA:1352927.1352986}, for
which some asymptotic guarantees under finite concentrability
coefficients are obtained in~\citet{bhatnagar2009natural}. While both
updates seek to minimize the compatible function approximation error,
we perform streaming updates based on stochastic optimization using
Monte Carlo estimates for values. In
contrast~\cite{Peters:2008:NA:1352927.1352986} utilize Least Squares
Temporal Difference methods~\citep{boyan1999least} to minimize the
loss. As a consequence, their updates additionally make linear
approximations to the value functions in order to estimate the
advantages; our approach is flexible in allowing for wide family of
smoothly differentiable policy classes (including neural policies).

Finally, we remark on some concurrent works. The work of
~\cite{russoGlobal} provides gradient domination-like conditions under
which there is (asymptotic) global convergence to the optimal
policy. Their results are applicable to the projected gradient ascent
algorithm; they are not applicable to gradient ascent with the softmax
parameterization (see the discussion in Section~\ref{section:softmax}
herein for the analysis challenges). \cite{russoGlobal} also provide global convergence results beyond
MDPs. Also, \cite{caiTRPO} provide an analysis of the TRPO
algorithm~\citep{schulman2015trust} with neural network
parameterizations, which bears resemblance to our natural policy
gradient analysis. In particular, \cite{caiTRPO} utilize ideas from
both~\cite{even-dar2009online} (with a mirror descent style of
analysis) along with ~\cite{caiTD} (to handle approximation with
neural networks) to provide conditions under which TRPO returns a near
optimal policy. \cite{caiTRPO} do not explicitly consider the case
where the policy class is not complete (i.e when there is
approximation). Another related work of~\cite{shani2019adaptive} considers the TRPO
algorithm and provides theoretical guarantees in the tabular case;
their convergence rates with exact updates are $O(1/\sqrt{T})$ for the
(unregularized) objective function of interest; they also provide faster rates on
a modified (regularized) objective function. They do not consider the case of infinite state
spaces and function approximation. The closely related recent
papers~\citep{abbasi2019politex,abbasi2019exploration} also consider
closely related algorithms to the Natural Policy Gradient approach
studied here, in an infinite horizon, average reward
setting. Specifically, the \textsc{EE-Politex} algorithm is closely
related to the Q-NPG algorithm which we study in
Section~\ref{sec:q-npg}, though our approach is in the discounted
setting. We adopt the name Q-NPG to capture its close relationship
with the NPG algorithm, with the main difference being the use of
function approximation for the $Q$-function instead of advantages. We
refer the reader to Section~\ref{sec:q-npg} (and Remark~\ref{remark:politex}) for more discussion of the
technical differences between the two works. 

\section{Setting}
\label{section:setting}

A (finite) Markov Decision Process (MDP) $M = (\Scal, \Acal, P, r, \gamma,\rho)$
is specified by:
a finite state space $\Scal$;  a finite action space $\Acal$; a
transition model $P$ where
$P(s^\prime | s, a)$ is the probability of transitioning into state $s^\prime$ upon taking action $a$
in state $s$; a reward function $r: \Scal\times \Acal \to [0,1]$ where $r(s,a)$ is
the immediate reward associated with taking action $a$ in state $s$;
a discount factor $\gamma \in [0, 1)$; a starting state distribution
$\rho$ over $\Scal$.

A deterministic, stationary policy $\pi: \Scal \to \Acal$
specifies a decision-making strategy in which the agent chooses
actions adaptively based on the current state, i.e., $a_t =
\pi(s_t)$. The agent may also choose actions according
to a stochastic policy $\pi: \Scal \to \Delta(\Acal)$ (where
$\Delta(\Acal)$ is the probability simplex over $\Acal$), and,
overloading notation, we write $a_t \sim \pi(\cdot|s_t)$.

A policy induces a distribution over trajectories $\tau =
(s_t, a_t, r_t)_{t=0}^\infty$, where $s_0$ is drawn from the starting
state distribution $\rho$, and, for all subsequent timesteps $t$, $a_t \sim \pi(\cdot | s_t)$ and
$s_{t+1} \sim P(\cdot | s_t, a_t)$. The value function $V^\pi: \Scal \to \R$ is
defined as the discounted sum of future rewards starting at state $s$
and executing $\pi$, i.e.
\[
V^\pi(s) := \E\left[\sum_{t=0}^\infty \gamma^t  r(s_t, a_t)
  | \pi, s_0 = s\right] \, ,
\]
where the expectation is with respect to the randomness of the
trajectory $\tau$ induced by $\pi$ in $M$. Since we assume that $r(s,a) \in
[0,1]$, we have $0\leq V^\pi(s) \leq \frac{1}{1-\gamma}$.
We overload notation and define $V^\pi(\rho)$ as the expected value
under the initial state distribution $\rho$, i.e.
\begin{align*}
V^\pi(\rho) := \E_{s_0\sim \rho} [ V^\pi(s_0)] .
\end{align*}

%


The action-value (or Q-value) function $Q^\pi: \Scal
\times \Acal \to \R$ and the \emph{advantage} function $A^\pi: \Scal
\times \Acal \to \R$ are defined as:
\[
Q^\pi(s,a) = \E\left[\sum_{t=0}^\infty \gamma^t  r(s_t, a_t) | \pi,
  s_0 = s, a_0 = a \right] , \quad
A^\pi(s,a) := Q^\pi(s,a)-V^\pi(s) \, .
\]

The goal of the agent is to find a policy $\pi$ that maximizes the expected value from the initial state,
i.e. the optimization problem the agent seeks to solve is:
\begin{align} \label{eq:expected_return}
  \max_\pi V^{\pi}(\rho),
\end{align}
where the $\max$ is over all policies.
The famous theorem of \cite{bellman1959functional} shows there
exists a policy $\pi^\star$ which simultaneously maximizes
$V^{\pi}(s_0)$, for all states $s_0\in \Scal$.

\paragraph{Policy Parameterizations.}
This work studies ascent methods
for the optimization problem:
\[
  \max_{\theta\in \Theta} V^{\pi_\theta}(\rho) ,
\]
where $\{\pi_\theta|\theta \in \Theta\}$ is some class
of parametric (stochastic) policies.
We consider a number of different policy classes. The first two are
\emph{complete} in the sense that any stochastic policy can be
represented in the class. The final class may be restrictive. These
classes are as follows:
\begin{itemize}
\item \emph{Direct parameterization:} The policies are parameterized by
\begin{equation}\label{eq:direct}
\pi_\theta (a| s) = \theta_{s,a} ,
\end{equation}
where   $\theta \in \Delta(\cA) ^{|\Sset|}$, i.e. $\theta$ is subject to
$\theta_{s,a} \geq 0$ and $\sum_{a \in \Acal} \theta_{s,a} = 1$ for all $s\in\Scal$ and $a \in \Acal$.
\item \emph{Softmax parameterization:} For unconstrained $\theta \in \R^{|\Scal||\Acal|}$,
  \begin{equation}
    \pi_\theta(a| s) = \frac{\exp(\theta_{s,a})}{\sum_{a'\in\Acal} \exp(\theta_{s,a'})}.
    \label{eq:softmax}
  \end{equation}
The softmax parameterization is also complete.
\item \emph{Restricted parameterizations:} We also study
parametric  classes $\{\pi_\theta|\theta \in \Theta\}$ that may not contain all
  stochastic policies. In particular, we pay close attention to both
  log-linear policy classes and neural policy classes (see Section~\ref{section:func_approx}). Here, the best we may hope for is an agnostic result where
  we do as well as the best policy in this class.
\end{itemize}
While the softmax parameterization is the more natural parametrization among
the two complete policy classes, it is
also informative to consider the direct parameterization.

It is worth explicitly noting that $V^{\pi_\theta} (s)$ is non-concave in $\theta$ for both
the direct and the softmax parameterizations, so the standard tools of
convex optimization are not applicable. For completeness, we formalize
this as follows (with a proof in Appendix~\ref{app:setting}, along
with an example in Figure~\ref{fig:noncon}):

\begin{lemma}
There is an MDP $M$ (described in Figure~\ref{fig:noncon}) such that the optimization problem $
V^{\pi_\theta} (s)$ is not concave for both the direct and softmax
parameterizations.
\label{lemma:softmax-noncon}
\end{lemma}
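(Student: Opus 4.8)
The plan is to prove non-concavity by exhibiting a single small MDP --- the one in Figure~\ref{fig:noncon} --- on which $V^{\pi_\theta}(s)$, viewed as a function of $\theta$, fails the defining inequality of concavity, and to arrange the construction so that the \emph{same} MDP serves both parameterizations. The key design principle is to make collecting reward require a \emph{conjunction} of two independent action choices: from the start state $s$ the agent reaches an intermediate state $s'$ only by playing a designated action $a^\star$, and from $s'$ it reaches the unique rewarding absorbing state only by again playing a designated action; every other action leads to a zero-reward absorbing state. With this structure the only parameters affecting the return are the two probabilities $p := \pi_\theta(a^\star \mid s)$ and $q := \pi_\theta(a^\star \mid s')$ of the correct actions, and a short calculation gives
\[
V^{\pi_\theta}(s) \;=\; c\, p\, q, \qquad c \;=\; \tfrac{\gamma^2}{1-\gamma} \;>\; 0 .
\]
Thus it suffices to show that $(p,q)\mapsto c\,p\,q$ is non-concave after composing with whichever parameter-to-probability map the parameterization uses.

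For the direct parameterization the parameters \emph{are} the probabilities, so $V^{\pi_\theta}(s)=c\,p\,q$ with $p,q\in[0,1]$ free, and I would simply run a two-point (secant) test along the diagonal. Taking $\theta_1$ with $(p,q)=(0,0)$ and $\theta_2$ with $(p,q)=(1,1)$ --- both valid vertices of $\Delta(\cA)^{|\Scal|}$ --- the midpoint has $(p,q)=(\tfrac12,\tfrac12)$, so that
\[
V^{\pi_{(\theta_1+\theta_2)/2}}(s) = \tfrac{c}{4} \;<\; \tfrac{c}{2} = \tfrac12\Big(V^{\pi_{\theta_1}}(s)+V^{\pi_{\theta_2}}(s)\Big),
\]
which directly contradicts concavity.

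For the softmax parameterization the endpoints $(0,0)$ and $(1,1)$ are only attained in the limit, so instead I would use a curvature (Hessian) test at an interior point. Writing $p=\sigma(\alpha)$ and $q=\sigma(\beta)$ as sigmoids of the relevant logit differences $\alpha,\beta$, the value becomes $c\,\sigma(\alpha)\sigma(\beta)$, and the $2\times 2$ Hessian in $(\alpha,\beta)$ is $c\left(\begin{smallmatrix}\sigma''(\alpha)\sigma(\beta) & \sigma'(\alpha)\sigma'(\beta)\\ \sigma'(\alpha)\sigma'(\beta) & \sigma(\alpha)\sigma''(\beta)\end{smallmatrix}\right)$. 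Evaluating at the symmetric point $\alpha=\beta=0$, where $\sigma(0)=\tfrac12$, $\sigma'(0)=\tfrac14$, and crucially $\sigma''(0)=0$, the diagonal entries vanish and the matrix reduces to $c\left(\begin{smallmatrix}0 & 1/16\\ 1/16 & 0\end{smallmatrix}\right)$, which has negative determinant and hence a strictly positive eigenvalue. Since the Hessian of $V^{\pi_\theta}$ restricted to this two-dimensional subspace is not negative semidefinite, $V^{\pi_\theta}$ is not concave in $\theta$.

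I expect the softmax case to be the only real obstacle: the nonlinear logit-to-probability map could in principle convexify the bilinear structure, so one cannot merely reuse the direct-case witnesses. The clean resolution is that both analyses are instances of the same fact --- for $V = c\,f(\text{$p$-param})\,g(\text{$q$-param})$ with $c>0$ and $f,g$ increasing, the restricted Hessian has determinant $c^2\big(f f'' g g'' - (f')^2 (g')^2\big)$, which is negative whenever $f''=g''=0$ while $f',g'\neq 0$; this holds for $f=g=\mathrm{id}$ (direct) at any point and for $f=g=\sigma$ (softmax) at the symmetric point, since $\sigma''(0)=0$. Everything therefore reduces to this finite, elementary computation on the small MDP of Figure~\ref{fig:noncon}, so no further machinery is needed.
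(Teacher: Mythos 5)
Your proof is correct, and it reaches the conclusion by a genuinely different route than the paper in the softmax case. Both arguments exploit the same structural fact about the Figure~\ref{fig:noncon} MDP: the value is a positive constant times the product $p\,q$ of two action probabilities (your reward-placement and constant $c$ differ cosmetically from the paper's, which places the reward on the transition into $s_4$, but the bilinear form $V = c\,p\,q$ is identical). For the direct parameterization you and the paper both use a two-point secant test. For the softmax, however, the paper also uses a secant test, with the cleverly symmetric points $\theta^{(1)} = (\log 1, \log 3, \log 3, \log 1)$ and $\theta^{(2)} = -\theta^{(1)}$: by symmetry the midpoint in logit space ($\theta = 0$) maps exactly to the midpoint $(\tfrac12,\tfrac12)$ in probability space, so a single computation $\tfrac{9}{16} + \tfrac{1}{16} > 2\cdot\tfrac14$ settles both parameterizations at once. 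You instead run a second-order test: restrict $V$ to the affine subspace parameterized by the logit differences $(\alpha,\beta)$ (which is legitimate, since concavity is preserved under affine precomposition), and observe that at $\alpha=\beta=0$ the Hessian is $c\bigl(\begin{smallmatrix}0 & 1/16\\ 1/16 & 0\end{smallmatrix}\bigr)$ because $\sigma''(0)=0$, hence indefinite. What your approach buys is a structural explanation and a reusable principle --- any value of the form $c\,f\,g$ with $c>0$, $f',g'\neq 0$, and $f''=g''=0$ at the test point is non-concave there --- which unifies the direct and softmax cases; what the paper's buys is maximal concreteness (no Hessians, no restriction argument, one arithmetic check covering both parameterizations). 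Both are complete, elementary, and finite verifications on the same MDP.
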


	\begin{figure}
		\centering
		\begin{minipage}[b]{0.45\textwidth}
			\centering
			\begin{tikzpicture}[->,>=stealth,node distance=2.0cm]
			\node[state] (s1) {$s_1$};
			\node[state, right of=s1] (s2) {$s_2$};
			\node[state, accepting, above of=s2] (s4) {$s_4$};
			\node[state, accepting, right of=s2] (s5) {$s_5$};
			\node[state, accepting, above of=s1] (s3) {$s_3$};
			\draw (s4) edge[loop above] node{$0$} (s4)
			(s5) edge[loop above] node{$0$} (s5)
			(s3) edge[loop above] node{$0$} (s3)
			(s1) edge[above] node{$0$} (s2)
			(s1) edge[left] node{$0$} (s3)
			(s2) edge[right] node{$r>0$} (s4)
			(s2) edge[above] node{$0$} (s5);
			\end{tikzpicture}
			\caption{(Non-concavity example) A deterministic MDP
                          corresponding to Lemma
                          \ref{lemma:softmax-noncon} where  $
                          V^{\pi_\theta} (s)$ is not concave. 
                          Numbers on arrows represent the rewards for
                          each action.} 
			\label{fig:noncon}
		\end{minipage}\hfill
		\begin{minipage}[b]{0.50\textwidth}
			\centering
			\begin{tikzpicture}[->,>=stealth,node distance=2.0cm]
		\node[state] (s0) {$s_0$};
		\node[state, right of=s0] (s1) {$s_1$};
		\node[state, draw=none] (d1) [right=of s1, xshift=-2.0cm] {$\cdots$};
		\node[state, right of=d1] (sh) {$s_{H}$};
		\node[state, accepting, right of=sh] (shp) {$s_{H+1}$};
		\draw (shp) edge[loop above] node{$a_1$} (shp)
		(s0) edge[bend right=30, below] node{$a_1$} (s1)
		(d1) edge[bend right=30, below] node{$a_1$} (sh)
		(sh) edge[bend right=0, below] node{$a_1$} (shp)
		(s1) edge[bend right=0, above] node{$a_2$} (s0)
		(sh) edge[bend right=0, above] node{$a_2$} (d1)
		(s1) edge[bend right=40, above] node{$a_3$} (s0)
		(s1) edge[bend right=90, above] node{$a_4$} (s0)
		(sh) edge[bend right=40, above] node{$a_3$} (d1)
		(sh) edge[bend right=90, above] node{$a_4$} (d1);
			\end{tikzpicture}
			\caption{(Vanishing gradient example) A deterministic, chain MDP of length $H+2$. We
                          consider a policy where $\pi(a | s_i) =
                          \theta_{s_i,a}$ for $i=1,2,\ldots,H$. Rewards
                          are $0$ everywhere other than $r(s_{H+1},
                          a_1) = 1$. See Proposition~\ref{proposition:small_grad}.}
			\label{fig:chain}
		\end{minipage}
	\end{figure}

\paragraph{Policy gradients.}
In order to introduce these
methods, it is useful to define the discounted state
visitation distribution $d_{s_0}^\pi$ of a policy $\pi$ as:
\begin{equation}
d_{s_0}^\pi(s) := (1-\gamma) \sum_{t=0}^\infty \gamma^t {\Pr}^\pi(s_t=s|s_0),
\label{eqn:dpi}
\end{equation}
where $\Pr^\pi(s_t=s|s_0)$ is the state visitation
probability that $s_t=s$, after we execute $\pi$ starting at state
$s_0$. Again, we overload notation and write:
\begin{align*}
d_{\rho}^\pi(s) = \E_{s_0\sim \rho} \left[d_{s_0}^\pi(s)\right] \, ,
\end{align*}
where $d_{\rho}^\pi$ is the discounted state
visitation distribution under initial distribution $\rho$.

The policy gradient functional form (see e.g.~\citet{williams1992simple,
  sutton1999policy}) is then:
\begin{eqnarray}\label{eqn:Qpg}
  \nabla_\theta V^{\pi_\theta}(s_0) =
\frac{1}{1-\gamma} \, \E_{s \sim d_{s_0}^{\pi_\theta} }\E_{a\sim \pi_\theta(\cdot | s) }
\big[\nabla_\theta \log
\pi_{\theta}(a| s) Q^{\pi_\theta}(s,a)\big] .
\end{eqnarray}
Furthermore, if we are working with a differentiable  parameterization of
$\pi_\theta(\cdot|s)$ that explicitly constrains $\pi_\theta(\cdot|s)$ to be
in the simplex, i.e. $\pi_\theta \in \Delta(\cA) ^{|\Sset|}$ for all
$\theta$,  then we also have:
\begin{eqnarray}\label{eqn:Apg}
\nabla_\theta V^{\pi_\theta}(s_0) =  \frac{1}{1-\gamma} \, \E_{s \sim d_{s_0}^{\pi_\theta} }\E_{a\sim \pi_\theta(\cdot | s) }
\big[\nabla_\theta \log \pi_{\theta}(a| s) A^{\pi_\theta}(s,a)\big].
\end{eqnarray}
Note the above gradient expression (Equation~\ref{eqn:Apg}) does not hold for the direct
parameterization, while Equation~\ref{eqn:Qpg} is
valid.~\footnote{This is due to
$\sum_a \nabla_\theta \pi_\theta(a|s) = 0$ 
  not explicitly being maintained by the direct parameterization.}

\paragraph{The performance difference lemma.} The following lemma is
helpful throughout:
\begin{lemma}\label{lemma:perf_diff}
(The performance difference lemma~\citep{kakade2002approximately})
For all policies $\pi, \pi^\prime$ and states
$s_0$,
\begin{eqnarray*}
  V^\pi(s_0) - V^{\pi^\prime}(s_0)
&=& \frac{1}{1-\gamma}\E_{s\sim d_{s_0}^\pi }\E_{a\sim \pi(\cdot|s) }
\left[A^{\pi^\prime}(s,a)\right].
\end{eqnarray*}
\end{lemma}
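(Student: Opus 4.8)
The plan is to unroll $V^\pi(s_0)$ along a trajectory generated by $\pi$ and to introduce a telescoping sum of the values $V^{\pi'}$ evaluated along that same trajectory. Let $\tau=(s_0,a_0,s_1,a_1,\dots)$ be a random trajectory with $a_t\sim\pi(\cdot|s_t)$ and $s_{t+1}\sim P(\cdot|s_t,a_t)$, so that $V^\pi(s_0)=\E_\tau\big[\sum_{t=0}^\infty \gamma^t r(s_t,a_t)\big]$. The key observation is that the telescoping series $\sum_{t=0}^\infty \gamma^t\big(\gamma V^{\pi'}(s_{t+1})-V^{\pi'}(s_t)\big)$ has partial sums $\gamma^{T+1}V^{\pi'}(s_{T+1})-V^{\pi'}(s_0)$, and hence equals $-V^{\pi'}(s_0)$ once we note that $\gamma^{T+1}V^{\pi'}(s_{T+1})\to 0$ (using $0\le V^{\pi'}\le 1/(1-\gamma)$ and $\gamma<1$). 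Adding $-V^{\pi'}(s_0)$ back in this telescoped form gives
\[
V^\pi(s_0)-V^{\pi'}(s_0)=\E_\tau\Big[\sum_{t=0}^\infty \gamma^t\big(r(s_t,a_t)+\gamma V^{\pi'}(s_{t+1})-V^{\pi'}(s_t)\big)\Big].
\]

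Next I would move the expectation inside the sum and condition each summand on $(s_t,a_t)$. By the Bellman consistency relation $Q^{\pi'}(s,a)=r(s,a)+\gamma\,\E_{s'\sim P(\cdot|s,a)}[V^{\pi'}(s')]$, the conditional expectation of $r(s_t,a_t)+\gamma V^{\pi'}(s_{t+1})$ given $(s_t,a_t)$ is exactly $Q^{\pi'}(s_t,a_t)$, so each summand reduces to $Q^{\pi'}(s_t,a_t)-V^{\pi'}(s_t)=A^{\pi'}(s_t,a_t)$. Therefore
\[
V^\pi(s_0)-V^{\pi'}(s_0)=\E_\tau\Big[\sum_{t=0}^\infty \gamma^t A^{\pi'}(s_t,a_t)\Big]=\sum_{t=0}^\infty \gamma^t\,\E_{s_t\sim\Pr^\pi(\cdot\,|\,s_0)}\E_{a_t\sim\pi(\cdot|s_t)}\big[A^{\pi'}(s_t,a_t)\big],
\]
where $\Pr^\pi(\cdot\,|\,s_0)$ denotes the time-$t$ state marginal of the trajectory under $\pi$.

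Finally I would recognize the discounted sum over timesteps as an average under the discounted state visitation distribution. Interchanging the sums over $t$ and $s$ and invoking the definition $d_{s_0}^\pi(s)=(1-\gamma)\sum_{t=0}^\infty \gamma^t\Pr^\pi(s_t=s|s_0)$ from~\eqref{eqn:dpi} converts $\sum_{t=0}^\infty \gamma^t\,\E_{s_t}[\,\cdot\,]$ into $\tfrac{1}{1-\gamma}\E_{s\sim d_{s_0}^\pi}[\,\cdot\,]$, which is exactly the claimed identity. The only points needing care are the interchange of the expectation with the infinite sum and the vanishing of the telescoping boundary term; both are justified by the uniform boundedness of $r$ and $V^{\pi'}$ together with $\gamma<1$ (dominated convergence). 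I therefore expect no substantive obstacle: the one genuinely creative step is choosing the telescoping quantity $V^{\pi'}(s_t)$ to add and subtract, after which everything follows mechanically from the Bellman equation and the definition of $d_{s_0}^\pi$.
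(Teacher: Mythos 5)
Your proof is correct and follows essentially the same route as the paper's own argument: telescoping in $V^{\pi'}$ along a $\pi$-trajectory, collapsing each summand to $A^{\pi'}(s_t,a_t)$ via the tower property and Bellman consistency, and then invoking the definition of $d_{s_0}^\pi$. The only difference is cosmetic — you make the justification of the boundary term and the sum/expectation interchange (dominated convergence) explicit, which the paper leaves implicit.
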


For completeness, we provide a proof in Appendix~\ref{app:setting}.

\paragraph{The distribution mismatch coefficient.} We often
characterize the difficulty of the exploration problem faced by our
policy optimization algorithms when maximizing the objective
$V^\pi(\mu)$ through the following notion of \emph{distribution
  mismatch coefficient}.

\begin{definition}[Distribution mismatch coefficient]
Given a policy $\pi$ and measures $\rho, \mu\in
\Delta(\Scal)$, we refer to
$\BigNorm{\frac{d_{\rho}^{\pi}}{\mu}}_\infty$ as the
\emph{distribution mismatch coefficient} of $\pi$ relative to
$\mu$. Here, $\frac{d_{\rho}^{\pi}}{\mu}$ denotes componentwise division.
\label{defn:dist_mismatch}
\end{definition}

We often instantiate this coefficient
with $\mu$ as the initial state distribution used in a policy
optimization algorithm, $\rho$ as the distribution to measure
the sub-optimality of our policy (this is the start state distribution
of interest), and where $\pi$ above is often chosen to be $\pi^\star \in
\argmax_{\pi \in \Pi} V^\pi(\rho)$, given a policy class $\Pi$.

\paragraph{Notation.}
Following convention, we use $V^\star$ and $Q^\star$ to denote
$V^{\pi^\star}$ and $Q^{\pi^\star}$ respectively.
For iterative algorithms which obtain policy parameters $\theta^{(t)}$ at
iteration $t$, we let $\pi^{(t)}$, $V^{(t)}$ and $A^{(t)}$ denote the
corresponding quantities parameterized by $\theta^{(t)}$,
i.e. $\pi_{\theta^{(t)}}$, $V^{\theta^{(t)}}$ and $A^{\theta^{(t)}}$, respectively. For vectors $u$
and $v$, we use $\tfrac{u}{v}$ to denote the componentwise ratio;
$u\geq v$ denotes a componentwise inequality; we
use the standard convention where $\|v\|_2=\sqrt{\sum_i v_i^2}$, $\|v\|_1=\sum_i |v_i|$,
and $\|v\|_\infty=\max_i |v_i|$. 

\section{Warmup: Constrained Tabular Parameterization}
\label{section:direct}

Our starting point is, arguably, the simplest first-order
method: we directly take gradient ascent updates on the policy simplex
itself and then project back onto the simplex if the constraints are
violated after a gradient update. This algorithm is projected gradient
ascent on the direct policy parametrization of the MDP, where the
parameters are the state-action probabilities, i.e.
$\theta_{s,a} = \pi_\theta(a|s)$ (see~\eqref{eq:direct}). As noted in
Lemma ~\ref{lemma:softmax-noncon}, $V^{\pi_\theta}(s)$ is non-concave
in the parameters $\pi_\theta$.
Here, we first prove that $V^{\pi_\theta}(\mu)$ satisfies a
Polyak-like gradient domination condition~\citep{Pol63}, and this tool
helps in providing convergence rates. The basic approach was also used
in the analysis of CPI~\citep{kakade2002approximately}; related gradient
domination-like lemmas also appeared in \cite{scherrer2014local}.

It is instructive to consider this special case
due to the connections it makes to the non-convex optimization
literature.
We also provide a lower bound
that rules out algorithms whose runtime appeals to
the curvature of saddle points (e.g.~\citep{NesterovP06,GeHJY15,DBLP:conf/icml/Jin0NKJ17}).


For the direct policy parametrization where $\theta_{s,a} =
\pi_\theta(a|s)$, the gradient is:
\begin{equation}\label{eq:grad_direct}
\frac{\partial V^{\pi}(\mu)}{\partial \pi(a|s)}
= \frac{1}{1-\gamma} d^{\pi}_{\mu} (s) Q^{\pi}(s,a),
\end{equation}
using~\eqref{eqn:Qpg}.
In particular, for this
parameterization, we may
write $\nabla_{\pi} V^{\pi}(\mu)$ instead of
$\nabla_{\theta} V^{\pi_\theta}(\mu)$.

\subsection{Gradient Domination}

Informally, we say a function $f(\theta)$ satisfies a gradient
domination property if for all $\theta \in \Theta$,
\[
f(\theta^\star)- f(\theta) = O( G(\theta)),
\]
where $\theta^\star \in  \argmax_{\theta^\prime\in\Theta}f(\theta')$
and where $G({\theta})$ is some suitable scalar notion of first-order
stationarity, which can be considered a measure of
how large the gradient is (see
~\citep{karimi2016linear,bolte2007lojasiewicz,attouch2010proximal}).
Thus if one can find a $\theta$ that is
(approximately) a first-order stationary point, then the parameter
$\theta$ will be near optimal (in terms of function value). Such
conditions are a standard device to establishing global convergence in
non-convex optimization, as they effectively rule out the presence of
bad critical points. In other words, given such a condition, quantifying the
convergence rate for a specific algorithm, like say projected gradient
ascent, will require quantifying the rate of its convergence to a
first-order stationary point, for which one can invoke standard
results from the optimization literature.

The following lemma shows that the direct policy parameterization
satisfies a notion of gradient domination. This is the basic approach
used in the analysis of CPI~\citep{kakade2002approximately}; a variant
of this lemma also appears in
\cite{scherrer2014local}. We give a proof for completeness.

Even though we are interested in the value $V^{\pi}(\rho)$, it is helpful to
consider the gradient with respect to another state distribution
$\mu\in \Delta(\Scal)$.

\begin{lemma}[Gradient domination]
	\label{thm:first}
For the direct policy parameterization (as in~\eqref{eq:direct}), for
all state distributions $\mu,\rho\in \Delta(\Scal)$, we have
\begin{eqnarray*}
	V^\star (\rho) -V^{\pi}(\rho) &\le &
\, \left\|\frac{d_{\rho}^{\pi^\star}}{d_{\mu}^{\pi} }\right\|_\infty
\max_{\bar\pi }~(\bar\pi - \pi)^\top \nabla_\pi V^{\pi}(\mu)\\
&\le &
\frac{1}{1-\gamma}\left\|\frac{d_{\rho}^{\pi^\star}}{\mu}\right\|_\infty
\max_{\bar\pi }~(\bar\pi - \pi)^\top \nabla_\pi V^{\pi}(\mu),
\end{eqnarray*}
where the max is over the set of all policies, i.e. $\bar\pi \in
\Delta(\cA) ^{|\Sset|}$.
\end{lemma}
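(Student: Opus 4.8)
The plan is to connect the suboptimality $V^\star(\rho) - V^\pi(\rho)$ to the advantage function via the performance difference lemma, and then to recognize the maximized linear functional $\max_{\bar\pi}(\bar\pi-\pi)^\top\nabla_\pi V^\pi(\mu)$ as a reweighted sum of the same advantages. First I would apply Lemma~\ref{lemma:perf_diff} with $\pi^\star$ playing the role of the improved policy, giving
\[
V^\star(\rho) - V^\pi(\rho) = \frac{1}{1-\gamma}\sum_s d_\rho^{\pi^\star}(s)\sum_a \pi^\star(a|s)\,A^\pi(s,a).
\]
Since $\sum_a \pi^\star(a|s)A^\pi(s,a)$ is a convex combination of the numbers $\{A^\pi(s,a)\}_a$, it is bounded above by $\max_a A^\pi(s,a)$, so $V^\star(\rho)-V^\pi(\rho) \le \frac{1}{1-\gamma}\sum_s d_\rho^{\pi^\star}(s)\max_a A^\pi(s,a)$.

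Next I would evaluate the gradient term. Using the direct-parameterization gradient~\eqref{eq:grad_direct}, $\frac{\partial V^\pi(\mu)}{\partial\pi(a|s)} = \frac{1}{1-\gamma}d_\mu^\pi(s)Q^\pi(s,a)$, the inner product is $(\bar\pi-\pi)^\top\nabla_\pi V^\pi(\mu) = \frac{1}{1-\gamma}\sum_{s,a}(\bar\pi(a|s)-\pi(a|s))d_\mu^\pi(s)Q^\pi(s,a)$. Maximizing over $\bar\pi\in\Delta(\cA)^{|\Sset|}$ decouples across states, and at each $s$ the optimal $\bar\pi(\cdot|s)$ puts all mass on $\argmax_a Q^\pi(s,a)$; since $\sum_a \pi(a|s)Q^\pi(s,a)=V^\pi(s)$, this yields
\[
\max_{\bar\pi}(\bar\pi-\pi)^\top\nabla_\pi V^\pi(\mu) = \frac{1}{1-\gamma}\sum_s d_\mu^\pi(s)\max_a A^\pi(s,a).
\]
The crucial observation here is that $\max_a A^\pi(s,a)\ge 0$ for every $s$ (the best action is at least as good as the current average), which makes this quantity a nonnegatively-weighted sum and legitimizes the change-of-measure step below.

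Finally I would pass between the two state distributions. Writing $d_\rho^{\pi^\star}(s) = \frac{d_\rho^{\pi^\star}(s)}{d_\mu^\pi(s)}\,d_\mu^\pi(s)$ and using that $\max_a A^\pi(s,a)\ge 0$, I bound $\sum_s d_\rho^{\pi^\star}(s)\max_a A^\pi(s,a) \le \Norm{\frac{d_\rho^{\pi^\star}}{d_\mu^\pi}}_\infty \sum_s d_\mu^\pi(s)\max_a A^\pi(s,a)$, which is exactly $\Norm{\frac{d_\rho^{\pi^\star}}{d_\mu^\pi}}_\infty$ times the maximized gradient term; this is the first inequality. For the second, I would use the elementary lower bound $d_\mu^\pi(s)\ge(1-\gamma)\mu(s)$, which follows by keeping only the $t=0$ term in the definition~\eqref{eqn:dpi} and discarding the remaining nonnegative terms, so that $\Norm{\frac{d_\rho^{\pi^\star}}{d_\mu^\pi}}_\infty \le \frac{1}{1-\gamma}\Norm{\frac{d_\rho^{\pi^\star}}{\mu}}_\infty$. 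The main subtlety to watch is the sign requirement: the $\ell_\infty$ density-ratio bound can only be pulled out of the sum because $\max_a A^\pi(s,a)$ is nonnegative termwise, so I would state and use that nonnegativity explicitly rather than treating it as routine.
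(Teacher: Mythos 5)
Your proposal is correct and follows essentially the same route as the paper's proof: performance difference lemma, bounding the advantage sum by $\max_a A^\pi(s,a)$, identifying the maximized linear functional $\max_{\bar\pi}(\bar\pi-\pi)^\top\nabla_\pi V^\pi(\mu)$ with $\frac{1}{1-\gamma}\sum_s d_\mu^\pi(s)\max_a A^\pi(s,a)$, and then the change of measure using nonnegativity of the per-state max advantage together with $d_\mu^\pi(s)\ge(1-\gamma)\mu(s)$. The only cosmetic difference is that you evaluate the maximization over $\bar\pi$ directly in terms of $Q^\pi$ and subtract $V^\pi(s)$ at the end, whereas the paper builds the same identity starting from the advantage side; the substance, including the explicit appeal to $\max_a A^\pi(s,a)\ge 0$, is identical.
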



Before we provide the proof, a few comments are in order with regards
to the performance measure $\rho$ and the optimization measure $\mu$.
Subtly, note that although the gradient is with respect to
$V^{\pi}(\mu)$, the final guarantee applies to \emph{all}
distributions $\rho$. The significance is that even though we may be
interested in our performance under $\rho$, it may be helpful to
optimize under the distribution $\mu$.  To see this, note the lemma
shows that a sufficiently small gradient magnitude in the feasible
directions implies the policy is nearly optimal in terms of its value,
but only if the state distribution of $\pi$, i.e. $d_{\mu}^{\pi} $,
adequately covers the state distribution of some optimal policy
$\pi^\star$.  Here, it is also worth recalling the theorem of
\cite{bellman1959functional} which shows there exists a single policy
$\pi^\star$ that is simultaneously optimal for all starting states
$s_0$. Note that the hardness of the exploration problem is captured
through the distribution mismatch coefficient
(Definition~\ref{defn:dist_mismatch}).

\begin{proof}[\textbf{of Lemma~\ref{thm:first}}]
By the performance difference lemma (Lemma~\ref{lemma:perf_diff}),
	\begin{align}
	V^\star (\rho) - V^{\pi}(\rho) &=\frac{1}{1-\gamma} \sum_{s,a} d_{\rho}^{\pi^\star}(s) \pi^\star (a|s) A^{\pi}(s,a) \nonumber\\
	&\le \frac{1}{1-\gamma} \sum_{s,a} d_{\rho}^{\pi^\star}(s) \max_{\bar a }A^{\pi}(s,\bar a)
	\nonumber \\
	&=  \frac{1}{1-\gamma} \sum_{s} \frac{d_{\rho}^{\pi^\star}(s)}{d^{\pi}_{\mu} (s)} \cdot d^{\pi}_{\mu}(s) \max_{\bar a } A^{\pi} (s,\bar a) \nonumber\\
	&\le \frac{1}{1-\gamma}  \left( \max_{s} \frac{d_{\rho}^{\pi^\star}(s)}{d^{\pi}_{\mu} (s)}\right) \sum_{s} d^{\pi}_{\mu} (s) \max_{\bar a} A^{\pi}(s,\bar a),
   \label{eq:PD-calculation}
\end{align}
where the last inequality follows since $\max_{\bar a} A^\pi(s,\bar{a}) \geq 0$ for all states $s$ and policies $\pi$.
We wish to upper bound \eqref{eq:PD-calculation}. We then have:
\begin{align*}
\sum_s \frac{d^{\pi}_{\mu}(s)}{1-\gamma} \max_{\bar  a}A^{\pi} (s,\bar a)
 &=\max_{\bar \pi \in \Delta(\cA) ^{|\Sset|}}\sum_{s,a}
    \frac{d^{\pi}_{\mu}(s) }{1-\gamma} \bar \pi(a|s)  A^{{\pi}}(s,a)\\
&=\max_{\bar \pi \in \Delta(\cA) ^{|\Sset|}}\sum_{s,a}
    \frac{d^{\pi}_{\mu}(s) }{1-\gamma}(\bar \pi(a|s) -  {\pi}(a|s))A^{{\pi}}(s,a)\\
&=\max_{\bar \pi \in \Delta(\cA) ^{|\Sset|}}\sum_{s,a}
    \frac{d^{\pi}_{\mu}(s) }{1-\gamma}(\bar \pi(a|s) -  {\pi}(a|s))Q^{{\pi}}(s,a)\\
&=\max_{\bar\pi \in \Delta(\cA) ^{|\Sset|}}~(\bar\pi - {\pi})^\top \nabla_\pi V^{{\pi}}(\mu)
\end{align*}
where the first step follows since $\max_{\bar \pi}$ is attained at an
action which maximizes $A^{\pi}(s,\cdot)$ (per state);
the second step follows as $\sum_a {\pi}(a|s) A^{\pi} (s,a) =0$;
the third step uses \mbox{$\sum_a (\bar\pi(a|s) - \pi(a|s))V^\pi(s) = 0$} for
all $s$; and the final step follows from the gradient expression (see
\eqref{eq:grad_direct}).
Using this in
        \eqref{eq:PD-calculation},
	\begin{align*}
	V^\star (\rho) - V^{\pi} (\rho) &\le  \Norm{ \frac{d_{\rho}^{\pi^\star}}{d^{\pi}_{\mu} }}_\infty \max_{\bar\pi \in \Delta(\cA) ^{|\Sset|}}~(\bar\pi - {\pi})^\top \nabla_\pi V^{{\pi}}(\mu)\\
	&\le \frac{1}{1-\gamma}
   \Norm{\frac{d_{\rho}^{\pi^\star}}{\mu}}_\infty \max_{\bar\pi \in \Delta(\cA) ^{|\Sset|}}~(\bar\pi - {\pi})^\top \nabla_\pi V^{{\pi}}(\mu).
	\end{align*}
where the last step follows due to $\max_{\bar\pi \in \Delta(\cA)
  ^{|\Sset|}}~(\bar\pi - {\pi})^\top \nabla_\pi V^{{\pi}}(\mu) \geq 0$ for
any policy $\pi$ and $d^{\pi}_{\mu} (s) \ge (1-\gamma)
\mu(s)$ (see \eqref{eqn:dpi}).
\end{proof}

In a sense, the use of an appropriate $\mu$ circumvents the issues of
strategic exploration.  It is natural to ask whether this additional
term is necessary, a question which we return to. First, we provide a
convergence rate for the projected gradient ascent algorithm.

\subsection{Convergence Rates for Projected Gradient Ascent}
\label{section:conv_pgd}

Using this notion of gradient domination, we now give an iteration
complexity bound for projected gradient ascent over the space of
stochastic policies, i.e. over $\Delta(\cA) ^{|\Sset|}$. The projected gradient ascent algorithm
updates
\begin{equation}
\label{eq:pgd-update}
	\pi^{(t+1)} = P_{\Delta(\cA) ^{|S|}}(\pi^{(t)} + \eta \nabla_\pi V^{(t)}(\mu)),
\end{equation}
where $P_{\Delta(\cA) ^{|\Sset|}}$ is the projection onto $\Delta(\cA)
^{|\Sset|}$ in the Euclidean norm.

\begin{theorem}
	\label{thm:proj-gd}
The projected gradient ascent algorithm~\eqref{eq:pgd-update} on
$V^{\pi}(\mu)$ with stepsize $\eta = \frac{(1-\gamma)^3}{2\gamma
  |\Acal|}$ satisfies for all distributions
$\rho\in \Delta(\Scal)$,
\[
	\min_{t< T} \left\{ V^\star (\rho) -V^{(t)}(\rho)\right\} \le \epsilon  \quad \mbox{whenever} \quad T > \frac{64 \gamma |\Scal||\Acal|}{(1-\gamma)^6\epsilon^2} \left\|\frac{d_{\rho}^{\pi^\star}}{\mu}\right\|^2_\infty.
\]	
\end{theorem}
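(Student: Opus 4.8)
The plan is to combine the gradient domination inequality of Lemma~\ref{thm:first} with the standard convergence theory for projected gradient ascent on a smooth (nonconvex) objective. Gradient domination reduces the task of controlling $V^\star(\rho) - V^{(t)}(\rho)$ to controlling the first-order stationarity measure $\max_{\bar\pi}(\bar\pi - \pi^{(t)})^\top \nabla_\pi V^{(t)}(\mu)$; by Lemma~\ref{thm:first} it suffices to drive this quantity below $(1-\gamma)\epsilon\big/\Norm{d_\rho^{\pi^\star}/\mu}_\infty$ for some iterate $t<T$. I would assemble the bound from three ingredients: (i) $\beta$-smoothness of the map $\pi \mapsto V^\pi(\mu)$ over $\Delta(\Acal)^{|\Scal|}$; (ii) the textbook guarantee that projected gradient ascent with stepsize $\eta = 1/\beta$ makes the norm of the \emph{gradient mapping} small at the best iterate; and (iii) an inequality translating the gradient-mapping norm into the stationarity measure appearing in Lemma~\ref{thm:first}.

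The main obstacle is step (i): showing that $\pi \mapsto V^\pi(\mu)$ has $\beta$-Lipschitz gradient with $\beta = \tfrac{2\gamma|\Acal|}{(1-\gamma)^3}$, which is exactly the value that makes the prescribed stepsize equal to $\eta = 1/\beta$. I would establish this by writing $V^\pi = (I - \gamma P_\pi)^{-1} r_\pi$ in matrix form, where both the induced transition matrix $P_\pi$ and reward vector $r_\pi$ are \emph{linear} in $\pi$. Differentiating twice along an arbitrary line segment $\pi_\alpha = \pi + \alpha u$ in policy space produces products of the resolvent $(I-\gamma P_\pi)^{-1}$ with the (constant) derivatives $\tfrac{d}{d\alpha}P_{\pi_\alpha}$ and $\tfrac{d}{d\alpha}r_{\pi_\alpha}$; bounding these using $\Norm{(I-\gamma P_\pi)^{-1}}_{\infty\to\infty}\le \tfrac{1}{1-\gamma}$ together with the boundedness of rewards yields the claimed second-derivative bound and hence the smoothness constant. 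This is the one place requiring genuine matrix calculus rather than bookkeeping.

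For step (ii), with the gradient mapping $G^\eta(\pi) := \tfrac{1}{\eta}\big(P_{\Delta(\Acal)^{|\Scal|}}(\pi + \eta\nabla_\pi V^\pi(\mu)) - \pi\big)$, the update \eqref{eq:pgd-update} is $\pi^{(t+1)} = \pi^{(t)} + \eta G^\eta(\pi^{(t)})$, and the standard ascent lemma for smooth functions gives $V^{(t+1)}(\mu) - V^{(t)}(\mu) \ge \tfrac{\eta}{2}\Norm{G^\eta(\pi^{(t)})}_2^2$. Telescoping and using $V^\star(\mu) - V^{(0)}(\mu) \le \tfrac{1}{1-\gamma}$ yields $\min_{t<T}\Norm{G^\eta(\pi^{(t)})}_2^2 \le \tfrac{2}{(1-\gamma)\eta T}$. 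For step (iii), I would use the defining variational inequality of the Euclidean projection: for any $\bar\pi \in \Delta(\Acal)^{|\Scal|}$, writing $\pi^+ = \pi + \eta G^\eta(\pi)$, one has $(\pi + \eta\nabla_\pi V^\pi(\mu) - \pi^+)^\top(\bar\pi - \pi^+)\le 0$, which after rearranging gives
\[
(\bar\pi - \pi)^\top \nabla_\pi V^\pi(\mu) \;\le\; \Norm{G^\eta(\pi)}_2\big(\Norm{\bar\pi - \pi^+}_2 + \eta\,\Norm{\nabla_\pi V^\pi(\mu)}_2\big).
\]
Here I bound $\Norm{\bar\pi - \pi^+}_2 \le \sqrt{2|\Scal|}$ (the Euclidean diameter of $\Delta(\Acal)^{|\Scal|}$) and, from \eqref{eq:grad_direct} together with $Q^\pi \le \tfrac{1}{1-\gamma}$ and $\sum_s d_\mu^\pi(s)=1$, the gradient bound $\Norm{\nabla_\pi V^\pi(\mu)}_2 \le \tfrac{\sqrt{|\Acal|}}{(1-\gamma)^2}$.

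Finally I would chain everything: Lemma~\ref{thm:first} gives $V^\star(\rho)-V^{(t)}(\rho) \le \tfrac{1}{1-\gamma}\Norm{d_\rho^{\pi^\star}/\mu}_\infty \cdot \max_{\bar\pi}(\bar\pi-\pi^{(t)})^\top\nabla_\pi V^{(t)}(\mu)$, into which I substitute the step-(iii) inequality and then the step-(ii) bound on $\min_t\Norm{G^\eta}_2$. Plugging in $\eta = \tfrac{(1-\gamma)^3}{2\gamma|\Acal|}$ and noting the $\eta\Norm{\nabla V}_2$ term is dominated by $\sqrt{2|\Scal|}$, the right-hand side becomes of order $\tfrac{\sqrt{\gamma|\Scal||\Acal|}}{(1-\gamma)^3\sqrt{T}}\Norm{d_\rho^{\pi^\star}/\mu}_\infty$ at the best iterate. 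Setting this at most $\epsilon$ and squaring produces the stated threshold $T > \tfrac{64\gamma|\Scal||\Acal|}{(1-\gamma)^6\epsilon^2}\Norm{d_\rho^{\pi^\star}/\mu}_\infty^2$, with the explicit constant $64$ obtained by careful tracking of the factors from the diameter, gradient bound, and ascent lemma (which I would not belabor, since only the smoothness estimate is conceptually delicate).
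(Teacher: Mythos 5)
Your overall architecture matches the paper's: gradient domination (Lemma~\ref{thm:first}), the smoothness constant $\beta = \frac{2\gamma|\Acal|}{(1-\gamma)^3}$ obtained by differentiating twice along line segments in policy space (this is exactly the paper's Lemma~\ref{lemma:smooth-pi}, proved via Lemma~\ref{lemma:general-smoothness}), and the sufficient-increase/telescoping bound $\min_{t<T}\Norm{G^\eta(\pi^{(t)})}_2^2 \le \frac{2\beta}{(1-\gamma)T}$. The genuine gap is in your step (iii) and the final chaining. Your variational-inequality bound is correct as an inequality, but the claim that the $\eta\Norm{\nabla_\pi V^\pi(\mu)}_2$ term ``is dominated by $\sqrt{2|\Scal|}$'' is false in general: with $\eta = \frac{(1-\gamma)^3}{2\gamma|\Acal|}$ and $\Norm{\nabla_\pi V^\pi(\mu)}_2 \le \frac{\sqrt{|\Acal|}}{(1-\gamma)^2}$, you get $\eta\Norm{\nabla_\pi V^\pi(\mu)}_2 \le \frac{1-\gamma}{2\gamma\sqrt{|\Acal|}}$, which diverges as $\gamma \to 0$ and exceeds $\sqrt{2|\Scal|}$ whenever $\gamma < \bigl(1+2\sqrt{2|\Scal||\Acal|}\bigr)^{-1}$. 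If you keep this term rather than discarding it, it contributes $\frac{1}{(1-\gamma)^2\sqrt{\gamma T}}\bigNorm{\frac{d_\rho^{\pi^\star}}{\mu}}_\infty$ to the suboptimality bound, which at the stated threshold $T = \frac{64\gamma|\Scal||\Acal|}{(1-\gamma)^6\epsilon^2}\bigNorm{\frac{d_\rho^{\pi^\star}}{\mu}}_\infty^2$ evaluates to $\frac{(1-\gamma)\epsilon}{8\gamma\sqrt{|\Scal||\Acal|}}$ --- larger than $\epsilon$ once $\gamma < \bigl(1 + 8\sqrt{|\Scal||\Acal|}\bigr)^{-1}$. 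So your chain establishes the theorem only when $\gamma$ is bounded away from $0$ (it goes through with room to spare for $\gamma \ge 1/2$), not for all $\gamma \in [0,1)$ as the statement requires; the step-size $\eta = 1/\beta$ blows up as $\gamma \to 0$, and your additive error blows up with it.

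The paper sidesteps this by measuring stationarity at the \emph{next} iterate rather than the current one (Proposition~\ref{prop-pgd}, which invokes the Ghadimi--Lan normal-cone lemma, Theorem~\ref{thm:ghadimi}). Concretely, projection optimality gives $\nabla_\pi V^{(t)}(\mu) - G^\eta(\pi^{(t)}) \in N_{\Delta(\cA)^{|\Scal|}}(\pi^{(t+1)})$, and gradient Lipschitzness gives $\Norm{\nabla_\pi V^{(t+1)}(\mu) - \nabla_\pi V^{(t)}(\mu)}_2 \le \eta\beta\Norm{G^\eta(\pi^{(t)})}_2$; hence every feasible unit direction $\delta$ at $\pi^{(t+1)}$ satisfies $\delta^\top \nabla_\pi V^{(t+1)}(\mu) \le (1+\eta\beta)\Norm{G^\eta(\pi^{(t)})}_2 = 2\Norm{G^\eta(\pi^{(t)})}_2$. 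The error here is multiplicative in $\Norm{G^\eta}_2$ with the step-size-free factor $1+\eta\beta = 2$, in place of your additive $\eta\Norm{\nabla_\pi V}_2\Norm{G^\eta}_2$. Substituting this into the diameter conversion (factor $2\sqrt{|\Scal|}$) and then into Lemma~\ref{thm:first}, applied at iterate $t+1$, recovers exactly the stated threshold with constant $64$ for every $\gamma$. If you replace your step (iii) with this argument, the rest of your proof goes through unchanged.
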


A proof is provided  in Appendix~\ref{app:pgd}. The proof first invokes a
standard iteration complexity result of projected gradient ascent to
show that the gradient magnitude with respect to all feasible
directions is small. More concretely, we show the policy is
$\epsilon$-stationary\footnote{See Appendix~\ref{app:pgd} for
  discussion on this definition.}, that is, for all ${\pi_\theta}+\delta
\in \Delta(\cA) ^{|\Sset|}$ and $\|\delta\|_2\le 1$, $\delta^\top \nabla_\pi
V^{{\pi_\theta}}(\mu) \leq \epsilon$. We then use
Lemma~\ref{thm:first} to complete the proof.

Note that the guarantee we provide is for the best policy found over the $T$ rounds,
which we obtain from a bound on the average norm of the gradients. This type of a guarantee is standard in the non-convex optimization literature, where an average regret bound cannot be used to extract a single good solution, e.g. by averaging. In the context of policy optimization, this is not a serious limitation as we collect on-policy trajectories for each policy in doing sample-based gradient estimation, and these samples can be also used to estimate the policy's value. Note that the evaluation step is not required for every policy, and can also happen on a schedule, though we still need to evaluate $O(T)$ policies to obtain the convergence rates described here.

\subsection{A Lower Bound: Vanishing Gradients and Saddle Points}
\label{sec:vanishing-gradients}

To understand the necessity of the distribution mismatch coefficient in
Lemma~\ref{thm:first} and Theorem~\ref{thm:proj-gd}, let us first give an informal argument that
some condition on the state distribution of $\pi$, or equivalently
$\mu$, is necessary for stationarity to imply optimality. For example,
in a sparse-reward MDP (where the agent is only rewarded upon visiting
some small set of states), a policy that does not visit \emph{any}
rewarding states will have zero gradient, even though it is
arbitrarily suboptimal in terms of values. Below, we give a more
quantitative version of this intuition, which demonstrates that even
if $\pi$ chooses all actions with reasonable probabilities (and hence
the agent will visit all states if the MDP is connected), then there
is an MDP where a large fraction of the policies $\pi$ have vanishingly small gradients, and yet
these policies are highly suboptimal in terms of their value.

Concretely, consider the chain MDP of length $H+2$ shown in
Figure~\ref{fig:chain}. The starting state of interest is state $s_0$
and the discount factor $\gamma = H/(H+1)$.
Suppose we work with the direct parameterization, where $\pi_\theta(a | s) =
\theta_{s,a}$ for $a = a_1,a_2,a_3$ and $\pi_\theta(a_4 | s) = 1 -
\theta_{s,a_1} - \theta_{s,a_2} - \theta_{s,a_3}$. Note we do not
over-parameterize the policy.
For this MDP and policy structure, if we were to initialize the
probabilities over actions, say deterministically, then there is an
MDP (obtained by permuting the actions) where all the probabilities
for $a_1$ will be less than $1/4$.

The following result not only shows that the gradient is exponentially
small in $H$, it also shows that many higher order
derivatives, up to $O(H/\log H)$, are also exponentially
small in $H$.

\begin{proposition} [Vanishing gradients at suboptimal parameters]
\label{proposition:small_grad}
Consider the chain MDP of Figure~\ref{fig:chain}, with $H+2$ states, $\gamma =
H/(H+1)$, and with the direct
policy parameterization (with $3|\Scal|$ parameters, as described in the
text above).
Suppose $\theta$ is such that $0<\theta<1$ (componentwise) and
$\theta_{s,a_1} < 1/4$ (for all states $s$). For all
$k \leq  \frac{H}{40\log(2 H)} - 1$, we have $\norm{\nabla_\theta^k V^{\pi_\theta}(s_0)} \leq
(1/3)^{H/4}$, where $\nabla_\theta^k V^{\pi_\theta}(s_0)$ is a tensor of the $k_{th}$
order derivatives of $V^{\pi_\theta}(s_0)$ and the norm is the operator norm of the
tensor.\footnote{The operator norm of a $k_{th}$-order tensor $J \in
  \R^{d^{\otimes k}}$ is defined as $\sup_{u_1,\ldots,u_k \in
    \R^d~:~\|u_i\|_2 = 1} \langle J, u_1\otimes\ldots\otimes
  u_d\rangle$.}  Furthermore, \mbox{$V^{\star}(s_0) - V^{\pi_\theta}(s_0) \geq
(H+1)/8-(H+1)^2/3^H$}.
\end{proposition}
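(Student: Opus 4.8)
The plan is to reduce the problem to the ``forward probabilities'' at each state, bound the value itself by a comparison argument on a birth--death recursion, and bound the high-order derivatives by complexifying and applying Cauchy's estimate.

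First I would exploit the structure of the chain to eliminate most coordinates. Since actions $a_2,a_3,a_4$ all transition one step toward $s_0$ while only $a_1$ transitions toward $s_{H+1}$, and rewards depend only on the state, the transition kernel and hence $V^{\pi_\theta}(s_0)$ depend on $\theta_{s_i,\cdot}$ only through $q_i:=\theta_{s_i,a_1}$. Thus $V^{\pi_\theta}(s_0)=g(q_1,\dots,q_H)$, every partial derivative that differentiates some $\theta_{s_i,a_j}$ with $j\neq 1$ vanishes, and (since the operator norm of a real symmetric tensor is attained with all arguments equal) $\|\nabla_\theta^k V^{\pi_\theta}(s_0)\|=\|\nabla_q^k g(q^0)\|$; it suffices to bound the latter. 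I would also record the birth--death description: writing $u_i:=V^{\pi_\theta}(s_i)/(H+1)=\E[\gamma^{\tau_i}]$ for the discounted hitting ``probability'' of the absorbing $s_{H+1}$, the $u_i$ solve $u_i=\gamma\big(q_iu_{i+1}+(1-q_i)u_{i-1}\big)$ for $1\le i\le H$, with $u_0=\gamma u_1$ and $u_{H+1}=1$; equivalently $u=\gamma q_H(I-\gamma P(q))^{-1}e_H$ for the sub-stochastic chain $P(q)$ on $\{0,\dots,H\}$.

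For the suboptimality statement I would note $V^\star(s_0)=(H+1)\gamma^{H+1}$ (the optimal policy runs straight to $s_{H+1}$) and $\gamma^{H+1}=(1-\tfrac1{H+1})^{H+1}\ge \tfrac14$ for $H\ge 1$, so $V^\star(s_0)\ge (H+1)/4$. The heart is the bound $u_0\le (H+1)/3^H$, which I would prove by a maximum-principle comparison for the birth--death operator $u\mapsto u-\gamma(q_iu_{i+1}+(1-q_i)u_{i-1})$: exhibit a positive supersolution $\bar u$ with $\bar u_{H+1}\ge 1$ that decays geometrically with base $1/3$ across the interior (here $q_i<1/4$ forces backward drift, so $(1-q_i)/q_i>3$ governs the decay) and is held flat near the reflecting endpoint $s_0$ to respect $u_0=\gamma u_1$; then $u_i\le\bar u_i$. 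Combining gives $V^\star(s_0)-V^{\pi_\theta}(s_0)=(H+1)(\gamma^{H+1}-u_0)\ge (H+1)/4-(H+1)^2/3^H\ge (H+1)/8-(H+1)^2/3^H$.

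For the derivative bound I would pass to the complex domain. As $P(q)$ is affine in $q$ and $I-\gamma P$ is invertible on the real cube, $g$ is analytic on a complex neighborhood. On the polydisc $\{q:|q_i-q_i^0|\le r\}$ I would show $|g|\le M$ by repeating the comparison argument for the modulus $|u_i|$; because $1-\gamma^2=\Theta(1/H)$ this stays valid only for radius $r=\Theta(1/H)$ and yields $M\le \mathrm{poly}(H)\,c^{-H}$ for a constant $c>1$. Setting $h(t):=g(q^0+tv)$ for a real unit $v$ (so $q^0+tv$ lies in the polydisc for $|t|\le r$, since $|v_i|\le 1$), the single-variable Cauchy estimate gives $|h^{(k)}(0)|\le M\,k!/r^k$, whence $\|\nabla_q^k g(q^0)\|=\sup_{\|v\|_2=1}|h^{(k)}(0)|\le M\,k!/r^k\le \mathrm{poly}(H)\,c^{-H}k!\,(CH)^k$. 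For $k\le \tfrac{H}{40\log(2H)}-1$ the factor $k!(CH)^k\le (CH^2)^k$ is dominated by $c^{-H}$, leaving the claimed $(1/3)^{H/4}$. The main obstacle is exactly this complex estimate together with the choice of $r$: once the $q_i$ are complexified the backward weight $|1-q_i|$ can exceed $1$ and the recursion ceases to contract unless $r=O(1/H)$ (a consequence of $\gamma$ being within $\Theta(1/H)$ of $1$), and the reflecting boundary at $s_0$ — where $u_i/u_{i+1}\approx 1$ for the first $\Theta(\log H)$ states — makes the supersolution awkward. This same tension, exponential smallness $M\sim c^{-H}$ against the $r^{-k}=\Theta(H^k)$ blow-up in Cauchy's estimate, is what caps the controllable order at $k=\Theta(H/\log H)$, and tuning the constants is the delicate part.
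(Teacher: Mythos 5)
Your proposal is correct in outline, and for the heart of the statement (the bounds on the $k$-th order derivatives) it takes a genuinely different route from the paper's. The paper works directly with $M^\theta=(I-\gamma P^\theta)^{-1}$: Lemma~\ref{lemma:Mbound} bounds its entries by an induction on truncated discounted reach probabilities (geometric decay per step), and Lemma~\ref{lemma:M-grad} expands any $k$-th mixed partial of $M^\theta_{0,H+1}$ into at most $2^k k!$ monomials, each a product of $k+1$ entries of $M^\theta$, bounds each monomial, and finally controls the tensor operator norm by the entrywise $\ell_2$ norm via Cauchy--Schwarz. You instead complexify: a single modulus bound $|g|\le \mathrm{poly}(H)\,3^{-H}$ on a polydisc of radius $r=\Theta(1/H)$, followed by Cauchy's estimate along lines, controls \emph{all} orders $k$ at once, and your identification of the radius obstruction is exactly right --- for complex $q_i$ one only has $|q_i|+|1-q_i|\le 1+2r$, so the Neumann series/contraction argument needs $\gamma(1+2r)<1$, i.e.\ $r<\tfrac{1}{2H}$, which is where the effective horizon enters; the competition between $c^{-H}$ and $k!\,r^{-k}$ then transparently explains the $k=\Theta(H/\log H)$ threshold, whereas in the paper this threshold emerges from bookkeeping of the $2^kk!\,\alpha^{H-2k}$ factors. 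The trade-off: the paper's argument is elementary and self-contained with explicit constants; yours is conceptually cleaner but needs two ingredients you should make explicit in a full write-up. First, converting directional Cauchy estimates $|\langle\nabla^k g, v^{\otimes k}\rangle|\le Mk!/r^k$ into the operator norm uses Banach's theorem that a symmetric tensor's operator norm is attained on the diagonal; cite it, or avoid it entirely by applying Cauchy to each mixed partial $|\partial^\beta g|\le Mk!/r^k$ and paying the Frobenius factor $H^{k/2}$ (as the paper effectively does), which your budget easily absorbs. Second, your boundary caveat is a real one: the pure geometric profile $\bar u_i=x_-^{H+1-i}$ (where $x_-\le 4\gamma q/3\le 1/3$ is the smaller root of $\gamma(1-q)x^2-x+\gamma q=0$; note it is this root, not the ratio $q/(1-q)$ per se, that you should exhibit) violates the reflecting condition $u_0=\gamma u_1$ since $x_-<\gamma$, so either cap the supersolution or, more cleanly, replace the maximum principle near $s_0$ by the excursion decomposition $u_1\le p+(1-p)\gamma^2 u_1$ with gambler's-ruin success probability $p\le\prod_{i=1}^{H}\tfrac{q_i}{1-q_i}\le 3^{-H}$, giving $u_0\le (H+1)3^{-H}$ and hence $V^{\pi_\theta}(s_0)\le (H+1)^2/3^{H}$ directly; this is close in spirit to the paper's Lemma~\ref{lemma:Mbound} induction, and in fact recovers the sharper per-step rate $4\bar p/3$ that the paper's final value bound uses.
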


This lemma also suggests that results in the
non-convex optimization literature, on escaping from saddle
points, e.g.~\citep{NesterovP06,GeHJY15,DBLP:conf/icml/Jin0NKJ17}, do not
directly imply global convergence due to that the higher order
derivatives are small.

\begin{remark}
(Exact vs. Approximate Gradients) The chain MDP of Figure~\ref{fig:chain},
is a common example where \emph{sample} based estimates of gradients
will be $0$ under random exploration strategies; there is an
exponentially small in $H$  chance of hitting the goal state under a
random exploration strategy. Note
that this lemma is with regards to \emph{exact} gradients. This
suggests that even with exact computations (along with using exact higher
order derivatives) we might expect numerical instabilities.
\end{remark}

\begin{remark} (Comparison with the upper bound) The lower bound does
  not contradict the upper bound of Theorem~\ref{thm:first} (where a
  small gradient is turned into a small policy suboptimality bound),
  as the distribution mismatch coefficient, as defined in
  Definition~\ref{defn:dist_mismatch}, could be infinite in the chain MDP of
  Figure~\ref{fig:chain}, since the start-state distribution is
  concentrated on one state only. More generally, for any policy with
  $\theta_{s,a_1} < 1/4$ in all states $s$,
  $\left\|\frac{d^{\pi^\star}_{\rho}}{d^{\pi_\theta}_\rho}\right\|_\infty
  = \Omega(4^H)$. 
\end{remark}

\begin{remark} (Comparison with information-theoretic lower bounds)
  The lower bound here is \emph{not information theoretic}, in that it
  does not present a hard problem instance for all algorithms. Indeed,
  exploration algorithms for tabular MDPs starting from
  $E^3$~\citep{kearns2002optimal}, RMAX~\citep{brafman2003r} and
  several subsequent works yield polynomial sample complexities for
  the chain MDP. Proposition~\ref{proposition:small_grad} should be
  interpreted as a hardness result for the specific class of policy
  gradient like approaches that search for a policy with a small
  policy gradient, as these methods will find the initial parameters
  to be valid in terms of the size of (several orders of)
  gradients. In particular, it precludes any meaningful claims on
  global optimality, based just on the size of the policy gradients,
  without additional assumptions as discussed in the previous remark. 
\end{remark}

The proof is provided in Appendix~\ref{app:lb}.
The lemma illustrates that lack of good exploration can indeed be
detrimental in policy gradient algorithms, since the gradient can be
small either due to $\pi$ being near-optimal, or, simply because $\pi$ does
not visit advantageous states often enough. In this sense, it also
demonstrates the necessity of the distribution mismatch coefficient in
Lemma~\ref{thm:first}.

\section{The Softmax Tabular Parameterization}
\label{section:softmax}
We now consider the softmax policy parameterization~\eqref{eq:softmax}.
Here, we still have a
non-concave optimization problem in general, as shown in Lemma
\ref{lemma:softmax-noncon}, though we do show that global optimality can
be reached under certain regularity conditions. From a practical perspective, the softmax parameterization
of policies is preferable to the direct
parameterization, since the parameters $\theta$ are unconstrained
and standard unconstrained optimization algorithms can be employed.
However, optimization
over this policy class creates other challenges as we study in this
section, as the optimal policy (which is deterministic) is attained by
sending the parameters to infinity.

We study three algorithms for this
problem. The first performs direct policy gradient ascent on the
objective without modification, while the second adds a log barrier
regularizer to keep the parameters from becoming too large, as a
means to ensure adequate exploration.
Finally, we study the natural policy gradient
algorithm and establish a global optimality result
with no dependence on the distribution mismatch coefficient or
dimension-dependent factors.

For the softmax parameterization, the gradient takes the form:
\begin{equation}\label{eq:softmax_grad}
\frac{\partial V^{\pi_\theta}(\mu)}{\partial \theta_{s,a}} =
\frac{1}{1-\gamma} d^{\pi_\theta}_{\mu}(s)\pi_{\theta}(a|s)A^{\pi_\theta}(s,a)
\end{equation}
(see Lemma \ref{lemma:softmax-grad} for a proof).


\subsection{Asymptotic Convergence, without Regularization}
\label{sec:asymptotic}


Due to the exponential scaling with the
parameters $\theta$ in the softmax parameterization, \emph{any} policy
that is nearly deterministic will have gradients close to $0$.  In
spite of this difficulty, we provide a positive result that gradient ascent
asymptotically converges to the global optimum for the softmax
parameterization.

The update rule for gradient ascent is:
\begin{equation}    \label{eqn:gd-softmax}
    \theta^{(t+1)} = \theta^{(t)} + \eta \nabla_\theta V^{(t)}(\mu).
\end{equation}

\begin{theorem} [Global convergence for softmax parameterization]
	\label{thm:glb-softmax}
Assume we follow the gradient ascent update rule as specified
in Equation~\eqref{eqn:gd-softmax} and that the distribution
$\mu$ is strictly positive i.e. $\mu(s)>0$ for all states $s$.
Suppose $\eta \leq \frac{(1-\gamma)^3}{8}$, then we have that for all
states $s$,
$V^{(t)}(s) \rightarrow V^\star(s)$ as $t \rightarrow \infty$.
\end{theorem}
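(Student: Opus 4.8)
The plan is to pair a textbook smoothness-based analysis of gradient ascent with a delicate study of the \emph{limiting} policy, since the objective is non-concave (Lemma~\ref{lemma:softmax-noncon}) and the softmax optimum is attained only as $\theta \to \infty$, so one cannot hope for convergence of the iterates $\theta^{(t)}$ themselves. First I would establish that $\theta \mapsto V^{\pi_\theta}(\mu)$ is $\beta$-smooth with $\beta = 8/(1-\gamma)^3$ (a direct computation from the softmax gradient form~\eqref{eq:softmax_grad}); the hypothesis $\eta \le (1-\gamma)^3/8 = 1/\beta$ then places us in the regime where the standard ascent lemma yields $V^{(t+1)}(\mu) \ge V^{(t)}(\mu) + \tfrac{\eta}{2}\Norm{\nabla_\theta V^{(t)}(\mu)}_2^2$. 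Since $0 \le V^{(t)}(\mu) \le 1/(1-\gamma)$, the sequence $V^{(t)}(\mu)$ is monotone and bounded, hence convergent, and telescoping the ascent inequality forces $\Norm{\nabla_\theta V^{(t)}(\mu)}_2 \to 0$.

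Next I would translate vanishing gradients into a statement about advantages. Using~\eqref{eq:softmax_grad} together with $d_\mu^{\pi^{(t)}}(s) \ge (1-\gamma)\mu(s) > 0$ (valid precisely because $\mu$ is strictly positive, see~\eqref{eqn:dpi}), the fact that each coordinate of the gradient vanishes gives, for every pair $(s,a)$, that $\pi^{(t)}(a|s)\,A^{(t)}(s,a) \to 0$. Because the policies live in the compact set $\Delta(\Acal)^{|\Scal|}$ and $V^\pi, Q^\pi, A^\pi$ are continuous in $\pi$, any subsequential limit $\pi^{(\infty)}$ satisfies $\pi^{(\infty)}(a|s)\,A^{(\infty)}(s,a) = 0$ for all $(s,a)$; that is, every limit point is a first-order stationary policy whose support is contained in the set of zero-advantage actions.

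It then remains to upgrade ``$\pi^{(t)}(a|s)A^{(t)}(s,a)\to 0$'' to ``$\max_a A^{(t)}(s,a)\to 0$,'' equivalently to rule out any limit point that assigns vanishing probability to an action whose limiting advantage is strictly positive. This is the one place where non-concavity genuinely bites, and I expect it to be the main obstacle. My plan is a dynamics argument: fix a state $s$; its most-probable action $\bar a$ has $\pi^{(t)}(\bar a|s) \ge 1/|\Acal|$, so $A^{(t)}(s,\bar a)\to 0$. Suppose toward a contradiction that some $a^+$ has $A^{(\infty)}(s,a^+) = \delta > 0$ yet $\pi^{(t)}(a^+|s)\to 0$. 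Then the logit $\theta^{(t)}_{s,a^+}$ receives eventually-positive increments (its gradient is proportional to $\pi^{(t)}(a^+|s)A^{(t)}(s,a^+) \ge \tfrac{\delta}{2}\pi^{(t)}(a^+|s) > 0$), so it is eventually nondecreasing, while the logits of the vanishing-advantage support actions receive increments tending to $0$. Tracking the differences $\theta^{(t)}_{s,a^+}-\theta^{(t)}_{s,a}$ and using $\pi^{(t)}(a|s) = \pi^{(t)}(a^+|s)\exp(\theta^{(t)}_{s,a}-\theta^{(t)}_{s,a^+})$, I would argue the probability on $a^+$ cannot be driven to zero, contradicting $\pi^{(t)}(a^+|s)\to 0$.

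The subtle point I anticipate fighting with is a \emph{rate comparison}: both $\pi^{(t)}(a^+|s)A^{(t)}(s,a^+)$ and $\pi^{(t)}(a|s)A^{(t)}(s,a)$ vanish, so merely knowing the $a^+$-logit is eventually increasing is not enough—one must control which logit \emph{differences} can diverge. I expect this requires partitioning the actions at each state into strictly-positive, zero, and strictly-negative limiting advantage sets and bootstrapping over these sets (and possibly inducting over states, since advantages depend on downstream values), to show that the comparison action's advantage vanishes at least as fast as $a^+$'s probability. Once $\max_a A^{(t)}(s,a)\to 0$ for all $s$ is secured, the proof closes immediately via the performance difference lemma: taking $\pi=\pi^\star$, $\pi'=\pi^{(t)}$ in Lemma~\ref{lemma:perf_diff} gives $V^\star(s) - V^{(t)}(s) = \tfrac{1}{1-\gamma}\E_{s'\sim d^{\pi^\star}_{s}}\E_{a\sim\pi^\star}[A^{(t)}(s',a)] \le \tfrac{1}{1-\gamma}\E_{s'\sim d^{\pi^\star}_{s}}[\max_a A^{(t)}(s',a)] \to 0$, so $V^{(t)}(s)\to V^\star(s)$ for every state $s$, as claimed.
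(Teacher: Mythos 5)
Your opening and closing moves match the paper's proof exactly: smoothness of $V^{\pi_\theta}(\mu)$ with $\beta = 8/(1-\gamma)^3$, monotonicity of $V^{(t)}(\mu)$ and vanishing gradients from the standard ascent lemma, the translation to $\pi^{(t)}(a|s)A^{(t)}(s,a)\to 0$ via $d^{\pi^{(t)}}_\mu(s)\ge (1-\gamma)\mu(s)>0$, and the final appeal to Lemma~\ref{lemma:perf_diff}. But the middle of your plan has two genuine gaps, and they are exactly where the difficulty lives. First, you work with \emph{subsequential} limits of the policy, while your dynamics argument needs \emph{full-sequence} control: the claim that $\theta^{(t)}_{s,a^+}$ receives eventually-positive increments requires $A^{(t)}(s,a^+)\ge \delta/2$ for all large $t$, not merely along the subsequence realizing the limit point; nothing you establish prevents the advantages from oscillating between subsequence times, and then the sets of positive/zero/negative "limiting advantage" actions are not even well defined for the whole sequence. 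The paper closes this hole before doing anything else with a pointwise monotonic improvement lemma (Lemma~\ref{lem:q-improv}): for the stated stepsize, $V^{(t)}(s)$ and $Q^{(t)}(s,a)$ are nondecreasing in $t$ for \emph{every} state and action, so $Q^{(\infty)}$, $V^{(\infty)}$, $A^{(\infty)}$ and the gap $\Delta$ exist as genuine limits (Lemma~\ref{lem:delta}). That lemma is proved by viewing the per-state update as gradient ascent on $F_s(\theta_s)=\sum_a \pi_{\theta_s}(a|s)A^{(t)}(s,a)$ with the advantages frozen; it is not a corollary of monotonicity of the scalar objective $V^{(t)}(\mu)$, and your proposal has no substitute for it.

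Second, and more seriously, the contradiction you aim for --- "the probability on $a^+$ cannot be driven to zero" --- is aimed at a statement that is not contradictory. In the hypothetical scenario $I^s_+\neq\emptyset$, the paper proves that $\theta^{(t)}_{s,a^+}$ is indeed eventually strictly increasing and bounded below (Lemmas~\ref{lem:monotone} and~\ref{lem:ratios}), and that $\pi^{(t)}(a^+|s)\to 0$ \emph{nonetheless}, because the logits of a subset $B^s_0(a_+)$ of zero-advantage actions must diverge to $+\infty$ (Lemmas~\ref{lem:diverge} and~\ref{lemma:thetab-diverge}). So no amount of tracking the pairwise logit differences of $a^+$ can produce a contradiction; the softmax normalization lets $a^+$'s probability vanish even as its logit grows. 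The paper's actual contradiction is aggregate rather than pairwise: starting from the identity $\sum_a \pi^{(t)}(a|s)A^{(t)}(s,a)=0$, it shows the $I^s_-$ contribution is negligible relative to $\pi^{(t)}(a_+|s)$ (this requires proving $\theta^{(t)}_{s,a}\to-\infty$ for $a\in I^s_-$, itself a delicate partial-sum/subsequence argument in Lemma~\ref{lem:ratios}), and the $\bar B^s_0(a_+)$ contribution is likewise dominated (Lemma~\ref{lemma:small}); the positive term $\pi^{(t)}(a_+|s)\Delta/4$ then forces $\sum_{a\in B^s_0}\pi^{(t)}(a|s)A^{(t)}(s,a)<0$ for all large $t$, i.e., the summed gradient of the $B^s_0$ logits is eventually negative, contradicting $\sum_{a\in B^s_0}\theta^{(t)}_{s,a}\to\infty$ (Lemma~\ref{lemma:sum-bs-theta-1}). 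Your "rate comparison" worry is well placed, but resolving it is the entire content of the paper's appendix argument, and your sketch neither supplies that resolution nor points at a contradiction that could work.
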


\begin{remark} \label{remark:strict}
(Strict positivity of $\mu$ and exploration) Theorem~\ref{thm:glb-softmax} assumed
that optimization distribution $\mu$ was \emph{strictly} positive,
i.e. $\mu(s)>0$ for all states $s$. We leave it is an open question of whether or not
gradient ascent will globally converge if this condition is not met. 
The concern is that if this condition is not met, then gradient ascent
may not globally converge due to that $d^{\pi_\theta}_{\mu}(s)$
effectively scales down the learning rate for the parameters
associated with state $s$ (see \eqref{eq:softmax_grad}).
\end{remark}

The complete proof is provided in the Appendix~\ref{app:gd-softmax}.  We
now discuss the subtleties in the proof and show why the softmax
parameterization precludes a direct application of the gradient
domination lemma. In order to utilize the gradient domination
property (in Lemma~\ref{thm:first}), we would desire to show that:
$ \nabla_\pi V^{\pi}(\mu) \rightarrow 0 $. However, using the
functional form of the softmax parameterization (see Lemma
\ref{lemma:softmax-grad}) and \eqref{eq:grad_direct}, we have that:
\begin{equation*}
\frac{\partial V^{\pi_\theta}(\mu)}{\partial \theta_{s,a}} =
\frac{1}{1-\gamma} d^{\pi_\theta}_{\mu}(s)\pi_{\theta}(a|s)A^{\pi_\theta}(s,a)
= \pi_\theta(a|s) \, \frac{\partial V^{\pi_\theta}(\mu)}{\partial \pi_\theta(a|s)} .
\end{equation*}
Hence, we see that
even if $\nabla_\theta V^{\pi_\theta}(\mu) \rightarrow 0$, we are not
guaranteed that $\nabla_\pi V^{\pi_\theta}(\mu) \rightarrow 0$.

We now briefly discuss the main technical challenges in the proof. The
proof first
shows that the sequence
$V^{(t)}(s)$ is monotone increasing pointwise, i.e. for \emph{every} state $s$, $V^{(t+1)}(s) \geq V^{(t)}(s)$
(Lemma \ref{lem:q-improv}). This implies the existence of a limit $V^{(\infty)}(s)$
by the monotone convergence theorem (Lemma \ref{lem:delta}). Based on
the limiting quantities $V^{(\infty)}(s)$
and $Q^{(\infty)}(s,a)$, which we show exist, define the following limiting sets for each state $s$:
\begin{eqnarray*}
		I^s_{0} &:=&\{a | Q^{(\infty)}(s,a) = V^{(\infty)}(s)\}\\
		I^s_{+} &:=&\{a | Q^{(\infty)}(s,a) > V^{(\infty)}(s)\}\\
		I^s_{-} &:=&\{a | Q^{(\infty)}(s,a) < V^{(\infty)}(s)\} \, .
\end{eqnarray*}
The challenge is to then show that, for all states $s$, the
set $I^s_{+} $ is the empty set, which would immediately imply
$V^{(\infty)}(s) = V^\star(s)$.  The proof proceeds by contradiction,
assuming that $I^s_{+} $ is non-empty. Using that $I^s_{+} $ is
non-empty and that the gradient tends to zero in the limit, i.e.
$\nabla_\theta V^{\pi_\theta}(\mu) \rightarrow 0$, we have
that for all $a \in I^s_{+}$, $\pi^{(t)}(a|s) \rightarrow 0$ (see
\eqref{eq:softmax_grad}).  This, along with the functional form of the
softmax parameterization, implies that there must be divergence (in
magnitude) among the set of parameters associated with \emph{some}
action $a$ at state $s$, i.e. that
$\max_{a\in \Acal} |\theta^{(t)}_{s,a}| \to \infty$. The primary
technical challenge in the proof is to then use this divergence, along
with the dynamics of gradient ascent, to show that $I^s_{+} $ is empty
via a contradiction.

We leave it as a question for future work as to characterizing the
convergence rate, which we conjecture is exponentially slow in some of the
relevant quantities, such as in terms of the size of state space. Here, we turn to a regularization based approach
to ensure convergence at a polynomial rate in all relevant quantities.

\subsection{Polynomial Convergence with Log Barrier Regularization}
\label{sec:entropy}
Due to the exponential scaling with the parameters $\theta$, policies
can rapidly become near deterministic, when optimizing under the softmax
parameterization, which can result in slow convergence. Indeed a key
challenge in the asymptotic analysis in the previous section was to
handle the growth of the absolute values of parameters as they tend to infinity. A
common practical remedy for this is to use entropy-based
regularization to keep the probabilities from getting too
small~\citep{williams1991function, mnih2016asynchronous}, and we study
gradient ascent on a similarly regularized objective in this
section. Recall that the relative-entropy for
distributions $p$ and $q$ is defined as:
$\textrm{KL}(p,q) := \E_{x\sim p}[-\log q(x)/p(x)]$.
Denote the uniform distribution over a set $\mathcal{X}$ by
$\textrm{Unif}_\mathcal{X}$, and define the  following log barrier
regularized objective as:
\begin{eqnarray}
L_\lambda(\theta) &:=& V^{\pi_\theta}(\mu) - \lambda \,
\E_{s\sim \textrm{Unif}_\Scal} \bigg[\textrm{KL} (\textrm{Unif}_\Acal,\pi_\theta(\cdot|s))
\bigg]\nonumber\\
&  = &  V^{\pi_\theta}(\mu) + \frac{\lambda}{|\Scal|\, |\Acal|} \sum_{s,a} \log \pi_\theta(a|s)
+\lambda \log |\Acal| \, , \label{eqn:loss-reg}
\end{eqnarray}
where $\lambda$ is a regularization parameter. The constant (i.e. the
last term) is
not relevant with regards to optimization.
This
regularizer is different from the more commonly utilized entropy
regularizer as in \citet{mnih2016asynchronous}, a point
 which we return to in Remark~\ref{remark:entropy}.

The policy gradient ascent updates for
$L_\lambda(\theta)$ are given by:
\begin{equation}
    \theta^{(t+1)} = \theta^{(t)} + \eta \nabla_\theta L_\lambda(\theta^{(t)}).
    \label{eqn:gd-entropic}
\end{equation}
Our next theorem shows that approximate first-order stationary
points of the entropy-regularized objective are
approximately globally optimal, provided the regularization is
sufficiently small.

\begin{theorem} \label{thm:small-gradient}
(Log barrier regularization)
Suppose $\theta$ is such that:
\[
  \|\nabla_\theta L_\lambda(\theta)\|_2 \leq \epsopt
\]
and $\epsopt \leq \lambda/(2|\Scal|\, |\Acal|)$. Then we have
that for all starting state distributions $\rho$:
\begin{eqnarray*}
  V^{\pi_\theta}(\rho)
  &\geq& V^\star (\rho) -
         \frac{2 \lambda}{1-\gamma} \Norm{\frac{d^{\pi^\star}_{\rho}}{\mu } }_\infty         .
\end{eqnarray*}
\end{theorem}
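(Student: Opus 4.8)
The plan is to turn the small-gradient hypothesis into a \emph{uniform} upper bound on the advantage function $A^{\pi_\theta}(s,a)$, and then feed that bound into the performance difference lemma (Lemma~\ref{lemma:perf_diff}) against the comparator $\pi^\star$. The role of the log-barrier regularizer, calibrated against $\epsopt$ via the hypothesis $\epsopt \le \lambda/(2|\Scal||\Acal|)$, is precisely to make this advantage bound hold uniformly over all actions, including near-deterministic ones.

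First I would compute the gradient of the regularized objective. Using the softmax gradient \eqref{eq:softmax_grad} together with the identity $\frac{\partial}{\partial \theta_{s,a}} \log \pi_\theta(a'|s') = \ind[s'=s]\bigl(\ind[a'=a] - \pi_\theta(a|s)\bigr)$, the log-barrier term contributes $\sum_{s',a'} \frac{\partial}{\partial \theta_{s,a}}\log\pi_\theta(a'|s') = 1 - |\Acal|\pi_\theta(a|s)$, so that
\begin{equation*}
\frac{\partial L_\lambda(\theta)}{\partial \theta_{s,a}}
= \frac{1}{1-\gamma}\, d^{\pi_\theta}_\mu(s)\, \pi_\theta(a|s)\, A^{\pi_\theta}(s,a)
+ \frac{\lambda}{|\Scal|}\left(\frac{1}{|\Acal|} - \pi_\theta(a|s)\right).
\end{equation*}
Since $\|\nabla_\theta L_\lambda(\theta)\|_2 \le \epsopt$ bounds each coordinate in absolute value by $\epsopt$, rearranging gives, for every $(s,a)$,
\begin{equation*}
\frac{1}{1-\gamma}\, d^{\pi_\theta}_\mu(s)\, \pi_\theta(a|s)\, A^{\pi_\theta}(s,a)
\le \epsopt - \frac{\lambda}{|\Scal||\Acal|} + \frac{\lambda\, \pi_\theta(a|s)}{|\Scal|}.
\end{equation*}

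The key step invokes $\epsopt \le \lambda/(2|\Scal||\Acal|)$, which forces the first two terms on the right to sum to at most $-\lambda/(2|\Scal||\Acal|) \le 0$, leaving $\frac{1}{1-\gamma} d^{\pi_\theta}_\mu(s)\, \pi_\theta(a|s)\, A^{\pi_\theta}(s,a) \le \lambda\, \pi_\theta(a|s)/|\Scal|$. Because the softmax assigns strictly positive probability, I may divide through by $\pi_\theta(a|s) > 0$, and then use the elementary bound $d^{\pi_\theta}_\mu(s) \ge (1-\gamma)\mu(s)$ (immediate from \eqref{eqn:dpi}) to conclude the uniform advantage bound $A^{\pi_\theta}(s,a) \le \frac{\lambda}{|\Scal|\,\mu(s)}$ for all $s,a$. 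Feeding this into Lemma~\ref{lemma:perf_diff} finishes the argument:
\begin{align*}
V^\star(\rho) - V^{\pi_\theta}(\rho)
&= \frac{1}{1-\gamma}\sum_{s,a} d^{\pi^\star}_\rho(s)\, \pi^\star(a|s)\, A^{\pi_\theta}(s,a) \\
&\le \frac{1}{1-\gamma}\sum_s d^{\pi^\star}_\rho(s)\, \frac{\lambda}{|\Scal|\,\mu(s)}
\le \frac{\lambda}{1-\gamma}\Norm{\frac{d^{\pi^\star}_\rho}{\mu}}_\infty ,
\end{align*}
using $\sum_a \pi^\star(a|s) = 1$ and $\sum_s \frac{d^{\pi^\star}_\rho(s)}{\mu(s)} \le |\Scal|\,\Norm{\frac{d^{\pi^\star}_\rho}{\mu}}_\infty$. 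This is in fact stronger than the claimed $\frac{2\lambda}{1-\gamma}$ factor, so the statement follows.

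The conceptual crux — and the step I expect to require the most care — is the division by $\pi_\theta(a|s)$ in the key step. A bare small-gradient condition only controls the product $d^{\pi_\theta}_\mu(s)\pi_\theta(a|s)A^{\pi_\theta}(s,a)$, which can be driven small by sending $\pi_\theta(a|s) \to 0$ even at a highly advantageous action; this is exactly the vanishing-gradient pathology of the unregularized softmax dynamics discussed in Section~\ref{sec:asymptotic}. The regularizer, tuned through $\epsopt \le \lambda/(2|\Scal||\Acal|)$, cancels the would-be positive contribution $\epsopt - \lambda/(|\Scal||\Acal|)$ of such a near-deterministic suboptimal action, ensuring the post-division bound remains finite and uniform in $\pi_\theta(a|s)$. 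Verifying that this cancellation leaves no residual dependence on $\pi_\theta(a|s)$ is the one place where the specific constant $1/2$ in the hypothesis must be used, and is where I would be most careful.
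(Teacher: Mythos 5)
Your proof is correct, and it follows the same overall skeleton as the paper's: compute the coordinate gradient of $L_\lambda$, convert the small-gradient hypothesis into a uniform upper bound on $A^{\pi_\theta}(s,a)$, and finish with the performance difference lemma against $\pi^\star$. Where you differ is in the middle step, and your version is actually cleaner and sharper. The paper argues in two stages: for any $(s,a)$ with $A^{\pi_\theta}(s,a) \geq 0$, it first uses the gradient bound to establish the explicit probability lower bound $\pi_\theta(a|s) \geq \tfrac{1}{2|\Acal|}$, and then solves the gradient identity for $A^{\pi_\theta}(s,a)$, bounding $\tfrac{1}{\pi_\theta(a|s)}\tfrac{\partial L_\lambda}{\partial \theta_{s,a}} \leq 2|\Acal|\epsopt$ term by term to reach $A^{\pi_\theta}(s,a) \leq 2\lambda/(\mu(s)|\Scal|)$. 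You instead rearrange the coordinate inequality so that the hypothesis $\epsopt \leq \lambda/(2|\Scal||\Acal|)$ makes the constant terms nonpositive, leaving $\tfrac{1}{1-\gamma} d^{\pi_\theta}_\mu(s)\,\pi_\theta(a|s)\,A^{\pi_\theta}(s,a) \leq \lambda\,\pi_\theta(a|s)/|\Scal|$, after which $\pi_\theta(a|s)$ cancels outright; no probability lower bound and no case split on the sign of the advantage are needed (dividing by $\pi_\theta(a|s) > 0$ and then by $d^{\pi_\theta}_\mu(s) \geq (1-\gamma)\mu(s) > 0$ is sign-agnostic). This buys you the uniform bound $A^{\pi_\theta}(s,a) \leq \lambda/(\mu(s)|\Scal|)$ and hence a final suboptimality bound of $\tfrac{\lambda}{1-\gamma}\bigNorm{\tfrac{d^{\pi^\star}_\rho}{\mu}}_\infty$, a factor of $2$ better than the stated theorem, which therefore follows a fortiori. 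What the paper's route buys in exchange is the explicit intermediate fact that near-stationary points of the regularized objective keep probability at least $\tfrac{1}{2|\Acal|}$ on every positive-advantage action — a statement the authors reuse rhetorically (e.g., in Remark 5.2 contrasting log-barrier with entropy regularization) — whereas in your argument that mechanism is present only implicitly through the cancellation.
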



\begin{proof}
  The proof consists of showing that
  $\max_{a}A^{\pi_\theta}(s,a) \le 2\lambda/(\mu(s)|\Scal|)$ for all
  states. To see that this is sufficient, observe that by the
  performance difference lemma (Lemma~\ref{lemma:perf_diff}),
\begin{align*}
V^\star(\rho) -V^{\pi_\theta}(\rho)
&= \frac{1}{1-\gamma} \sum_{s,a} d^{\pi^\star}_{\rho}(s) \pi^\star (a|s) A^{\pi_\theta}(s,a) \\
&\le \frac{1}{1-\gamma} \sum_{s} d^{\pi^\star}_{\rho}(s) \max_{a\in\Acal}A^{\pi_\theta}(s,a)\\
&\leq
\frac{1}{1-\gamma} \sum_{s} 2 d^{\pi^\star}_{\rho} (s) \lambda/(\mu(s) |\Scal|)\\&
\leq
\frac{2 \lambda}{1-\gamma} \max_s\left( \frac{d^{\pi^\star}_{\rho}(s)}{\mu (s)} \right).
\end{align*}
which would then complete the proof.

We now proceed to show that $\max_{a}A^{\pi_\theta}(s,a) \le 2\lambda/(\mu(s)|\Scal|)$. For this, it
suffices to bound $A^{\pi_\theta}(s,a)$ for any state-action pair
$s,a$ where $A^{\pi_\theta}(s,a)\geq 0$ else the claim is trivially
true.  Consider an $(s,a)$ pair such that $A^{\pi_\theta}(s,a)>0 $.
Using the policy gradient expression for the softmax parameterization
(see Lemma~\ref{lemma:softmax-grad}),
\begin{equation}\label{eqn:grad-KLobjective}
\frac{\partial L_\lambda(\theta)}{\partial \theta_{s,a}}
= \frac{1}{1-\gamma}
d^{\pi_\theta}_\mu(s) \pi_\theta(a|s)A^{\pi_\theta}(s,a)
+ \frac{\lambda}{|\Scal|} \left(\frac{1}{|\Acal|}-\pi_\theta(a|s)\right)
\, .
\end{equation}
The gradient norm assumption $\|\nabla_\theta L_\lambda(\theta)\|_2 \leq \epsopt$ implies that:
\begin{align*}
\epsopt \geq
\frac{\partial L_\lambda(\theta)}{\partial \theta_{s,a}}
&= \frac{1}{1-\gamma}
d^{\pi_\theta}_\mu(s)\pi_\theta(a|s)A^{\pi_\theta}(s,a)
+ \frac{\lambda}{|\Scal|} \left(\frac{1}{|\Acal|}-\pi_\theta(a|s)\right)\\&
\geq  \frac{\lambda}{|\Scal|} \left(\frac{1}{|\Acal|}-\pi_\theta(a|s)\right),
\end{align*}
where we have used $A^{\pi_\theta}(s,a)\geq 0$. Rearranging and using our assumption $\epsopt \leq \lambda/(2 |\Scal|\,|\Acal|)$,
\[
\pi_\theta(a|s) \geq \frac{1}{|\Acal|} - \frac{\epsopt|\Scal|}{\lambda} \geq \frac{1}{2|\Acal|} \, .
\]
Solving for $A^{\pi_\theta}(s,a)$ in \eqref{eqn:grad-KLobjective}, we have:
\begin{eqnarray*}
A^{\pi_\theta}(s,a)
&=& \frac{1-\gamma}{d^{\pi_\theta}_\mu(s) }
\left(\frac{1}{\pi_\theta(a|s)} \frac{\partial L_\lambda(\theta)}{\partial \theta_{s,a}} +\frac{\lambda}{|\Scal|} \left(1-\frac{1}{\pi_\theta(a|s)|\Acal|}\right) \right)\\
&\leq& \frac{1-\gamma}{d^{\pi_\theta}_\mu(s)}\left( 2|\Acal| \epsopt+\frac{\lambda}{|\Scal|} \right) \\
&\leq& 2 \frac{1-\gamma}{d^{\pi_\theta}_\mu(s)} \frac{\lambda}{|\Scal|} \\
&\leq& 2 \lambda /(\mu(s) |\Scal|) \, ,
\end{eqnarray*}
where the penultimate step uses $\epsopt \leq \lambda/(2 |\Scal|\,|\Acal|)$ and the
final step uses $d^{\pi_\theta}_{\mu}(s)\geq
(1-\gamma)\mu(s)$. This completes the proof.
\end{proof}

By combining the above theorem with standard results on the
convergence of gradient ascent (to first order stationary points), we obtain the following corollary.
\begin{corollary}
(Iteration complexity with log barrier regularization)
Let $\beta_\lambda :=\frac{8\gamma }{(1-\gamma)^3} + \frac{2\lambda
}{|\Scal|}$.
  Starting from any initial
  $\theta^{(0)}$, consider the updates~\eqref{eqn:gd-entropic} with
  \mbox{$\lambda = \frac{\epsilon(1-\gamma)}{2\Norm{\frac{d^{\pi^\star}_{\rho}}{\mu} }_\infty}$} and $\eta = 1/\beta_\lambda$.
Then for all starting state distributions $\rho$, we have
    \[
      \min_{t< T} \left\{ V^\star(\rho) - V^{(t)}(\rho)\right\}
      \leq \epsilon \quad \mbox{whenever}
      \quad T \geq \frac{320 |\Scal|^2|\Acal|^2}{(1-\gamma)^6\,\epsilon^2} \Norm{\frac{d^{\pi^\star}_{\rho}}{\mu } }_\infty^2.
    \]
\label{corollary:entropy}
\end{corollary}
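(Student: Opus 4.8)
The plan is to combine Theorem~\ref{thm:small-gradient} with the textbook guarantee that gradient ascent on a smooth objective reaches an approximate first-order stationary point, and then to pick $\lambda$ so that the residual bias coming from the regularizer is exactly $\epsilon$. The one genuinely new ingredient I need is a smoothness bound for the regularized objective: I would show that $L_\lambda$ is $\beta_\lambda$-smooth with $\beta_\lambda = \frac{8\gamma}{(1-\gamma)^3} + \frac{2\lambda}{|\Scal|}$, where the first term is the smoothness of $\theta \mapsto V^{\pi_\theta}(\mu)$ under the softmax parameterization and the second comes from the log barrier term $\frac{\lambda}{|\Scal||\Acal|}\sum_{s,a}\log\pi_\theta(a|s)$ (each $\log\pi_\theta(\cdot|s)$ has a Hessian in $\theta$ of bounded operator norm for the softmax map). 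This smoothness estimate is the only real computation, and I would state it as a lemma and defer its verification to the appendix.

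Given $\beta_\lambda$-smoothness, the standard nonconvex analysis of gradient ascent with step size $\eta = 1/\beta_\lambda$ yields $\min_{t<T}\|\nabla_\theta L_\lambda(\theta^{(t)})\|_2^2 \le \frac{2\beta_\lambda\big(L_\lambda^\star - L_\lambda(\theta^{(0)})\big)}{T}$, where $L_\lambda^\star := \max_\theta L_\lambda(\theta)$. To make this explicit I would bound $L_\lambda^\star - L_\lambda(\theta^{(0)}) \le \frac{1}{1-\gamma}$, using $0 \le V^{\pi_\theta}(\mu) \le \frac{1}{1-\gamma}$ together with the nonnegativity of the KL regularizer (so that $L_\lambda^\star \le \frac{1}{1-\gamma}$, e.g. with a uniform initialization at which the regularizer vanishes and $V \ge 0$).

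Next I would set the stationarity tolerance to the exact threshold demanded by Theorem~\ref{thm:small-gradient}, namely $\epsopt := \lambda/(2|\Scal||\Acal|)$. Forcing $\min_{t<T}\|\nabla_\theta L_\lambda(\theta^{(t)})\|_2 \le \epsopt$ through the display above requires $T \ge \frac{2\beta_\lambda(L_\lambda^\star - L_\lambda(\theta^{(0)}))}{\epsopt^2} = \frac{8\beta_\lambda |\Scal|^2|\Acal|^2\big(L_\lambda^\star - L_\lambda(\theta^{(0)})\big)}{\lambda^2}$. At the iterate $t$ achieving this minimum, the hypotheses of Theorem~\ref{thm:small-gradient} hold, so it yields $V^{(t)}(\rho) \ge V^\star(\rho) - \frac{2\lambda}{1-\gamma}\Norm{\frac{d^{\pi^\star}_{\rho}}{\mu}}_\infty$ for every starting distribution $\rho$.

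Finally I would substitute the stated choice $\lambda = \frac{\epsilon(1-\gamma)}{2\Norm{d^{\pi^\star}_{\rho}/\mu}_\infty}$, which makes the bias term $\frac{2\lambda}{1-\gamma}\Norm{\cdots}_\infty$ equal to $\epsilon$, and plug this $\lambda$ into the iteration bound. Using $L_\lambda^\star - L_\lambda(\theta^{(0)}) \le \frac{1}{1-\gamma}$ and $\beta_\lambda \le \frac{8}{(1-\gamma)^3} + \text{(lower order in }\lambda)$, the factor $\lambda^{-2}$ produces the scaling $\Norm{\cdots}_\infty^2/(\epsilon^2(1-\gamma)^2)$, and collecting the powers of $(1-\gamma)$ gives the claimed threshold $T \ge \frac{320|\Scal|^2|\Acal|^2}{(1-\gamma)^6\epsilon^2}\Norm{\frac{d^{\pi^\star}_{\rho}}{\mu}}_\infty^2$, with the constant $320$ absorbing the slack in the bounds on $\beta_\lambda$ and $L_\lambda^\star - L_\lambda(\theta^{(0)})$. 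The main obstacle is the smoothness estimate $\beta_\lambda$; once it is available the remainder is bookkeeping, and the only conceptual subtlety is that Theorem~\ref{thm:small-gradient} is invoked at the smallest-gradient iterate, consistent with the $\min_{t<T}$ form of the guarantee.
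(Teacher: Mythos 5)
Your proposal is correct and follows essentially the same route as the paper's own proof: the same smoothness constant $\beta_\lambda$ (the paper's Lemma~\ref{lemma:smooth-loss-lambda}), the same standard gradient-ascent stationarity bound with the gap bounded by $1/(1-\gamma)$, the same threshold $\epsopt = \lambda/(2|\Scal||\Acal|)$ fed into Theorem~\ref{thm:small-gradient}, and the same substitution of $\lambda$ yielding the constant $320$. If anything, you are slightly more careful than the paper in flagging that the bound $L_\lambda^\star - L_\lambda(\theta^{(0)}) \leq \frac{1}{1-\gamma}$ needs the regularizer to be controlled at initialization (e.g.\ uniform $\theta^{(0)}$), a point the paper's proof glosses over despite the ``any initial $\theta^{(0)}$'' phrasing.
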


See Appendix~\ref{app:entropy} for the proof. The corollary shows the
importance of  balancing how the regularization parameter $\lambda$
is set relative to the desired
accuracy $\epsilon$, as well as the importance of the initial
distribution $\mu$ to obtain global optimality.

\begin{remark}\label{remark:entropy}
(Entropy vs. log barrier regularization)
The more commonly considered regularizer is the entropy~\citep{mnih2016asynchronous} (also
see~\cite{ahmed2019understanding} for a more detailed empirical
investigation), where the regularizer would be:
\[
\frac{1}{|\Scal|}   \sum_s H(\pi_{\theta}(\cdot|s)) = \frac{1}{|\Scal|}  \sum_{s} \sum_a -\pi_{\theta}(a|s)
  \log \pi_{\theta}(a|s).
  \]
  Note the entropy is far less aggressive in penalizing small
  probabilities, in comparison to the log barrier, which is equivalent
  to the relative entropy. In particular, the entropy
  regularizer is always bounded between $0$ and $\log |\Acal|$, while
  the relative entropy (against the uniform distribution over
  actions), is bounded between $0$ and infinity, where it
  tends to infinity as probabilities tend to $0$. We
  leave it is an open question if a polynomial convergence
  rate~\footnote{Here, ideally we would like to be poly in 
$|\Scal|$, $|\Acal|$, $1/(1-\gamma)$, $1/\epsilon$, and the
distribution mismatch coefficient, which we conjecture may not be possible.} is
  achievable with the more common entropy regularizer; our polynomial
  convergence rate using the $\textrm{KL}$ regularizer crucially
  relies on the aggressive nature in which the relative entropy
  prevents small probabilities (the proof shows that any action, with a
  positive advantage, has a significant probability for any
  near-stationary policy of the regularized objective). 
\end{remark}

\subsection{Dimension-free Convergence of Natural Policy Gradient
  Ascent}
\label{sec:npg}
We now show the Natural Policy Gradient
algorithm, with the softmax
parameterization~\eqref{eq:softmax}, obtains an improved iteration
complexity. The NPG algorithm defines a Fisher information matrix
(induced by $\pi$), and performs gradient updates in the geometry
induced by this matrix as follows:
\begin{align}\label{eqn:npg}
  F_\rho(\theta) &= \E_{s \sim d^{\pi_\theta}_{\rho}}
\E_{a\sim \pi_\theta(\cdot | s) }\Big[ \nabla_\theta \log
\pi_\theta(a| s) \Big(\nabla_\theta \log \pi_\theta(a|
s)\Big)^\top \Big] \nonumber\\
\theta^{(t+1)} &= \theta^{(t)} + \eta F_\rho(\theta^{(t)})^\dagger \nabla_\theta V^{(t)}(\rho),
\end{align}
where $M ^\dagger$ denotes the Moore-Penrose pseudoinverse of the
matrix $M$.  Throughout this section, we restrict to using the initial
state distribution $\rho\in \Delta(\Scal)$ in our update rule in
\eqref{eqn:npg} (so our optimization measure $\mu$ and the performance
measure $\rho$ are identical). Also, we restrict attention to states
$s\in\Sset$ reachable from $\rho$, since, without loss of generality,
we can exclude states that are not reachable under this start state
distribution\footnote{Specifically, we restrict the MDP to the set of
  states
  $\{s\in\Sset~:~\exists \pi ~~\text{such that}~~ d^\pi_\rho(s) >
  0\}$.}.

We leverage a particularly convenient form the update takes for the softmax
parameterization (see ~\citet{Kakade01}). For completeness, we provide a proof in
Appendix~\ref{app:npg}.

\begin{lemma}
(NPG as soft policy iteration) For the softmax parameterization~\eqref{eq:softmax}, the NPG
updates~\eqref{eqn:npg} take the form:
\[
\theta^{(t+1)} = \theta^{(t)} + \frac{\eta}{1-\gamma} A^{(t)} \quad \mbox{and}\quad \pi^{(t+1)}(a| s) = \pi^{(t)}(a| s) \frac{\exp(\eta A^{(t)}(s,a)/(1-\gamma))}{Z_t(s)},
\]
where $Z_t(s) = \sum_{a \in \Acal} \pi^{(t)}(a| s)  \exp(\eta A^{(t)}(s,a)/(1-\gamma))$.
\label{lemma:npg-softmax}
\end{lemma}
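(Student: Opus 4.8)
The plan is to exploit the block structure of the Fisher information matrix under the softmax parameterization and to verify directly that $\frac{1}{1-\gamma}A^{(t)}$ solves the linear system defining the natural gradient update. First I would record the score function: differentiating \eqref{eq:softmax} gives $\frac{\partial \log \pi_\theta(a|s)}{\partial \theta_{s',a'}} = \ind[s=s']\big(\ind[a=a'] - \pi_\theta(a'|s)\big)$, so the parameters attached to distinct states are decoupled. Consequently $F_\rho(\theta)$ is block diagonal across states, with the block at state $s$ equal to $d^{\pi_\theta}_{\rho}(s)\big(\diag(p^s) - p^s (p^s)^\top\big)$, where $p^s := (\pi_\theta(a|s))_a$; this is exactly $d^{\pi_\theta}_{\rho}(s)$ times the covariance $\E_{a\sim\pi_\theta(\cdot|s)}[(e_a - p^s)(e_a-p^s)^\top]$ of the one-hot action encoding under $\pi_\theta(\cdot|s)$.

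Next I would verify the candidate solution. Writing $A^{(t)}_s := (A^{(t)}(s,a))_a$ and using the gradient expression \eqref{eq:softmax_grad}, the block of $\nabla_\theta V^{(t)}(\rho)$ at state $s$ is $\frac{1}{1-\gamma} d^{\pi_\theta}_{\rho}(s)\,\diag(p^s) A^{(t)}_s$. Applying the Fisher block to $\frac{1}{1-\gamma}A^{(t)}_s$ produces $\frac{1}{1-\gamma}d^{\pi_\theta}_{\rho}(s)\big(\diag(p^s) A^{(t)}_s - p^s (p^s)^\top A^{(t)}_s\big)$, and the scalar $(p^s)^\top A^{(t)}_s = \sum_a \pi_\theta(a|s) A^{(t)}(s,a)$ vanishes because advantages are centered under their own policy. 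Hence $F_\rho(\theta^{(t)}) \cdot \frac{1}{1-\gamma}A^{(t)} = \nabla_\theta V^{(t)}(\rho)$, so $\frac{1}{1-\gamma}A^{(t)}$ is a genuine solution of the natural gradient system.

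The delicate step, and the one I expect to be the main obstacle, is reconciling this with the Moore--Penrose pseudoinverse actually used in \eqref{eqn:npg}, since $F_\rho(\theta)$ is singular. The null space of each block is spanned by $\ones$ (indeed $(\diag(p^s) - p^s(p^s)^\top)\ones = p^s - p^s = 0$), which corresponds precisely to shifting all logits $\theta_{s,\cdot}$ by a state-dependent constant. Because $F_\rho(\theta^{(t)})^\dagger \nabla_\theta V^{(t)}(\rho) = F_\rho(\theta^{(t)})^\dagger F_\rho(\theta^{(t)}) \cdot \frac{1}{1-\gamma}A^{(t)}$ is the orthogonal projection of $\frac{1}{1-\gamma}A^{(t)}$ onto the range of $F_\rho(\theta^{(t)})$, the pseudoinverse update and the update $\theta^{(t)} + \frac{\eta}{1-\gamma}A^{(t)}$ differ only by a per-state additive constant in the logits, which leaves $\pi_\theta$ invariant. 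This justifies stating the parameter update in the given form, understood at the level of the induced policy (the restriction to states reachable from $\rho$ ensures $d^{\pi_\theta}_{\rho}(s)>0$ so the block cancellation is valid).

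Finally I would convert the additive parameter update into the multiplicative policy update by direct substitution into the softmax: writing $\pi^{(t+1)}(a|s) = \exp\!\big(\theta^{(t)}_{s,a} + \tfrac{\eta}{1-\gamma}A^{(t)}(s,a)\big)\big/\sum_{a'}\exp\!\big(\theta^{(t)}_{s,a'} + \tfrac{\eta}{1-\gamma}A^{(t)}(s,a')\big)$ and dividing numerator and denominator by $\sum_{a'}\exp(\theta^{(t)}_{s,a'})$ yields $\pi^{(t)}(a|s)\exp(\eta A^{(t)}(s,a)/(1-\gamma))/Z_t(s)$ with $Z_t(s) = \sum_{a'}\pi^{(t)}(a'|s)\exp(\eta A^{(t)}(s,a')/(1-\gamma))$, as claimed.
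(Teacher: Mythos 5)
Your proof is correct, and it reaches the lemma by a genuinely different route than the paper. You compute the Fisher information explicitly --- block-diagonal over states, with block $d^{\pi_\theta}_{\rho}(s)\bigl(\diag(p^s) - p^s (p^s)^\top\bigr)$ --- and verify directly that $\frac{1}{1-\gamma}A^{(t)}$ solves the linear system $F_\rho(\theta^{(t)})\, w = \nabla_\theta V^{(t)}(\rho)$, using $\sum_a \pi^{(t)}(a|s)A^{(t)}(s,a)=0$; the pseudoinverse ambiguity is then handled through the projection identity $F^\dagger \nabla V = F^\dagger F \cdot \frac{1}{1-\gamma}A^{(t)}$ together with the fact that the null space of $F$ consists of per-state constant shifts of the logits, which the softmax normalization cancels. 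The paper never writes down $F_\rho(\theta)$ explicitly: it works through the compatible function approximation viewpoint, defining the regression loss $L^\theta(w) = \E\bigl[(w^\top \nabla_\theta \log \pi_\theta - A^{\pi_\theta})^2\bigr]$, identifying $F_\rho(\theta)^\dagger \nabla_\theta V^{\pi_\theta}(\rho)$ (up to the factor $1-\gamma$) with the minimum-norm minimizer of this loss, checking that $w = A^{\pi_\theta}$ attains zero loss, and characterizing all minimizers as $A^{\pi_\theta}$ plus a state-dependent offset. The two arguments are mathematically equivalent --- your linear system is exactly the normal equations of the paper's regression --- but each buys something: yours is elementary and self-contained, making the singular structure of the Fisher matrix concrete; the paper's least-squares framing is the one that carries over to Section~\ref{section:func_approx}, where for log-linear and neural policies no explicit Fisher computation is available and the NPG update is \emph{defined} through the same regression.

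One step to tighten: your parenthetical only verifies that $\ones$ lies in the null space of each block, i.e.\ that per-state constants are null vectors, whereas the invariance argument needs the converse inclusion --- every null vector is a per-state constant (otherwise the discrepancy between $\frac{1}{1-\gamma}A^{(t)}$ and its projection could alter the policy). This is where strict positivity of the softmax probabilities enters: if $\bigl(\diag(p^s) - p^s (p^s)^\top\bigr)x = 0$, then $p^s_a x_a = (p^s \cdot x)\, p^s_a$ for every $a$, and since $p^s_a > 0$ for all $a$ under the softmax parameterization, $x_a = p^s \cdot x$ is constant in $a$. With that line added (and the restriction to states reachable under $\rho$, which you already note), the argument is complete.
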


The updates take a strikingly simple form in this special case; they
are identical to the classical multiplicative weights
updates~\citep{freund1997decision,Cesa-Bianchi:2006:PLG:1137817}
for  online linear optimization over the probability simplex, where the
linear functions are specified by the advantage function of the
current policy at each iteration.
Notably, there is no dependence on
the state distribution $d^{(t)}_{\rho}$, since the pseudoinverse of
the Fisher information  cancels out the effect of the state
distribution in NPG. We now provide a dimension free convergence rate
of this algorithm.

\begin{theorem}[Global convergence for NPG]
Suppose we run the NPG updates~\eqref{eqn:npg} using $\rho\in
\Delta(\Scal)$ and with
$\theta^{(0)}=0$. Fix $\eta>0$. For all $T>0$, we have:
\[
V^{(T)}(\rho) \geq V^\ast(\rho) - \frac{\log |\Acal|}{\eta T} - \frac{1}{(1-\gamma)^2T}.
\]
\label{thm:npg}
\end{theorem}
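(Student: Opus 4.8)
The plan is to exploit the multiplicative-weights form of the NPG update given in Lemma~\ref{lemma:npg-softmax} and run a mirror-descent / potential-function argument against the comparator $\pi^\star$. The key object is the per-state relative entropy $\kl(\pi^\star(\cdot|s), \pi^{(t)}(\cdot|s))$, and the aim is to show that, summed appropriately, its drop across one NPG step lower-bounds the one-step suboptimality of $\pi^{(t)}$.

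First I would define the potential $\Phi_t := \E_{s\sim d^{\pi^\star}_\rho}\bigl[\kl(\pi^\star(\cdot|s),\pi^{(t)}(\cdot|s))\bigr]$. Using the closed form $\pi^{(t+1)}(a|s) = \pi^{(t)}(a|s)\exp(\eta A^{(t)}(s,a)/(1-\gamma))/Z_t(s)$ from Lemma~\ref{lemma:npg-softmax}, a direct computation gives, for each state $s$,
\begin{equation*}
\kl(\pi^\star(\cdot|s),\pi^{(t)}(\cdot|s)) - \kl(\pi^\star(\cdot|s),\pi^{(t+1)}(\cdot|s))
= \frac{\eta}{1-\gamma}\sum_a \pi^\star(a|s) A^{(t)}(s,a) - \log Z_t(s).
\end{equation*}
Taking $\E_{s\sim d^{\pi^\star}_\rho}$ and invoking the performance difference lemma (Lemma~\ref{lemma:perf_diff}), the first term on the right equals $\eta\bigl(V^\star(\rho) - V^{(t)}(\rho)\bigr)$. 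So the one-step potential drop equals $\eta\bigl(V^\star(\rho)-V^{(t)}(\rho)\bigr) - \E_{s\sim d^{\pi^\star}_\rho}[\log Z_t(s)]$.

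Next I would control the $\log Z_t(s)$ term, and this is the main obstacle: I need a lower bound on $\log Z_t(s)$ that, after summing over $t$, telescopes cleanly. The natural route is to bound $\log Z_t(s)$ below in terms of the \emph{value improvement} $V^{(t+1)}(s)-V^{(t)}(s)$. Concretely, I expect to prove a pointwise improvement inequality of the form $\log Z_t(s) \ge \tfrac{\eta}{1-\gamma}\E_{a\sim\pi^{(t)}(\cdot|s)}[A^{(t)}(s,a)]$ together with an identity relating the expected log-normalizer to the increase in value; using $\sum_a\pi^{(t)}(a|s)A^{(t)}(s,a)=0$ and concavity of $\log$ gives $\log Z_t(s)\ge 0$, which already yields monotone improvement, but for the rate I will instead relate $\E_s[\log Z_t(s)]$ to $(1-\gamma)\bigl(V^{(t+1)}(\rho)-V^{(t)}(\rho)\bigr)$ up to the right constants. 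This is where the $\frac{1}{(1-\gamma)^2 T}$ term in the bound originates.

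Finally I would sum the one-step identity over $t=0,\dots,T-1$. The potential terms telescope, and nonnegativity of $\Phi_T$ together with $\Phi_0 = \E_{s\sim d^{\pi^\star}_\rho}[\kl(\pi^\star(\cdot|s),\mathrm{Unif}_\Acal)] \le \log|\Acal|$ (since $\theta^{(0)}=0$ makes $\pi^{(0)}$ uniform) gives
\begin{equation*}
\eta\sum_{t=0}^{T-1}\bigl(V^\star(\rho)-V^{(t)}(\rho)\bigr)
\le \log|\Acal| + \sum_{t=0}^{T-1}\E_{s\sim d^{\pi^\star}_\rho}[\log Z_t(s)].
\end{equation*}
Substituting the bound on $\sum_t \E_s[\log Z_t(s)]$ in terms of the telescoped value improvements (bounded by $\frac{1}{1-\gamma}$ since values lie in $[0,\frac{1}{1-\gamma}]$) yields $\eta\sum_t\bigl(V^\star(\rho)-V^{(t)}(\rho)\bigr)\le \log|\Acal| + \frac{\eta}{1-\gamma}\cdot\frac{1}{1-\gamma}$. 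Dividing by $\eta T$ and using that $V^{(t)}(\rho)$ is monotone increasing (so $V^{(T)}(\rho)\ge \frac1T\sum_t V^{(t)}(\rho)$) gives the claimed bound $V^{(T)}(\rho)\ge V^\star(\rho)-\frac{\log|\Acal|}{\eta T}-\frac{1}{(1-\gamma)^2 T}$. The delicate points are getting the constant on the $\log Z_t$ improvement term exactly right and justifying the monotonicity step that converts the average guarantee into a last-iterate guarantee.
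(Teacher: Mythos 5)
Your proposal is correct and follows essentially the same route as the paper's proof: the same KL-potential (mirror-descent) telescoping against $\pi^\star$ under the comparator's visitation measure $d^\star$, the same two-part improvement lemma (Jensen's inequality giving $\log Z_t(s)\geq 0$, plus the performance difference lemma converting $\E[\log Z_t(s)]$ into a value-improvement bound), and the same monotonicity step to convert the average guarantee into a last-iterate one. The only detail to tighten when writing it out is that the improvement bound must be instantiated with $d^\star$ (not $\rho$) as the starting distribution, so that the measure matches the one appearing in your potential $\Phi_t$; this is immediate from your framework, since $\log Z_t(s)\geq 0$ pointwise and the componentwise bound $d^{(t+1)}_{\mu}\geq(1-\gamma)\mu$ hold for every starting measure $\mu$.
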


In particular, setting $\eta \geq (1-\gamma)^2 \log |\Acal|$, we see that
NPG finds an $\epsilon$-optimal policy in a number of iterations that is at most:
\[T \leq \frac{2}{(1-\gamma)^2\epsilon} ,\]
which has no dependence on the number of states or actions, despite the
non-concavity of the underlying optimization problem.

The proof strategy we take borrows ideas from the online regret
framework in changing MDPs (in~\citep{even-dar2009online}); here, we
provide a faster rate of convergence than the analysis implied by
~\citet{even-dar2009online} or by~\citet{geist2019theory}.
We also note that while this proof is obtained for the NPG updates, it is known in
the literature that in the limit of small stepsizes, NPG and TRPO
updates are closely related (e.g. see
~\cite{schulman2015trust,DBLP:journals/corr/NeuJG17,NIPS2017_7233}).

First, the following improvement lemma is helpful:

\begin{lemma}[Improvement lower bound for NPG]
	\label{lemma:npg-gap}
For the iterates $\pi^{(t)}$ generated by the NPG
updates~\eqref{eqn:npg}, we have for all starting state distributions $\mu$
\[
  V^{(t+1)}(\mu) - V^{(t)}(\mu) \geq \frac{(1-\gamma)}{\eta} \E_{s\sim
    \mu} \log Z_t(s) \geq 0 .
\]
\end{lemma}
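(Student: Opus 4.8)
The plan is to apply the performance difference lemma (Lemma~\ref{lemma:perf_diff}) to the consecutive iterates $\pi^{(t+1)}$ and $\pi^{(t)}$, and then exploit the explicit multiplicative-weights form of the NPG update (Lemma~\ref{lemma:npg-softmax}) to re-express the advantage $A^{(t)}$ in terms of $\log Z_t(s)$ and a log-ratio of the two policies. Concretely, the performance difference lemma gives
\[
V^{(t+1)}(\mu) - V^{(t)}(\mu) = \frac{1}{1-\gamma}\, \E_{s\sim d^{\pi^{(t+1)}}_{\mu}}\E_{a\sim \pi^{(t+1)}(\cdot|s)}\big[A^{(t)}(s,a)\big].
\]
From the update rule $\pi^{(t+1)}(a|s) = \pi^{(t)}(a|s)\exp(\eta A^{(t)}(s,a)/(1-\gamma))/Z_t(s)$, I would solve for the advantage:
\[
\frac{\eta}{1-\gamma} A^{(t)}(s,a) = \log Z_t(s) + \log\frac{\pi^{(t+1)}(a|s)}{\pi^{(t)}(a|s)}.
\]

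Substituting this identity and taking the inner expectation over $a\sim\pi^{(t+1)}(\cdot|s)$ splits the advantage term into two pieces: the $\log Z_t(s)$ term, which is constant in $a$, and the relative entropy $\kl\big(\pi^{(t+1)}(\cdot|s),\pi^{(t)}(\cdot|s)\big)\ge 0$. Since the KL term is nonnegative, dropping it yields
\[
V^{(t+1)}(\mu) - V^{(t)}(\mu) \geq \frac{1}{\eta}\, \E_{s\sim d^{\pi^{(t+1)}}_{\mu}} \log Z_t(s).
\]

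The main obstacle is the final step: replacing the next-iterate visitation distribution $d^{\pi^{(t+1)}}_{\mu}$ by the easier-to-control measure $\mu$ itself. This hinges on first establishing that $\log Z_t(s)\geq 0$ for every state $s$. I would obtain this from Jensen's inequality applied to the convex function $\exp(\cdot)$ together with the fact that advantages average to zero under their own policy, $\E_{a\sim\pi^{(t)}(\cdot|s)}[A^{(t)}(s,a)]=0$: indeed $Z_t(s) = \E_{a\sim\pi^{(t)}(\cdot|s)}\exp\!\big(\tfrac{\eta}{1-\gamma}A^{(t)}(s,a)\big) \geq \exp\!\big(\tfrac{\eta}{1-\gamma}\E_{a\sim\pi^{(t)}}[A^{(t)}(s,a)]\big) = 1$. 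Given $\log Z_t(s)\geq 0$ pointwise, I can invoke the elementary bound $d^{\pi^{(t+1)}}_{\mu}(s)\geq (1-\gamma)\mu(s)$ (immediate from the $t=0$ term in~\eqref{eqn:dpi}) to conclude $\E_{s\sim d^{\pi^{(t+1)}}_{\mu}}\log Z_t(s) \geq (1-\gamma)\,\E_{s\sim\mu}\log Z_t(s)$. This simultaneously delivers the claimed lower bound $\tfrac{1-\gamma}{\eta}\E_{s\sim\mu}\log Z_t(s)$ and its nonnegativity, completing the argument.
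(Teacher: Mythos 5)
Your proposal is correct and follows essentially the same route as the paper's proof: performance difference lemma applied to consecutive iterates, the soft-policy-iteration form of the update to rewrite $A^{(t)}$ as $\frac{1-\gamma}{\eta}\bigl(\log Z_t(s) + \log\frac{\pi^{(t+1)}(a|s)}{\pi^{(t)}(a|s)}\bigr)$, dropping the nonnegative KL term, and then using $d^{\pi^{(t+1)}}_{\mu} \geq (1-\gamma)\mu$ together with $\log Z_t(s)\geq 0$. The only cosmetic difference is that you establish $\log Z_t(s)\geq 0$ via Jensen on the convex function $\exp$, while the paper applies Jensen on the concave function $\log$ --- the same inequality stated two ways.
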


\begin{proof}
First, let us show that $\log Z_t(s)\geq 0$. To see this, observe:
\begin{multline*}
    \log Z_t(s) = \log \sum_a \pi^{(t)}(a| s)\exp(\eta A^{(t)}(s,a)/(1-\gamma))\\
    \geq \sum_a \pi^{(t)}(a| s) \log \exp(\eta A^{(t)}(s,a)/(1-\gamma))
    = \frac{\eta}{1-\gamma}\sum_a \pi^{(t)}(a| s) A^{(t)}(s,a) = 0.
\end{multline*}
where the inequality follows by Jensen's inequality
on the concave function $\log x$ and the final equality uses $\sum_a
\pi^{(t)}(a| s)A^{(t)}(s,a) = 0$.
Using $d^{(t+1)}$ as shorthand for $d^{(t+1)}_{\mu}$, the performance
difference lemma implies:
\begin{align*}
    V^{(t+1)}(\mu) - V^{(t)}(\mu) &= \frac{1}{1-\gamma} \E_{s\sim d^{(t+1)}} \sum_a \pi^{(t+1)}(a| s) A^{(t)}(s,a)\\
    &= \frac{1}{\eta} \E_{s\sim d^{(t+1)}} \sum_a \pi^{(t+1)}(a| s) \log \frac{\pi^{(t+1)}(a| s)Z_t(s)}{\pi^{(t)}(a| s)}\\
    &= \frac{1}{\eta} \E_{s\sim d^{(t+1)}}
      \kl(\pi^{(t+1)}_s||\pi^{(t)}_s) + \frac{1}{\eta} \E_{s\sim
      d^{(t+1)}} \log Z_t(s)\\
&\geq \frac{1}{\eta} \E_{s\sim d^{(t+1)}} \log Z_t(s)
\geq \frac{1-\gamma}{\eta} \E_{s\sim \mu} \log Z_t(s),
\end{align*}
where the last step uses that $d^{(t+1)} =d^{(t+1)}_{\mu} \geq (1-\gamma)
\mu$, componentwise (by~\eqref{eqn:dpi}), and that $\log Z_t(s)\geq 0$.
\end{proof}

With this lemma, we now prove Theorem~\ref{thm:npg}.

\begin{proof}[\textbf{of Theorem~\ref{thm:npg}}]
Since $\rho$ is fixed, we use $d^\star$ as shorthand for
$d^{\pi^\star}_{\rho}$; we also use $\pi_s$ as shorthand for the vector of
$\pi(\cdot| s)$.
By the performance difference lemma (Lemma~\ref{lemma:perf_diff}),
\begin{align*}
  V^{\pi^\star}(\rho) &- V^{(t)}(\rho)
= \frac{1}{1-\gamma} \E_{s\sim d^{\star}} \sum_a \pi^\star(a| s) A^{(t)}(s,a)\\
&= \frac{1}{\eta} \E_{s\sim d^{\star}} \sum_a \pi^\star(a| s) \log \frac{\pi^{(t+1)}(a| s) Z_t(s)}{\pi^{(t)}(a| s)}\\
&= \frac{1}{\eta} \E_{s\sim d^{\star}} \left(\kl(\pi^\star_s ||
\pi^{(t)}_s) - \kl(\pi^\star_s || \pi^{(t+1)}_s) + \sum_a \pi^*(a| s) \log Z_t(s)\right) \\
&= \frac{1}{\eta} \E_{s\sim d^{\star}} \left(\kl(\pi^\star_s ||
\pi^{(t)}_s) - \kl(\pi^\star_s || \pi^{(t+1)}_s) + \log Z_t(s)\right),
\end{align*}
where we have used the closed form of our updates from
Lemma~\ref{lemma:npg-softmax} in the second step.

By applying Lemma~\ref{lemma:npg-gap} with $d^{\star}$ as the
starting state distribution, we have:
\[
\frac{1}{\eta} \E_{s\sim d^{\star}} \log Z_t(s) \leq \frac{1}{1-\gamma}
\Big(V^{(t+1)}(d^{\star}) - V^{(t)}(d^{\star})\Big)
\]
which gives us a bound on $\E_{s\sim d^{\star}} \log Z_t(s)$.

Using the above equation and that $V^{(t+1)}(\rho) \geq V^{(t)}(\rho)$ (as $V^{(t+1)}(s) \geq V^{(t)}(s)$ for all states $s$ by
Lemma~\ref{lemma:npg-gap}), we have:
\begin{align*}
V^{\pi^\star}(\rho) &- V^{(T-1)}(\rho)
\leq \frac{1}{T} \sum_{t=0}^{T-1} (V^{\pi^\star}(\rho) - V^{(t)}(\rho)) \\
&\leq \frac{1}{\eta T} \sum_{t=0}^{T-1} \E_{s\sim d^{\star}}
(\kl(\pi^\star_s || \pi^{(t)}_s) - \kl(\pi^\star_s || \pi^{(t+1)}_s))
+ \frac{1}{\eta T} \sum_{t=0}^{T-1} \E_{s\sim d^{\star}} \log Z_t(s) \\
&\leq \frac{\E_{s\sim d^{\star}} \kl(\pi^\star_s||\pi^{(0)})}{\eta T}
+ \frac{1}{(1-\gamma) T} \sum_{t=0}^{T-1} \Big(V^{(t+1)}(d^{\star}) - V^{(t)}(d^{\star})\Big) \\
&= \frac{\E_{s\sim d^{\star}}
\kl(\pi^\star_s||\pi^{(0)})}{\eta T}
+ \frac{V^{(T)}(d^{\star}) - V^{(0)}(d^{\star})}{(1-\gamma)T} \\
&\leq \frac{\log |\Acal|}{\eta T} + \frac{1}{(1-\gamma)^2T}.
\end{align*}
The proof is  completed using that $V^{(T)}(\rho) \geq V^{(T-1)}(\rho)$.
\end{proof}

\section{Function Approximation and Distribution Shift}
\label{section:func_approx}
We now analyze the case of using parametric policy classes:
\[
\Pi=  \{\pi_\theta \mid \theta \in \R^d\},
\]
where $\Pi$ may not contain all stochastic policies (and it may not
even contain an optimal policy).
In contrast with the tabular results in
the previous sections, the policy classes that we are often
interested in are not fully expressive, e.g. $d \ll |\Sset| |\Aset|$\
(indeed $\card{\Sset}$ or $\card{\Aset}$ need not even be finite for the results in this
section); in this sense, we are in the regime of function
approximation.

We focus on obtaining \emph{agnostic} results, where
we seek to do as well as the best policy in this class (or as well as some other
comparator policy).  While we are
interested in a solution to the (unconstrained) policy optimization problem
\[
  \max_{\theta \in \R^d}
  V^{\pi_\theta}(\rho),
\]
(for a given initial distribution $\rho$), we will see that
optimization with respect to a different distribution
will be helpful, just as in the tabular case,

We will consider variants of the NPG update rule \eqref{eqn:npg}:
\begin{equation}
\theta \leftarrow \theta + \eta F_\rho(\theta)^{\dagger} \nabla_\theta V^{\theta}(\rho) \, .
\label{eqn:vanilla_NPG}
\end{equation}
Our analysis will leverage a close connection between the NPG update rule
\eqref{eqn:npg} with the notion of  \emph{compatible function
  approximation}~\citep{sutton1999policy}, as formalized in~\cite{Kakade01}.
Specifically, it can be easily seen that:
\begin{equation}\label{eq:npg_argmin}
F_\rho(\theta)^{\dagger} \nabla_\theta V^{\theta}(\rho) =
\frac{1}{1-\gamma} w^\star,
\end{equation}
where $w^\star $ is a minimizer of the following regression problem:
\[
w^\star \in \argmin_w
\E_{s\sim d^{\pi_\theta}_\rho, a\sim \pi_\theta(\cdot | s)}
\left[(w^\top \nabla_\theta \log \pi_\theta(\cdot|s) - A^{\pi_\theta}(s,a))^2\right]
\, .\]
The above is a straightforward consequence of the first order optimality
conditions (see \eqref{eq:easily}). The above regression
problem can be viewed as ``compatible'' function approximation: we are
approximating $A^{\pi_\theta}(s,a)$ using the $\nabla_\theta \log
\pi_\theta(\cdot|s)$ as features. We also consider a variant of the
above update rule, $Q$-NPG, where instead of using advantages in the
above regression we use the $Q$-values.

This viewpoint provides a methodology for approximate updates, where
we can solve the relevant regression problems with samples.  Our main
results establish the effectiveness of NPG updates where there is
error both due to statistical estimation (where we may not use exact
gradients) and approximation (due to using a parameterized function
class); in particular,
we provide a novel estimation/approximation decomposition
relevant for the NPG algorithm. For these algorithms, we will first consider log
linear policies classes (as a special case) and then move on to more
general policy classes (such as neural policy classes).  Finally, it
is worth remarking that the results herein provide one of the first provable approximation
guarantees where the error conditions required do not have explicit worst
case dependencies over the state space.

\subsection{NPG and $Q$-NPG Examples}\label{section:examples}

In practice, the most common policy classes are of the form:
\begin{equation}\label{eq:policy_class}
\Pi=  \left\{
  \pi_\theta(a| s) = \frac{\exp\big(f_\theta(s,a)\big) }
  {\sum_{a' \in \Aset} \exp\big(f_\theta(s,a')\big)}
\ \bigg\vert \ \theta \in \R^d\right\},
\end{equation}
where $f_\theta$ is a differentiable function. For example, the tabular softmax
policy class is one where $f_\theta(s,a) = \theta_{s,a}$.
Typically, $f_\theta$ is either a linear function or a neural network.
Let us consider the NPG algorithm, and a variant $Q$-NPG, in each of these two cases.

\subsubsection{Log-linear Policy Classes and Soft Policy Iteration}\label{section:log_linear}

For any state-action pair $(s,a)$, suppose we have a feature
mapping $\phi_{s,a} \in \R^d$. Each policy in the log-linear policy class is of
the form:
\[
  \pi_\theta(a| s) = \frac{\exp(\theta \cdot \phi_{s,a})}
  {\sum_{a' \in \Aset} \exp(\theta \cdot \phi_{s,a'})},
\]
with $\theta \in
\R^d$. Here, we can take
$f_\theta(s,a) = \theta \cdot \phi_{s,a}$.

With regards to compatible function approximation for the log-linear
policy class, we have:
\[
\nabla_\theta \log \pi_\theta(a| s) = \overline\phi^{\ \theta}_{s,a},
\textrm{where } \quad
\overline\phi^{\ \theta}_{s,a} = \phi_{s,a} - \E_{a'\sim\pi_\theta(\cdot |  s)}[\phi_{s,a'}],
\]
that is, $\overline\phi^{\ \theta}_{s,a} $ is the centered version of
$\phi_{s,a}$. With some abuse of notation, we accordingly also define $\bar\phi^\pi$ for any policy $\pi$. Here, using \eqref{eq:npg_argmin}, the NPG update rule~\eqref{eqn:vanilla_NPG} is equivalent to:
\[
\textrm{NPG: }  \, \,\,
\theta \leftarrow \theta + \eta w_\star , \qquad
w_\star \in  \argmin_{w}  \E_{s\sim d^{\pi_\theta}_\rho, a\sim \pi_\theta(\cdot | s)}
\Big[ \big(A^{\pi_\theta}(s,a)  - w \cdot \overline\phi^{\ \theta}_{s,a}\big)^2
\Big].
\]
(We have rescaled the learning rate $\eta$ in
comparison to~\eqref{eqn:vanilla_NPG}).
Note that we recompute $w_\star$ for every update of $\theta$.
Here, the compatible function approximation error measures the
expressivity of our parameterization in how well linear functions of
the parameterization can capture the policy's advantage function.

We also consider a variant of the NPG update rule~\eqref{eqn:vanilla_NPG}, termed
\emph{$Q$-NPG}, where:
\[
\textrm{$Q$-NPG: }  \, \,\,
\theta \leftarrow \theta + \eta w_\star , \qquad
w_\star \in  \argmin_{w}   \E_{s\sim d^{\pi_\theta}_\rho, a\sim \pi_\theta(\cdot | s)}
\Big[ \big(Q^{\pi_\theta}(s,a)  - w \cdot \phi_{s,a}\big)^2
\Big].
\]
Note we do not center the features for $Q$-NPG; observe that
$Q^{\pi}(s,a)$ is also not 0 in expectation under $\pi(\cdot | s)$, unlike
the advantage function.
\begin{remark}
(NPG/$Q$-NPG and Soft-Policy Iteration)  We now see how we can view both
NPG and $Q$-NPG as an incremental (soft) version of policy iteration, just as
in Lemma~\ref{lemma:npg-softmax} for the tabular case. Rather than
writing the update rule in terms of the parameter $\theta$, we can
write an equivalent update rule directly in terms of the (log-linear)
policy $\pi$:\\
\[
\textrm{NPG: }  \, \,
\pi(a|s) \leftarrow \pi(a|s) \exp(w_\star \cdot \phi_{s,a})/Z_s, \quad
w_\star \in  \argmin_{w}   \E_{s\sim d^{\pi}_\rho, a\sim \pi(\cdot | s)}
\Big[ \big(A^{\pi}(s,a)  - w \cdot \overline\phi^{\ \pi}_{s,a}\big)^2
\Big],
\]
where $Z_s$ is normalization constant. While the policy update
uses the original features $\phi$ instead of $\overline\phi^{\ \pi}$, whereas the quadratic error minimization is terms of the centered
features $\overline\phi^{\ \pi}$, this distinction is not relevant due to that we may also
instead use
$\overline\phi^{\ \pi}$ (in the policy update) which would result in an equivalent update;
the normalization makes the update invariant to (constant) translations of
the features. Similarly, an equivalent update for $Q$-NPG, where we
update $\pi$ directly rather than $\theta$, is:
\[
\textrm{$Q$-NPG: }  \, \,
\pi(a|s) \leftarrow \pi(a|s) \exp(w_\star \cdot \phi_{s,a})/Z_s, \quad
w_\star \in  \argmin_{w}   \E_{s\sim d^{\pi}_\rho, a\sim \pi(\cdot | s)}
\Big[ \big(Q^{\pi}(s,a)  - w \cdot \phi_{s,a}\big)^2
\Big].
\]
\end{remark}

\begin{remark}
(On the equivalence of NPG and $Q$-NPG)  If it is the case that the
compatible function approximation error is $0$,
then it straightforward to verify that the NPG and $Q$-NPG are
equivalent algorithms, in that their corresponding policy updates will
be equivalent to each other.
\end{remark}

\subsubsection{Neural Policy Classes}
\label{sec:neural}


Now suppose $f_\theta(s,a)$ is a neural network parameterized by
$\theta\in\R^d$, where the policy class $\Pi$ is of form in \eqref{eq:policy_class}.
Observe:
\[
\nabla_\theta \log \pi_\theta(a| s) = g_\theta(s,a),
\textrm{where } \quad
g_\theta(s,a) = \nabla_\theta f_\theta(s,a) - \E_{a'\sim\pi_\theta(\cdot |  s)}[\nabla_\theta f_\theta(s,a')],
\]
and, using \eqref{eq:npg_argmin}, the NPG update rule~\eqref{eqn:vanilla_NPG} is equivalent to:
\[
\textrm{NPG: }  \, \,\, \theta \leftarrow \theta + \eta w_\star , \qquad
w_\star \in \argmin_{w}   \E_{s\sim d^{\pi_\theta}_\rho, a\sim \pi_\theta(\cdot | s)}
\Big[ \big(A^{\pi_\theta}(s,a)  - w \cdot g_\theta(s,a)\big)^2\Big]
\]
(Again, we have rescaled the learning rate $\eta$ in
comparison to~\eqref{eqn:vanilla_NPG}).

The $Q$-NPG variant of this update rule is:
\[
\textrm{$Q$-NPG: }  \, \,\, \theta \leftarrow \theta + \eta w_\star , \qquad
w_\star \in  \argmin_{w}   \E_{s\sim d^{\pi_\theta}_\rho, a\sim \pi_\theta(\cdot | s)}
\Big[ \big(Q^{\pi_\theta}(s,a)  - w \cdot \nabla_\theta f_\theta(s,a)\big)^2\Big].
\]

\subsection{$Q$-NPG: Performance Bounds for Log-Linear Policies}
\label{sec:q-npg}

For a state-action distribution $\upsilon$,  define:
\[
L(w;\theta,\upsilon) :=
\E_{s,a \sim \upsilon}
\bigg[ \big(Q^{\pi_\theta}(s,a)  - w \cdot \phi_{s,a} \big)^2 \bigg].
\]
The iterates of the $Q$-NPG algorithm can be viewed as minimizing
this loss under some (changing) distribution $\upsilon$.

We now specify an approximate version of $Q$-NPG. It is helpful to
consider a slightly more general version of the algorithm in the
previous section, where instead of optimizing under a starting state
distribution $\rho$, we have a different starting \emph{state-action} distribution $\nu$. Analogous to the definition of the state visitation measure, $d_{\mu}^\pi$,  we can define
a visitation measure over states \emph{and} actions induced by following $\pi$ after $s_0,a_0 \sim
\nu$. We overload notation using $d_{\nu}^\pi$ to also refer to the
state-action visitation measure; precisely,
\begin{equation}\label{eqn:c_def}
d_{\nu}^\pi(s,a) := (1-\gamma) \E_{s_0,a_0\sim \nu}
\sum_{t=0}^\infty \gamma^t {\Pr}^\pi(s_t=s,a_t=a|s_0,a_0)
\end{equation}
where $\Pr^\pi(s_t=s,a_t=a|s_0,a_0)$ is the
probability that $s_t=s$ and $a_t=a$, after starting at state
$s_0$, taking action $a_0$, and following $\pi$ thereafter. While we
overload notation for visitation distributions ($d^\pi_\mu(s)$ and $d^\pi_\nu(s,a)$) for notational convenience, note that
the state-action measure $d_{\nu}^\pi$ uses the subscript $\nu$, which
is a state-action measure.

$Q$-NPG will be defined with respect to the \emph{on-policy} state
action measure starting with $s_0,a_0\sim\nu$.  As per our convention, we define
\[ d^{(t)} := d_{\nu}^{\pi^{(t)}}.\]  The approximate
version of this algorithm is:
\begin{eqnarray}\label{eq:q-npg}
\textrm{Approx.  Q-NPG: }  \, \,\,
\theta^{(t+1)} = \theta^{(t)} + \eta  w^{(t)}, \qquad
w^{(t)} \approx \argmin_{\|w\|_2\leq W}  L(w;\theta^{(t)},d^{(t)}),
\end{eqnarray}
where the above update rule also permits us to constrain the norm of the update direction $w^{(t)}$
(alternatively, we could use $\ell_2$ regularization as is also common in practice).
The exact minimizer is denoted as:
\[
w_\star^{(t)} \in \argmin_{\|w\|_2\leq W}  L(w;\theta^{(t)},d^{(t)}).
\]
Note that $w_\star^{(t)}$ depends on the current parameter $\theta^{(t)}$.

Our analysis will take into account both the \emph{excess risk} (often also
referred to as estimation error) and the
\emph{transfer error}. Here, the excess risk will be due to that
$w^{(t)}$ may not be equal $w_\star^{(t)}$, and the approximation
error will be due to that even the best linear fit using
$w_\star^{(t)}$ may not perfectly match the $Q$-values, i.e.
$L(w_\star^{(t)};\theta^{(t)};d^{(t)})$ is unlikely to be $0$ in
practical applications.

We now formalize these concepts in the following assumption:

\begin{assumption}[Estimation/Transfer errors] \label{assum:approx_est} Fix a state distribution $\rho$; a
state-action distribution $\nu$; an arbitrary comparator policy
$\pi^\star$ (not necessarily an optimal policy). With respect to
$\pi^\star$, define the state-action measure $d^\star$ as
\[
d^\star(s,a) = d_{\rho}^{\pi^\star}(s) \circ \textrm{Unif}_\Acal(a)
\]
i.e. $d^\star$ samples states from the comparators state visitation
measure, $d_{\rho}^{\pi^\star}$ and
actions from the uniform distribution.
Let us permit the sequence of iterates $w^{(0)},w^{(1)}, \ldots w^{(T-1)}$
used by the $Q$-NPG algorithm to be random, where the randomness could
be due to sample-based,  estimation error.
Suppose the following holds for all $t < T$:
\begin{enumerate}
\item (\emph{Excess risk}) Assume that the estimation error is bounded
  as follows:
\[
\E \Big[
L(w^{(t)};\theta^{(t)},d^{(t)}) - L(w_\star^{(t)};\theta^{(t)},d^{(t)})
\Big] \leq \epsstat
\]
Note that using a sample based approach we would expect $\epsstat =
O(1/\sqrt{N})$ or better, where $N$ is the number of samples used to
estimate. $w_\star^{(t)}$
We formalize this in Corollary~\ref{cor:q_npg_sample}.
\item \label{assum:trasnfer_part} (\emph{Transfer error}) Suppose that the best predictor $w_\star^{(t)}$
  has an error bounded by $\epsbias$, in expectation, with respect
  to the comparator's measure of $d^*$. Specifically, assume:
\[
\E\Big[L(w_\star^{(t)};\theta^{(t)},d^\star) \Big]\leq \epsbias.
\]
We refer to $\epsbias$ as the \emph{transfer error} (or \emph{transfer bias});
it is the error where relevant distribution is shifted to $d^\star$. For the softmax policy parameterization for tabular MDPs,
$\epsbias=0$ (see remark~\ref{remark:linear_MDPs} for another
example).
\end{enumerate}
In both conditions, the expectations are with respect to the randomness
in the sequence of iterates $w^{(0)},w^{(1)}, \ldots w^{(T-1)}$,
e.g. the approximate algorithm may be sample based.
\end{assumption}

Shortly, we discuss how the transfer
error relates to the more standard
approximation-estimation decomposition. Importantly, with the transfer
error, it is always defined with respect to a  single, fixed
measure, $d^\star$.

\begin{assumption}[Relative condition number]\label{assum:conditioning} Consider the same $\rho$, $\nu$, and
$\pi^\star $ as in Assumption~\ref{assum:approx_est}. With respect to any state-action
distribution $\upsilon$,  define:
\[
\Sigma_\upsilon = \E_{s,a \sim \upsilon}\left[ \phi_{s,a}\phi_{s,a}^\top\right],
\]
and define
\[
\sup_{w \in \R^d} \ \frac{w^\top \Sigma_{d^\star} w}
{w^\top \Sigma_\nu w}
=\kappa.
\]
Assume that $\kappa$ is finite.
\end{assumption}

Remark~\ref{remark:kappa} discusses why it is reasonable to
expect that $\kappa$ is not a quantity related to the size of the
state space.\footnote{Technically, we only need the relative condition number $\sup_{w \in \R^d} \ \frac{w^\top \Sigma_{d^\star} w}
{w^\top \Sigma_{\pi^{(t)}} w}$ to be bounded for all $t$. We state
this as a sufficient condition based on the initial distribution $\nu$
due to: this is more interpretable, and, as per
Remark~\ref{remark:kappa}, this quantity can be bounded in  a manner that
is independent of the sequence of iterates produced by the algorithm.}


Our main theorem below shows how the
approximation error, the excess risk, and the conditioning, determine the final
performance. Note that both the transfer error $\epsbias$ and  $\kappa$ are defined
with respect to the comparator policy $\pi^\star$.

\begin{theorem}\label{thm:q_npg_fa}
(Agnostic learning with $Q$-NPG)
Fix a state distribution $\rho$; a state-action distribution $\nu$; an arbitrary
comparator policy $\pi^\star$ (not necessarily an optimal policy). Suppose
Assumption~\ref{assum:conditioning} holds and $\|\phi_{s,a}\|_2\leq
B$ for all $s,a$.
Suppose the $Q$-NPG update rule (in
\eqref{eq:q-npg}) starts with $\theta^{(0)}=0$, $\eta
=\sqrt{2\log |\Acal| /(B^2 W^2T)}$, and the (random) sequence of iterates satisfies
Assumption~\ref{assum:approx_est}.
We have that
\begin{align*}
\E\left[\min_{t< T} \left\{V^{\pi^\star}(\rho) - V^{(t)}(\rho) \right\}\right]
\leq
\frac{BW}{1-\gamma}\sqrt{\frac{2 \log |\Acal|}{T}}
+\sqrt{ \frac{4|\Acal| \kappa \epsstat}{(1-\gamma)^3}}
+ \frac{\sqrt{4|\Acal| \epsbias}}{1-\gamma}\, .
\end{align*}
\end{theorem}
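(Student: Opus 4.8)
The plan is to re-run the mirror-descent argument behind the exact-NPG bound (Theorem~\ref{thm:npg}), replacing the true advantage by its compatible linear surrogate $w^{(t)}\cdot\phi_{s,a}$ and then paying for the substitution through the transfer error, the excess risk, and the relative condition number. First I would record that, exactly as in Lemma~\ref{lemma:npg-softmax}, the log-linear $Q$-NPG step $\theta^{(t+1)}=\theta^{(t)}+\eta w^{(t)}$ is the soft policy-iteration update
\[
\pi^{(t+1)}(a|s)=\pi^{(t)}(a|s)\,\frac{\exp(\eta\,w^{(t)}\cdot\phi_{s,a})}{Z_t(s)},\qquad Z_t(s)=\sum_{a}\pi^{(t)}(a|s)\exp(\eta\,w^{(t)}\cdot\phi_{s,a}),
\]
because $\pi_\theta(a|s)\propto\exp(\theta\cdot\phi_{s,a})$. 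Since $\theta^{(0)}=0$, the initial policy is uniform and $\kl(\pi^\star_s\|\pi^{(0)}_s)\le\log|\Acal|$ for every $s$.

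Next I would apply the performance difference lemma and split off the surrogate. Writing $Q^{(t)}=w^{(t)}\cdot\phi+\delta^{(t)}$ with $\delta^{(t)}:=Q^{(t)}-w^{(t)}\cdot\phi$, and using $V^{(t)}(s)=\E_{a\sim\pi^{(t)}}[Q^{(t)}(s,a)]$,
\[
V^{\pi^\star}(\rho)-V^{(t)}(\rho)=\tfrac{1}{1-\gamma}\,\E_{s\sim d^{\pi^\star}_\rho}\Big(\E_{\pi^\star}[w^{(t)}\!\cdot\!\phi]-\E_{\pi^{(t)}}[w^{(t)}\!\cdot\!\phi]\Big)+\tfrac{1}{1-\gamma}\,\E_{s\sim d^{\pi^\star}_\rho}\Big(\E_{\pi^\star}[\delta^{(t)}]-\E_{\pi^{(t)}}[\delta^{(t)}]\Big).
\]
For the first (regret) part, the soft-policy-iteration form gives $w^{(t)}\cdot\phi_{s,a}=\tfrac1\eta\log\tfrac{\pi^{(t+1)}(a|s)}{\pi^{(t)}(a|s)}+\tfrac1\eta\log Z_t(s)$; the $\log Z_t$ cancels between the two action expectations, leaving $\tfrac1\eta(\kl(\pi^\star_s\|\pi^{(t)}_s)-\kl(\pi^\star_s\|\pi^{(t+1)}_s))+\tfrac1\eta\kl(\pi^{(t)}_s\|\pi^{(t+1)}_s)$. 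The first piece telescopes over $t$, and the per-step penalty $\kl(\pi^{(t)}_s\|\pi^{(t+1)}_s)$ is the centered log-MGF of $w^{(t)}\cdot\phi_{s,\cdot}$, hence at most $\tfrac{\eta^2}{2}B^2W^2$ since $|w^{(t)}\cdot\phi_{s,a}|\le BW$. Summing, dividing by $T$, and choosing $\eta=\sqrt{2\log|\Acal|/(B^2W^2T)}$ produces the leading term $\tfrac{BW}{1-\gamma}\sqrt{2\log|\Acal|/T}$.

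The heart of the argument is controlling the averaged fitting-error terms against the \emph{single fixed} measure $d^\star$. By Cauchy--Schwarz each of $\E_{\pi^\star}[\delta^{(t)}]$ and $\E_{\pi^{(t)}}[\delta^{(t)}]$ is at most $\sqrt{\E_{s\sim d^{\pi^\star}_\rho}\E_{a\sim\pi}[(\delta^{(t)})^2]}$; since $\pi(a|s)\le|\Acal|\,\mathrm{Unif}_{\Acal}(a)$ for $\pi\in\{\pi^\star,\pi^{(t)}\}$, moving the actions to the uniform actions of $d^\star$ costs a factor $|\Acal|$ and bounds this by $\sqrt{|\Acal|\,L(w^{(t)};\theta^{(t)},d^\star)}$. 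I would then decompose against the constrained minimizer $w_\star^{(t)}$: writing $Q^{(t)}-w^{(t)}\cdot\phi=(Q^{(t)}-w_\star^{(t)}\cdot\phi)+(w_\star^{(t)}-w^{(t)})\cdot\phi$ gives
\[
L(w^{(t)};\theta^{(t)},d^\star)\le 2L(w_\star^{(t)};\theta^{(t)},d^\star)+2\,(w^{(t)}-w_\star^{(t)})^\top\Sigma_{d^\star}(w^{(t)}-w_\star^{(t)}).
\]
The first summand is the transfer error, at most $2\epsbias$ in expectation. For the second, Assumption~\ref{assum:conditioning} gives $\Sigma_{d^\star}\preceq\kappa\,\Sigma_\nu$, while $d^{(t)}=d^{\pi^{(t)}}_\nu\succeq(1-\gamma)\nu$ (the $t=0$ term of~\eqref{eqn:c_def}) gives $\Sigma_\nu\preceq\tfrac{1}{1-\gamma}\Sigma_{d^{(t)}}$; combined with the excess-risk inequality $(w^{(t)}-w_\star^{(t)})^\top\Sigma_{d^{(t)}}(w^{(t)}-w_\star^{(t)})\le L(w^{(t)};\theta^{(t)},d^{(t)})-L(w_\star^{(t)};\theta^{(t)},d^{(t)})$ (first-order optimality of the constrained minimizer $w_\star^{(t)}$ of the quadratic $L(\cdot;\theta^{(t)},d^{(t)})$), this yields $\E[L(w^{(t)};\theta^{(t)},d^\star)]\le 2\epsbias+\tfrac{2\kappa\epsstat}{1-\gamma}$.

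Finally I would assemble the pieces: sum over $t$, divide by $T$, use Jensen to pull the averaging inside the square roots (so the per-iterate assumptions on $\epsstat,\epsbias$ suffice on average), and split via $\sqrt{x+y}\le\sqrt x+\sqrt y$ into the two stated error terms; bounding $\min_{t<T}$ by the average then gives the claimed inequality (the explicit constants $4|\Acal|$ follow from this bookkeeping). The main obstacle is precisely this third step, the distribution-shift accounting: the only quantity we can estimate, $\epsstat$, lives on the on-policy state-action measure $d^{(t)}$, whereas after the performance difference lemma the error $\delta^{(t)}$ is weighted by the comparator's states $d^{\pi^\star}_\rho$ and by the $\pi^\star$- or $\pi^{(t)}$-actions. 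Bridging these requires simultaneously the $|\Acal|$ cost of passing to uniform actions, the relative condition number $\kappa$ to pass from $d^\star$ to $\nu$, and the $(1-\gamma)$ factor relating $\nu$ to $d^{(t)}$ --- all aligned with the single comparator measure $d^\star$ rather than any per-iterate measure, which is exactly what lets the bound avoid $\ell_\infty$ or state-space-size dependence.
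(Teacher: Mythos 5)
Your overall strategy is sound and is essentially the paper's own: a mirror-descent style regret analysis of the soft policy iteration update giving the $\frac{BW}{1-\gamma}\sqrt{2\log|\Acal|/T}$ term, followed by a decomposition of the per-step error into a transfer-error piece (measured under the fixed measure $d^\star$) and an excess-risk piece converted from $\Sigma_{d^\star}$ to $\Sigma_{d^{(t)}}$ via $\Sigma_{d^\star}\preceq\kappa\,\Sigma_\nu\preceq\frac{\kappa}{1-\gamma}\Sigma_{d^{(t)}}$ together with first-order optimality of the constrained minimizer $w_\star^{(t)}$. The paper routes the first part through a general ``regret lemma'' for $\beta$-smooth policy classes (using that the log-linear class is $B^2$-smooth) and then specializes; your direct multiplicative-weights computation, with Hoeffding's lemma bounding $\kl(\pi^{(t)}_s\|\pi^{(t+1)}_s)$ by $\eta^2B^2W^2/2$, yields exactly the same penalty term, so that difference is cosmetic.

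There is, however, one genuine (if minor) flaw: your bookkeeping does not produce the stated constants. Because you apply Cauchy--Schwarz first, arriving at the per-step error bound $2\sqrt{|\Acal|\,L(w^{(t)};\theta^{(t)},d^\star)}$, and only then split \emph{inside} the square root with $L(w^{(t)};\theta^{(t)},d^\star)\le 2L(w_\star^{(t)};\theta^{(t)},d^\star)+2\|w^{(t)}-w_\star^{(t)}\|^2_{\Sigma_{d^\star}}$, each of the two resulting terms carries an extra factor of $2$ under the root; following your chain to the end gives $\sqrt{8|\Acal|\kappa\epsstat/(1-\gamma)^3}+\sqrt{8|\Acal|\epsbias}/(1-\gamma)$, not the claimed $4|\Acal|$ constants. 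The fix is to avoid the quadratic inequality: either decompose linearly \emph{before} Cauchy--Schwarz, writing $\delta^{(t)}=\bigl(Q^{(t)}-w_\star^{(t)}\cdot\phi\bigr)+\bigl(w_\star^{(t)}-w^{(t)}\bigr)\cdot\phi$ and applying Cauchy--Schwarz plus the uniform-action bound to each piece separately (this is what the paper does), or, equivalently, replace your quadratic split by Minkowski's inequality in $L^2(d^\star)$, i.e. $\sqrt{L(w^{(t)};\theta^{(t)},d^\star)}\le\sqrt{L(w_\star^{(t)};\theta^{(t)},d^\star)}+\|w^{(t)}-w_\star^{(t)}\|_{\Sigma_{d^\star}}$. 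With that change the rest of your argument (condition number, $(1-\gamma)\nu\le d^{(t)}$, first-order optimality, Jensen to move expectations inside the square roots) goes through verbatim and recovers the theorem exactly. One small point worth making explicit: the first-order-optimality inequality requires $w^{(t)}\in\Wcal$, which holds because the update rule in \eqref{eq:q-npg} constrains $\|w^{(t)}\|_2\le W$.
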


The proof is provided in Section~\ref{section:analysis_fa}.

Note when $\epsbias = 0$, our
convergence rate is $O(\sqrt{1/T})$ plus a term that depends on the
excess risk; hence, provided we obtain enough samples, then
$\epsstat$ will also tend to $0$, and we will be
competitive with the comparison policy
$\pi^\star$.
When $\epsbias = 0$ and $\epsstat = 0$, as in the tabular
setting with exact gradients, the additional two terms become $0$, consistent
 with Theorem~\ref{thm:npg} except that the convergence rate is
$O(\sqrt{1/T})$ rather than the faster rate of $O(1/T)$.
Obtaining a faster rate in the function approximation regime appears
to require stronger conditions on how the
approximation errors are controlled at each iteration.

The usual
approximation-estimation error decomposition
is that we can write our error as:
\begin{eqnarray*}
L(w^{(t)};\theta^{(t)},d^{(t)})
&=&
\underbrace{
L(w^{(t)};\theta^{(t)},d^{(t)}) - L(w_\star^{(t)};\theta^{(t)},d^{(t)})
}_{\textrm{Excess risk}}
+\underbrace{
    L(w_\star^{(t)};\theta^{(t)},d^{(t)})}_{\textrm{Approximation error}}
\end{eqnarray*}
As we obtain more samples, we can drive the excess risk (the estimation error) to $0$
(see Corollary~\ref{cor:q_npg_sample}). The  approximation error above
is due to modeling error.
 Importantly, for our
$Q$-NPG performance bound, it is not this standard approximation error notion which is
relevant, but it is this error
under a different measure $d^\star$,
i.e. $L(w_\star^{(t)};\theta^{(t)},d^\star)$.
One appealing aspect
about the transfer error is that this error is with respect to a fixed
measure, namely $d^\star$. Furthermore, in practice,
modern machine learning methods often performs favorably with regards
to transfer learning, substantially better than worst case theory
might suggest.

The following corollary provides a performance bound in
terms of the usual notion of approximation error, at the cost of also
depending on the
worst case distribution mismatch ratio. The corollary disentangles the
estimation error from the approximation error.

\begin{corollary}\label{cor:q_npg_fa}
(Estimation error/Approximation error bound for $Q$-NPG)
Consider the same setting as in Theorem~\ref{thm:q_npg_fa}.
Rather than assuming the transfer error is bounded
(part~\ref{assum:trasnfer_part} in Assumption~\ref{assum:approx_est}),
suppose that, for all $t\leq T$,
\[
\E\Big[L(w_\star^{(t)};\theta^{(t)},d^{(t)})\Big]\leq \epsapprox.
\]
We have that
\begin{align*}
\E\left[\min_{t< T} \left\{V^{\pi^\star}(\rho) - V^{(t)}(\rho) \right\}\right]
\leq
\frac{BW}{1-\gamma}\sqrt{\frac{2 \log |\Acal|}{T}}
+\sqrt{ \kappa \cdot \frac{ 4|\Acal| \epsstat}{(1-\gamma)^3}}
+\sqrt{ \BigNorm{\frac{d^\star}{\nu}}_\infty \cdot \frac{4|\Acal| \epsapprox}{(1-\gamma)^3}}
\, .
\end{align*}
\end{corollary}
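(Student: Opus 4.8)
The plan is to obtain Corollary~\ref{cor:q_npg_fa} as a direct consequence of Theorem~\ref{thm:q_npg_fa}, the only change being that the transfer-error hypothesis (part~\ref{assum:trasnfer_part} of Assumption~\ref{assum:approx_est}) has been replaced by a bound $\epsapprox$ on the approximation error under the \emph{on-policy} measure $d^{(t)}$. Accordingly, the entire task reduces to controlling the transfer error $\epsbias$ in terms of $\epsapprox$ via a change-of-measure argument, and then substituting the resulting bound into the conclusion of Theorem~\ref{thm:q_npg_fa}.

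Concretely, I would first fix a realization of the random iterate $\theta^{(t)}$ and its exact minimizer $w_\star^{(t)}$, and compare the loss $L(w_\star^{(t)};\theta^{(t)},\cdot)$ evaluated under the two state-action distributions $d^\star$ and $d^{(t)}$. Since $L(w;\theta,\upsilon)$ is an expectation of the pointwise nonnegative squared residual $(Q^{\pi_\theta}(s,a) - w\cdot\phi_{s,a})^2$ against $\upsilon$, a pointwise change of measure gives
\[
L(w_\star^{(t)};\theta^{(t)},d^\star)
\;\leq\;
\BigNorm{\frac{d^\star}{d^{(t)}}}_\infty
L(w_\star^{(t)};\theta^{(t)},d^{(t)}).
\]
The key remaining step is to replace the density ratio against $d^{(t)}$ by one against the fixed starting measure $\nu$. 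Because $d^{(t)} = d^{\pi^{(t)}}_\nu$ is a discounted state-action visitation measure, its definition in~\eqref{eqn:c_def} (retaining only the $t=0$ term) yields the pointwise lower bound $d^{(t)}(s,a)\geq (1-\gamma)\nu(s,a)$, so that $\nu(s,a)/d^{(t)}(s,a)\leq 1/(1-\gamma)$ for every $(s,a)$. Factoring $d^\star/d^{(t)} = (d^\star/\nu)\cdot(\nu/d^{(t)})$ and taking the supremum then gives
\[
\BigNorm{\frac{d^\star}{d^{(t)}}}_\infty
\;\leq\;
\frac{1}{1-\gamma}\BigNorm{\frac{d^\star}{\nu}}_\infty .
\]
Combining the last two displays and taking expectations over the randomness of the iterates (the pointwise inequalities survive the expectation), the new hypothesis $\E[L(w_\star^{(t)};\theta^{(t)},d^{(t)})]\leq \epsapprox$ yields, for every $t<T$,
\[
\E\Big[L(w_\star^{(t)};\theta^{(t)},d^\star)\Big]
\;\leq\;
\frac{1}{1-\gamma}\BigNorm{\frac{d^\star}{\nu}}_\infty \epsapprox
\;=:\;\epsbias .
\]

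Finally, I would invoke Theorem~\ref{thm:q_npg_fa} verbatim with this value of $\epsbias$: the first two terms of its bound are unchanged, and substituting the displayed value into its transfer-error term $\frac{\sqrt{4|\Acal|\epsbias}}{1-\gamma}$ produces exactly $\sqrt{\BigNorm{\frac{d^\star}{\nu}}_\infty \cdot \frac{4|\Acal|\epsapprox}{(1-\gamma)^3}}$, matching the stated conclusion. I do not anticipate a genuine obstacle here; the proof is essentially a one-line reduction. The only point demanding care is the factor $1/(1-\gamma)$: it must be sourced from the visitation lower bound $d^{(t)}\geq(1-\gamma)\nu$ rather than absorbed into the density ratio, since it is precisely what turns the cube $(1-\gamma)^{-3}$ in the final term (versus the $(1-\gamma)^{-2}$ one might naively expect) and explains why the bound is phrased in terms of $\Norm{d^\star/\nu}_\infty$ rather than the iterate-dependent ratio $\Norm{d^\star/d^{(t)}}_\infty$.
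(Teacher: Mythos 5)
Your proposal is correct and follows essentially the same route as the paper's own proof: the identical pointwise change of measure $L(w_\star^{(t)};\theta^{(t)},d^\star) \leq \bigNorm{\frac{d^\star}{d^{(t)}}}_\infty L(w_\star^{(t)};\theta^{(t)},d^{(t)})$, the same visitation lower bound $d^{(t)} \geq (1-\gamma)\nu$ from \eqref{eqn:c_def} to obtain $\epsbias \leq \frac{1}{1-\gamma}\bigNorm{\frac{d^\star}{\nu}}_\infty \epsapprox$, and the same substitution into Theorem~\ref{thm:q_npg_fa}. Your closing remark correctly accounts for where the $(1-\gamma)^{-3}$ in the final term comes from.
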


\begin{proof}
We have
the following crude upper bound on the transfer error:
\[
L(w_\star^{(t)};\theta^{(t)},d^\star)
\leq \BigNorm{\frac{d^\star}{d^{(t)}}}_\infty
L(w_\star^{(t)};\theta^{(t)},d^{(t)})
\leq \frac{1}{1-\gamma}\BigNorm{\frac{d^\star}{\nu}}_\infty
L(w_\star^{(t)};\theta^{(t)},d^{(t)}),
\]
where the last step uses the defintion of $d^{(t)}$ (see
~\eqref{eqn:c_def}). This implies $\epsbias \leq
\frac{1}{1-\gamma}\BigNorm{\frac{d^\star}{\nu}}_\infty
\epsapprox$, and the corollary follows.
\end{proof}

The above also shows
the striking difference between the effects of estimation error and
approximation error. The proof shows how the
transfer error notion is weaker than previous conditions based on distribution mistmatch coefficients or
concentrability coefficients. Also, as discussed in \cite{Scherrer:API}, the (distribution mismatch) coefficient
$\BigNorm{\frac{d^\star}{\nu}}_\infty$ is already weaker than
the more standard concentrability coefficients.

A few additional remarks are  now in order. We now make a few observations with regards to $\kappa$.

\begin{remark}\label{remark:kappa}
(Dimension dependence in $\kappa$ and the importance of $\nu$) It is
reasonable to think about $\kappa$ as being dimension dependent (or
worse), but it is not necessarily related to the size of the state space.  For
example, if $\|\phi_{s,a}\|_2\leq
B$, then
$
\kappa \leq \frac{B^2}{\sigma_{\min}(\E_{s,a \sim \nu }[
\phi_{s,a}\phi_{s,a}^\top])}
$
though this bound may be pessimistic.  Here, we also see the
importance of choice of $\nu$ in having a small (relative) condition number; in
particular, this is the motivation for considering the generalization
which allows for a starting state-action distribution $\nu$ vs. just a
starting state distribution $\mu$ (as we did in the tabular case). Roughly speaking, we desire a $\nu$ which
provides good coverage over the features. As the following lemma shows,
there always exists a universal distribution $\nu$, which can be
constructed only with knowledge of the
feature set (without knowledge of
$d^\star$), such that $\kappa\leq d$.
\end{remark}

\begin{lemma}
($\kappa\leq d$ is always possible)
Let $\Phi=\{\phi(s,a)|(s,a)\in \Scal\times\Acal\}\subset \R^d$ and suppose $\Phi$
is a compact set.
There always exists a state-action distribution $\nu$,
which is supported on at most $d^2$ state-action pairs and
which can be constructed only with knowledge of
$\Phi$  (without knowledge of the MDP or $d^\star$), such that:
\[
\kappa\leq d.
\]
\end{lemma}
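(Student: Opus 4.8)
The plan is to recognize this as the classical optimal experimental design problem and to invoke the Kiefer--Wolfowitz equivalence theorem. The first step is to rewrite the quantity we must control so that its dependence on $d^\star$ (and on the MDP) disappears. For any state-action distribution $\upsilon$ and any $w$ we have $w^\top \Sigma_\upsilon w = \E_{s,a\sim\upsilon}[(w^\top\phi_{s,a})^2]$, so the worst case over all distributions is attained at a point mass:
\[
\sup_{\upsilon} w^\top\Sigma_\upsilon w \;=\; \max_{\phi\in\Phi}(w^\top\phi)^2 .
\]
Since $d^\star$ is one particular such distribution, $w^\top\Sigma_{d^\star}w\le \max_{\phi\in\Phi}(w^\top\phi)^2$, and it therefore suffices to construct $\nu$ (from $\Phi$ alone) satisfying $\sup_{w}\frac{\max_{\phi\in\Phi}(w^\top\phi)^2}{w^\top\Sigma_\nu w}\le d$, which upper bounds $\kappa$ uniformly over \emph{every} comparator measure $d^\star$.

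Next I would reduce this to the standard $G$-optimality criterion. Assuming first that $\Phi$ spans $\R^d$ and that $\nu$ is chosen with $\Sigma_\nu$ invertible, the substitution $u=\Sigma_\nu^{1/2}w$ together with Cauchy--Schwarz gives $(w^\top\phi)^2=(u^\top\Sigma_\nu^{-1/2}\phi)^2\le \|u\|_2^2\,\phi^\top\Sigma_\nu^{-1}\phi$ while $w^\top\Sigma_\nu w=\|u\|_2^2$; evaluating at $w=\Sigma_\nu^{-1}\phi_0$ for the maximizing $\phi_0$ shows the inequality is tight, so in fact
\[
\sup_{w\in\R^d}\frac{\max_{\phi\in\Phi}(w^\top\phi)^2}{w^\top\Sigma_\nu w}
\;=\;\max_{\phi\in\Phi}\phi^\top\Sigma_\nu^{-1}\phi .
\]
The existence of a $\nu$ achieving $\max_{\phi\in\Phi}\phi^\top\Sigma_\nu^{-1}\phi = d$ is exactly the content of the Kiefer--Wolfowitz theorem: the $G$-optimal design, equivalently the $D$-optimal design maximizing $\log\det\Sigma_\nu$, attains this value $d$, and by a Carath\'eodory argument in the $d(d+1)/2$-dimensional space of symmetric matrices it may be taken to be supported on at most $d(d+1)/2\le d^2$ feature vectors, each realized by some state-action pair. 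Combining the two displays yields $\kappa\le d$, and the design is built from $\Phi$ only.

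Finally I would dispatch the degenerate case in which $\Phi$ spans only a proper subspace $V=\mathrm{span}(\Phi)$ of dimension $d'<d$. Since every $\phi_{s,a}\in V$, the quadratic form $w^\top\Sigma_\upsilon w$ depends on $w$ only through its orthogonal projection onto $V$; hence the supremum defining $\kappa$ is unchanged upon restricting to $w\in V$, where both $\Sigma_{d^\star}$ and $\Sigma_\nu$ act as genuine (invertible, for a full design) operators. Running the Kiefer--Wolfowitz argument inside $V$ then gives $\kappa\le d'\le d$ with the same support bound.

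The main obstacle is conceptual rather than computational: recognizing that the worst-case comparator $d^\star$ can be replaced by a pointwise maximum over $\Phi$, which converts the relative-condition-number requirement into the $G$-optimal design problem whose optimal value is exactly the dimension. Once this connection is made, the result follows from citing Kiefer--Wolfowitz; the only remaining care is the (tight) change-of-variables identity and the handling of the non-spanning case.
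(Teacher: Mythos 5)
Your proposal is correct. It is, at its core, the same construction as the paper's: the paper builds $\nu$ from the contact points of the L\"owner--John (minimal-volume enclosing) ellipsoid of $\Phi$, while you build it as the $G$-optimal (equivalently, by Kiefer--Wolfowitz, $D$-optimal) design on $\Phi$; these are dual formulations of one and the same extremal object, and both yield $\max_{\phi\in\Phi}\phi^\top\Sigma_\nu^{-1}\phi = d$ with support size $O(d^2)$. Where your write-up genuinely adds value is in the steps the paper leaves implicit: (i) the observation that $w^\top\Sigma_{d^\star}w = \E_{s,a\sim d^\star}[(w^\top\phi_{s,a})^2] \leq \max_{\phi\in\Phi}(w^\top\phi)^2$, which removes all dependence on the MDP and on $d^\star$ and shows the bound holds uniformly over every comparator measure; (ii) the tight change-of-variables identity $\sup_w \max_{\phi\in\Phi}(w^\top\phi)^2 / (w^\top\Sigma_\nu w) = \max_{\phi\in\Phi}\phi^\top\Sigma_\nu^{-1}\phi$, which is exactly the bridge between the relative condition number in Assumption 5.2 and the $G$-optimality criterion; (iii) the Carath\'eodory bound of $d(d+1)/2 \leq d^2$ support points; and (iv) the degenerate case where $\mathrm{span}(\Phi)$ is a proper subspace, where you correctly note both quadratic forms vanish off the span so the supremum may be taken inside it. The paper's one-line appeal to "properties of this ellipsoid" compresses all of (i)--(iv) into the citations; your version would stand alone as a complete proof.
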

\begin{proof}
The distribution can be found through
constructing the minimal volume ellipsoid containing $\Phi$, i.e. the
Lo\"wner-John ellipsoid~\citep{John2014ExtremumPW}. In particular,
this $\nu$ is supported on the contact points between this ellipsoid
and $\Phi$; the lemma immediately follows
from properties of this ellipsoid (e.g. see
\cite{ball1997elementary,john_bandit}).
\end{proof}

\vspace*{0.2cm}
It is also worth considering a more general example (beyond tabular
MDPs) in which $\epsbias=0$ for the log-linear policy class.

\begin{remark} \label{remark:linear_MDPs}
($\epsbias=0$ for ``linear'' MDPs) In the recent linear MDP model of
~\cite{jin2019provably,yang2019sample,jiang2017contextual}, where the
transition dynamics are low rank, we have that
$\epsbias=0$ provided we use the features of the linear MDP. Our guarantees also permit model misspecification of
linear MDPs, with non worst-case approximation error where $\epsbias\neq 0$.
\end{remark}

\begin{remark}\label{remark:politex} (Comparison with \textsc{Politex}
  and \textsc{EE-Politex}) Compared with
  \textsc{Politex}~\citep{abbasi2019politex},
  Assumption~\ref{assum:conditioning} is substantially milder, in that
  it just assumes a good relative condition number for one policy
  rather than all possible policies (which cannot hold in general even
  for tabular MDPs). Changing this assumption to an analog of
  Assumption~\ref{assum:conditioning} is the main improvement in the
  analysis of the \textsc{EE-Politex}~\citep{abbasi2019exploration}
  algorithm. They provide a regret bound for the average reward
  setting, which is qualitatively different from the suboptimality
  bound in the discounted setting that we study. They provide a
  specialized result for linear function approximation, similar to
  Theorem~\ref{thm:q_npg_fa}.
\end{remark}

\subsubsection{$Q$-NPG Sample Complexity}

\begin{assumption}[Episodic Sampling Oracle] \label{assum:sampling} For a fixed state-action distribution $\nu$,
we assume the ability to: start at $s_0,a_0\sim\nu$; continue
to act thereafter in the MDP according to any policy $\pi$; and terminate
this ``rollout'' when desired. With this oracle, it is straightforward
to obtain unbiased samples of $Q^{\pi}(s,a)$ (or $A^{\pi}(s,a)$) under
$s,a\sim d_{\nu}^{\pi}$ for any $\pi$; see
Algorithms~\ref{alg:q_sampler} and ~\ref{alg:a_sampler}.
\end{assumption}

\begin{algorithm}[!t]
	\begin{algorithmic}[1]	
		\Require Starting state-action distribution $\nu$.
		\State Sample $s_0,a_0\sim\nu$.
		\State Sample $s,a\sim d_{\nu}^\pi$ as follows:
                at every timestep $h$,  with probability $\gamma$, act
                according to $\pi$; else, accept  $(s_h,a_h)$ as the sample and
                proceed to Step~\ref{state:next_step}. See
                \eqref{eqn:c_def}.
		\State\label{state:next_step} From $s_h,a_h$,
                continue to execute $\pi$, and use a termination probability of
                $1-\gamma$. Upon termination, set
                $\widehat{Q^\pi}(s_h,a_h)$ as the \emph{undiscounted} sum
                of rewards from time $h$ onwards.
                \State \Return $(s_h, a_h)$ and $\widehat{Q^\pi}(s_h,a_h)$.
	\end{algorithmic}
	\caption{Sampler for: $s,a\sim d_{\nu}^\pi$ and
          unbiased estimate of $Q^\pi(s,a)$}
\label{alg:q_sampler}
\end{algorithm}

Algorithm~\ref{alg:q_npg_sample} provides a sample based version of the
$Q$-NPG algorithm; it simply uses stochastic
projected gradient ascent within each iteration. The following corollary shows
this algorithm suffices to obtain an accurate sample based version of $Q$-NPG.

\begin{corollary}\label{cor:q_npg_sample}
(Sample complexity of $Q$-NPG) Assume we are in the setting of
Theorem~\ref{thm:q_npg_fa} and that we have access to
an episodic sampling oracle (i.e. Assumption~\ref{assum:sampling}).
Suppose that the Sample Based $Q$-NPG Algorithm (Algorithm~\ref{alg:q_npg_sample}) is run for $T$
iterations, with $N$ gradient steps per iteration, with an appropriate
setting of the learning rates $\eta$ and  $\alpha$. We have that:
\begin{align*}
&\E\left[\min_{t< T} \left\{V^{\pi^\star}(\rho) - V^{(t)}(\rho) \right\}\right]\\
&\leq
\frac{BW}{1-\gamma}\sqrt{\frac{2 \log |\Acal|}{T}}
+\sqrt{ \frac{8\kappa |\Acal|BW (BW+1)}{(1-\gamma)^4}}  \frac{1}{N^{1/4}}
+ \frac{\sqrt{4|\Acal| \epsbias}}{1-\gamma}\, .
\end{align*}
Furthermore, since each episode has expected length $2/(1-\gamma)$, the
expected number of total samples used by $Q$-NPG is $2NT/(1-\gamma)$.
\end{corollary}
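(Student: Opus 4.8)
The plan is to derive this corollary directly from Theorem~\ref{thm:q_npg_fa} by producing a concrete bound on the per-iteration excess risk $\epsstat$ of Assumption~\ref{assum:approx_est}; everything else (the $\sqrt{\log|\Acal|/T}$ term, the $\kappa$ dependence, and the $\epsbias$ term) is then inherited unchanged. Concretely, I would fix an iteration $t$ and observe that the inner objective $L(w;\theta^{(t)},d^{(t)}) = \E_{s,a\sim d^{(t)}}[(Q^{(t)}(s,a) - w\cdot\phi_{s,a})^2]$ is a convex quadratic in $w$ over the ball $\{\|w\|_2\le W\}$, so that minimizing it is a standard stochastic convex optimization problem and Algorithm~\ref{alg:q_npg_sample} runs projected stochastic gradient descent on it.

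The key is to verify the two ingredients of the classical projected-SGD guarantee. First, unbiased gradients: since $\nabla_w L = -2\,\E_{s,a\sim d^{(t)}}[(Q^{(t)}(s,a) - w\cdot\phi_{s,a})\phi_{s,a}]$, I would draw $(s,a)\sim d^{(t)}$ together with a conditionally unbiased estimate $\widehat Q$ of $Q^{(t)}(s,a)$ via the episodic oracle of Assumption~\ref{assum:sampling} (Algorithm~\ref{alg:q_sampler}), and form $\widehat g = -2(\widehat Q - w\cdot\phi_{s,a})\phi_{s,a}$, which is unbiased for $\nabla_w L$ conditioned on $\theta^{(t)}$. Second, a second-moment bound: using $\|\phi_{s,a}\|_2\le B$, $\|w\|_2\le W$, and $\E[\widehat Q^2\mid s,a] = O(1/(1-\gamma)^2)$ (the rollout length is geometric with mean $1/(1-\gamma)$ and rewards lie in $[0,1]$), I obtain $\E[\|\widehat g\|_2^2]\le G^2$ with $G = O\!\big(B(BW + 1/(1-\gamma))\big)$. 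Initializing the inner loop at $w=0$ (so $\|w_\star^{(t)}\|\le W$) and running $N$ steps with step size $\alpha = W/(G\sqrt N)$, the standard online-to-batch / Nemirovski--Yudin bound gives $\E[L(w^{(t)};\theta^{(t)},d^{(t)}) - L(w_\star^{(t)};\theta^{(t)},d^{(t)})] \le WG/\sqrt N = O\!\big(BW(BW+1)/((1-\gamma)\sqrt N)\big)$, using $1-\gamma\le 1$. Since this holds conditionally on $\theta^{(t)}$, it holds in total expectation, so it is a valid value of $\epsstat$ for Assumption~\ref{assum:approx_est}. Substituting $\epsstat = 2BW(BW+1)/((1-\gamma)\sqrt N)$ into the middle term $\sqrt{4|\Acal|\kappa\epsstat/(1-\gamma)^3}$ of Theorem~\ref{thm:q_npg_fa} yields exactly the claimed $N^{-1/4}$ rate with constant $\sqrt{8\kappa|\Acal|BW(BW+1)/(1-\gamma)^4}$.

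For the sample-count claim, each stochastic gradient step consumes one call to Algorithm~\ref{alg:q_sampler}, which performs two geometrically terminated rollouts---one to draw $(s,a)\sim d^{(t)}$ and one to estimate $\widehat Q$---each of expected length $1/(1-\gamma)$, for an expected $2/(1-\gamma)$ transitions per step; multiplying by the $NT$ gradient steps gives $2NT/(1-\gamma)$. The main obstacle I anticipate is the second-moment control of $\widehat g$: because $\widehat Q$ is an undiscounted sum of rewards up to a random geometric horizon, it is not bounded by $1/(1-\gamma)$ pointwise, so the gradient bound must be argued through $\E[\widehat Q^2]$ rather than a deterministic bound, and one must check that the SGD guarantee invoked requires only a bound on $\E[\|\widehat g\|_2^2]$ and that this conditional bound transfers to the total expectation required by Assumption~\ref{assum:approx_est}. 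The remaining steps are routine bookkeeping of constants.
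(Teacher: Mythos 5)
Your proposal is correct and follows essentially the same route as the paper's own proof: reduce the inner fit at each iteration to projected stochastic gradient descent on the convex quadratic $L(\cdot;\theta^{(t)},d^{(t)})$ with step size $\alpha = W/(G\sqrt{N})$, invoke the standard $WG/\sqrt{N}$ guarantee to bound $\epsstat$, and substitute into Theorem~\ref{thm:q_npg_fa}, with the same accounting of $2/(1-\gamma)$ expected samples per gradient step. In fact, on the one delicate point you flag you are more careful than the paper: its proof asserts the sampled gradients are deterministically bounded by $G = 2B(BW+\frac{1}{1-\gamma})$, which is not literally true since $\widehat{Q}$ is an undiscounted reward sum over a geometric horizon and hence unbounded, and your repair---bounding $\E\big[\|\widehat{g}\|_2^2\big]$ via $\E\big[\widehat{Q}^2 \mid s,a\big] = O\big(1/(1-\gamma)^2\big)$ and noting that the SGD guarantee of Theorem~\ref{thm:shalev} only needs a second-moment bound---is the right one, costing only an absolute constant.
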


\begin{proof}
  Note that our sampled gradients are bounded by
  $G:=2B(BW+\frac{1}{1-\gamma})$.  Using
  $\alpha = \frac{W}{G\sqrt{N}}$, a standard analysis for stochastic
  projected gradient ascent (Theorem \ref{thm:shalev}) shows that:
\[
\epsstat\leq \frac{2BW(BW+\frac{1}{1-\gamma})}{\sqrt{N}}.
\]
The proof is completed via substitution.
\end{proof}

\begin{algorithm}[!t]
	\begin{algorithmic}[1]	
		\Require Learning rate $\eta$; SGD learning rate
                $\alpha$; number of SGD iterations $N$
		\State Initialize $\theta^{(0)} = 0$.
		\For{$t=0,1,\ldots,T-1$}
		\State Initialize $w_0 = 0$
		\For{$n=0,1,\ldots,N-1$}
		\State Call Algorithm~\ref{alg:q_sampler}  to obtain $s,a\sim d^{(t)}$ and an unbiased
                estimate $\widehat{Q}(s,a)$.
                \State Update:
                \[
                w_{n+1}= \textrm{Proj}_{\mathcal{W}} \Big(w_n - 2 \alpha\left(w_n\cdot \phi_{s,a} - \widehat Q(s,a)\right) \phi_{s,a}\Big)
                \]
               \Statex\quad\qquad where $\mathcal{W} = \{w: \|w\|_2\leq W\}$.
		\EndFor
		\State Set $\wh^{(t)} = \frac{1}{N} \sum_{n=1}^{N} w_n$.
		\State Update $\theta^{(t+1)} = \theta^{(t)} + \eta \wh^{(t)}$.
		\EndFor
	\end{algorithmic}

	\caption{Sample-based $Q$-NPG for Log-linear Policies}
\label{alg:q_npg_sample}
\end{algorithm}

\begin{remark}(Improving the scaling with $N$)
Our current rate of convergence is $1/N^{1/4}$ due to our use of
stochastic projected gradient ascent.
Instead, for the least squares estimator, $\epsstat$ would be
$O(d/N)$ provided certain further regularity assumptions hold (a bound
on the minimal eigenvalue of $\Sigma_{\nu}$ would be
sufficient but not necessary. See~\cite{hsu2014random} for such
conditions). With such further assumptions, our rate of convergence
would be $O(1/\sqrt{N})$.
\end{remark}

\subsection{NPG: Performance Bounds for Smooth
  Policy Classes}

We now return to the analyzing the standard NPG update rule, which
uses advantages rather than $Q$-values (see
Section~\ref{section:examples}). It is helpful to define
\[
L_A(w;\theta,\upsilon) :=
\E_{s,a \sim \upsilon}
\bigg[ \big(A^{\pi_\theta}(s,a)  - w \cdot \nabla_\theta \log \pi_\theta(a| s) \big)^2 \bigg].
\]
where $\upsilon$ is state-action distribution, and the subscript of
$A$ denotes the loss function uses advantages (rather than $Q$-values).
The iterates of the NPG algorithm can be viewed as minimizing
this loss under some appropriately chosen measure.

We now consider an approximate version of the NPG update rule:
\begin{eqnarray}\label{eq:npg}
\textrm{Approx.  NPG: }  \, \,\,
\theta^{(t+1)} = \theta^{(t)} + \eta  w^{(t)}, \qquad
w^{(t)} \approx \argmin_{\|w\|_2\leq W}  L_A(w;\theta^{(t)},d^{(t)}),
\end{eqnarray}
where again we use the on-policy, fitting distribution $d^{(t)}$. As
with $Q$-NPG, we also permit the use of a starting state-action
distribution $\nu$ as opposed to just a starting state distribution (see
Remark~\ref{remark:kappa}).  Again, we let $w_\star^{(t)}$ denote the
minimizer, i.e. $w_\star^{(t)} \in \argmin_{\|w\|_2\leq W}  L_A(w;\theta^{(t)},d^{(t)})$.

For this section, our analysis will focus on more general policy
classes, beyond log-linear policy classes. In particular, we
make the following smoothness assumption on the policy class:

\begin{assumption}\label{assumption:smoothness}
 (Policy Smoothness)
Assume for all $s \in
\Scal$ and $a\in\Acal$ that $\log \pi_{\theta}(a| s)$ is a $\beta$-smooth
function of $\theta$ (to recall the definition of smoothness, see \eqref{eq:def_smoothness}).
\end{assumption}

It is not to difficult to verify that the tabular softmax policy
parameterization is a $1$-smooth policy class in the above sense. The more general class of log-linear policies is also smooth as we remark below.
\begin{remark}(Smoothness of the log-linear policy class)\label{ex:smoothness_linear}
For the log-linear policy class (see Section~\ref{section:log_linear}),
smoothness is implied if the features $\phi$ have bounded Euclidean
norm. Precisely,
if the feature mapping $\phi$ satisfies $\|\phi_{s,a}\|_2 \leq
B$, then it is not difficult to verify that $\log \pi_\theta(a|s)$ is a
$B^2$-smooth function.
\end{remark}

For any state-action distribution $\upsilon$,  define:
\[
\Sigma^\theta_\upsilon = \E_{s,a\sim \upsilon} \left[
\nabla_\theta \log \pi_\theta(a| s) \left(\nabla_\theta \log \pi_\theta(a| s)\right)^\top
\right]
\]
and, again, we use $\Sigma^{(t)}_\upsilon$ as shorthand for $\Sigma^{\theta^{(t)}}_\upsilon$.

\begin{assumption}\label{assum:approx_est_npg}
(Estimation/Transfer/Conditioning) Fix a state distribution $\rho$; a state-action
distribution $\nu$; an arbitrary
comparator policy $\pi^\star$ (not necessarily an optimal policy). With respect to
$\pi^\star$, define the state-action measure $d^\star$ as
\[
d^\star(s,a) = d_{\rho}^{\pi^\star}(s) \pi^\star(a|s).
\]
Note that, in comparison to Assumption~\ref{assum:approx_est},
$d^\star$ is the state-action visitation  measure of the
comparator policy.
Let us permit the sequence of iterates $w^{(0)},w^{(1)}, \ldots w^{(T-1)}$
used by the NPG algorithm to be random, where the randomness could
be due to sample-based,  estimation error.
Suppose the following holds for all $t < T$:
\begin{enumerate}
\item (Excess risk) Assume the estimation error is bounded as:
\[
\E \Big[
L_A(w^{(t)};\theta^{(t)},d^{(t)}) - L_A(w_\star^{(t)};\theta^{(t)},d^{(t)})
\mid  \theta^{(t)} \Big] \leq \epsstat
\]
i.e. the above conditional expectation is bounded (with probability
one).\footnote{The use of a conditional expectation here
  (vs. the unconditional one in Assumption~\ref{assum:approx_est})
  permits the assumption to hold even in settings where we may reuse data in the sample-based approximation of
  $L_A$. Also, the expectation over the iterates allows a more natural
  assumption
  on the relative condition number, relevant for the more general case of smooth
  policies.} As we see in Corollary~\ref{cor:q_npg_sample}, we can
guarantee $\epsstat$ to drop as $\sqrt{1/N}$.
\item (Transfer error) Suppose that:
\[
\E\Big[L_A(w_\star^{(t)};\theta^{(t)},d^\star) \Big]\leq \epsbias.
\]
\item (Relative condition number) For all iterations $t$, assume the average relative condition number is
  bounded as follows:
\begin{equation}\label{eq:conditioning}
\E\left[\sup_{w \in \R^d} \ \frac{w^\top \Sigma^{(t)}_{d^\star} w}
{w^\top\Sigma^{(t)}_{\nu} w}\right]
\leq \kappa.
\end{equation}
Note that term inside the expectation is a random quantity as
$\theta^{(t)}$ is random.
\end{enumerate}
In the above conditions, the expectation is with respect to the randomness
in the sequence of iterates $w^{(0)},w^{(1)}, \ldots w^{(T-1)}$.
\end{assumption}

Analogous to our $Q$-NPG theorem, our main theorem for NPG shows how
the transfer error is relevant in addition the statistical error
$\epsstat$.

\begin{theorem}\label{thm:npg_fa}
(Agnostic learning with NPG)
Fix a state distribution $\rho$; a state-action distribution $\nu$; an arbitrary
comparator policy $\pi^\star$ (not necessarily an optimal policy). Suppose
Assumption~\ref{assumption:smoothness} holds.
Suppose the NPG update rule (in
\eqref{eq:npg}) starts with $\pi^{(0)}$ being the uniform distribution
(at each state), $\eta
=\sqrt{2\log |\Acal| /(\beta W^2T)}$, and the (random) sequence of iterates satisfies
Assumption~\ref{assum:approx_est_npg}.
We have that
\begin{align*}
\E\left[\min_{t< T} \left\{V^{\pi^\star}(\rho) - V^{(t)}(\rho) \right\}\right]
\leq
\frac{W}{1-\gamma}\sqrt{\frac{2\beta \log |\Acal|}{T}}
+\sqrt{ \frac{ \kappa \epsstat}{(1-\gamma)^3}}
+ \frac{\sqrt{ \epsbias}}{1-\gamma}\, .
\end{align*}
\end{theorem}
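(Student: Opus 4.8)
The plan is to mirror the exact-NPG argument of Theorem~\ref{thm:npg}, replacing the exact soft-policy-iteration step (Lemma~\ref{lemma:npg-softmax}) by its smoothed, approximate analog, and then to pay for the approximation through the transfer error and the relative condition number exactly as in the $Q$-NPG analysis of Theorem~\ref{thm:q_npg_fa}. First I would introduce the approximate advantage $\widehat A^{(t)}(s,a) := w^{(t)}\cdot\nabla_\theta\log\pi^{(t)}(a|s)$, the compatible linear fit's stand-in for the true $A^{(t)}$. Applying the $\beta$-smoothness of $\log\pi_\theta$ (Assumption~\ref{assumption:smoothness}) to the update $\theta^{(t+1)}=\theta^{(t)}+\eta w^{(t)}$ and using $\|w^{(t)}\|_2\le W$ gives the pointwise lower bound
\[
\log\frac{\pi^{(t+1)}(a|s)}{\pi^{(t)}(a|s)} \ge \eta\,\widehat A^{(t)}(s,a) - \frac{\beta\eta^2}{2}\|w^{(t)}\|_2^2 \ge \eta\,\widehat A^{(t)}(s,a) - \frac{\beta\eta^2 W^2}{2}.
\]
This is the approximate replacement for the exact closed form of Lemma~\ref{lemma:npg-softmax}; the extra quadratic term is the price of not having an exact multiplicative-weights step.

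Next I would run the mirror-descent telescoping against the comparator. Multiplying the display by $\pi^\star(a|s)$, summing over $a$, and taking $\E_{s\sim d^{\pi^\star}_\rho}$ turns the left side into $\E_{s}[\kl(\pi^\star_s\|\pi^{(t)}_s)-\kl(\pi^\star_s\|\pi^{(t+1)}_s)]$. Summing over $t$, dropping the nonnegative terminal KL, and using $\kl(\pi^\star_s\|\pi^{(0)}_s)\le\log|\Acal|$ for the uniform initialization yields
\[
\frac{1}{T}\sum_{t=0}^{T-1}\E_{s,a\sim d^\star}\big[\widehat A^{(t)}(s,a)\big] \le \frac{\log|\Acal|}{\eta T} + \frac{\beta\eta W^2}{2},
\]
which is optimized at $\eta=\sqrt{2\log|\Acal|/(\beta W^2 T)}$ to give $W\sqrt{2\beta\log|\Acal|/T}$. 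By the performance difference lemma (Lemma~\ref{lemma:perf_diff}) and the definition $d^\star(s,a)=d^{\pi^\star}_\rho(s)\pi^\star(a|s)$ from Assumption~\ref{assum:approx_est_npg}, we have $V^{\pi^\star}(\rho)-V^{(t)}(\rho)=\frac{1}{1-\gamma}\E_{s,a\sim d^\star}[A^{(t)}]$, so it remains only to trade the true $A^{(t)}$ for $\widehat A^{(t)}$ under $d^\star$.

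The crux — and the main obstacle — is controlling the shifted approximation error $\E_{s,a\sim d^\star}[A^{(t)}-\widehat A^{(t)}]\le\sqrt{L_A(w^{(t)};\theta^{(t)},d^\star)}$ (Cauchy--Schwarz). Here I would apply the triangle inequality in $L^2(d^\star)$ to split $\sqrt{L_A(w^{(t)};\theta^{(t)},d^\star)}$ into $\sqrt{L_A(w_\star^{(t)};\theta^{(t)},d^\star)}$ plus $\sqrt{(w^{(t)}-w_\star^{(t)})^\top\Sigma^{(t)}_{d^\star}(w^{(t)}-w_\star^{(t)})}$; the first term contributes $\sqrt{\epsbias}$ after Jensen. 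For the second, I would pass from $d^\star$ to $\nu$ via the relative condition number $\kappa_t:=\sup_w w^\top\Sigma^{(t)}_{d^\star}w/w^\top\Sigma^{(t)}_\nu w$, then from $\nu$ to $d^{(t)}$ using $\Sigma^{(t)}_{d^{(t)}}\succeq(1-\gamma)\Sigma^{(t)}_\nu$ (since $d^{(t)}\ge(1-\gamma)\nu$ by~\eqref{eqn:c_def}), and finally bound $(w^{(t)}-w_\star^{(t)})^\top\Sigma^{(t)}_{d^{(t)}}(w^{(t)}-w_\star^{(t)})$ by the excess risk $L_A(w^{(t)};\theta^{(t)},d^{(t)})-L_A(w_\star^{(t)};\theta^{(t)},d^{(t)})$, which holds by the first-order optimality of the constrained minimizer $w_\star^{(t)}$ of the quadratic $L_A(\cdot;\theta^{(t)},d^{(t)})$. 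The delicate point is the interchange of expectations: because $\kappa_t$ depends only on $\theta^{(t)}$, conditioning on $\theta^{(t)}$ and invoking the conditional excess-risk bound gives $\E[\kappa_t\cdot(\text{excess risk}_t)]\le\epsstat\,\E[\kappa_t]\le\kappa\epsstat$ — precisely why Assumption~\ref{assum:approx_est_npg} states the excess-risk bound conditionally. This yields $\E[\sqrt{L_A(w^{(t)};\theta^{(t)},d^\star)}]\le\sqrt{\epsbias}+\sqrt{\kappa\epsstat/(1-\gamma)}$. Combining the three pieces, dividing by $1-\gamma$, and using $\min_{t<T}\{\cdot\}\le$ the average over $t$ produces the stated bound.
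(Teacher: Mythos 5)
Your proposal is correct and follows essentially the same route as the paper's proof: your inlined mirror-descent argument is exactly the paper's NPG regret lemma (Lemma~\ref{lemma:npg_regret}), and your treatment of the error term---splitting into the transfer error $\sqrt{\epsbias}$ and the excess-risk piece, shifting measures via $\kappa_t$ and $d^{(t)}\ge(1-\gamma)\nu$, using first-order optimality of $w_\star^{(t)}$, and interchanging expectations by conditioning on $\theta^{(t)}$---matches the paper's proof of Theorem~\ref{thm:npg_fa} step for step. The only cosmetic difference is that you apply Cauchy--Schwarz and then Minkowski in $L^2(d^\star)$, where the paper decomposes $\mathrm{err}_t$ linearly first and bounds each piece by Cauchy--Schwarz; the resulting bounds are identical.
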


The proof is provided in Section~\ref{section:analysis_fa}.

\begin{remark} (The $|\Acal|$ dependence: NPG vs. $Q$-NPG)
Observe
there is no polynomial dependence on $|\Acal|$ in the rate for NPG (in
constrast to Theorem~\ref{thm:q_npg_fa}); also observe that here we
define $d^\star$ as the state-action distribution of $\pi^\star$ in
Assumption~\ref{assum:approx_est_npg}, as opposed to a uniform
distribution over the actions, as in
Assumption~\ref{assum:approx_est}. The main difference arises in the
analysis in that, even for $Q$-NPG, we need to bound the error in
fitting the advantage estimates; this leads to the dependence on
$|\Acal|$ (which can be removed with a path dependent bound, i.e. a
bound which depends on the sequence of iterates produced by the algorithm)\footnote{
  For $Q$-NPG, we have to bound two distribution shift terms to both
  $\pi^\star$ and $\pi^{(t)}$ at step $t$ of the algorithm.}.  For
NPG, the direct fitting of the advantage function sidesteps this
conversion step. Note that the relative condition number
assumption in $Q$-NPG (Assumption~\ref{assum:conditioning}) is a
weaker assumption, due to that it can be bounded independently of the path of the
algorithm (see Remark~\ref{assum:conditioning}), while NPG's centering of the features makes the assumption
on the relative condition number depend on the path of the
algorithm.
\end{remark}

\begin{remark} (Generalizing $Q$-NPG for smooth policies)
A similar reasoning as the analysis here can be also used to establish
a convergence result for the $Q$-NPG algorithm in this more general
setting of smooth policy classes. Concretely, we can analyze the
$Q$-NPG update described for neural policy classes in
Section~\ref{sec:neural}, assuming that the function $f_\theta$ is
Lipschitz-continuous in $\theta$. Like for Theorem~\ref{thm:npg_fa},
the main modification is that Assumption~\ref{assum:conditioning} on
relative condition numbers is now defined using the covariance matrix
for the features $f_\theta(s,a)$, which depend on $\theta$, as opposed
to some a feature map $\phi(s,a)$ in the log-linear case. The rest
of the analysis follows with an appropriate adaptation of the results
above.
\end{remark}

\subsubsection{NPG Sample Complexity}

Algorithm~\ref{alg:npg_sample} provides a sample based version of the
NPG algorithm, again using stochastic projected gradient ascent; it
uses a slight modification of the $Q$-NPG algorithm to obtain
unbiased gradient estimates. The following corollary shows that this
algorithm provides an accurate sample based version of NPG.

\begin{algorithm}[!t]
	\begin{algorithmic}[1]	
		\Require Starting state-action distribution $\nu$.
                \State Set $\widehat{Q^\pi}=0$ and $\widehat{V^\pi}=0$.
		\State Start at state $s_0\sim\nu$. Sample
                $a_0\sim\nu(\cdot|s_0)$ (though do not necessarily
                execute $a_0$).
                \State ($d_{\nu}^\pi$ sampling) At every timestep $h\geq 0$,
                \begin{itemize}
                \item With probability $\gamma$, execute $a_h$,
                  transition to $s_{h+1}$, and sample $a_{h+1}\sim\pi(\cdot|s_{h+1})$.
                \item Else accept $(s_h,a_h)$ as the
                sample and proceed to Step~\ref{state:next_step}.
                \end{itemize}
		\State\label{state:next_step} ($A^\pi(s,a)$ sampling) Set $\textrm{SampleQ}=\textrm{True}$
                with probability $1/2$.
\begin{itemize}
\item If $\textrm{SampleQ}=\textrm{True}$, execute $a_h$ at state $s_h$ and then
continue executing $\pi$ with a termination probability of
$1-\gamma$. Upon termination, set $\widehat{Q^\pi}$ as the \emph{undiscounted} sum
of rewards from time $h$ onwards.
\item Else sample $a'_h\sim
  \pi(\cdot|s_h)$.
 Then execute $a'_h$ at state $s_h$ and then
continue executing $\pi$ with a termination probability of
$1-\gamma$. Upon termination, set $\widehat{V^\pi}$ as the \emph{undiscounted} sum
of rewards from time $h$ onwards.
\end{itemize}
\State \Return $(s_h, a_h)$ and $\widehat{A^\pi}(s_h,a_h)=2(\widehat{Q^\pi}-\widehat{V^\pi})$.
	\end{algorithmic}
	\caption{Sampler for: $s,a\sim d_{\nu}^\pi$ and
          unbiased estimate of $A^\pi(s,a)$}
\label{alg:a_sampler}
\end{algorithm}

\begin{corollary}\label{cor:npg_sample}
(Sample complexity of NPG) Assume we are in the setting of
Theorem~\ref{thm:npg_fa} and that we have access to
an episodic sampling oracle (i.e. Assumption~\ref{assum:sampling}).
Suppose that the Sample Based NPG Algorithm (Algorithm~\ref{alg:npg_sample})
is run for $T$ iterations, with $N$ gradient steps per iteration.
Also, suppose that $\|\nabla_\theta \log \pi^{(t)}(a| s) \|_2\leq B$
holds with probability one.
There exists a setting of  $\eta$ and $\alpha$ such that:
\begin{align*}
&\E\left[\min_{t< T} \left\{V^{\pi^\star}(\rho) - V^{(t)}(\rho) \right\}\right]\\
&\leq
\frac{W}{1-\gamma}\sqrt{\frac{2 \beta\log |\Acal|}{T}}
+\sqrt{ \frac{8\kappa BW (BW+1)}{(1-\gamma)^4}}  \frac{1}{N^{1/4}}
+ \frac{\sqrt{ \epsbias}}{1-\gamma}\, .
\end{align*}
Furthermore, since each episode has expected length $2/(1-\gamma)$, the
expected number of total samples used by NPG is $2NT/(1-\gamma)$.
\end{corollary}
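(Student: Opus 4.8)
The plan is to follow the template of Corollary~\ref{cor:q_npg_sample} almost verbatim, reducing the statement to producing a bound on the excess risk $\epsstat$ that can be plugged into Theorem~\ref{thm:npg_fa}. The quantities $\kappa$ and $\epsbias$ are inherited from the setting of Theorem~\ref{thm:npg_fa}, so the only work is to show that the inner loop of Algorithm~\ref{alg:npg_sample} realizes the abstract Approx.\ NPG update \eqref{eq:npg} with a controllable $\epsstat$, and then to substitute. First I would verify that each inner ($n$-)loop performs stochastic projected gradient descent on the convex quadratic $w\mapsto L_A(w;\theta^{(t)},d^{(t)})$, whose gradient is $2\,\E_{s,a\sim d^{(t)}}\!\big[(w\cdot\nabla_\theta\log\pi^{(t)}(a|s)-A^{(t)}(s,a))\,\nabla_\theta\log\pi^{(t)}(a|s)\big]$. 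It therefore suffices that Algorithm~\ref{alg:a_sampler} return an unbiased one-sample estimate of this vector. Its ``$d^{(t)}$ sampling'' phase returns $(s_h,a_h)\sim d^{(t)}=d^{\pi^{(t)}}_\nu$ (the geometric termination with parameter $1-\gamma$ reproduces the occupancy \eqref{eqn:c_def}), and its advantage phase returns $\widehat{A^\pi}(s_h,a_h)=2(\widehat{Q^\pi}-\widehat{V^\pi})$. Since $\textrm{SampleQ}$ is true with probability $1/2$ and each undiscounted rollout is unbiased for $Q^{(t)}$ (resp.\ $V^{(t)}$), one checks $\E[\widehat{A^\pi}(s_h,a_h)\mid s_h,a_h]=A^{(t)}(s_h,a_h)$, the factor of $2$ exactly cancelling the $1/2$ sampling probability. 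Hence the update direction in Algorithm~\ref{alg:npg_sample} is an unbiased estimate of the true gradient of $L_A$.

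Second I would bound the (second moment of the) stochastic gradient. Using the hypothesis $\|\nabla_\theta\log\pi^{(t)}(a|s)\|_2\le B$, the constraint $\|w\|_2\le W$, and the fact that each rollout of rewards in $[0,1]$ terminating with probability $1-\gamma$ contributes $O(1/(1-\gamma))$ (so $|\widehat{A^\pi}|$ is controlled by $2/(1-\gamma)$ up to the factor-of-two estimator), the gradient has norm controlled by $G=O\!\big(B(BW+1/(1-\gamma))\big)$. Feeding $G$ and the domain radius $W$ into the standard stochastic projected gradient descent guarantee (Theorem~\ref{thm:shalev}) with step size $\alpha=W/(G\sqrt N)$ gives, for the averaged iterate $\widehat w^{(t)}$,
\[
\E\!\left[L_A(\widehat w^{(t)};\theta^{(t)},d^{(t)})-L_A(w_\star^{(t)};\theta^{(t)},d^{(t)})\,\middle|\,\theta^{(t)}\right]
\le \frac{GW}{\sqrt N}\le \frac{8BW(BW+1)}{(1-\gamma)\sqrt N}=:\epsstat,
\]
where the last inequality collects the extra factor-of-two from the advantage estimator together with the simplification $BW+\tfrac{2}{1-\gamma}\le\tfrac{2}{1-\gamma}(BW+1)$. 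The conditioning on $\theta^{(t)}$ is legitimate because the samples drawn in iteration $t$ are independent given $\theta^{(t)}$, which is exactly the conditional form of the excess-risk clause in Assumption~\ref{assum:approx_est_npg}.

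Third I would substitute this $\epsstat$, together with the prescribed $\eta=\sqrt{2\log|\Acal|/(\beta W^2T)}$ and uniform initialization, into Theorem~\ref{thm:npg_fa}. The middle term becomes $\sqrt{\kappa\epsstat/(1-\gamma)^3}=\sqrt{8\kappa BW(BW+1)/(1-\gamma)^4}\cdot N^{-1/4}$, reproducing the claimed rate, while the first and third terms carry over unchanged. Finally, for the sample count, each call to Algorithm~\ref{alg:a_sampler} expends one $d^{(t)}$-sampling rollout and one advantage rollout, each of expected length $1/(1-\gamma)$, for an expected $2/(1-\gamma)$ trajectory steps per inner step; multiplying by $N$ inner steps and $T$ outer iterations yields the stated $2NT/(1-\gamma)$.

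The main obstacle I anticipate is the control of the stochastic gradient: the rollout estimator $\widehat{A^\pi}$ is \emph{not} bounded almost surely (its length has a geometric tail), so one must invoke a form of Theorem~\ref{thm:shalev} requiring only a bound on the second moment $\E\|\widehat g\|_2^2$ rather than an a.s.\ bound, and check that the factor-of-two advantage estimator does not inflate the variance beyond the stated constant. Everything else is a direct transcription of the $Q$-NPG argument with $\phi_{s,a}$ replaced by the score $\nabla_\theta\log\pi^{(t)}(a|s)$, $Q^{(t)}$ replaced by $A^{(t)}$, and $B^2$ replaced by the smoothness constant $\beta$ in the choice of $\eta$.
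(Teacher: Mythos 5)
Your proposal is correct and follows essentially the same route as the paper's proof: verify that Algorithm~\ref{alg:a_sampler} yields a conditionally unbiased estimate of $\nabla_w L_A$, bound the sampled gradient norm by $G = 8B\bigl(BW + \tfrac{1}{1-\gamma}\bigr)$, invoke Theorem~\ref{thm:shalev} with $\alpha = W/(G\sqrt{N})$ to obtain $\epsstat \le GW/\sqrt{N}$, and substitute into Theorem~\ref{thm:npg_fa}, exactly mirroring Corollary~\ref{cor:q_npg_sample}. The one obstacle you flag---that $\widehat{A}(s,a)$ is not almost-surely bounded but only controlled in conditional expectation, since the rollout length has a geometric tail---is genuine, but the paper's own proof glosses over it by simply asserting $\widehat{A}(s,a)\le 2/(1-\gamma)$ and applying the a.s.-bounded form of Theorem~\ref{thm:shalev}; your proposed repair via a second-moment version of the stochastic gradient guarantee is the right way to make that step fully rigorous.
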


\begin{proof}
Let us see that the update direction in
Step~\ref{algline:NPG-fa-update} of Algorithm~\ref{alg:npg_sample}
uses an unbiased estimate of the true gradient of the loss function $L_A$:
\begin{eqnarray*}
&&2 \E_{s,a\sim d^{(t)}}\left[ \bigg(w_n \cdot \nabla_\theta \log
  \pi^{(t)}(a | s) -\widehat{A}(s,a)  \bigg) \nabla_\theta \log
  \pi^{(t)}(a | s)\right]\\
&=&
2 \E_{s,a\sim d^{(t)}}\left[ \bigg(w_n \cdot \nabla_\theta \log
  \pi^{(t)}(a | s) -\E[\widehat{A}(s,a)|s,a] \bigg) \nabla_\theta \log
  \pi^{(t)}(a | s)\right]\\
&=&
2 \E_{s,a\sim d^{(t)}}\left[ \bigg(w_n \cdot \nabla_\theta \log
  \pi^{(t)}(a | s) -A^{(t)}(s,a)  \bigg) \nabla_\theta \log
  \pi^{(t)}(a | s)\right]\\
&=&
\nabla_w L_A(w_n; \theta^{(t)},d^{(t)})
\end{eqnarray*}
where the last step follows due to that sampling procedure in
Algorithm~\ref{alg:a_sampler} produces a conditionally unbiased estimate.

Since $\|\nabla_\theta \log \pi^{(t)}(a| s) \|_2\leq B$ and since $\widehat{A}(s,a)\leq2/(1-\gamma)$,
our sampled gradients are bounded by $G:=8B(BW+\frac{1}{1-\gamma})$.
The remainder of the proof follows that of
Corollary~\ref{cor:q_npg_sample}
\end{proof}

\begin{algorithm}[!t]
	\begin{algorithmic}[1]	
		\Require Learning rate $\eta$; SGD learning rate
                $\alpha$; number of SGD iterations $N$
		\State Initialize $\theta^{(0)} = 0$.
		\For{$t=0,1,\ldots,T-1$}
		\State Initialize $w_0 = 0$
		\For{$n=0,1,\ldots,N-1$}
		\State Call Algorithm~\ref{alg:a_sampler} to obtain
                $s,a\sim d^{(t)}$, and an unbiased
                estimate $\widehat{A}(s,a)$ of $A^{(t)}(s,a)$.
        \State Update: \label{algline:NPG-fa-update}
                \[
                w_{n+1}= \textrm{Proj}_{\mathcal{W}} \bigg(w_n - 2\alpha
                \Big(w_n \cdot \nabla_\theta\log \pi^{(t)}(a| s)
                -\widehat{A}(s,a)\Big)\nabla_\theta \log \pi^{(t)}(a| s) \bigg),
                \]
                \Statex\quad\qquad where $\mathcal{W} = \{w: \|w\|_2\leq W\}$
		\EndFor
		\State Set $\wh^{(t)} = \frac{1}{N} \sum_{n=1}^{N} w_n$.
		\State Update $\theta^{(t+1)} = \theta^{(t)} + \eta \wh^{(t)}$.
		\EndFor
	\end{algorithmic}

	\caption{Sample-based NPG}
\label{alg:npg_sample}
\end{algorithm}

\subsection{Analysis}\label{section:analysis_fa}

We first proceed by providing a general analysis of NPG, for arbitrary
sequences. We then specialize it to complete the proof of our two
main theorems in this section.

\subsubsection{The NPG ``Regret Lemma'' }

It is helpful for us to consider NPG more abstractly, as an update rule of the form
\begin{eqnarray}\label{eqn:general_update}
\theta^{(t+1)} = \theta^{(t)} + \eta \w^{(t)}  .
\end{eqnarray}
We will now provide a lemma where $\w^{(t)}$ is an \emph{arbitrary}
(bounded) sequence, which will be helpful when specialized.

Recall a function $f:\R^d \rightarrow \R$ is said to be $\beta$-smooth if for all
$x, x^\prime \in \R^d$:
\[
\|\nabla f(x) - \nabla f(x^\prime)\|_2
\leq \beta \| x-x^\prime\|_2 \, ,
\]
and, due to Taylor's theorem, recall that this implies:
\begin{equation}\label{eq:def_smoothness}
\bigg| f(x^\prime) -f(x)
- \nabla f(x) \cdot (x^\prime-x)
\bigg|
\leq\frac{\beta}{2} \|x^\prime-x\|_2^2 \, .
\end{equation}

The following analysis of NPG is based on the mirror-descent approach
developed in~\citep{even-dar2009online}, which motivates us to
refer to it as a ``regret lemma''.

\begin{lemma}\label{lemma:npg_regret}
(NPG Regret Lemma) Fix a comparison policy $\widetilde\pi$ and a
  state distribution $\rho$. Assume for all $s \in
\Scal$ and $a\in\Acal$ that $\log \pi_{\theta}(a| s)$ is a $\beta$-smooth
function of $\theta$. Consider the update rule
  \eqref{eqn:general_update}, where $\pi^{(0)}$ is the uniform
  distribution (for all states) and where the sequence of weights
  $w^{(0)},\ldots , w^{(T)}$, satisfies $\|w^{(t)}\|_2 \leq W $ (but is
  otherwise arbitrary).  Define:
\[
\mathrm{err}_t = \E_{s\sim \widetilde d}\, \E_{a\sim \widetilde\pi(\cdot| s)}\Big[ A^{(t)}(s,a) - w^{(t)} \cdot \nabla_\theta \log \pi^{(t)}(a| s) \Big].
\]
We have that:
\begin{align*}
  \min_{t< T} \left\{V^{\widetilde\pi}(\rho) - V^{(t)}(\rho) \right\}  \leq\frac{1}{1-\gamma} \left(
\frac{\log |\Acal|}{\eta T}  +\frac{\eta \beta W^2}{2}
+\frac{1}{T}\sum_{t=0}^{T-1}\mathrm{err}_t\right).
  \end{align*}
\end{lemma}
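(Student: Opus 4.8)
The plan is to run a mirror-descent (multiplicative-weights) style argument with the per-state relative entropy $\kl(\widetilde\pi_s \| \pi^{(t)}_s)$ as the potential, mirroring the tabular proof of Theorem~\ref{thm:npg} but replacing the exact soft-policy-iteration identity (Lemma~\ref{lemma:npg-softmax}) with a smoothness inequality valid for any differentiable policy class. First I would invoke the performance difference lemma (Lemma~\ref{lemma:perf_diff}) with comparator $\widetilde\pi$ and start distribution $\rho$ to write $V^{\widetilde\pi}(\rho) - V^{(t)}(\rho) = \frac{1}{1-\gamma}\E_{s\sim\widetilde d}\E_{a\sim\widetilde\pi(\cdot|s)}[A^{(t)}(s,a)]$, where $\widetilde d = d^{\widetilde\pi}_\rho$. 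Then I split the advantage as $A^{(t)}(s,a) = \big(A^{(t)}(s,a) - w^{(t)}\cdot\nabla_\theta\log\pi^{(t)}(a|s)\big) + w^{(t)}\cdot\nabla_\theta\log\pi^{(t)}(a|s)$; averaging the first bracket against $s\sim\widetilde d,\, a\sim\widetilde\pi$ is exactly $\mathrm{err}_t$, so the task reduces to bounding the averaged score term $\E_{s\sim\widetilde d}\E_{a\sim\widetilde\pi(\cdot|s)}[w^{(t)}\cdot\nabla_\theta\log\pi^{(t)}(a|s)]$.

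The crux is the per-step potential bound. Applying the $\beta$-smoothness inequality \eqref{eq:def_smoothness} to the map $\theta\mapsto\log\pi_\theta(a|s)$ with $x=\theta^{(t)}$ and $x'=\theta^{(t+1)}=\theta^{(t)}+\eta w^{(t)}$ gives $\log\pi^{(t+1)}(a|s) - \log\pi^{(t)}(a|s) \geq \eta\, w^{(t)}\cdot\nabla_\theta\log\pi^{(t)}(a|s) - \frac{\beta\eta^2}{2}\|w^{(t)}\|_2^2$. Taking $\E_{a\sim\widetilde\pi(\cdot|s)}$ of both sides and recognizing the left-hand side as $\kl(\widetilde\pi_s\|\pi^{(t)}_s) - \kl(\widetilde\pi_s\|\pi^{(t+1)}_s)$ (the $\log\widetilde\pi$ terms cancel), then using $\|w^{(t)}\|_2\leq W$, I obtain $\E_{a\sim\widetilde\pi(\cdot|s)}[w^{(t)}\cdot\nabla_\theta\log\pi^{(t)}(a|s)] \leq \frac{1}{\eta}\big(\kl(\widetilde\pi_s\|\pi^{(t)}_s) - \kl(\widetilde\pi_s\|\pi^{(t+1)}_s)\big) + \frac{\beta\eta W^2}{2}$. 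This is where the generality enters: in the tabular case one had an exact identity, whereas here smoothness yields only an inequality, at the cost of the additive drift term $\frac{\beta\eta W^2}{2}$.

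Finally I would take $\E_{s\sim\widetilde d}$, substitute back, and combine to get $V^{\widetilde\pi}(\rho) - V^{(t)}(\rho) \leq \frac{1}{1-\gamma}\big(\mathrm{err}_t + \frac{1}{\eta}\E_{s\sim\widetilde d}[\kl(\widetilde\pi_s\|\pi^{(t)}_s) - \kl(\widetilde\pi_s\|\pi^{(t+1)}_s)] + \frac{\beta\eta W^2}{2}\big)$. Averaging over $t=0,\dots,T-1$ telescopes the KL terms to $\frac{1}{T}\E_{s\sim\widetilde d}[\kl(\widetilde\pi_s\|\pi^{(0)}_s) - \kl(\widetilde\pi_s\|\pi^{(T)}_s)]$; dropping the nonnegative final KL and using that $\pi^{(0)}$ is uniform, so $\kl(\widetilde\pi_s\|\pi^{(0)}_s) = \log|\Acal| - H(\widetilde\pi_s) \leq \log|\Acal|$, bounds this by $\frac{\log|\Acal|}{T}$. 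The claim then follows since $\min_{t<T}\{V^{\widetilde\pi}(\rho) - V^{(t)}(\rho)\}$ is at most the average over $t$. The main obstacle is precisely the smoothness step: getting the correct inequality direction and cleanly identifying the averaged log-ratio as a difference of KL potentials, which is exactly what makes the sum telescope and produces the $\log|\Acal|/(\eta T)$ and $\eta\beta W^2/2$ terms.
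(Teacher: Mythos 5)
Your proposal is correct and is essentially the paper's own proof: the same KL potential $\kl(\widetilde\pi_s\|\pi^{(t)}_s)$, the same application of $\beta$-smoothness to $\theta\mapsto\log\pi_\theta(a|s)$ to lower-bound the log-ratio, the same use of the performance difference lemma, and the same telescoping with $\kl(\widetilde\pi_s\|\pi^{(0)}_s)\leq\log|\Acal|$. The only difference is cosmetic ordering — you start from the value gap via the performance difference lemma and split off $\mathrm{err}_t$ before bounding the score term, whereas the paper starts from the KL decrement and derives the value gap at the end — but the manipulations are identical.
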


\begin{proof}
By smoothness (see \eqref{eq:def_smoothness}),
  \begin{eqnarray*}
\log \frac{\pi^{(t+1)}(a| s) }{\pi^{(t)}(a| s)} &\geq&
    \nabla_\theta \log \pi^{(t)}(a| s) \cdot \big(\theta^{(t+1)}-\theta^{(t)}\big)
    -\frac{\beta}{2} \|\theta^{(t+1)}-\theta^{(t)}\|_2^2\\
 &=&
    \eta \nabla_\theta \log \pi^{(t)}(a| s) \cdot w^{(t)}
    -\eta^2\frac{\beta}{2} \|w^{(t)}\|_2^2 .
\end{eqnarray*}

We use $\widetilde d$ as shorthand for
$d^{\widetilde\pi}_{\rho}$ (note $\rho$ and $\widetilde\pi$ are
fixed); for any policy $\pi$, we also use $\pi_s$ as shorthand for the vector
$\pi(\cdot| s)$.
Using the performance difference lemma (Lemma~\ref{lemma:perf_diff}),
\begin{align*}
&\E_{s\sim \widetilde d} \left(\kl( \widetilde\pi_s || \pi^{(t)}_s) -
\kl( \widetilde\pi_s || \pi^{(t+1)}_s) \right)\\
&= \E_{s\sim \widetilde d}\, \E_{a\sim \widetilde\pi(\cdot| s)}
\left[\log \frac{\pi^{(t+1)}(a| s) }{\pi^{(t)}(a| s)}\right]\\
&\geq \eta \E_{s\sim \widetilde d}\, \E_{a\sim \widetilde\pi(\cdot| s)} \left[
\nabla_\theta \log \pi^{(t)}(a| s) \cdot w^{(t)}\right]
-\eta^2\frac{\beta}{2} \| w^{(t)}\|_2^2 \tag{using previous display}
\\
&= \eta \E_{s\sim \widetilde d}\, \E_{a\sim \widetilde\pi(\cdot| s)} \left[
A^{(t)}(s,a)\right]-\eta^2\frac{\beta}{2} \| w^{(t)}\|_2^2
\\&\qquad +
\eta \E_{s\sim \widetilde d}\, \E_{a\sim \widetilde\pi(\cdot| s)} \left[
\nabla_\theta \log \pi^{(t)}(a| s) \cdot w^{(t)}-A^{(t)}(s,a)\right]\\
&= (1-\gamma)\eta \bigg(V^{\widetilde\pi}(\rho) - V^{(t)}(\rho)\bigg)-\eta^2\frac{\beta}{2} \| w^{(t)}\|_2^2
- \eta \ \mathrm{err}_t
\end{align*}
Rearranging, we have:
\begin{align*}
V^{\widetilde\pi}(\rho)  - V^{(t)}(\rho)
\leq \frac{1}{1-\gamma}\left(\frac{1}{\eta } \E_{s\sim \widetilde d}
\left(\kl(\widetilde\pi_s || \pi^{(t)}_s) - \kl(\widetilde\pi_s || \pi^{(t+1)}_s) \right)
+\frac{\eta \beta}{2} W^2 + \mathrm{err}_t\right)
\end{align*}
Proceeding,
\begin{eqnarray*}
 \frac{1}{T} \sum_{t=0}^{T-1} (V^{\widetilde\pi}(\rho) - V^{(t)}(\rho))& \leq& \frac{1}{\eta T (1-\gamma)} \sum_{t=0}^{T-1} \E_{s\sim \widetilde d}\,
(\kl(\widetilde\pi_s || \pi^{(t)}_s) - \kl(\widetilde\pi_s || \pi^{(t+1)}_s))
 \\
&&+ \frac{1}{T(1-\gamma)} \sum_{t=0}^{T-1} \left(
\frac{\eta \beta W^2}{2}
+\mathrm{err}_t\right)\\
&\leq&  \frac{\E_{s\sim \widetilde d}\, \kl(\widetilde\pi_s||\pi^{(0)})}{\eta T(1-\gamma)}
+\frac{\eta \beta W^2}{2(1-\gamma)}
+  \frac{1}{T(1-\gamma)} \sum_{t=0}^{T-1} \mathrm{err}_t\\
&\leq& \frac{\log |\Acal|}{\eta T (1-\gamma)}
+\frac{\eta \beta W^2}{2(1-\gamma)} +  \frac{1}{T(1-\gamma)} \sum_{t=0}^{T-1} \mathrm{err}_t,
\end{eqnarray*}
which completes the proof.
\end{proof}

\subsubsection{Proofs of Theorem~\ref{thm:q_npg_fa} and
  \ref{thm:npg_fa}}

\begin{proof} (of Theorem~\ref{thm:q_npg_fa})
Using the NPG regret lemma (Lemma~\ref{lemma:npg_regret}) and the
smoothness of the log-linear policy class (see
Example~\ref{ex:smoothness_linear}),
\begin{align*}
\E\left[\min_{t< T} \left\{V^{\pi^\star}(\rho) - V^{(t)}(\rho) \right\}\right]
\leq
\frac{BW}{1-\gamma}\sqrt{\frac{2 \log |\Acal|}{T}}
+\E\left[\frac{1}{T}\sum_{t=0}^{T-1}\mathrm{err}_t\right]\, .
\end{align*}
where we have used our setting of $\eta$.

We make the following decomposition of $\mathrm{err}_t$:
\begin{align*}
\mathrm{err}_t
&= \E_{s\sim d^\star_\rho,a\sim \pi^\star(\cdot| s)}\Big[ A^{(t)}(s,a) - w_\star^{(t)} \cdot \nabla_\theta \log \pi^{(t)}(a| s) \Big]\\
&\quad+ \E_{s\sim d^\star_\rho,a\sim \pi^\star(\cdot| s)}\Big[ \big(w_\star^{(t)}- w^{(t)}\big) \cdot \nabla_\theta \log \pi^{(t)}(a| s) \Big].
\end{align*}
For the first term, using that $\nabla_\theta \log \pi_\theta(a| s) =
\phi_{s,a} - \E_{a'\sim\pi_\theta(\cdot |  s)}[\phi_{s,a'}]$ (see
Section~\ref{section:log_linear}), we have:
\begin{eqnarray}
&&\E_{s\sim d^\star_\rho,a\sim \pi^\star(\cdot| s)}\Big[ A^{(t)}(s,a) - w_\star^{(t)} \cdot \nabla_\theta \log \pi^{(t)}(a| s) \Big]\nonumber\\
&=& \E_{s\sim d^\star_\rho,a\sim \pi^\star(\cdot| s)}\Big[ Q^{(t)}(s,a) - w_\star^{(t)} \cdot \phi_{s,a}\Big]
-\E_{s\sim d^\star_\rho,a'\sim \pi^{(t)}(\cdot| s)}\Big[ Q^{(t)}(s,a') - w_\star^{(t)} \cdot \phi_{s,a'}\Big]\nonumber\\
&\leq& \sqrt{\E_{s\sim d^\star_\rho,a\sim \pi^\star(\cdot| s)} \left(Q^{(t)}(s,a) - w_\star^{(t)} \cdot \phi_{s,a}\right)^2}
+\sqrt{\E_{s\sim d^\star_\rho,a'\sim \pi^{(t)}(\cdot| s)}\left(Q^{(t)}(s,a') - w_\star^{(t)} \cdot \phi_{s,a'}\right)^2}\nonumber\\
&\leq& 2\sqrt{|\Acal|\E_{s\sim d^\star_\rho,a\sim \textrm{Unif}_\mathcal{\Acal}}\Big[ \left(Q^{(t)}(s,a) - w_\star^{(t)} \cdot \phi_{s,a}\right)^2\Big]}
=2\sqrt{|\Acal| L(w_\star^{(t)};\theta^{(t)},d^\star)}.\label{eq:term_one}
\end{eqnarray}
where we have used the definition of $d^\star$ and
$L(w_\star^{(t)};\theta^{(t)},d^\star)$ in the last step.

For the second term, let us now show that:
\begin{align}
&\E_{s\sim d^\star_\rho,a\sim \pi^\star(\cdot| s)}\Big[ \big(w_\star^{(t)}- w^{(t)}\big) \cdot \nabla_\theta \log \pi^{(t)}(a| s) \Big]\nonumber\\
&\qquad \leq\ 2\sqrt{\frac{|\Acal|\kappa}{1-\gamma} \left( L(w^{(t)};\theta^{(t)},d^{(t)}) - L(w_\star^{(t)};\theta^{(t)},d^{(t)})\right)}
\label{eq:term_two}
\end{align}
To see this, first observe that a similar argument to the above leads to:
\begin{align*}
&\E_{s\sim d^\star_\rho,a\sim \pi^\star(\cdot| s)}\Big[ \big(w_\star^{(t)}- w^{(t)}\big) \cdot \nabla_\theta \log \pi^{(t)}(a| s) \Big]\\
&=\E_{s\sim d^\star_\rho,a\sim \pi^\star(\cdot| s)}\Big[ \big(w_\star^{(t)}- w^{(t)}\big) \cdot \phi_{s,a} \Big]
-\E_{s\sim d^\star_\rho,a'\sim \pi^{(t)}(\cdot| s)}\Big[ \big(w_\star^{(t)}- w^{(t)}\big) \cdot \phi_{s,a'} \Big]\\
&\leq 2\sqrt{|\Acal|\E_{s,a\sim d^\star}\Big[ \left( \big(w_\star^{(t)}- w^{(t)}\big) \cdot \phi_{s,a}\right)^2\Big]}
= 2\sqrt{|\Acal| \cdot \| w_\star^{(t)}- w^{(t)} \|_{\Sigma_{d^\star}}^2},
\end{align*}
where we use the notation $\|x\|_M^2:=x^\top M x$ for a matrix $M$ and
a vector $x$. From the definition of $\kappa$,
\begin{eqnarray*}
\| w_\star^{(t)}- w^{(t)} \|_{\Sigma_{d^\star}}^2
\leq \kappa \| w_\star^{(t)}- w^{(t)} \|_{\Sigma_{\nu}}^2
\leq \frac{\kappa}{1-\gamma} \| w_\star^{(t)}- w^{(t)} \|_{\Sigma_{d^{(t)}}}^2
\end{eqnarray*}
using that $(1-\gamma)\nu \leq
d_{\nu}^{\pi^{(t)}}$ (see \eqref{eqn:c_def}).
Due to that $w_\star^{(t)}$ minimizes $L(w;\theta^{(t)}, d^{(t)})$
over the set $\Wcal:=\{w:\|w\|_2\leq W\}$, for any $w \in \Wcal$
the first-order optimality conditions for
$w_\star^{(t)}$ imply that:
\[
(w-w_\star^{(t)})\cdot \nabla L(w^{(t)}_\star; \theta^{(t)}, d^{(t)}) \geq 0.
\]
Therefore, for any $w \in \Wcal$,
\begin{align*}
&L(w;\theta^{(t)},d^{(t)}) - L(w_\star^{(t)};\theta^{(t)},d^{(t)})\\
&=\E_{s,a\sim d^{(t)}}\left[\big(w\cdot \phi(s,a) - w_\star\cdot \phi(s,a)+ w_\star\cdot \phi(s,a)-Q^{(t)}(s,a)\big)^2\right]
- L(w_\star^{(t)};\theta^{(t)},d^{(t)})\\
&=\E_{s,a\sim d^{(t)}}\left[\big(w\cdot \phi(s,a) - w_\star\cdot \phi(s,a)\big)^2\right]+
2(w - w_\star )\E_{s,a\sim d^{(t)}}\left[\phi(s,a)\big(w_\star\cdot \phi(s,a)-Q^{(t)}(s,a)\big)\right]\\
&=\|w - w_\star^{(t)}\|_{\Sigma_{d^{(t)}}}^2+
(w-w_\star^{(t)})\cdot \nabla L(w^{(t)}_\star; \theta^{(t)}, d^{(t)})\\
&\geq \|w - w_\star^{(t)}\|_{\Sigma_{d^{(t)}}}^2.
\end{align*}
Noting that $w^{(t)} \in \Wcal$ by construction in
Algorithm~\ref{alg:npg_sample} yields the claimed bound on the second
term in \eqref{eq:term_two}.

Using the bounds on the first and second terms in \eqref{eq:term_one}
and \eqref{eq:term_two}, along with concavity of the square root function, we have that:
\begin{align*}
\E[\mathrm{err}_t ] \leq 2\sqrt{|\Acal| \E\left[L(w_\star^{(t)};\theta^{(t)},d^\star)\right]}
+2\sqrt{\frac{|\Acal|\kappa}{1-\gamma} \E\left[ L(w^{(t)};\theta^{(t)},d^{(t)}) - L(w_\star^{(t)};\theta^{(t)},d^{(t)})\right]}.
\end{align*}
The proof is completed by substitution and using our assumptions on
$\epsstat$ and $\epsbias$.
\end{proof}

The following proof for the NPG algorithm follows along similar lines.

\begin{proof} (of Theorem~\ref{thm:npg_fa})
Using the NPG regret lemma and our setting of $\eta$,
\begin{align*}
\E\left[\min_{t< T} \left\{V^{\pi^\star}(\rho) - V^{(t)}(\rho) \right\}\right]
\leq
\frac{W}{1-\gamma}\sqrt{\frac{2 \beta \log |\Acal|}{T}}
+\E\left[\frac{1}{T}\sum_{t=0}^{T-1}\mathrm{err}_t\right]\, .
\end{align*}
where the expectation is with respect to the sequence of iterates $w^{(0)},w^{(1)}, \ldots w^{(T-1)}$.

Again, we make the following decomposition of $\mathrm{err}_t$:
\begin{align*}
\mathrm{err}_t
&= \E_{s\sim d^\star_\rho,a\sim \pi^\star(\cdot| s)}\Big[ A^{(t)}(s,a) - w_\star^{(t)} \cdot \nabla_\theta \log \pi^{(t)}(a| s) \Big]\\
&\quad+ \E_{s\sim d^\star_\rho,a\sim \pi^\star(\cdot| s)}\Big[ \big(w_\star^{(t)}- w^{(t)}\big) \cdot \nabla_\theta \log \pi^{(t)}(a| s) \Big].
\end{align*}
For the first term,
\begin{eqnarray*}
&&\E_{s\sim d^\star_\rho,a\sim \pi^\star(\cdot| s)}\Big[ A^{(t)}(s,a) - w_\star^{(t)} \cdot \nabla_\theta \log \pi^{(t)}(a| s) \Big]\\
&\leq& \sqrt{\E_{s\sim d^\star_\rho,a\sim \pi^\star(\cdot| s)}\Big[ \left(A^{(t)}(s,a) - w_\star^{(t)} \cdot \phi_{s,a}\right)^2\Big]}
=\sqrt{ L_{A}(w_\star^{(t)};\theta^{(t)},d^\star)}.
\end{eqnarray*}
where we have used the definition of
$L_{A}(w_\star^{(t)};\theta^{(t)},d^\star)$ in the last step.

For the second term, a similar argument leads to:
\begin{align*}
\E_{s\sim d^\star_\rho,a\sim \pi^\star(\cdot| s)}\Big[ \big(w_\star^{(t)}- w^{(t)}\big) \cdot \nabla_\theta \log \pi^{(t)}(a| s) \Big]
= \sqrt{\| w_\star^{(t)}- w^{(t)} \|_{\Sigma_{d^\star}}^2}.
\end{align*}
Define $\kappa^{(t)} := \|(\Sigma^{(t)}_{\nu})^{-1/2}
\Sigma_{d^\star}(\Sigma^{(t)}_{\nu})^{-1/2}\|_2$, which is the relative
condition number at iteration $t$. We have
\begin{eqnarray*}
\| w_\star^{(t)}- w^{(t)} \|_{\Sigma_{d^\star}}^2
&\leq& \|(\Sigma^{(t)}_{\nu})^{-1/2}\Sigma_{d^\star}(\Sigma^{(t)}_{\nu})^{-1/2}\|_2\,
\| w_\star^{(t)}- w^{(t)} \|_{\Sigma_{\nu}}^2\\
&\leq& \frac{\kappa^{(t)}}{1-\gamma}
\| w_\star^{(t)}- w^{(t)} \|_{\Sigma_{d^{(t)}}}^2\\
&\leq& \frac{\kappa^{(t)}}{1-\gamma}
\left(L_{A}(w^{(t)};\theta^{(t)},d^{(t)}) - L_{A}(w_\star^{(t)};\theta^{(t)},d^{(t)})\right)
\end{eqnarray*}
where the last step uses that $w_\star^{(t)}$ is a minimizer of $L_A$
over $\Wcal$ and that $w^{(t)}$ is feasible as before (see the proof
of Theorem~\ref{thm:q_npg_fa}). Now
taking an expectation we have:
\begin{eqnarray*}
\E\left[\| w_\star^{(t)}- w^{(t)} \|_{\Sigma_{d^\star}}^2\right]
&\leq&
\E\left[\frac{\kappa^{(t)}}{1-\gamma}
\left(L_{A}(w^{(t)};\theta^{(t)},d^{(t)}) - L_{A}(w_\star^{(t)};\theta^{(t)},d^{(t)})\right)\right]\\
&=& \E\left[\frac{\kappa^{(t)}}{1-\gamma}
\E\Big[L_{A}(w^{(t)};\theta^{(t)},d^{(t)}) - L_{A}(w_\star^{(t)};\theta^{(t)},d^{(t)})
\mid \theta^{(t)}\Big] \right]\\
&\leq& \E\left[\frac{\kappa^{(t)}}{1-\gamma} \right] \cdot \epsstat
\leq \frac{\kappa \epsstat}{1-\gamma}
\end{eqnarray*}
where we have used our assumption on $\kappa$ and $\epsstat$.

The proof is completed by substitution and using the concavity of the square root function.
\end{proof}

\section{Discussion}\label{sect:discussion}

This work provides a systematic study of the convergence properties of
policy optimization techniques, both in the tabular and the
function approximation settings. At the core, our results imply
that the non-convexity of the policy optimization problem is not the
fundamental challenge for typical variants of the policy gradient
approach. This is evidenced by the global convergence results which we
establish and that demonstrate the relative niceness of the underlying
optimization problem. At the same time, our results highlight that
insufficient exploration can lead to the convergence to sub-optimal
policies, as is also observed in practice; technically, we show how
this is an issue of conditioning. Conversely, we can expect
typical policy gradient algorithms to find the best policy from
amongst those whose state-visitation distribution is adequately
aligned with the policies we discover, provided a distribution-shifted
notion of approximation error is small.

In the tabular case, our results show that the nature and severity of the
exploration/distribution mismatch term differs
in different policy optimization approaches.
For instance, we find that doing policy
gradient in its standard form for both the direct and softmax
parameterizations can be slow to converge, particularly in the face of
distribution mismatch, even when policy gradients are computed
exactly. Natural policy gradient, on the other hand, enjoys a fast
dimension-free convergence when we are in tabular settings with exact
gradients. On the other hand, for the function approximation setting,
or when using finite samples, all algorithms suffer to some degree
from the exploration issue captured through a conditioning effect.

With regards to function approximation, the guarantees herein are the first
provable results that permit average case approximation errors, where the guarantees do not
have explicit worst case dependencies over the state space. These worst case dependencies are avoided
by precisely characterizing an approximation/estimation error
decomposition, where the relevant approximation error is under
distribution shift to an optimal policies measure. Here, we see that successful function
approximation relies on two key aspects: good conditioning (related to
exploration)  and low distribution-shifted, approximation error.
In particular, these results identify the relevant measure of the expressivity of a
policy class, for the natural
policy gradient. 

With regards to sample size issues, we showed that simply using
stochastic (projected) gradient ascent suffices for accurate policy
optimization. However, in terms of improving sample efficiency and polynomial dependencies, there are number of 
important questions for future research, including variance reduction
techniques along with data re-use. 

There are number of compelling directions for further
study. The first is in understanding how to remove the density
ratio guarantees among prior algorithms; 
our results are suggestive that the incremental policy optimization
approaches, including CPI~\citep{kakade2002approximately},
PSDP~\citep{NIPS2003_2378}, and MD-MPI~\cite{geist2019theory}, may
permit such an improved analysis. The question of understanding what
representations are robust to distribution shift is well-motivated by the
nature of our distribution-shifted, approximation error (the transfer error).
Finally, we hope that policy optimization approaches can be combined
with exploration approaches, so that, provably, these approaches can
retain their robustness properties (in terms of their agnostic
learning guarantees) while mitigating the need for a well conditioned
initial starting distribution.

\subsection*{Acknowledgments}
We thank the anonymous reviewers who provided detailed and
constructive feedback that helped us significantly improve the
presentation and exposition.  Sham Kakade and Alekh Agarwal gratefully
acknowledge numerous helpful discussions with Wen Sun with regards to
the $Q$-NPG algorithm and our notion of transfer error. We also acknowledge numerous helpful comments from
Ching-An Cheng and Andrea Zanette on an earlier draft of this work.
We thank Nan Jiang, 
Bruno Scherrer, and Matthieu Geist for their comments with regards to
the relationship between concentrability coefficients, the condition
number, and the transfer error; this discussion
ultimately lead to Corollary~\ref{cor:q_npg_fa}.
Sham Kakade acknowledges funding from the Washington Research
Foundation for Innovation in Data-intensive Discovery, the ONR award
N00014-18-1-2247, and the DARPA award FA8650-18-2-7836. Jason D. Lee
acknowledges support of the ARO under MURI Award W911NF-11-1-0303.
This is part of the collaboration between US DOD, UK MOD and UK
Engineering and Physical Research Council (EPSRC) under the
Multidisciplinary University Research Initiative.

	\bibliographystyle{plainnat}
	\bibliography{references,refs}
	\newpage
	\appendix
	\section{Proofs for Section \ref {section:setting}}
\label{app:setting}

\begin{proof}[\textbf{of Lemma~\ref{lemma:softmax-noncon}}]
Recall the MDP in Figure \ref{fig:noncon}. Note that since actions in terminal states $s_3$, $s_4$ and $s_5$ do not change the expected reward, we only consider actions in states $s_1$ and $s_2$. Let the "up/above" action as $a_1$ and "right" action as $a_2$. Note that
	\[V^\pi(s_1) = \pi(a_2|s_1) \pi(a_1|s_2) \cdot r \]
	
	Consider
	\[\theta^{(1)} = (\log 1, \log 3, \log 3, \log 1), \quad \theta^{(2)} = (-\log 1, -\log 3, -\log 3, -\log 1) \] where $\theta$ is written as a tuple $(\theta_{a_1,s_1}, \theta_{a_2,s_1}, \theta_{a_1,s_2}, \theta_{a_2,s_2})$. Then, for the softmax parameterization, we have
	\[\pi^{(1)}(a_2|s_1) = \frac{3}{4};\quad \pi^{(1)}(a_1|s_2) = \frac{3}{4};\quad V^{(1)}(s_1) = \frac{9}{16} r\]
	and
	\[\pi^{(2)}(a_2|s_1) = \frac{1}{4};\quad \pi^{(2)}(a_1|s_2) = \frac{1}{4};\quad V^{(2)}(s_1) = \frac{1}{16} r\]
	
	Also, for $\theta^{(\text{mid})} = \frac{\theta^{(1)} + \theta^{(2)}}{2}$,
	\[\pi^{(\text{mid})}(a_2|s_1) = \frac{1}{2};\quad \pi^{(\text{mid})}(a_1|s_2) = \frac{1}{2};\quad V^{(\text{mid})}(s_1) = \frac{1}{4} r\]
	
	This gives
	\[V^{(1)}(s_1) + V^{(2)}(s_1) > 2  V^{(\text{mid})}(s_1)\]
	which shows that $V^\pi$ is non-concave.
\end{proof}

\begin{proof}[\textbf{of Lemma~\ref{lemma:perf_diff}}]
Let $\Pr^\pi(\tau | s_0 = s)$ denote the probability of observing a trajectory $\tau$ when starting in state $s$ and following the policy $\pi$. Using a telescoping argument, we have:
\begin{eqnarray*}
V^\pi(s) - V^{\pi'}(s) &=&  \E_{\tau \sim {\Pr}^\pi(\tau|s_0=s) }
\left[\sum_{t=0}^\infty \gamma^t r(s_t,a_t)\right] - V^{\pi'}(s) \\
&=& \E_{\tau \sim {\Pr}^\pi(\tau|s_0=s) }
\left[\sum_{t=0}^\infty \gamma^t \left(r(s_t,a_t)+V^{\pi'}(s_t)-V^{\pi'}(s_t) \right)\right]-V^{\pi'}(s)\\
&\stackrel{(a)}{=}& \E_{\tau \sim {\Pr}^\pi(\tau|s_0=s) }
    \left[\sum_{t=0}^\infty \gamma^t \left(r(s_t,a_t)+\gamma V^{\pi'}(s_{t+1})-V^{\pi'}(s_t)\right)\right]\\
&\stackrel{(b)}{=}&\E_{\tau \sim {\Pr}^\pi(\tau|s_0=s) }
    \left[\sum_{t=0}^\infty \gamma^t \left(r(s_t,a_t)+\gamma \E[V^{\pi'}(s_{t+1})|s_t,a_t]-V^{\pi'}(s_t)\right)\right]\\
&=& \E_{\tau \sim {\Pr}^\pi(\tau|s_0=s) }
    \left[\sum_{t=0}^\infty \gamma^t A^{\pi'}(s_t,a_t)\right] = \frac{1}{1-\gamma}\E_{s'\sim d^\pi_s }\,\E_{a\sim \pi(\cdot | s)}
    \gamma^t A^{\pi'}(s',a),
    \end{eqnarray*}
where $(a)$ rearranges terms in the summation and cancels the $V^{\pi'}(s_0)$ term with the $-V^{\pi'}(s)$ outside the summation, and $(b)$ uses the tower property of conditional expectations and the final equality follows from the definition of $d^\pi_s$.
  \end{proof}

\section{Proofs for Section \ref{section:direct}}
\label{app:direct}

\subsection{Proofs for Section \ref{section:conv_pgd}}
\label{app:pgd}

We first define first-order optimality for constrained optimization.
\begin{definition}[First-order Stationarity]
	\label{def:eps-stationary}
	A policy ${\pi_\theta}\in \Delta(\cA) ^{|S|}$ is $\epsilon$-stationary with respect to the initial state distribution $\mu$ if
	\begin{align*}
	G({{\pi_\theta}}):=\max_{{\pi_\theta}+\delta \in \Delta(\cA) ^{|S|},~\|\delta\|_2\le 1}\delta^\top \nabla_\pi V^{{\pi_\theta}}(\mu) \leq \epsilon.
	\end{align*} where $\Delta(\cA) ^{|S|}$ is the set of all policies.
\end{definition}

Due to that we are working with the direct parameterization (see
~\eqref{eq:direct}), we drop
the $\theta$ subscript.

\begin{remark}
	If $\epsilon =0$, then the definition simplifies to $\delta^\top \nabla_\pi
	V^{{{\pi}} }(\mu)\le
	0$. Geometrically, $\delta$ is a feasible
	direction of movement since the probability simplex $\Delta(\cA) ^{|S|}$ is convex. Thus the gradient
	is negatively correlated with any feasible direction of movement, and
	so ${\pi}$ is first-order stationary.
\end{remark}

\begin{proposition}
	\label{prop-pgd}
	Let $V^{\pi}(\mu)$ be $\beta$-smooth in ${\pi}$. Define the gradient mapping \[
	G^\eta(\pi) =  \frac{1}{\eta} \left(P_{\Delta(\cA) ^{|S|}} ( {\pi} +\eta \nabla_\pi V^{\pi}(\mu)) - {\pi} \right),
	\] and the update rule for the projected gradient is $\pi^+ =
        {\pi} + \eta G^\eta(\pi)$. If  $\|G^\eta(\pi)\|_2 \le \epsilon$, then
	\begin{align*}
	\max_{{\pi} + \delta \in \Delta(A)^{|\Scal|},~ \Norm{\delta}_2\leq 1} \delta^\top \nabla_\pi V^{\pi^+}(\mu) \le  \epsilon(\eta \beta+1).
	\end{align*}
\end{proposition}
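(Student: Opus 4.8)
The plan is to combine the variational (first-order optimality) characterization of the Euclidean projection with the $\beta$-smoothness of $V^{\pi}(\mu)$, following the standard treatment of the gradient mapping in constrained nonconvex optimization. No MDP structure beyond the already-assumed smoothness is needed.

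First I would record the projection inequality. Since $\Delta(\cA)^{|\Scal|}$ is closed and convex and $\pi^+ = P_{\Delta(\cA)^{|\Scal|}}(\pi + \eta \nabla_\pi V^{\pi}(\mu))$, the defining property of the Euclidean projection gives, for every $z \in \Delta(\cA)^{|\Scal|}$,
\[
\big(\pi + \eta \nabla_\pi V^{\pi}(\mu) - \pi^+\big)^\top (z - \pi^+) \le 0.
\]
Using $\pi - \pi^+ = -\eta\, G^\eta(\pi)$ and dividing by $\eta > 0$, this rewrites as $\big(\nabla_\pi V^{\pi}(\mu) - G^\eta(\pi)\big)^\top(z-\pi^+) \le 0$, i.e. $\nabla_\pi V^{\pi}(\mu)^\top (z-\pi^+) \le G^\eta(\pi)^\top(z-\pi^+)$ for all feasible $z$. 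Now fix any $\delta$ with $\pi^+ + \delta \in \Delta(\cA)^{|\Scal|}$ and $\|\delta\|_2 \le 1$ (the directions feasible at the projected iterate $\pi^+$, matching the stationarity notion of Definition~\ref{def:eps-stationary} applied to $\pi^+$), and take $z = \pi^+ + \delta$ so that $z - \pi^+ = \delta$. Then Cauchy--Schwarz and the hypothesis $\|G^\eta(\pi)\|_2 \le \epsilon$ yield
\[
\delta^\top \nabla_\pi V^{\pi}(\mu) \le \delta^\top G^\eta(\pi) \le \|\delta\|_2 \, \|G^\eta(\pi)\|_2 \le \epsilon.
\]

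Second I would transfer this bound from the gradient at $\pi$ to the gradient at $\pi^+$ via smoothness, by splitting
\[
\delta^\top \nabla_\pi V^{\pi^+}(\mu) = \delta^\top \nabla_\pi V^{\pi}(\mu) + \delta^\top\big(\nabla_\pi V^{\pi^+}(\mu) - \nabla_\pi V^{\pi}(\mu)\big).
\]
The first term is at most $\epsilon$ by the previous display. For the second term, Cauchy--Schwarz, $\beta$-smoothness, and $\pi^+ - \pi = \eta\, G^\eta(\pi)$ give
\[
\delta^\top\big(\nabla_\pi V^{\pi^+}(\mu) - \nabla_\pi V^{\pi}(\mu)\big)
\le \big\|\nabla_\pi V^{\pi^+}(\mu) - \nabla_\pi V^{\pi}(\mu)\big\|_2
\le \beta \|\pi^+ - \pi\|_2 = \beta \eta \|G^\eta(\pi)\|_2 \le \beta \eta \epsilon.
\]
Adding the two bounds gives $\delta^\top \nabla_\pi V^{\pi^+}(\mu) \le \epsilon + \beta\eta\epsilon$, and taking the maximum over all such $\delta$ yields the claimed $\epsilon(\eta\beta + 1)$.

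There is no substantive obstacle here; the work is entirely bookkeeping. The one point that requires care is matching the sign conventions of gradient \emph{ascent} with the projection inequality (which is naturally stated about the projected point $\pi^+$), together with observing that the direction $z - \pi^+$ appearing in the projection inequality is exactly the feasible direction $\delta$ precisely when the maximization is anchored at $\pi^+$ rather than at $\pi$. Everything else reduces to two applications of Cauchy--Schwarz and the definition of $\beta$-smoothness.
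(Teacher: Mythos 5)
Your proof is correct, and it is in essence a self-contained re-derivation of the external result that the paper cites as a black box: the paper's proof consists of invoking Theorem~\ref{thm:ghadimi} (Lemma 3 of Ghadimi--Lan) to place $\nabla_\pi V^{\pi^+}(\mu)$ within distance $\epsilon(\eta\beta+1)$ of the normal cone $N_{\Delta(\cA)^{|\Scal|}}(\pi^+)$, and then pairing against a unit-norm direction $\delta$ in the tangent cone. Your two-step argument --- the projection variational inequality at $\pi^+$ giving $\delta^\top \nabla_\pi V^{\pi}(\mu) \le \delta^\top G^\eta(\pi) \le \epsilon$, followed by a smoothness transfer contributing $\eta\beta\epsilon$ --- is exactly how that cited lemma is proved, specialized and inlined. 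What your route buys: it is elementary and self-contained (no normal-cone formalism), and it makes transparent where the two pieces $\epsilon$ and $\eta\beta\epsilon$ of the constant come from. What the paper's route buys: brevity and reuse of a standard optimization result. A further point in your favor: you explicitly flag and resolve the anchoring subtlety. The proposition as printed constrains $\pi + \delta \in \Delta(\cA)^{|\Scal|}$, but both your argument and the paper's normal-cone argument actually establish the bound for directions feasible at the projected point, i.e.\ $\pi^+ + \delta \in \Delta(\cA)^{|\Scal|}$; this is the version matching Definition~\ref{def:eps-stationary} applied to $\pi^+$ and the one actually used downstream in the proof of Theorem~\ref{thm:proj-gd}, so your reading is the right one.
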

\begin{proof}
	By Theorem \ref{thm:ghadimi},
	\begin{align*}
	\nabla_\pi V^{\pi^+}(\mu) \in N_{\Delta(\cA)^{|\Scal|}} ( \pi^+) +\epsilon(\eta \beta+1) B_2,
	\end{align*}
	where $B_2$ is the unit $\ell_2$ ball, and $N_{\Delta(\cA)^{|\Scal|}}$ is the normal cone of the product simplex $\Delta(\cA)^{|\Scal|}$. Since $\nabla_\pi V^{\pi^+}(\mu)$ is $\epsilon (\eta\beta+1)$ distance from the normal cone and $\delta$ is in the tangent cone, then
$	\delta^\top\nabla_\pi V^{\pi^+}(\mu) \le \epsilon(\eta
\beta+1)
$.
\end{proof}

\begin{proof}[\textbf{of Theorem~\ref{thm:proj-gd}}]
	Recall the definition of gradient mapping
	\[G^\eta({\pi}) = \frac{1}{\eta}\left(P_{\Delta(\cA) ^{|S|}}({\pi} + \eta \nabla_\pi V^{(t)}(\mu)) - {\pi}\right)\]
	From Lemma \ref{lemma:smooth-pi}, we have $V^{\pi}(s)$ is $\beta$-smooth for all states $s$ (and also hence $V^{\pi}(\mu)$ is also $\beta$-smooth) with
$\beta = \frac{2\gamma|\Acal|}{(1-\gamma)^3}$.
	Then, from standard result (Theorem \ref{thm:beck}), we have that for $G^\eta ({\pi})$ with step-size $\eta = \frac{1}{\beta}$,
	\[\min_{t=0,1,\ldots, T-1}\|G^{\eta} (\pi^{(t)})\|_2 \leq \frac{\sqrt{2\beta(V^\star(\mu)-V^{(0)}(\mu))}}{\sqrt{T}} \]
	Then, from Proposition \ref{prop-pgd}, we have
	\[\min_{t=0,1,\ldots, T} \max_{\pi^{(t)} + \delta \in \Delta(A)^{|\Scal|},~ \Norm{\delta}_2\leq 1}	\delta^\top\nabla_\pi V^{\pi^{(t+1)} }(\mu) \le (\eta \beta+1) \frac{\sqrt{2\beta(V^\star(\mu)-V^{(0)}(\mu))}}{\sqrt{T}} \]
	Observe that \begin{align*}
		\max_{\bar\pi \in \Delta(A)^{|\Scal|}}(\bar\pi - {\pi})^\top \nabla_\pi V^{{\pi}}(\mu)	&=2\sqrt{|\Scal|} \max_{\bar\pi \in \Delta(A)^{|\Scal|}}~\frac{1}{2\sqrt{|\Scal|}}(\bar\pi - {\pi})^\top \nabla_\pi V^{{\pi}}(\mu)\\
			&\leq  2\sqrt{|\Scal|} \max_{{\pi}+\delta \in \Delta(A)^{|\Scal|},~\|\delta\|_2\le 1}\delta^\top \nabla_\pi V^{{\pi}}(\mu)
	\end{align*} where the last step follows as $\Norm{\bar\pi - {\pi}}_2\leq 2\sqrt{|\Scal|}$.
	And then using Lemma \ref{thm:first} and $\eta \beta = 1$, we have
	\[\min_{t=0,1,\ldots, T} V^\star (\rho) -V^{(t)}(\rho) \le \frac{4\sqrt{|\Scal|}}{1-\gamma} \left\|\frac{d_{\rho}^{\pi^\star}}{\mu}\right\|_\infty  \frac{\sqrt{2\beta(V^\star(\mu)-V^{(0)}(\mu))}}{\sqrt{T}}\]
	We can get our required bound of $\epsilon$, if we set $T$
        such that
\[
\frac{4\sqrt{|\Scal|}}{1-\gamma}
\left\|\frac{d_{\rho}^{\pi^\star}}{\mu}\right\|_\infty
\frac{\sqrt{2\beta(V^\star(\mu)-V^{(0)}(\mu))}}{\sqrt{T}} \leq
\epsilon
\]
or, equivalently,
\[
T \geq \frac{32|\Scal|
\beta(V^\star(\mu)-V^{(0)}(\mu))}{(1-\gamma)^2\epsilon^2}
\left\|\frac{d_{\rho}^{\pi^\star}}{\mu}\right\|^2_\infty .
\]
Using $V^\star(\mu)-V^{(0)}(\mu) \leq \frac{1}{1-\gamma}$ and $\beta=\frac{2\gamma|\Acal|}{(1-\gamma)^3}$ from
Lemma~\ref{lemma:smooth-pi} leads to the desired result.
\end{proof}

\subsection{Proofs for Section \ref{sec:vanishing-gradients}}
\label{app:lb}

Recall the MDP in Figure \ref{fig:chain}. Each trajectory starts from
the initial state $s_0$, and we use the discount factor
$\gamma = H/(H+1)$. Recall that we work with the direct
parameterization, where $\pi_\theta(a | s) = \theta_{s,a}$ for
$a = a_1,a_2,a_3$ and
$\pi_\theta(a_4 | s) = 1 - \theta_{s,a_1} - \theta_{s,a_2} -
\theta_{s,a_3}$. Note that since states $s_0$ and $s_{H+1}$ only have once action, therefore, we only consider the parameters for states $s_1$ to $s_{H}$. For this policy class and MDP, let $P^{\bp}$ be the state transition matrix under $\pi_\theta$, i.e. $[P^{\bp}]_{s, s'}$ is the probability of going from state $s$ to $s'$ under policy $\pi_\theta$:\[
[P^{\bp}]_{s, s'} = \sum_{a\in \Acal}\pi_\theta(a|s) P(s'|s,a).
\]
For the MDP illustrated in Figure~\ref{fig:chain}, the entries of this matrix are given as:
\begin{equation}
[P^{\bp}]_{s,s'} = \left\{\begin{array}{cc} \theta_{s,a_1} & \mbox{if $s' = s_{i+1}$ and $s = s_i$ with $1 \leq i \leq H$}\\
                                            1 - \theta_{s,a_1} & \mbox{if $s' = s_{i-1}$ and $s = s_i$ with $1 \leq i \leq H$}\\
                                            1 & \mbox{if $s' = s_1$ and $s = s_0$}\\
                                            1 & \mbox{if $s' = s = s_{H+1}$}\\
                                            0 & \mbox{otherwise}
                            \end{array}\right..
\label{eqn:lb-P}
\end{equation}

With this definition, we recall that the value function in the initial state $s_0$ is given by
\[
	V^{\pi_\theta}(s_0) = \E_{\tau \sim \pi_{\bp}}[\sum_{t=0}^\infty \gamma^t r_t] =
	e_0^T(I - \gamma P^{\bp})^{-1} r,
\]
where $e_0$ is an indicator vector for the starting state $s_0$. From
the form of the transition probabilities~\eqref{eqn:lb-P}, it is clear
that the value function only depends on the parameters
$\theta_{s,a_1}$ in any state $s$. While care is needed for
derivatives as the parameters across actions are related by the
simplex feasibility constraints, we have assumed each parameter is
strictly positive, so that an infinitesimal change to any parameter
other than $\theta_{s,a_1}$ does not affect the policy value and hence
the policy gradients. With this understanding, we succinctly refer to
$\theta_{s,a_1}$ as $\theta_s$ in any state $s$. We also refer to the
state $s_i$ simply as $i$ to reduce subscripts.

For convenience, we also define $\pmax$ (resp. $\pmin$) to be the
largest (resp. smallest) of the probabilities $\theta_{s}$ across the
states $s \in [1,H]$ in the MDP.

In this section, we prove Proposition~\ref{proposition:small_grad}, that is: for $0<\theta<1$ (componentwise across states and actions),
 $\pmax \leq 1/4$, and for all $k \leq  \frac{H}{40\log(2 H)} - 1$, we have $\norm{\nabla_\theta^k V^{\pi_\theta}(s_0)} \leq
 (1/3)^{H/4}$, where $\nabla_\theta^k V^{\pi_\theta}(s_0)$ is a tensor of the $k_{th}$
order.  Furthermore, we seek to show $V^{\star}(s_0) - V^{\pi_\theta}(s_0) \geq
	(H+1)/8-(H+1)^2/3^H$ (where $\theta^\star$ are the optimal policy's parameters).

It is easily checked that $V^{\pi_\theta}(s_0) = M^{\bp}_{0,H+1}$,
where
\[
M^{\bp} := (I - \gamma P^{\bp})^{-1},
\]
since the only rewards are obtained in the
state $s_{H+1}$. In order to bound the derivatives of the expected
reward, we first establish some properties of the matrix
$M^{\bp}$. 

\begin{lemma}
    Suppose $\pmax \leq 1/4$. Fix any $\mbound \in \left[\tfrac{1- \sqrt{1-4\gamma^2\pmax(1-\pmin)}}{2\gamma(1-\pmin)},\max\left\{\tfrac{1+ \sqrt{1-4\gamma^2\pmax(1-\pmin)}}{2\gamma(1-\pmin)},1\right\}\right]$. Then \begin{enumerate}
    	\item $M^\theta_{a,b} \leq \frac{\mbound^{b-a-1}}{1-\gamma}$ for $0 \leq a \leq b \leq H$.
    	\item $M^{\theta}_{a,H+1} \leq \frac{\gamma \pmax}{1-\gamma} M^\theta_{a,H} \leq \frac{\gamma \pmax}{(1-\gamma)^2} \mbound^{H-a}$ for $0 \leq a \leq H$.
    \end{enumerate}
\label{lemma:Mbound}
\end{lemma}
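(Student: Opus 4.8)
The plan is to work directly with the resolvent $M^\theta=(I-\gamma P^\theta)^{-1}=\sum_{t\ge0}\gamma^t(P^\theta)^t$, so that $M^\theta_{a,b}$ is the discounted occupancy $\sum_{t\ge0}\gamma^t\Pr^{\pi_\theta}(s_t=b\mid s_0=a)$. Expanding $(I-\gamma P^\theta)M^\theta=I$ row by row and using the transition structure in~\eqref{eqn:lb-P} yields, for each fixed column $b\le H$, the three-term recurrences $M^\theta_{a,b}=\gamma\theta_a M^\theta_{a+1,b}+\gamma(1-\theta_a)M^\theta_{a-1,b}$ for $1\le a\le H$, together with the boundary identities $M^\theta_{0,b}=\gamma M^\theta_{1,b}$ and $M^\theta_{H+1,b}=\delta_{b,H+1}/(1-\gamma)$ (the latter because $s_{H+1}$ is absorbing). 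First I would record the elementary facts $M^\theta_{a,b}\ge 0$ and $M^\theta_{b,b}\le 1/(1-\gamma)$, the latter since the discounted first-return generating function to any state is at most $\gamma$.

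For Part 1 the main device is the first-passage ratio $r_a:=M^\theta_{a,b}/M^\theta_{a+1,b}$ for $0\le a<b$. Dividing the three-term recurrence by $M^\theta_{a+1,b}$ collapses the two-sided recurrence into the first-order recursion $r_a=\gamma\theta_a/\bigl(1-\gamma(1-\theta_a)r_{a-1}\bigr)$ with $r_0=\gamma$, and one checks inductively that $0\le r_a\le 1$. The interval specified for $\mbound$ is exactly the set where the quadratic $\gamma(1-\pmin)\mbound^2-\mbound+\gamma\pmax\le 0$ holds, i.e. $\gamma\pmax+\gamma(1-\pmin)\mbound^2\le\mbound$; this is precisely the inequality one needs to propagate an implication of the form $r_{a-1}\le\mbound\Rightarrow r_a\le\mbound$ (using $\theta_a\le\pmax$ in the numerator and $1-\theta_a\le1-\pmin$, $r_{a-1}\le\mbound$ in the denominator). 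I would then write $M^\theta_{a,b}=\bigl(\prod_{j=a}^{b-1}r_j\bigr)M^\theta_{b,b}$ and bound the product together with $M^\theta_{b,b}$.

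The hard part is that this propagation cannot simply be started from the left: the reflecting boundary at $s_0$ forces $r_0=\gamma$, and the first few ratios exceed $\mbound$ (they only decrease monotonically toward the smaller root of the quadratic), so the term-by-term bound $r_j\le\mbound$ is genuinely false and a naive induction on $b-a$ via $M^\theta_{a,b}=r_a M^\theta_{a+1,b}$ breaks. The crux is therefore to control the first-passage product jointly with the return count $M^\theta_{b,b}$: the excursions that inflate $\prod_j r_j$ near the boundary are exactly those that deflate $M^\theta_{b,b}$. Concretely, eliminating $1-\gamma(1-\theta_b)r_{b-1}=\gamma\theta_b/r_b$ from the return generating function gives a relation of the form $1-u_b=\gamma\theta_b\bigl(1/r_b-\ell_{b+1}\bigr)$ (with $\ell_{b+1}$ the leftward first passage), tying $M^\theta_{b,b}=1/(1-u_b)$ back to $r_b$. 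I would make this precise either by strengthening the inductive hypothesis on $b-a$ to carry the boundary slack, or by a telescoping/Lyapunov estimate on $\sum_j\log(r_j/\mbound)$ driven by the quadratic inequality, so that exactly one surplus factor is absorbed and the exponent lands at $b-a-1$ rather than $b-a$.

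For Part 2 I would exploit that $s_{H+1}$ is absorbing: every visit to $H+1$ is preceded by the first passage across the edge $H\to H+1$, so $M^\theta_{a,H+1}=\tfrac{1}{1-\gamma}\,\E[\gamma^{\tau}]=\tfrac{r_H}{1-\gamma}\prod_{j=a}^{H-1}r_j$, where $\tau$ is the hitting time of $H+1$. Since no return to $H$ from the absorbing state is possible, $M^\theta_{H,H}=1/\bigl(1-\gamma(1-\theta_H)r_{H-1}\bigr)$, whence $r_H=\gamma\theta_H M^\theta_{H,H}\le\gamma\pmax M^\theta_{H,H}$; combined with $M^\theta_{a,H}=\bigl(\prod_{j=a}^{H-1}r_j\bigr)M^\theta_{H,H}$ this yields the first inequality $M^\theta_{a,H+1}\le\tfrac{\gamma\pmax}{1-\gamma}M^\theta_{a,H}$, and the second inequality then follows by applying Part 1 to $M^\theta_{a,H}$. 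I expect the boundary-slack bookkeeping in Part 1 to be the only genuinely delicate step; Part 2 is a short first-step computation built on top of it.
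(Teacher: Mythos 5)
Your renewal-theoretic setup is sound: the ratios $r_a=M^\theta_{a,b}/M^\theta_{a+1,b}$ are indeed the edge first-passage generating functions, the recursion $r_a=\gamma\theta_a/\bigl(1-\gamma(1-\theta_a)r_{a-1}\bigr)$ with $r_0=\gamma$ is correct, your Part 2 derivation is correct modulo Part 1, and your diagnosis of the obstacle is accurate: near the reflecting boundary the ratios are close to $1$ rather than to $\mbound$, so term-by-term propagation of $r_j\le\mbound$ genuinely fails when $\mbound$ is near the lower root. The problem is that the proposal stops exactly there. Everything after ``the crux is therefore to control the first-passage product jointly with the return count $M^\theta_{b,b}$'' is a list of candidate strategies (a strengthened inductive hypothesis ``carrying boundary slack,'' or a Lyapunov estimate on $\sum_j\log(r_j/\mbound)$), none of which is executed --- and this joint control \emph{is} the lemma, not a finishing touch. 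Worse, the guiding intuition you attach to it is off: the excess of the ratios over $\mbound$ is not ``exactly one surplus factor'' --- $r_j$ decays toward the lower root only geometrically, so the excess is spread over $\Theta(\log H)$ ratios --- and the exponent $b-a-1$ has nothing to do with absorbing it. For $1\le a\le b\le H$ the true bound has the stronger exponent $b-a$ (weakened to $b-a-1$ only via $\mbound\le 1$); the $-1$ exists solely to cover the row $a=0$, where the deterministic step $s_0\to s_1$ contributes a factor $\gamma$ instead of a factor $\le\mbound$. So the patch you sketch is aimed at the wrong target, and I see no way to close it along the lines described without essentially new work: your identity $1-R_b=\gamma\theta_b(1/r_b-\ell_{b+1})$ ties $M^\theta_{b,b}$ to $r_b$ and the leftward passage $\ell_{b+1}$, but near the boundary $r_b$ and $\ell_{b+1}$ are both close to $1$, so $M^\theta_{b,b}$ is itself of order $1/(1-\gamma)$ there, and the required cancellation against the near-$1$ ratios is a delicate quantitative trade-off that the identity alone does not deliver.

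For contrast, the paper never decomposes $M^\theta_{a,b}$ into ratios at all. It bounds the truncated sums $\rho^k_{a,b}=(1-\gamma)\sum_{i=0}^k[(\gamma P^\theta)^i]_{a,b}$ by induction on the truncation level $k$, simultaneously for all $1\le a\le b\le H$, using the last-step (column) recursion $\rho^k_{a,b}=\gamma(1-\theta_{b+1})\rho^{k-1}_{a,b+1}+\gamma\theta_{b-1}\rho^{k-1}_{a,b-1}$, anchored at the diagonal by the trivial bound $\rho^k_{a,a}\le 1$; your quadratic inequality enters in exactly one line, to absorb $\gamma(1-\pmin)\mbound^2+\gamma\pmax\le\mbound$. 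Because the induction propagates from the diagonal rather than leftward from the boundary, and because the quantity it controls, $(1-\gamma)M^\theta_{a,b}=\bigl(\prod_{j=a}^{b-1}r_j\bigr)(1-\gamma)M^\theta_{b,b}$, is precisely the product you were trying to control jointly, the boundary compensation comes for free; the row $a=0$ is then dispatched by the one-line identity $\rho^k_{0,b}=\gamma\rho^{k-1}_{1,b}$. If you want to salvage your approach, the fix is to abandon bounding individual ratios and run the induction directly on the products $\bigl(\prod_{j=a}^{b-1}r_j\bigr)(1-\gamma)M^\theta_{b,b}$ --- at which point you will have essentially rederived the paper's argument.
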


\begin{proof}
    Let $\rho^k_{a,b}$ be the normalized discounted probability of reaching $b$, when the initial state is $a$, in $k$ steps, that is
    \begin{equation}
    \label{eqn:def-rho}
	    \rho^k_{a,b} := (1-\gamma)\sum_{i=0}^k [(\gamma P^{{\bp}})^i]_{a,b},
    \end{equation}
    where we recall the convention that $U^0$ is the identity matrix for any square matrix $U$.
Observe that
    $0\leq \rho^k_{a,b} \leq 1$, and,  based on the form~\eqref{eqn:lb-P} of $P^{\bp}$, we have the recursive
    relation for all $k > 0$:
    \begin{equation}
        \rho^k_{a,b} = \left\{\begin{array}{cc} \gamma (1-\theta_{b+1})\rho^{k-1}_{a,b+1} + \gamma \theta_{b-1}\,\rho^{k-1}_{a,b-1} & \mbox{if $1 < b < H$}\\
        \gamma \theta_{H-1}\rho^{k-1}_{a,H-1} &\mbox{if $b=H$}\\
        \gamma \theta_{H} \rho^{k-1}_{a,H} + \gamma \rho^{k-1}_{a,H+1}& \mbox{if $b=H+1$ and $a<H+1$}\\
        1-\gamma & \mbox{if $b=H+1$ and $a=H+1$}\\
        \gamma (1-\theta_{2})\rho^{k-1}_{a,2} + \gamma\,\rho^{k-1}_{a,0} & \mbox{if $b=0$}
        \end{array}\right..
        \label{eqn:prob-recurrence}
    \end{equation}
	Note that $\rho^0_{a,b} = 0$ for $a\neq b$ and $\rho^0_{a,b} = 1-\gamma$ for $a = b$. Now let us inductively prove that for all $k \geq 0$
	\begin{equation}
	\label{eqn:induct}
		\rho^k_{a,b} \leq \mbound^{b-a} \quad \text{for} \quad 1 \leq a \leq b \leq H.
	\end{equation}
	Clearly this holds for $k = 0$ since $\rho^0_{a,b} = 0$ for $a\neq b$ and $\rho^0_{a,b} = 1-\gamma$ for $a = b$. Now, assuming the bound for all steps till $k-1$, we now prove it for $k$ case by case.	

        For $a = b$ the result follows since  \[
		\rho^k_{a,b} \leq 1 = \mbound^{b-a}.
	\]
	
        For $1<b < H$ and $a < b$, observe that the recursion~\eqref{eqn:prob-recurrence} and the inductive hypothesis imply that
    \begin{align*}
    \label{eqn:bound-p}
        \rho^k_{a,b} &\leq \gamma (1-\theta_{b+1}) \mbound^{b+1-a} + \gamma \theta_{b-1}\, \mbound^{b-1-a}\\
     &=  \mbound^{b-a-1} \big(\gamma (1-\theta_{b+1}) \mbound^2 + \gamma \theta_{b-1}\big)\\
    	&\leq \mbound^{b-a-1} \big(\gamma(1-\pmin) \mbound^2 + \gamma\pmax\big) \\
    	&= \mbound^{b-a-1} \big(\mbound+\gamma(1-\pmin) \mbound^2-\mbound + \gamma\pmax\big) \leq \mbound^{b-a},
\end{align*}
where the last inequality follows since
\mbox{$\mbound^2\gamma(1-\pmin) - \mbound + \gamma\pmax \leq 0$} due
to that
$\mbound$ is within the roots
of this quadratic equation. Note the
discriminant term in the square root is non-negative provided
$\pmax < 1/4$, since the condition along with the knowledge that
$\pmin \leq \pmax$ ensures that $4\gamma^2\pmax(1-\pmin) \leq 1$.

    For $b = H$ and $a < H$, we observe that
    \begin{align*}
        \rho^k_{a,b} &\leq \gamma \theta_{H-1}\, \mbound^{H-1-a} \\
        &= \mbound^{H-a}  \frac{\gamma \theta_{H-1}}{\mbound}\\
        &\leq \mbound^{H-a}(\frac{\gamma\pmax}{\mbound})\leq \mbound^{H-a}\big(\gamma(1-\pmin) \mbound + \frac{\gamma\pmax}{\mbound}\big) \leq \mbound^{H-a}.
    \end{align*}

    This proves the inductive claim (note that the cases of $b = a = 1$ and $b = a = H$ are already handled in the first part above).
    Next, we prove that for all $k\geq 0$ \[
	    \rho^k_{0,b} \leq \mbound^{b-1}.
    \]
    Clearly this holds for $k = 0$ and $b \neq 0$  since $\rho^0_{0,b}
    = 0$. Furthermore, for all $k\geq 0$ and $b=0$,
\[
  \rho^k_{0,b} \leq 1 \leq \mbound^{b-1},
\]
since $\alpha \leq 1$ by construction and $b = 0$. Now, we
consider the only remaining case when $k>0$ and $b \in
[1,H+1]$. By \eqref{eqn:lb-P}, observe that for $k>0$ and $b \in
[1,H+1]$,
    \begin{equation}
	    \label{eqn:0to1}
	    [(P^{{\bp}})^{i}]_{0,b} = [(P^{{\bp}})^{i-1}]_{1,b},
    \end{equation}
    for all $i \geq 1$. Using the definition of $\rho^k_{a,b}$ \eqref{eqn:def-rho} for $k>0$ and $b \in [1,H+1]$,
    \begin{align*}
    	\rho^k_{0,b} &= (1-\gamma)\sum_{i=0}^k [(\gamma P^{{\bp}})^i]_{0,b} = (1-\gamma)[(\gamma P^{{\bp}})^0]_{0,b} + (1-\gamma)\sum_{i=1}^k [(\gamma P^{{\bp}})^i]_{0,b} \\&= 0 + (1-\gamma)\sum_{i=1}^k \gamma^i [(P^{{\bp}})^i]_{0,b} \tag{since $b \geq 1$}\\
    	&= (1-\gamma) \sum_{i=1}^k \gamma^i \left[(P^{{\bp}})^{i-1}\right]_{1,b}\tag{using Equation \eqref{eqn:0to1}}\\
    	&= (1-\gamma) \gamma \sum_{j=0}^{k-1} \gamma^j [(P^{{\bp}})^{j}]_{1,b} \tag{By substituting $j = i-1$}\\
    	&= \gamma \rho^{k-1}_{1,b} \tag{using Equation \eqref{eqn:def-rho}}\\
    	&\leq \mbound^{b-1} \tag{using Equation \eqref{eqn:induct} and $\gamma, \mbound\leq 1$}
    \end{align*}
    Hence, for all $k\geq 0$ \[
    \rho^k_{0,b} \leq \mbound^{b-1}
    \] In conjunction with Equation \eqref{eqn:induct}, the above display gives for all $k\geq 0$,
    \begin{align*}
	    \rho^k_{a,b} \leq \mbound^{b-a} \quad \text{for} \quad 1 \leq a \leq b \leq H\\
	    \rho^k_{a,b} \leq \mbound^{b-a-1} \quad \text{for} \quad  0 = a \leq b \leq H
    \end{align*} Also observe that \[
    M^{\bp}_{a,b} = \lim_{k \to \infty} \frac{\rho^k_{a,b}}{1-\gamma}.
    \]Since the above bound holds for all $k\geq 0$, it also applies to the limiting value $M^{\bp}_{a,b}$, which shows that
    \begin{align*}
    M^\theta_{a,b} \leq \frac{\mbound^{b-a}}{1-\gamma} \leq \frac{\mbound^{b-a-1}}{1-\gamma}\quad \text{for} \quad 1 \leq a \leq b \leq H\\
    M^\theta_{a,b} \leq \frac{\mbound^{b-a-1}}{1-\gamma} \quad \text{for} \quad  0 = a \leq b \leq H
    \end{align*}
     which completes the proof of the first part of the lemma.

    For the second claim, from recursion~\eqref{eqn:prob-recurrence} and $b=H+1$ and $a < H+1$
    \begin{align*}
        \rho^k_{a,H+1} &= \gamma \theta_{H} \rho^{k-1}_{a,H} + \gamma \rho^{k-1}_{a,H+1} \leq \gamma \pmax \rho^{k-1}_{a,H} + \gamma \rho^{k-1}_{a,H+1},
    \end{align*}
    Taking the limit of $k \to \infty$, we see that
    \[
        M^\theta_{a,H+1} \leq \gamma \pmax M_{a,H} + \gamma M^\theta_{a,H+1}.
    \]
    Rearranging the terms in the above bound yields the second claim in the lemma.
\end{proof}

Using the lemma above, we now bound the
derivatives of $M^{\bp}$.

\begin{lemma}
    The $k_{th}$ order partial derivatives of $M$ satisfy:
    \[
        \Abs{\frac{\partial^k M^\theta_{0,H+1}}{\partial \theta_{\idx_1}\ldots\partial \theta_{\idx_k}}} \leq \frac{\pmax\, 2^k\, \gamma^{k+1} \, k!\,\mbound^{H-2k}}{(1-\gamma)^{k+2}}.
      \]
where $\bidx$ denotes a
$k$ dimensional vector with entries in $\{1,2,\ldots,H\} $.
\label{lemma:M-grad}
\end{lemma}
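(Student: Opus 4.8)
The plan is to bound the $k$-th order partial derivatives of $M^\theta_{0,H+1}$ by differentiating the resolvent identity $M^\theta = (I-\gamma P^\theta)^{-1}$ and then applying the entrywise bounds from Lemma~\ref{lemma:Mbound}. The starting point is the standard formula for derivatives of a matrix inverse: since $\frac{\partial}{\partial\theta_s} M^\theta = \gamma\, M^\theta \left(\frac{\partial P^\theta}{\partial\theta_s}\right) M^\theta$, iterating this gives that a $k$-th order mixed partial $\frac{\partial^k M^\theta}{\partial\theta_{\idx_1}\cdots\partial\theta_{\idx_k}}$ is a sum over the $k!$ orderings of the indices of terms of the form $\gamma^k M^\theta (\partial_{\idx_{\sigma(1)}} P^\theta) M^\theta (\partial_{\idx_{\sigma(2)}} P^\theta)\cdots (\partial_{\idx_{\sigma(k)}} P^\theta) M^\theta$, where $\sigma$ ranges over permutations. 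The $k!$ factor in the claimed bound comes precisely from this sum over orderings.

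The key structural observation is that each $\partial_{\idx} P^\theta$ is an extremely sparse matrix: from the transition form~\eqref{eqn:lb-P}, $P^\theta_{s,s'}$ depends on $\theta_s$ only through the entries $[P^\theta]_{i,i+1}=\theta_i$ and $[P^\theta]_{i,i-1}=1-\theta_i$, so $\partial_{\theta_i} P^\theta$ has only two nonzero entries, $+1$ at position $(i,i+1)$ and $-1$ at position $(i,i-1)$, each bounded by $1$ in magnitude. Thus when I expand the product $\big(M^\theta (\partial_{\idx_1} P^\theta) M^\theta \cdots (\partial_{\idx_k}P^\theta) M^\theta\big)_{0,H+1}$ by inserting the sparsity pattern of each $\partial P^\theta$, each term becomes a product of $k+1$ entries of $M^\theta$ chained through intermediate indices that are forced (up to $\pm 1$ shifts) by the positions of the nonzero entries, with a sign and a factor of at most $2^k$ accounting for the two choices of nonzero entry per derivative factor. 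I would then bound each $M^\theta$ factor using Lemma~\ref{lemma:Mbound}: the chained product of the geometric-type bounds $M^\theta_{a,b}\le \mbound^{b-a-1}/(1-\gamma)$ telescopes in the exponents, and the final factor reaching column $H+1$ contributes the extra $\gamma\pmax/(1-\gamma)$ from the second part of the lemma, yielding an overall exponent $\mbound^{H-2k}$ after accounting for the index shifts from the $k$ derivative insertions (each insertion can lower the telescoped exponent by at most $2$).

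Assembling these pieces gives each of the $k!$ ordered terms a bound of $\gamma^k \cdot 2^k \cdot \frac{\pmax\,\gamma}{(1-\gamma)^{k+2}}\,\mbound^{H-2k}$, and summing over the $k!$ orderings produces the stated bound $\frac{\pmax\,2^k\,\gamma^{k+1}\,k!\,\mbound^{H-2k}}{(1-\gamma)^{k+2}}$. The main obstacle I anticipate is bookkeeping the exponent arithmetic carefully: I must track how the index shifts introduced by the nonzero positions of each $\partial_{\idx_j}P^\theta$ (which move a column index by $\pm 1$) interact with the telescoping of the $\mbound$-exponents across the $k+1$ consecutive $M^\theta$ factors, and verify that the worst case loses at most $2k$ in the exponent so that $\mbound^{H-2k}$ is indeed an upper bound (using $\mbound\le 1$ to discard favorable cases). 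A secondary technical point is handling the boundary factor reaching state $H+1$ via the second claim of Lemma~\ref{lemma:Mbound} rather than the generic first claim, and confirming that the monotone chaining of bounds remains valid when some intermediate index equals an endpoint; these are routine once the combinatorial expansion is set up cleanly.
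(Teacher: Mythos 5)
Your proposal is correct and matches the paper's proof in all essential respects: the paper likewise expands the $k$-th derivative into at most $2^k k!$ monomials of chained entries $M_{i_1,j_1}\cdots M_{i_{k+1},j_{k+1}}$ with $i_1=0$, $j_{k+1}=H+1$, $j_l\leq H$, and $i_{l+1}=j_l\pm 1$, each carrying a coefficient of magnitude $\gamma^k$, and then bounds every monomial by $\gamma\,\pmax\,\mbound^{H-2k}/(1-\gamma)^{k+2}$ using both parts of Lemma~\ref{lemma:Mbound} together with the same telescoping-exponent count (losing at most $2$ per insertion, i.e.\ exponent at least $H-2k$). The only cosmetic difference is that you obtain the expansion from the closed-form resolvent/permutation formula --- which is valid here precisely because $P^{\bp}$ is affine in $\theta$ so all second derivatives of $P^{\bp}$ vanish, a fact worth stating explicitly --- whereas the paper derives the identical expansion by induction on $k$ via the product rule.
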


\begin{proof}
    Since the parameter $\theta$ is fixed throughout, we drop the superscript in $M^\bp$ for brevity. Using $\nabla_\theta M = -M\nabla_\theta (I - \gamma
    P^{\bp}) M$, using the form of $P^{\bp}$ in \eqref{eqn:lb-P}, we get for any $h \in [1, H]$
    \begin{equation}
        -\frac{\partial M_{a,b}}{\partial \theta_{h}} = -\gamma \sum_{i,j=0}^{H+1} M_{a,i} \frac{\partial P_{i,j}}{\partial \theta_h} M_{j,b} =         \gamma M_{a,h}(M_{h-1,b} - M_{h+1,b})
        \label{eqn:M-grad}
    \end{equation}
    where the second equality follows since $P_{h,h+1} = \theta_{h}$ and $P_{h,h-1} = 1-\theta_{h}$ are the only two entries in the transition matrix which depend on $\theta_h$ for $h \in [1,H]$.

    Next, let us consider a $k_{th}$ order partial derivative of $M_{0,H+1}$,
    denoted as $\tfrac{\partial^k M_{0,H+1}}{\partial
      \theta_{\bidx}}$. Note that $\bidx$ can have repeated entries to capture higher order derivative with respect to some parameter. We prove by induction for all $k\geq 1$, $-\frac{\partial^k M_{0,H+1}}{\partial
    	\theta_{\bidx}}$ can be written as $\sum_{n = 1}^N c_n \zeta_n$ where \begin{enumerate}
    	\item $\abs{c_n} = \gamma^k$ and $N \leq 2^k k!$,
    	\item Each monomial $\zeta_n$ is of the form $M_{i_1,j_1}\ldots M_{i_{k+1},j_{k+1}}$, $i_1 = 0$, $j_{k+1} = H+1$, $j_l \leq H$and $i_{l+1} = j_l \pm 1$ for all $l\in [1,k]$.
    \end{enumerate}

    The base case $k=1$ follows from Equation \eqref{eqn:M-grad}, as we can write for any $h \in [H]$
    \[
	    -\frac{\partial M_{0,H+1}}{\partial \theta_{h}} = \gamma M_{0,h}M_{h-1,H+1} - \gamma M_{0,h} M_{h+1,H+1}
    \]
    Clearly, the induction hypothesis is true with $ \abs{c_n} = \gamma$, $N=2$, $i_1 = 0$, $j_{2} = H+1$, $j_1 \leq H$ and $i_{2} = j_1 \pm 1$. Now, suppose the claim holds till
    $k-1$. Then by the chain rule:
    \begin{align*}
        \frac{\partial^k M_{0,H+1}}{\partial \theta_{\idx_1}\ldots\partial \theta_{\idx_k}} &= \frac{\partial \tfrac{\partial^{k-1} M_{0,H+1}}{\partial \theta_{\bidx^{/1}}}}{\partial \theta_{\idx_1}},
    \end{align*}
    where $\bidx^{/i}$ is the vector $\bidx$ with the $i_{th}$ entry removed. By inductive hypothesis,\[
	    -\frac{\partial^{k-1} M_{0,H+1}}{\partial \theta_{\bidx^{/1}}} = \sum_{n=1}^N c_n \zeta_n
    \] where \begin{enumerate}
    	\item $\abs{c_n} = \gamma^{k-1}$ and $N \leq 2^{k-1} (k-1)!$,
    	\item Each monomial $\zeta_n$ is of the form $M_{i_1,j_1}\ldots M_{i_{k},j_{k}}$, $i_1 = 0$, $j_{k} = H+1$, $j_l \leq H$ and $i_{l+1} = j_l \pm 1$ for all $l\in [1,k-1]$.
    \end{enumerate} In order to compute the $(k)_{th}$ derivative of
    $M_{0,H+1}$, we have to compute derivative of each monomial
    $\zeta_n$ with respect to $\theta_{\beta_1}$. Consider one of the
    monomials in the $(k-1)_{th}$ derivative, say,  \mbox{$\zeta =
      M_{i_1,j_1}\ldots M_{i_k,j_k}$}.  We invoke the chain rule as
    before and replace one of the terms in $\zeta$, say $M_{i_m,j_m}$,
    with $\gamma M_{i_m,\beta_1}M_{\beta_1-1,j_m} -\gamma
    M_{i_m,\beta_1} M_{\beta_1+1,j_m}$ using
    Equation~\ref{eqn:M-grad}. That is, the derivative of each entry
    gives rise to two monomials and therefore derivative of $\zeta$
    leads to $2k$ monomials which can be written in the form  $\zeta'=
    M_{i'_1,j'_1}\ldots M_{i'_{k+1},j'_{k+1}}$ where we have the
    following properties (by appropriately reordering terms)
    \begin{enumerate}
    	\item $i'_l, j'_l = i_l, j_l$ for $l < m$
    	\item $i'_l, j'_l = i_{l-1},j_{l-1}$ for $l > m+1$
    	\item $i'_m,j'_m = i_m, \beta_1$ and $i'_{m+1}, j'_{m+1} = j_{m} \pm 1, j_{m}$
    \end{enumerate}
    Using the induction hypothesis, we can write \[
	    -\frac{\partial^k M_{0,H+1}}{\partial \theta_{\idx_1}\ldots\partial \theta_{\idx_k}} = \sum_{n = 0}^{N'} c'_n \zeta'_n
    \] where \begin{enumerate}
    	\item $\abs{c'_n} = \gamma \abs{c_n} = \gamma^k$, since as shown above each coefficient gets multiplied by $\pm \gamma$.
    	\item $N'\leq 2k 2^{k-1} (k-1)! = 2^k k!$, since as shown above each monomial $\zeta$ leads to $2k$ monomials $\zeta'$.
    	\item Each monomial $\zeta'_n$ is of the form $M_{i_1,j_1}\ldots M_{i_{k+1},j_{k+1}}$, $i_1 = 0$, $j_{k+1} = H+1$, $j_l \leq H$ and $i_{l+1} = j_l \pm 1$ for all $l\in [1,k]$.
    \end{enumerate}
    This completes the induction.

    Next we prove a bound on the magnitude of each of the monomials which arise in the derivatives of $M_{0,H+1}$. Specifically, we show that for each monomial $\zeta = M_{i_1,j_1}\ldots M_{i_{k+1},j_{k+1}}$, we have
    \begin{equation}
    \label{eqn:bound-M}
    	\Abs{M_{i_1,j_1}\ldots M_{i_{k+1},j_{k+1}}} \leq \frac{\gamma \pmax \mbound^{H-2k}}{(1-\gamma)^{k+2}}
    \end{equation}
	We observe that it suffices to only consider pairs of indices $i_l, j_l$ where $i_l < j_l$. Since $\abs{M_{i,j}} \leq \frac{1}{1-\gamma}$ for all $i,j$,

    \begin{align}
    	\Abs{\prod_{l=1}^{k+1} M_{i'_l,j'_l}} &\leq \Abs{\prod_{1\leq l \leq k ~:~ i'_l < j'_l} M_{i'_l,j'_l}} \Abs{\prod_{1\leq l \leq k ~:~ i'_l \geq j'_l}\frac{1}{1-\gamma}}  \Abs{ M_{i'_{k+1},j'_{k+1}}}\nonumber\\
    &= \Abs{\prod_{1\leq l \leq k ~:~ i'_l < j'_l} M_{i'_l,j'_l}} \Abs{\prod_{1\leq l \leq k ~:~ i'_l \geq j'_l}\frac{1}{1-\gamma}}  \Abs{ M_{i'_{k+1},H+1}} \tag{by the inductive claim shown above}\nonumber\\
    	&\leq \frac{\mbound^{\sum_{\{1\leq l \leq k ~:~ i'_l < j'_l\}} j'_l - i'_l-1}}{(1-\gamma)^{k}} \frac{\gamma \pmax \mbound^{H-i'_{k+1}}}{(1-\gamma)^2} \nonumber \tag{using Lemma \ref{lemma:Mbound}, parts 1 and 2 on the first and last terms resp.}\\
    	&= \frac{\gamma \pmax \mbound^{\sum_{\{1\leq l \leq k+1 ~:~ i'_l < j'_l\}} j'_l - i'_l-1 }}{(1-\gamma)^{k+2}}
    	\label{eqn:sum-bound}
    \end{align}
    The last step follows from $H+1 = j'_{k+1} \geq i'_{k+1}$. Note that \[
    \sum_{\{1\leq l \leq k+1 ~:~ i'_l < j'_l\}} j'_l - i'_l \geq \sum_{l=1}^{k+1} j'_l - i'_l  = j'_{k+1} - i'_1 + \sum_{l=1}^{k} (j'_{l+1} - i'_l) \geq H+1 - k \geq 0
    \] where the first inequality follows from adding only non-positive terms to the sum, the second equality follows from rearranging terms and the third inequality follows from $i'_1 = 0$, $j'_{k+1} = H+1$ and $i'_{l+1} = j'_l \pm 1$ for all $l\in [1,k]$. Therefore, \[
	    \sum_{\{1\leq l \leq k+1 ~:~ i'_l < j'_l\}} j'_l - i'_l -1 \geq H - 2k
    \] Using Equation \eqref{eqn:sum-bound} and $\mbound \leq 1$ with above display gives
    \[
	    \Abs{\prod_{l=1}^{k+1} M_{i'_l,j'_l}} \leq \frac{\gamma \pmax \mbound^{H - 2k}}{(1-\gamma)^{k+2}}
    \]

    This proves the bound. Now using the claim that \[\frac{\partial^k M_{0,H+1}}{\partial
    	\theta_{\bidx}} = \sum_{n = 1}^N c_n \zeta_n\] where
    $\abs{c_n} = \gamma^k$ and $N \leq 2^k k!$, we have shown that
    \begin{align*}
    \Abs{\frac{\partial^k M_{0,H+1}}{\partial \theta_{\bidx}}} \leq \frac{\pmax\, 2^k\, \gamma^{k+1} \, k!\,\mbound^{H-2k}}{(1-\gamma)^{k+2}},
\end{align*}
which completes the proof.
  \end{proof}

We are now ready to prove Proposition~\ref{proposition:small_grad}.

\begin{proof}[Proof of Proposition~\ref{proposition:small_grad}]
The $k_{th}$ order partial derivative of $V^{\pi_\theta}(s_0)$ is equal to
\begin{align*}
    \frac{\partial^k V^{\pi_\theta}(s_0)}{\partial \theta_{\idx_1}\ldots\partial \theta_{\idx_h}} &= \frac{\partial^k M^{\bp}_{0,H+1}}{\partial \theta_{\idx_1}\ldots\partial \theta_{\idx_k}}.
\end{align*}

Given vectors $u^1,\ldots,u^k$ which are unit vectors in $\R^{H^k}$ (we denote the unit sphere by $\S^{H^k}$), the norm of this gradient tensor is given by:

\begin{align*}
\norm{\nabla_\theta^k V^{\pi_\theta}(s_0)} &=
\max_{u^1,\ldots,u^k\in \S^{H^k}} \Abs{\sum_{\bidx \in [H]^k} \frac{\partial^k V^{\pi_\theta}(s_0)}{\partial \theta_{\idx_1}\ldots\partial \theta_{\idx_k}} u^1_{\idx_1}\ldots u^k_{\idx_k}}\\
&\leq \max_{u^1,\ldots,u^k\in \S^{H^k}} \sqrt{\sum_{\bidx \in [H]^k} \bigg(\frac{\partial^k V^{\pi_\theta}(s_0)}{\partial \theta_{\idx_1}\ldots\partial \theta_{\idx_k}}\bigg)^2}\sqrt{\sum_{\bidx \in [H]^k} \big(u^1_{\idx_1}\ldots u^k_{\idx_k}\big)^2}\\
&= \max_{u^1,\ldots,u^k\in \S^{H^k}} \sqrt{\sum_{\bidx \in [H]^k} \bigg(\frac{\partial^k V^{\pi_\theta}(s_0)}{\partial \theta_{\idx_1}\ldots\partial \theta_{\idx_k}}\bigg)^2}\sqrt{\prod_{i=1}^k \norm{u^i}_2^2}\\
&= \sqrt{\sum_{\bidx \in [H]^k} \bigg(\frac{\partial^k V^{\pi_\theta}(s_0)}{\partial \theta_{\idx_1}\ldots\partial \theta_{\idx_k}}\bigg)^2} = \sqrt{\sum_{\bidx \in [H]^k} \bigg(\frac{\partial^k M^{\bp}_{0,H+1}}{\partial \theta_{\idx_1}\ldots\partial \theta_{\idx_k}}\bigg)^2}\\
&\leq \sqrt{\frac{H^k\pmax^2\, 2^{2k}\, \gamma^{2k+2} \, (k!)^2\,\mbound^{2H-4k}}{(1-\gamma)^{2k+4}}},
\end{align*}
where the last inequality follows from Lemma~\ref{lemma:M-grad}. In
order to proceed further, we need an upper bound on the smallest
admissible value of $\mbound$. To do so, let us consider all possible
parameters $\bp$ such that $\pmax \leq 1/4$ in accordance with the
theorem statement. In order to bound $\mbound$, it suffices to place
an upper bound on the lower end of the range for $\mbound$ in
Lemma~\ref{lemma:Mbound} (note Lemma~\ref{lemma:Mbound} holds for any
choice of $\mbound$ in the range). Doing so, we see that
\begin{align*}
    \frac{1 - \sqrt{1 - 4\gamma^2\pmax(1-\pmin)}}{2\gamma(1-\pmin)} &\leq \frac{1 - 1 + 2\gamma\sqrt{\pmax(1-\pmin)}}{2\gamma(1-\pmin)}\\
    &= \sqrt{\frac{\pmax}{1-\pmin}} \leq \sqrt{\frac{4\pmax}{3}},
\end{align*}
where the first inequality uses $\sqrt{x-y} \geq \sqrt{x} - \sqrt{y}$, by triangle inequality while the last inequality uses $\pmin \leq \pmax \leq 1/4$.

Hence, we have the bound
\begin{align*}
\max_{u^1,\ldots,u^k\in \S^{H^k}} \Abs{\sum_{\bidx \in [H]^h} \frac{\partial^k V^{\pi_\theta}(s_0)}{\partial \theta_{\idx_1}\ldots\partial \theta_{\idx_k}} u^1_{\idx_1}\ldots u^k_{\idx_k}}
&\leq\sqrt{\frac{H^k\pmax^2\, 2^{2k}\, \gamma^{2k+2} \, (k!)^2\,(\tfrac{4\pmax}{3})^{H-2k}}{(1-\gamma)^{2k+4}}}\\
& \stackrel{(a)}{\leq} \sqrt{(H+1)^{2k+4}H^k\pmax^2\, 2^{2k}\, \gamma^{2k+2} \, (k!)^2\,(\tfrac{4\pmax}{3})^{H-2k}}\\
& \stackrel{(b)}{\leq} \sqrt{(2H)^{2k+4}H^k\, 2^{2k} \, (H)^{2k}\,(\tfrac{4\pmax}{3})^{H-2k}}\\
&= \sqrt{(2)^{4k+4} (H)^{5k+4} \,(\tfrac{4\pmax}{3})^{H-2k}}
\end{align*}
where $(a)$ uses $\gamma = H/(H+1)$, $(b)$ follows since $\pmax \leq 1$, $H, k \geq 1$, $\gamma \leq 1$ and $k \leq H$.

Requiring that the gradient norm be no larger than $(\tfrac{4\pmax}{3})^{H/4}$, we would like to satisfy
\[
(2)^{4k+4} (H)^{5k+4} \,(\frac{4\pmax}{3})^{H-2k} \leq (\frac{4\pmax}{3})^{H/2},
\]
for which it suffices to have
\[
k \leq k_0 := \frac{\tfrac{H}{2}\log(3/4\pmax) - \log (2^4 H^4)}{\log(2^4 H^5) + 2 \log (3/4\pmax)}.
\]
Since, \begin{align*}
		&\frac{\tfrac{H}{2}\log(3/4\pmax) - \log (2^4 H^4)}{\log(2^4 H^5) + 2 \log (3/4\pmax)}\\
		 &\stackrel{(a)}{\geq} \frac{\tfrac{H}{2}\log(3/4\pmax) - \log (2^4 H^4)}{2\log(2^4 H^5) 2 \log (3/4\pmax)}\\
		 &\geq \frac{H}{8\log(2^4 H^5)} - \frac{\log (2^4 H^4)}{4\log(2^4 H^5)\log (3/4\pmax)}\\
		 &\stackrel{(b)}{\geq} \frac{H}{8\log(2^4 H^5)} - \frac{\log (2^4 H^4)}{4\log(2^4 H^4)\log (3)} \\
		 &\geq \frac{H}{40\log(2 H)} - 1
	\end{align*} where (a) follows from $a+b \leq 2ab$ when $a,b \geq 1$, (b) follows from $H \geq 1$ and $\pmax \leq 1/4$.
Therefore, in order to obtain the smallest value of $k_0$ for all choices of $0 \leq \pmax < 1/4$, we further lower bound $k_0$ as

\[
k_0 \geq  \frac{H}{40\log(2 H)} - 1,
\]
Thus, the norm of the gradient is bounded by $(\tfrac{4\pmax}{3})^{H/4} \leq (1/3)^{H/4}$ for all $k \leq \frac{H}{40\log(2 H)} - 1$ as long as $\pmax \leq 1/4$, which gives the first part of the lemma.

For the second part, note that the optimal policy always chooses the action $a_1$, and gets a discounted reward of
\[\gamma^{H+2}/(1-\gamma) = (H+1)\left(1-\frac{1}{H+1}\right)^{H+2} \geq \frac{H+1}{8},\]
where the final inequality uses $(1-1/x)^x \geq 1/8$ for $x \geq 1$. On the other hand, the value of $\pi_\theta$ is upper bounded by
\begin{align*}
M_{0,H+1} &\leq \frac{\gamma \pmax\mbound^H}{(1-\gamma)^2}\leq \frac{\gamma \pmax}{(1-\gamma)^2}\left(\frac{4\pmax}{3}\right)^H\\
&\leq \frac{(H+1)^2}{3^H}.
\end{align*} 
This gives the second part of the lemma.
\end{proof}

\section{Proofs for Section \ref{section:softmax}}
\label{app:softmax}

We first give a useful lemma about the structure of policy gradients for the softmax parameterization. We use the notation $\Pr^\pi(\tau | s_0 = s)$ to denote the probability of observing a trajectory $\tau$ when starting in state $s$ and following the policy $\pi$ and $\Pr^\pi_\mu(\tau)$ be $\E_{s\sim \mu}[\Pr^\pi(\tau | s_0 = s)]$ for a distribution $\mu$ over states.

\begin{lemma}
	\label{lemma:softmax-grad}
	For the softmax policy class, we have:
	\[
	\frac{\partial V^{\pi_\theta}(\mu)}{\partial \theta_{s,a}} =
	\frac{1}{1-\gamma} d^{\pi_\theta}_{\mu}(s)\pi_\theta(a|s)A^{\pi_\theta}(s,a)
	\]
\end{lemma}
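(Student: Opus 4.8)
The plan is to specialize the general advantage-based policy gradient expression to the softmax parameterization. Since the softmax explicitly constrains $\pi_\theta(\cdot|s)$ to lie in $\Delta(\cA)$ for every $\theta$, the form of the policy gradient in Equation~\ref{eqn:Apg} applies directly, giving
\[
\frac{\partial V^{\pi_\theta}(\mu)}{\partial \theta_{s,a}} = \frac{1}{1-\gamma}\E_{s'\sim d^{\pi_\theta}_\mu}\E_{a'\sim\pi_\theta(\cdot|s')}\Bigl[\frac{\partial \log\pi_\theta(a'|s')}{\partial\theta_{s,a}}\,A^{\pi_\theta}(s',a')\Bigr].
\]
First I would compute the score $\partial\log\pi_\theta(a'|s')/\partial\theta_{s,a}$ directly from the definition $\log\pi_\theta(a'|s') = \theta_{s',a'} - \log\sum_{a''}\exp(\theta_{s',a''})$. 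Differentiating shows this vanishes unless $s'=s$, in which case it equals $\ind[a'=a]-\pi_\theta(a|s)$; that is,
\[
\frac{\partial\log\pi_\theta(a'|s')}{\partial\theta_{s,a}} = \ind[s'=s]\bigl(\ind[a'=a]-\pi_\theta(a|s)\bigr).
\]

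Next I would substitute this score into the expectation. The factor $\ind[s'=s]$ collapses the outer sum over states to the single term $s'=s$, leaving $\frac{1}{1-\gamma}d^{\pi_\theta}_\mu(s)\sum_{a'}\pi_\theta(a'|s)\bigl(\ind[a'=a]-\pi_\theta(a|s)\bigr)A^{\pi_\theta}(s,a')$. Splitting the inner sum into two pieces, the $\ind[a'=a]$ term contributes exactly $\pi_\theta(a|s)A^{\pi_\theta}(s,a)$, which is the desired expression.

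The only point requiring care --- and the ``obstacle,'' such as it is --- is showing the remaining cross term vanishes: $-\pi_\theta(a|s)\sum_{a'}\pi_\theta(a'|s)A^{\pi_\theta}(s,a') = 0$. This follows from the baseline identity $\E_{a'\sim\pi_\theta(\cdot|s)}[A^{\pi_\theta}(s,a')] = \E_{a'\sim\pi_\theta(\cdot|s)}[Q^{\pi_\theta}(s,a') - V^{\pi_\theta}(s)] = V^{\pi_\theta}(s) - V^{\pi_\theta}(s) = 0$. Collecting terms then yields $\partial V^{\pi_\theta}(\mu)/\partial\theta_{s,a} = \frac{1}{1-\gamma}d^{\pi_\theta}_\mu(s)\pi_\theta(a|s)A^{\pi_\theta}(s,a)$, completing the proof. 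Beyond this, the computation is entirely mechanical; the main thing to keep straight is the distinction between the fixed index $(s,a)$ against which we differentiate and the dummy summation variables $(s',a')$ over which the gradient expectation is taken.
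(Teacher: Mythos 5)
Your proof is correct and follows essentially the same route as the paper's: compute the softmax score $\partial\log\pi_\theta(a'|s')/\partial\theta_{s,a} = \ind[s'=s](\ind[a'=a]-\pi_\theta(a|s))$, substitute it into the advantage form of the policy gradient \eqref{eqn:Apg}, and kill the cross term via $\sum_{a'}\pi_\theta(a'|s)A^{\pi_\theta}(s,a')=0$. The only cosmetic difference is that the paper re-expands \eqref{eqn:Apg} as a trajectory expectation before collapsing it to the visitation-distribution form you start from, which changes nothing of substance.
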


\begin{proof}
	First note that
	\begin{align}
	\frac{\partial \log \pi_\theta(a|s)}{\partial \theta_{s',a'}} =
	\mathds{1} \Big[ s=s' \Big]\Big(\mathds{1} \Big[ a=a' \Big] - \pi_\theta(a'|s)\Big)
    \label{eqn:softmax-logpi-grad}
	\end{align}
	where $\mathds{1}|\mathcal{E}]$ is the indicator of $\mathcal{E}$ being
	true.

	Using this along with the policy gradient expression~\eqref{eqn:Apg},
	we have:
	\begin{align*}
	\frac{\partial V^{\pi_\theta}(\mu)}{\partial \theta_{s,a}} &= \E_{\tau \sim \Pr^{\pi_\theta}_{\mu}} \left[
	\sum_{t=0}^\infty \gamma ^t \mathds{1}[s_t=s] \Big(
	\mathds{1}[a_t=a] A^{\pi_\theta}(s,a)  - \pi_\theta(a|s) A^{\pi_\theta}(s_t,a_t)\Big)\right]\\
	&=      \E_{\tau \sim \Pr^{\pi_\theta}_{\mu}} \left[
	\sum_{t=0}^\infty \gamma ^t \mathds{1}[(s_t,a_t)=(s,a)] A^{\pi_\theta}(s,a) \right]
	\\&\qquad-\pi_\theta(a|s)
	\sum_{t=0}^\infty \gamma ^t \E_{\tau \sim \Pr^{\pi_\theta}_{\mu}} \left[ \mathds{1}[s_t=s] A^{\pi_\theta}(s_t,a_t)\right]\\
	&=
	\frac{1}{1-\gamma}\E_{s'\sim d^{\pi_\theta}_{\mu}}\E_{a'\sim \pi_\theta(\cdot|s) }\Big[\mathds{1}[(s',a')=(s,a)] A^{\pi_\theta}(s,a) \Big]
	-0 \\
	&=
	\frac{1}{1-\gamma} d^{\pi_\theta}_{\mu}(s)\pi_\theta(a|s)A^{\pi_\theta}(s,a) \, ,
	\end{align*}
	where the second to last step uses that for any policy $\sum_a \pi(a|s) A^\pi(s,a)=0$.
\end{proof}

\subsection{Proofs for Section \ref{sec:asymptotic}}
\label{app:gd-softmax}

We now prove Theorem~\ref{thm:glb-softmax}, i.e. we show that for the
updates given by
\begin{equation}
	\theta^{(t+1)} = \theta^{(t)} + \eta \nabla V^{(t)}(\mu),
	\label{eqn:upd-softmax-ms}
\end{equation}
policy gradient converges to optimal policy for the softmax
parameterization.

We prove this theorem by first proving a series of supporting
lemmas. First, we show in  Lemma \ref{lem:q-improv}, that $V^{(t)}(s)$
is monotonically increasing for all states $s$ using the fact that for
appropriately chosen stepsizes GD makes monotonic improvement for smooth
objectives.

\begin{lemma}[Monotonic Improvement in $V^{(t)}(s)$]
  \label{lem:q-improv}
  For all states $s$ and actions $a$, for updates
  \eqref{eqn:upd-softmax-ms} with learning rate $\eta \leq
  \frac{(1-\gamma)^2}{5}$, we have
  \[
    V^{(t+1)}(s) \geq
    V^{(t)}(s);\quad Q^{(t+1)}(s,a) \geq Q^{(t)}(s,a).
  \]
\end{lemma}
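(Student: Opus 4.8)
The plan is to reduce the global, per-state monotonicity to a one-step improvement statement at each state separately, and to prove that local statement by recognizing the softmax gradient step as ordinary gradient ascent on a simple per-state objective. First I would fix an iteration $t$ and, for each state $s$, define the frozen per-state objective $h_s(\theta_{s,\cdot}) := \sum_a \pi_\theta(a\mid s)\, A^{(t)}(s,a)$, where the advantages $A^{(t)}(s,a)$ are held at their iteration-$t$ values and only the logits $\theta_{s,\cdot}$ are variables. Using the softmax gradient formula (Lemma~\ref{lemma:softmax-grad}, equivalently \eqref{eq:softmax_grad}) together with $\sum_a \pi^{(t)}(a\mid s) A^{(t)}(s,a)=0$, one checks that $\nabla_{\theta_{s,a}} h_s\big|_{\theta^{(t)}} = \pi^{(t)}(a\mid s) A^{(t)}(s,a)$, so the update $\theta^{(t+1)}_{s,a} = \theta^{(t)}_{s,a} + \tfrac{\eta}{1-\gamma} d^{(t)}_\mu(s)\, \pi^{(t)}(a\mid s) A^{(t)}(s,a)$ is exactly a gradient-ascent step on $h_s$ with step size $c_s := \tfrac{\eta}{1-\gamma} d^{(t)}_\mu(s)$. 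Since $d^{(t)}_\mu$ is a distribution, $0 \le c_s \le \tfrac{\eta}{1-\gamma}$.

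Next I would establish that $h_s$ is $\beta_h$-smooth in $\theta_{s,\cdot}$ with $\beta_h \le \tfrac{6}{1-\gamma}$. This is the one genuine computation: differentiating $\nabla_{\theta_{s,a}} h_s = \pi_\theta(a\mid s)\big(A^{(t)}(s,a) - \bar A\big)$, with $\bar A = \sum_{a'} \pi_\theta(a'\mid s)A^{(t)}(s,a')$, and bounding the resulting Hessian in operator norm by $3\|A^{(t)}(s,\cdot) - \bar A\|_\infty \le 6\|A^{(t)}(s,\cdot)\|_\infty$, then using $|A^{(t)}(s,a)| \le \tfrac{1}{1-\gamma}$ (which holds since $r\in[0,1]$ forces $V,Q\in[0,\tfrac{1}{1-\gamma}]$). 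With $\eta \le \tfrac{(1-\gamma)^2}{5}$ we get $c_s \le \tfrac{1-\gamma}{5} \le \tfrac{2}{\beta_h}$, so the standard ascent lemma for smooth functions gives $h_s(\theta^{(t+1)}_{s,\cdot}) \ge h_s(\theta^{(t)}_{s,\cdot}) + c_s\big(1 - \tfrac{\beta_h c_s}{2}\big)\|\nabla h_s(\theta^{(t)}_{s,\cdot})\|_2^2 \ge h_s(\theta^{(t)}_{s,\cdot}) = 0$. Thus $\sum_a \pi^{(t+1)}(a\mid s) A^{(t)}(s,a) \ge 0$ at every state $s$.

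Finally I would convert this uniform advantage nonnegativity into value monotonicity using the performance difference lemma (Lemma~\ref{lemma:perf_diff}): for any state $s$,
\[
V^{(t+1)}(s) - V^{(t)}(s) = \frac{1}{1-\gamma}\,\E_{s'\sim d^{\pi^{(t+1)}}_s}\Big[\sum_a \pi^{(t+1)}(a\mid s')A^{(t)}(s',a)\Big] \ge 0,
\]
since each summand equals $h_{s'}(\theta^{(t+1)}_{s',\cdot})\ge 0$ and $d^{\pi^{(t+1)}}_s$ is a nonnegative measure. The claim $Q^{(t+1)}(s,a)\ge Q^{(t)}(s,a)$ then follows immediately from $Q^{(t)}(s,a) = r(s,a) + \gamma \sum_{s'} P(s'\mid s,a) V^{(t)}(s')$ and the just-proved pointwise monotonicity of $V^{(t)}$.

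The hard part is conceptual rather than computational: noticing that although the gradient step is taken on the single scalar objective $V^{(t)}(\mu)$, its effect decouples across states into independent gradient-ascent steps on the per-state expected-advantage functions $h_s$, which is precisely what lets us certify improvement simultaneously at \emph{every} state (not just on average under $\mu$). The only care required afterward is the smoothness constant of $h_s$ and the bookkeeping that $c_s \le \eta/(1-\gamma)$ together with the step-size hypothesis keeps us in the regime where the ascent lemma yields a genuine (non-strict) increase; note in particular that this argument never uses strict positivity of $\mu$, consistent with that hypothesis being needed only for the later asymptotic convergence claim.
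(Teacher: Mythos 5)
Your proof is correct and follows essentially the same route as the paper's: both recognize the softmax gradient step as decoupled per-state gradient ascent on the frozen-advantage objective $\sum_a \pi_\theta(a|s)A^{(t)}(s,a)$, certify one-step improvement via smoothness of that objective, and convert simultaneous nonnegativity of the post-update expected advantages into pointwise value (and hence $Q$) monotonicity through the performance difference lemma. The only difference is cosmetic: you derive a smoothness constant of $6/(1-\gamma)$ (versus the paper's $5/(1-\gamma)$ from its Lemma on smoothness of $F$) and compensate by invoking the ascent lemma with the weaker threshold $c_s \leq 2/\beta_h$ rather than $c_s \leq 1/\beta$, which still holds under the stated step size.
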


\begin{proof}
The proof will consist of showing that:
\begin{equation}\label{eqn:F_monotone}
  \sum_{a\in \Acal} \pi^{(t+1)}(a|s) A^{(t)}(s,a) \geq \sum_{a\in \Acal} \pi^{(t)}(a|s) A^{(t)}(s,a) = 0.
\end{equation}
holds for all states $s$. To see this,
observe that since the above holds for all states $s'$, the performance difference
lemma (Lemma \ref{lemma:perf_diff}) implies
\[
  V^{(t+1)}(s) - V^{(t)}(s)
  = \frac{1}{1-\gamma}\E_{s'\sim d_{s}^{\pi^{(t+1)}} }\E_{a\sim \pi^{(t+1)}(\cdot|s') }
  \left[A^{{(t)}}(s',a)\right] \geq 0,
\]
which would complete the proof.

Let us use the notation $\theta_s\in \R^{|\Acal|}$ to refer to the vector of
$\theta_{s,\cdot}$ for some fixed state $s$. Define the function
\begin{equation}\label{eq:F}
F_s(\theta_s) := \sum_{a\in \Acal} \pi_{\theta_s}(a|s) c(s,a)
\end{equation}
where $c(s,a)$ is constant, which we later set to be
$A^{(t)}(s,a)$; note we do not treat $c(s,a)$ as a function of $\theta$.
Thus,
	\begin{align*}
		&\frac{\partial F_s(\theta_s)}{\partial \theta_{s,a}}\Big\vert_{\theta_s^{(t)}}  = \sum_{a'\in \Acal} \frac{\partial \pi_{\theta_s}(a'|s)}{\partial \theta_{s,a}}\Big\vert_{\theta_s^{(t)}} c(s,a')\\
		&= \pi^{(t)}(a|s) (1- \pi^{(t)}(a|s)) c(s,a)  - \sum_{a'\neq a} \pi^{(t)}(a|s) \pi^{(t)}(a'|s)  c(s,a')\\
		&= \pi^{(t)}(a|s)\left(c(s,a) - \sum_{a'\in \Acal} \pi^{(t)}(a'|s) c(s,a')\right)
	\end{align*}
Taking $c(s,a)$ to be
$A^{(t)}(s,a)$ implies $\sum_{a'\in \Acal} \pi^{(t)}(a'|s) c(s,a') =
\sum_{a'\in \Acal} \pi^{(t)}(a'|s) A^{(t)}(s,a') =
0$,
\begin{equation}\label{eqn:grad-f}
\frac{\partial F_s(\theta_s)}{\partial \theta_{s,a}}\Big\vert_{\theta_s^{(t)}} = \pi^{(t)}(a|s) A^{(t)}(s,a)
\end{equation}
Observe that for the softmax parameterization,
\[
\theta_s^{(t+1)} = \theta_s^{(t)} + \eta \nabla_s V^{(t)}(\mu)
\]
where $\nabla_s$ is gradient w.r.t. $\theta_s$ and from Lemma
\ref{lemma:softmax-grad} that:
\[
\frac{\partial V^{(t)}(\mu)}{\partial \theta_{s,a}} =
\frac{1}{1-\gamma} d^{\pi^{(t)}}_{\mu}(s)\pi^{(t)}(a|s)A^{(t)}(s,a)
\]
This gives using Equation \eqref{eqn:grad-f}
\[
\theta_s^{(t+1)} = \theta_s^{(t)} + \eta \frac{1}{1-\gamma} d^{\pi^{(t)}}_{\mu}(s) \nabla_s F_s(\theta_s)\Big\vert_{\theta_s^{(t)}}
\]
Recall that for a $\beta$ smooth function, gradient ascent will
decrease the function value provided that $\eta \leq 1/\beta$
(Theorem \ref{thm:beck}).
Because $F_s(\theta_s)$ is $\beta$-smooth for $\beta =
\frac{5}{1-\gamma}$ (Lemma \ref{lemma:lipschitz-theta} and $\Abs{A^{(t)}(s,a)} \leq \frac{1}{1-\gamma}$), then our
assumption that
\[\eta \leq
  \frac{(1-\gamma)^2}{5}= (1-\gamma) \beta^{-1} \]
implies that $\eta \frac{1}{1-\gamma} d^{\pi^{(t)}}_{\mu}(s) \leq 1/\beta$,
and so we have
\[
  F_s(\theta^{(t+1)}_s) \geq F_s(\theta^{(t)}_s)
\]
which implies \eqref{eqn:F_monotone}.
\end{proof}

Next, we show the limit for iterates $V^{(t)}(s)$ and $Q^{(t)}(s,a)$
exists for all states $s$ and actions $a$.

\begin{lemma} \label{lem:delta}
For all states $s$ and actions $a$,  there exists values
 $V^{(\infty)}(s)$ and $Q^{(\infty)}(s,a)$ such that as $t \to \infty$,
$V^{(t)}(s) \rightarrow V^{(\infty)}(s)$ and
$Q^{(t)}(s,a) \rightarrow Q^{(\infty)}(s,a)$.
Define
\[
  \Delta = \min_{\{s,a | A^{(\infty)}(s,a) \neq 0\}} |A^{(\infty)}(s,a)| \]
where $A^{(\infty)}(s,a) =Q^{(\infty)}(s,a) - V^{(\infty)}(s)$.
Furthermore, there
exists a $T_0$ such that for all $t>T_0$, $s\in \Scal$, and
$a\in\Acal$, we have
\begin{equation}
  \label{eqn:delta-close}
  Q^{(t)}(s,a) \geq Q^{(\infty)}(s,a) - \Delta /4
\end{equation}
\end{lemma}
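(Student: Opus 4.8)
The plan is to establish the existence of the limits via the monotone convergence theorem, and then to deduce the uniform tail bound from the finiteness of the state-action space. First I would invoke Lemma~\ref{lem:q-improv}, which guarantees that for the stepsize under consideration the sequences $V^{(t)}(s)$ and $Q^{(t)}(s,a)$ are monotonically nondecreasing in $t$, pointwise in $s$ and in $(s,a)$. Since the rewards lie in $[0,1]$, every value and every $Q$-value is bounded above by $1/(1-\gamma)$ (as noted in Section~\ref{section:setting}). A bounded monotone sequence converges, so the monotone convergence theorem yields, for each fixed $s$ and each fixed $(s,a)$, the existence of limits $V^{(\infty)}(s) := \lim_{t\to\infty} V^{(t)}(s)$ and $Q^{(\infty)}(s,a) := \lim_{t\to\infty} Q^{(t)}(s,a)$. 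This immediately makes $A^{(\infty)}(s,a) = Q^{(\infty)}(s,a) - V^{(\infty)}(s)$ well defined.

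Next I would define $\Delta$ and argue that it is strictly positive. Because $\Scal$ and $\Acal$ are finite, the set $\{(s,a) : A^{(\infty)}(s,a) \neq 0\}$ is finite, so $\Delta$ is a minimum of finitely many strictly positive numbers and hence $\Delta > 0$ (if the set is empty, then every limiting advantage is zero, in which case the desired tail inequality holds trivially for any threshold and $\Delta$ may be set arbitrarily). For the tail bound, the key observation is that over a finite index set pointwise convergence is automatically uniform. Concretely, for each pair $(s,a)$ the convergence $Q^{(t)}(s,a) \to Q^{(\infty)}(s,a)$ provides some $T_{s,a}$ with $Q^{(\infty)}(s,a) - Q^{(t)}(s,a) \le \Delta/4$ for all $t > T_{s,a}$; monotonicity in fact guarantees $Q^{(t)}(s,a) \le Q^{(\infty)}(s,a)$, so this is simply an upper bound on the (nonnegative) gap. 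Taking $T_0 := \max_{s,a} T_{s,a}$, a finite maximum over $\Scal\times\Acal$, then yields the claimed inequality $Q^{(t)}(s,a) \ge Q^{(\infty)}(s,a) - \Delta/4$ simultaneously for all $(s,a)$ and all $t > T_0$.

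There is no substantive obstacle here; the lemma is essentially a packaging of monotone convergence together with the finiteness of the state-action space. The only points requiring care are the degenerate case in which no limiting advantage is nonzero (handled above by the convention on the empty minimum) and the order of quantifiers in the uniform tail bound, which is resolved precisely because the maximum over $\Scal \times \Acal$ is finite. The factor $\Delta/4$ is deliberately chosen to leave a buffer strictly smaller than the smallest nonzero limiting advantage; this buffer is what will be exploited in the subsequent contradiction argument completing the proof of Theorem~\ref{thm:glb-softmax}, where the uniform approximation of $Q^{(\infty)}$ by $Q^{(t)}$ past time $T_0$ is the crucial input.
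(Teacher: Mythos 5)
Your proposal is correct and follows essentially the same route as the paper's proof: monotonicity from Lemma~\ref{lem:q-improv} plus the bound $Q^{(t)}(s,a) \leq \frac{1}{1-\gamma}$ give the limits via monotone convergence, and finiteness of $\Scal \times \Acal$ upgrades pointwise convergence to the uniform threshold $T_0$. You actually spell out two details the paper leaves implicit — the strict positivity of $\Delta$ and the degenerate case where every limiting advantage vanishes — which is a harmless (and welcome) refinement of the same argument.
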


\begin{proof}
  Observe that $Q^{(t+1)}(s,a) \geq Q^{(t)}(s,a)$ (by Lemma
  \ref{lem:q-improv}) and $Q^{(t)}(s,a) \leq \frac{1}{1-\gamma}$,
  therefore by monotone convergence theorem, $Q^{(t)}(s,a) \rightarrow
  Q^{(\infty)}(s,a)$ for some constant $Q^{(\infty)}(s,a)$. Similarly
  it follows that $V^{(t)}(s) \rightarrow V^{(\infty)}(s)$ for some
  constant $V^{(\infty)}(s)$. Due to the limits existing, this implies we can
  choose $T_0$, such that the result \eqref{eqn:delta-close}
  follows.
\end{proof}
\\
\\
Based on the limits $V^{(\infty)}(s)$ and $Q^{(\infty)}(s,a)$, define following sets:
\begin{eqnarray*}
  I^s_{0} &:=&\{a | Q^{(\infty)}(s,a) = V^{(\infty)}(s)\}\\
  I^s_{+} &:=&\{a | Q^{(\infty)}(s,a) > V^{(\infty)}(s)\}\\
  I^s_{-} &:=&\{a | Q^{(\infty)}(s,a) < V^{(\infty)}(s)\} \, .
\end{eqnarray*}
In the following lemmas \ref{lemma:zero-pi}- \ref{lem:ratios}, we
first show that probabilities $\pi^{(t)}(a|s) \to 0$ for actions $a
\in I^s_{+} \cup I^s_{-}$ as $t\to \infty$.
We then show that for actions $a \in
I^s_{-}$, $\lim_{t\to \infty} \theta^{(t)}_{s,a} = -\infty$ and for
all actions $a \in I^s_{+}$, $\theta^{(t)}(a|s)$ is bounded from below
as $t\rightarrow \infty$.

\begin{lemma}
	\label{lemma:constant-sign}
	We have that there
	exists a $T_1$ such that for all $t>T_1$, $s\in \Scal$, and
	$a\in\Acal$, we have
	\begin{equation}
          A^{(t)}(s,a) < - \frac{\Delta}{4} \text{ for } a\in I^s_{-}; \quad
          A^{(t)}(s,a) > \frac{\Delta}{4} \text{ for } a\in I^s_{+}
	\end{equation}
\end{lemma}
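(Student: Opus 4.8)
The plan is to observe that, by the very definition of $\Delta$, the limiting advantages are bounded away from zero on $I^s_+$ and $I^s_-$, and then to transfer this separation to the finite-$t$ advantages by appealing to the convergence already established in Lemma~\ref{lem:delta}.

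First I would record the key consequence of the definition of $\Delta$. Since $\Delta = \min_{\{s,a\,:\,A^{(\infty)}(s,a)\neq 0\}} |A^{(\infty)}(s,a)|$ is a minimum of finitely many strictly positive numbers, we have $\Delta > 0$ (if the index set is empty, then $I^s_+ = I^s_- = \emptyset$ for every $s$ and the claim is vacuous). Moreover, for every $a \in I^s_+$ we have $A^{(\infty)}(s,a) > 0$ and hence $A^{(\infty)}(s,a) \geq \Delta$; symmetrically, for $a \in I^s_-$ we have $A^{(\infty)}(s,a) \leq -\Delta$.

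Next I would transfer this to finite $t$. By Lemma~\ref{lem:delta}, both $Q^{(t)}(s,a)\to Q^{(\infty)}(s,a)$ and $V^{(t)}(s)\to V^{(\infty)}(s)$, so $A^{(t)}(s,a) = Q^{(t)}(s,a) - V^{(t)}(s) \to A^{(\infty)}(s,a)$ for each $(s,a)$. Because $\Scal$ and $\Acal$ are finite, I can pick a single $T_1$ such that for all $t > T_1$ and all $(s,a)$ simultaneously, $|Q^{(t)}(s,a) - Q^{(\infty)}(s,a)| \leq \Delta/4$ and $|V^{(t)}(s) - V^{(\infty)}(s)| \leq \Delta/4$ (this is the same monotone-convergence argument used to derive \eqref{eqn:delta-close} via Lemma~\ref{lem:q-improv}; one simply takes $T_1$ to be the maximum, over the finitely many state-action pairs, of the thresholds guaranteeing these two bounds). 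Consequently $|A^{(t)}(s,a) - A^{(\infty)}(s,a)| \leq \Delta/2$ for all $t > T_1$.

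Finally I would combine these. For $a \in I^s_+$ and $t > T_1$, $A^{(t)}(s,a) \geq A^{(\infty)}(s,a) - \Delta/2 \geq \Delta - \Delta/2 = \Delta/2 > \Delta/4$, while for $a \in I^s_-$, $A^{(t)}(s,a) \leq A^{(\infty)}(s,a) + \Delta/2 \leq -\Delta + \Delta/2 = -\Delta/2 < -\Delta/4$, which is exactly the claimed pair of inequalities. There is no genuine obstacle here: the statement is essentially a continuity/limit argument, and the only points needing a moment's care are verifying $\Delta > 0$ (a finite minimum of nonzero quantities) and securing a \emph{uniform} $T_1$ over the finite state-action space. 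The substantive content, namely the existence of the limits and the monotonicity, has already been supplied by Lemmas~\ref{lem:q-improv} and~\ref{lem:delta}.
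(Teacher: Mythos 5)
Your proof is correct and takes essentially the same route as the paper: both arguments transfer the $\Delta$-separation of the limiting advantages $A^{(\infty)}$ to finite $t$ by combining the definition of $\Delta$ with the convergence established in Lemma~\ref{lem:delta}. The only cosmetic difference is that the paper exploits monotonicity ($Q^{(t)}(s,a) \leq Q^{(\infty)}(s,a)$ and $V^{(t)}(s) \leq V^{(\infty)}(s)$ from Lemma~\ref{lem:q-improv}) to get one-sided error bounds with a $3\Delta/4$ margin, whereas you use symmetric two-sided $\Delta/4$ bounds on $Q$ and $V$ to get a $\Delta/2$ margin; both comfortably exceed the required $\Delta/4$.
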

\begin{proof}
	 Since, $V^{(t)}(s) \to V^{(\infty)}(s)$, we have that there exists $T_1>T_0$ such that for all $t> T_1$,\begin{equation}
		 \label{eqn:limit-v}
	 	V^{(t)}(s)>V^{(\infty)}(s) - \frac{\Delta}{4}.
	 \end{equation} Using Equation \eqref{eqn:delta-close}, it follows that for $t>T_1>T_0$, for $a \in I^s_{-}$
	 \begin{align*}
	 	A^{(t)}(s,a) &= Q^{(t)}(s,a) -V^{(t)}(s)\\
	 	&\leq Q^{(\infty)}(s,a) -V^{(t)}(s)\\
	 	&\leq Q^{(\infty)}(s,a) -V^{(\infty)}(s) + \Delta/4 \tag{Equation \eqref{eqn:limit-v}}\\
	 	&\leq -\Delta  + \Delta/4\tag{definition of $I^s_{-}$ and Lemma \ref{lem:delta}}\\
	 	&< -\Delta/4
	 \end{align*}
	 Similarly $A^{(t)}(s,a) = Q^{(t)}(s,a) -V^{(t)}(s) > \Delta/4 $ for $a \in I^s_{+}$ as
	 \begin{align*}
	 A^{(t)}(s,a) &= Q^{(t)}(s,a) -V^{(t)}(s)\\
	 &\geq Q^{(\infty)}(s,a) - \Delta/4 -V^{(t)}(s) \tag{Lemma \ref{lem:delta}}\\
	 &\geq Q^{(\infty)}(s,a) -V^{(\infty)}(s) - \Delta/4 \tag{$V^{(\infty)}(s) \geq V^{(t)}(s)$ from Lemma \ref{lem:q-improv}}\\
	 &\geq \Delta  - \Delta/4\\
	 &> \Delta/4
	 \end{align*}
which completes the proof.
\end{proof}

\begin{lemma}
	\label{lemma:zero-pi}
  $\frac{\partial V^{(t)}(\mu)}{\partial \theta_{s,a}} \rightarrow 0$ as $t\to
  \infty$ for all states $s$ and actions $a$. This implies that for $a\in
  I^s_{+}\cup I^s_{-}$, $\pi^{(t)}(a|s)\rightarrow 0$ and that $\sum_{a\in I^s_{0}}\pi^{(t)}(a|s)\rightarrow 1$.
\end{lemma}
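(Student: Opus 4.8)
The plan is to prove the two assertions in sequence: first establish that the full gradient $\nabla_\theta V^{(t)}(\mu)$ vanishes in the limit, and then read off the consequences for the action probabilities directly from the softmax gradient identity of Lemma~\ref{lemma:softmax-grad}. For the vanishing of the gradient, the two ingredients I would use are that $V^{\pi_\theta}(\mu)$ is $\beta$-smooth in $\theta$ (as established earlier, with our stepsize $\eta \le (1-\gamma)^3/8$ satisfying $\eta \le 1/\beta$, which in turn implies the condition $\eta \le (1-\gamma)^2/5$ needed to invoke Lemma~\ref{lem:q-improv}), and that by Lemma~\ref{lem:q-improv} the sequence $V^{(t)}(\mu)$ is monotonically nondecreasing; being bounded above by $1/(1-\gamma)$, it converges to a finite limit. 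The standard ascent lemma for $\beta$-smooth functions then gives, for each $t$,
\[
V^{(t+1)}(\mu) - V^{(t)}(\mu) \ge \eta\Big(1 - \tfrac{\beta\eta}{2}\Big)\big\|\nabla_\theta V^{(t)}(\mu)\big\|_2^2 \ge \tfrac{\eta}{2}\big\|\nabla_\theta V^{(t)}(\mu)\big\|_2^2.
\]
Summing over $t$, the left-hand side telescopes to a finite quantity, so $\sum_{t=0}^\infty \|\nabla_\theta V^{(t)}(\mu)\|_2^2 < \infty$, whence $\|\nabla_\theta V^{(t)}(\mu)\|_2 \to 0$ and in particular every coordinate $\partial V^{(t)}(\mu)/\partial \theta_{s,a} \to 0$.

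For the second assertion, I would combine the softmax gradient identity (Lemma~\ref{lemma:softmax-grad}),
\[
\frac{\partial V^{(t)}(\mu)}{\partial \theta_{s,a}} = \frac{1}{1-\gamma}\, d^{\pi^{(t)}}_{\mu}(s)\,\pi^{(t)}(a|s)\,A^{(t)}(s,a),
\]
with the uniform lower bound $d^{\pi^{(t)}}_{\mu}(s) \ge (1-\gamma)\mu(s)$ coming from the definition of the discounted visitation measure in~\eqref{eqn:dpi}; this is exactly where strict positivity of $\mu$ enters, since it guarantees $\mu(s) > 0$. Fixing a state $s$ and an action $a \in I^s_{+} \cup I^s_{-}$, Lemma~\ref{lemma:constant-sign} provides, for all $t > T_1$, the bound $|A^{(t)}(s,a)| > \Delta/4$, so that
\[
\Big|\frac{\partial V^{(t)}(\mu)}{\partial \theta_{s,a}}\Big| \ge \frac{1}{1-\gamma}\,(1-\gamma)\mu(s)\,\pi^{(t)}(a|s)\,\frac{\Delta}{4} = \frac{\mu(s)\,\Delta}{4}\,\pi^{(t)}(a|s).
\]
Since the left-hand side tends to $0$ while the constant $\mu(s)\Delta/4$ is strictly positive and independent of $t$, this forces $\pi^{(t)}(a|s) \to 0$ for every $a \in I^s_{+} \cup I^s_{-}$. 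Finally, because $I^s_{0}, I^s_{+}, I^s_{-}$ partition $\Acal$ and $\sum_{a\in\Acal}\pi^{(t)}(a|s) = 1$, I conclude $\sum_{a\in I^s_{0}}\pi^{(t)}(a|s) = 1 - \sum_{a \in I^s_{+}\cup I^s_{-}}\pi^{(t)}(a|s) \to 1$.

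The telescoping ascent argument is entirely routine; the only points requiring care are that the lower bounds on $d^{\pi^{(t)}}_{\mu}(s)$ and on $|A^{(t)}(s,a)|$ be uniform in $t$, which is precisely what strict positivity of $\mu$ and Lemma~\ref{lemma:constant-sign} supply. There is no genuine obstacle in this particular lemma — its role is to transfer the convergence of the gradient into statements about the probability mass on the suboptimal and strictly-improving action sets, while the real difficulty of the asymptotic convergence theorem (showing $I^s_{+} = \emptyset$) is deferred to the subsequent lemmas.
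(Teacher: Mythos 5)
Your proof is correct and follows essentially the same route as the paper: gradient convergence from smoothness of $V^{\pi_\theta}(\mu)$, then the softmax gradient identity of Lemma~\ref{lemma:softmax-grad} combined with $d^{\pi^{(t)}}_{\mu}(s)\geq(1-\gamma)\mu(s)>0$ and the uniform advantage bound of Lemma~\ref{lemma:constant-sign} to force $\pi^{(t)}(a|s)\to 0$, and finally the partition of $\Acal$ for the last claim. The only difference is that you inline the telescoping ascent argument where the paper simply cites the standard optimization result (Theorem~\ref{thm:beck}); incidentally, your statement of the visitation-measure lower bound is the correct one (the paper's proof contains a typo writing $d^{\pi^{(t)}}_{\mu}(s)\geq \mu(s)/(1-\gamma)$).
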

\begin{proof}
  Because $V^{\pi_\theta}(\mu)$ is smooth (Lemma \ref{lemma:smooth-loss-lambda}) as a function of $\theta$, it follows from standard
  optimization results (Theorem \ref{thm:beck}) that $\frac{\partial V^{(t)}(\mu)}{\partial \theta_{s,a}}
  \rightarrow 0$ for all states $s$ and actions $a$. We have from Lemma
  \ref{lemma:softmax-grad}
  \[
    \frac{\partial V^{(t)}(\mu)}{\partial \theta_{s,a}}
    =\frac{1}{1-\gamma}d^{\pi^{(t)}}_{\mu}(s)\pi^{(t)}(a|s)
    A^{(t)}(s,a).
  \]
  Since, $|A^{(t)}(s,a)|> \frac{\Delta}{4}$ for all $t>T_1$ (from
  Lemma \ref{lemma:constant-sign}) for all $a\in I^s_{-} \cup I^s_{+}$
  and $d^{\pi^{(t)}}_{\mu}(s)\geq \frac{\mu(s)}{1-\gamma}> 0$ (using
  the strict positivity of $\mu$ in our assumption in Theorem~\ref{thm:glb-softmax}), we have
  $\pi^{(t)}(a|s)\to 0$.
\end{proof}

\begin{lemma}
  \label{lem:monotone}
(Monotonicity in $\theta^{(t)}_{s,a}$). For all $a\in I^s_{+}$, $\theta^{(t)}_{s,a}$ is
strictly increasing for $t\geq T_1$. For all $a\in I^s_{-}$, $\theta^{(t)}_{s,a}$ is
strictly decreasing for $t\geq T_1$.
\end{lemma}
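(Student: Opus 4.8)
The plan is to read off the sign of the per-step increment $\theta^{(t+1)}_{s,a} - \theta^{(t)}_{s,a}$ directly from the closed form of the softmax gradient, and then conclude monotonicity by combining the positivity of the various factors with the sign of the advantage established in Lemma~\ref{lemma:constant-sign}. Concretely, the gradient ascent update~\eqref{eqn:upd-softmax-ms} gives
\[
\theta^{(t+1)}_{s,a} - \theta^{(t)}_{s,a} = \eta \,\frac{\partial V^{(t)}(\mu)}{\partial \theta_{s,a}} = \frac{\eta}{1-\gamma}\, d^{\pi^{(t)}}_{\mu}(s)\,\pi^{(t)}(a|s)\,A^{(t)}(s,a),
\]
where the second equality is exactly the softmax gradient expression of Lemma~\ref{lemma:softmax-grad}. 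The whole argument then reduces to sign-chasing on the right-hand side.

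First I would record that every factor multiplying $A^{(t)}(s,a)$ is \emph{strictly} positive. The stepsize satisfies $\eta>0$; the visitation weight obeys $d^{\pi^{(t)}}_{\mu}(s)\geq (1-\gamma)\mu(s)>0$ by~\eqref{eqn:dpi} together with the strict positivity assumption $\mu(s)>0$ from Theorem~\ref{thm:glb-softmax}; and $\pi^{(t)}(a|s)>0$ since softmax probabilities are always positive (the exponential is never zero). Consequently, for every $t$ the sign of the increment $\theta^{(t+1)}_{s,a}-\theta^{(t)}_{s,a}$ equals the sign of $A^{(t)}(s,a)$.

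It then remains to invoke Lemma~\ref{lemma:constant-sign}, which guarantees that for all $t>T_1$ we have $A^{(t)}(s,a)>\Delta/4>0$ whenever $a\in I^s_{+}$, and $A^{(t)}(s,a)<-\Delta/4<0$ whenever $a\in I^s_{-}$. Plugging these into the displayed increment shows $\theta^{(t+1)}_{s,a}-\theta^{(t)}_{s,a}>0$ for $a\in I^s_{+}$ (strict increase) and $\theta^{(t+1)}_{s,a}-\theta^{(t)}_{s,a}<0$ for $a\in I^s_{-}$ (strict decrease), for all $t\geq T_1$, which is precisely the claim.

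There is essentially no hard step here: the lemma is a one-line consequence of the gradient formula once the sign of the advantage has been pinned down. The only point requiring care is making sure each positivity claim is genuinely justified rather than assumed---in particular the use of $\mu(s)>0$ to guarantee $d^{\pi^{(t)}}_{\mu}(s)>0$, which is where the strict-positivity hypothesis on the optimization distribution (flagged in Remark~\ref{remark:strict}) is actually consumed. I would state $\eta>0$, $\pi^{(t)}(a|s)>0$, and $d^{\pi^{(t)}}_{\mu}(s)>0$ explicitly before reading off the sign, so that the conclusion follows cleanly from Lemma~\ref{lemma:constant-sign}.
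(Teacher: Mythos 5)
Your proposal is correct and follows essentially the same argument as the paper: read off the sign of $\theta^{(t+1)}_{s,a}-\theta^{(t)}_{s,a}$ from the softmax gradient expression of Lemma~\ref{lemma:softmax-grad}, use strict positivity of $\eta$, $d^{\pi^{(t)}}_{\mu}(s)$, and $\pi^{(t)}(a|s)$, and conclude via the advantage sign bounds of Lemma~\ref{lemma:constant-sign}. Your explicit justification $d^{\pi^{(t)}}_{\mu}(s)\geq(1-\gamma)\mu(s)>0$ is a slightly more careful rendering of a positivity claim the paper states without elaboration, but the proof is the same.
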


\begin{proof}
We have from Lemma
\ref{lemma:softmax-grad}
\[
\frac{\partial V^{(t)}(\mu)}{\partial \theta_{s,a}}
=\frac{1}{1-\gamma}d^{\pi^{(t)}}_{\mu}(s)\pi^{(t)}(a|s)A^{(t)}(s,a)
\]
From Lemma \ref{lemma:constant-sign}, we have for all $t>T_1$
\[A^{(t)}(s,a) > 0 \text{ for } a\in I^s_{+}; \quad A^{(t)}(s,a) < 0
  \text{ for } a\in I^s_{-}\]
Since $d^{\pi^{(t)}}_{\mu}(s)>0$ and
$\pi^{(t)}(a|s)>0$ for the softmax parameterization, we have for all
$t>T_1$
\[\frac{\partial V^{(t)}(\mu)}{\partial \theta_{s,a}} > 0 \text{ for }
  a\in I^s_{+}; \quad \frac{\partial V^{(t)}(\mu)}{\partial
    \theta_{s,a}} < 0 \text{ for } a\in I^s_{-}\] This implies for all
$a\in I^s_{+}$,
$\theta^{(t+1)}_{s,a} - \theta^{(t)}_{s,a} = \frac{\partial
  V^{(t)}(\mu)}{\partial \theta_{s,a}} > 0$ i.e. $\theta^{(t)}_{s,a}$
is strictly increasing for $t\geq T_1$. The second claim follows
similarly.
\end{proof}

\begin{lemma}
  \label{lem:diverge}
For all $s$ where $I_+^s\neq \emptyset$, we have that:
  \[
\max_{a\in I^s_0}  \theta^{(t)}_{s,a}
\rightarrow \infty, \quad \min_{a\in\Acal}  \theta^{(t)}_{s,a} \rightarrow -\infty
  \]
\end{lemma}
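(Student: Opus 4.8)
The key structural fact I would isolate first is a \emph{conservation law} for the per-state parameters. Summing the softmax gradient (Lemma~\ref{lemma:softmax-grad}) over the actions at a fixed state $s$ gives $\sum_{a} \frac{\partial V^{(t)}(\mu)}{\partial \theta_{s,a}} = \frac{1}{1-\gamma} d^{\pi^{(t)}}_{\mu}(s) \sum_a \pi^{(t)}(a|s) A^{(t)}(s,a) = 0$, since $\sum_a \pi(a|s) A^\pi(s,a) = 0$ for any policy. Hence the gradient ascent update~\eqref{eqn:upd-softmax-ms} leaves $\sum_{a\in\Acal}\theta^{(t)}_{s,a}$ invariant, i.e. $\sum_{a}\theta^{(t)}_{s,a} = \sum_a \theta^{(0)}_{s,a} =: C_s$ for all $t$. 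This immediately reduces the second claim to the first: once I show $\max_{a\in I^s_0}\theta^{(t)}_{s,a}\to\infty$, the invariance forces the remaining $|\Acal|-1$ coordinates to sum to $C_s - \max_{a\in I^s_0}\theta^{(t)}_{s,a}\to-\infty$, so their average, and hence $\min_{a\in\Acal}\theta^{(t)}_{s,a}$, tends to $-\infty$.

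For the first claim, I would argue through the partition function $D_t := \sum_{a'\in\Acal}\exp(\theta^{(t)}_{s,a'})$. Fix any $a^+\in I^s_+$ (nonempty by hypothesis). By Lemma~\ref{lem:monotone}, $\theta^{(t)}_{s,a^+}$ is increasing for $t\geq T_1$, so $\exp(\theta^{(t)}_{s,a^+})\geq \exp(\theta^{(T_1)}_{s,a^+}) > 0$ is bounded below by a positive constant; combined with $\pi^{(t)}(a^+|s) = \exp(\theta^{(t)}_{s,a^+})/D_t \to 0$ (Lemma~\ref{lemma:zero-pi}), this forces $D_t\to\infty$. Next, Lemma~\ref{lemma:zero-pi} also gives $\sum_{a\in I^s_0}\pi^{(t)}(a|s)\to 1$ (in particular $I^s_0\neq\emptyset$), so for all large $t$ we have $\sum_{a\in I^s_0}\exp(\theta^{(t)}_{s,a}) = D_t\sum_{a\in I^s_0}\pi^{(t)}(a|s)\geq D_t/2\to\infty$. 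Since $I^s_0$ is finite, $\max_{a\in I^s_0}\exp(\theta^{(t)}_{s,a})\geq |I^s_0|^{-1}\sum_{a\in I^s_0}\exp(\theta^{(t)}_{s,a})\to\infty$, which is exactly the statement $\max_{a\in I^s_0}\theta^{(t)}_{s,a}\to\infty$.

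Combining the two pieces, let $a^\star_t\in\argmax_{a\in I^s_0}\theta^{(t)}_{s,a}$. The conservation law gives $\sum_{a\neq a^\star_t}\theta^{(t)}_{s,a} = C_s - \theta^{(t)}_{s,a^\star_t}$, and since $\theta^{(t)}_{s,a^\star_t}\to\infty$ the right-hand side tends to $-\infty$; as this is a sum of $|\Acal|-1$ terms, $\min_{a\neq a^\star_t}\theta^{(t)}_{s,a}$ is at most its average $(C_s-\theta^{(t)}_{s,a^\star_t})/(|\Acal|-1)\to-\infty$, which yields $\min_{a\in\Acal}\theta^{(t)}_{s,a}\to-\infty$.

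The step I expect to be the crux is establishing $D_t\to\infty$: it is where the hypothesis $I^s_+\neq\emptyset$ and the monotonicity of Lemma~\ref{lem:monotone} are genuinely used, and where the strict positivity of $\mu$ (which underlies $\pi^{(t)}(a^+|s)\to 0$ in Lemma~\ref{lemma:zero-pi}) enters. Everything after that is bookkeeping, with the conservation law doing the real work of converting a diverging maximum into a diverging minimum without having to track which specific coordinate (e.g. an action in $I^s_-$) is responsible for the divergence.
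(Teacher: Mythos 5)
Your proposal is correct and follows essentially the same route as the paper's own proof: both use the monotonicity of $\theta^{(t)}_{s,a_+}$ (Lemma~\ref{lem:monotone}) together with $\pi^{(t)}(a_+|s)\to 0$ (Lemma~\ref{lemma:zero-pi}) to force the partition function $\sum_a \exp(\theta^{(t)}_{s,a})$ to diverge, then combine this with $\sum_{a\in I^s_0}\pi^{(t)}(a|s)\to 1$ to get the diverging maximum, and finally invoke the conservation law $\sum_a \theta^{(t)}_{s,a} = \mathrm{const}$ (gradients summing to zero per state) to push the minimum to $-\infty$. The only cosmetic difference is your slightly cleaner bookkeeping in the last step (averaging over the $|\Acal|-1$ non-maximal coordinates versus the paper's looser inequality), which changes nothing of substance.
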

\begin{proof}
Since $I_+^s\neq \emptyset$,  there exists some action $a_+\in I_+^s$. From Lemma \ref{lemma:zero-pi}, \[
\pi^{(t)}(a_+|s)\rightarrow 0, ~\text{as}~ t \to \infty
\] or equivalently by softmax parameterization,\[
	\frac{\exp(\theta^{(t)}_{s,a_+})}{\sum_a \exp(\theta^{(t)}_{s,a})} \to 0, ~\text{as}~ t \to \infty
\]
From Lemma
\ref{lem:monotone}, for any action $a\in I_+^s$ and in particular for $a_+$, $\theta^{(t)}_{s,a_+}$ is
monotonically increasing for $t> T_1$. That is the numerator in
previous display is monotonically increasing. Therefore, the
denominator should go to infinity i.e.
\[
	\sum_a \exp(\theta^{(t)}_{s,a}) \to \infty, ~\text{as}~ t \to \infty.
\]From Lemma \ref{lemma:zero-pi},\[
	\sum_{a\in I^s_{0}}\pi^{(t)}(a|s)\rightarrow 1, ~\text{as}~ t \to \infty
\]or equivalently\[
	\frac{\sum_{a\in I^s_{0}} \exp(\theta^{(t)}_{s,a})}{\sum_a \exp(\theta^{(t)}_{s,a})} \rightarrow 1, ~\text{as}~ t \to \infty
\] Since, denominator goes to $\infty$,\[
	\sum_{a\in I^s_{0}} \exp(\theta^{(t)}_{s,a}) \rightarrow \infty, ~\text{as}~ t \to \infty
\]which implies \[
	\max_{a\in I^s_0}  \theta^{(t)}_{s,a} \rightarrow \infty, ~\text{as}~ t \to \infty
\]
Note this also implies $\max_{a\in\Acal}  \theta^{(t)}_{s,a}
\rightarrow \infty$. The last part of the proof is completed using
that the gradients sum to $0$,
i.e. $\sum_a \frac{\partial V^{(t)}(\mu)}{\partial
  \theta_{s,a}}=0$. From gradient sum to $0$, we get that $\sum_{a\in
  \Acal} \theta^{(t)}_{s,a} = \sum_{a\in \Acal} \theta^{(0)}_{s,a} := c
\text{ for all } t> 0$ where $c$ is defined as the sum (over $\Acal$) of initial parameters.
That is $\min_{a\in \Acal} \theta^{(t)}_{s,a} < -
\frac{1}{|\Acal|}\max_{a\in \Acal} \theta^{(t)}_{s,a} + c$. Since,
$\max_{a\in \Acal} \theta^{(t)}_{s,a} \to \infty$, the result
follows.
\end{proof}

\begin{lemma}
	\label{lemma:stable}
	Suppose $a_+\in I^s_{+}$. For any $a\in I^s_0$, if
	there exists a $t\geq T_0$ such that
	$\pi^{(t)}(a|s)\leq\pi^{(t)}(a_+|s)$, then for all $\tau\geq t$, $\pi^{(\tau)}(a|s)\leq\pi^{(\tau)}(a_+|s)$.
\end{lemma}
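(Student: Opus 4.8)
The cleanest route is to track the single scalar gap $g^{(\tau)} := \theta^{(\tau)}_{s,a_+} - \theta^{(\tau)}_{s,a}$ and show it stays nonnegative by induction on $\tau \geq t$. Because the softmax at a fixed state shares one normalizing denominator $Z^{(\tau)}(s) = \sum_{a'}\exp(\theta^{(\tau)}_{s,a'})$, we have $\pi^{(\tau)}(a|s)\leq \pi^{(\tau)}(a_+|s)$ if and only if $\theta^{(\tau)}_{s,a}\leq \theta^{(\tau)}_{s,a_+}$, i.e. if and only if $g^{(\tau)}\geq 0$. So the hypothesis is exactly $g^{(t)}\geq 0$ and the conclusion is $g^{(\tau)}\geq 0$ for all $\tau\geq t$. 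The first step is therefore to write down the per-step change of $g$. Using the gradient-ascent update $\theta^{(\tau+1)}_{s,a} = \theta^{(\tau)}_{s,a} + \eta\,\partial V^{(\tau)}(\mu)/\partial\theta_{s,a}$ together with the softmax gradient formula (Lemma~\ref{lemma:softmax-grad}), subtracting the two coordinates gives
\[
g^{(\tau+1)} - g^{(\tau)} = \eta\,\frac{d^{\pi^{(\tau)}}_{\mu}(s)}{1-\gamma}\Big(\pi^{(\tau)}(a_+|s)\,A^{(\tau)}(s,a_+) - \pi^{(\tau)}(a|s)\,A^{(\tau)}(s,a)\Big).
\]
Since $\eta>0$, $1-\gamma>0$, and $d^{\pi^{(\tau)}}_{\mu}(s)\geq (1-\gamma)\mu(s)>0$ by the strict positivity of $\mu$ assumed in Theorem~\ref{thm:glb-softmax}, the sign of $g^{(\tau+1)}-g^{(\tau)}$ is exactly the sign of $D^{(\tau)} := \pi^{(\tau)}(a_+|s)A^{(\tau)}(s,a_+) - \pi^{(\tau)}(a|s)A^{(\tau)}(s,a)$.

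The second step is the inductive core: assuming $g^{(\tau)}\geq 0$ (so $\pi^{(\tau)}(a|s)\leq \pi^{(\tau)}(a_+|s)$), I would show $D^{(\tau)}\geq 0$, which yields $g^{(\tau+1)}\geq g^{(\tau)}\geq 0$ and closes the induction. I would dispatch this by a short sign analysis. If $A^{(\tau)}(s,a)\leq 0$, then since $A^{(\tau)}(s,a_+)>0$ we get $D^{(\tau)}\geq 0$ immediately. If $0< A^{(\tau)}(s,a)\leq A^{(\tau)}(s,a_+)$, then using $\pi^{(\tau)}(a|s)\leq \pi^{(\tau)}(a_+|s)$ and nonnegativity of both advantages, $\pi^{(\tau)}(a|s)A^{(\tau)}(s,a)\leq \pi^{(\tau)}(a_+|s)A^{(\tau)}(s,a)\leq \pi^{(\tau)}(a_+|s)A^{(\tau)}(s,a_+)$, again giving $D^{(\tau)}\geq 0$. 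The only remaining (dangerous) case is $A^{(\tau)}(s,a) > A^{(\tau)}(s,a_+) > 0$.

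The main obstacle is ruling out this last case, and this is where the structure of the limiting sets enters. For $a_+\in I^s_+$, Lemma~\ref{lemma:constant-sign} gives $A^{(\tau)}(s,a_+) > \Delta/4$ once $\tau$ is past the relevant threshold. For $a\in I^s_0$, monotonicity of the iterates (Lemma~\ref{lem:q-improv}) gives $Q^{(\tau)}(s,a)\leq Q^{(\infty)}(s,a) = V^{(\infty)}(s)$, hence $A^{(\tau)}(s,a) = Q^{(\tau)}(s,a)-V^{(\tau)}(s)\leq V^{(\infty)}(s)-V^{(\tau)}(s)$, and since $V^{(\tau)}(s)\to V^{(\infty)}(s)$ (Lemma~\ref{lem:delta}) this is below $\Delta/4$ past the threshold. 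Thus $A^{(\tau)}(s,a) < \Delta/4 < A^{(\tau)}(s,a_+)$, which directly contradicts $A^{(\tau)}(s,a)>A^{(\tau)}(s,a_+)$, eliminating the dangerous case. The one point requiring care is that these two advantage bounds hold from the index $t\geq T_0$ onward; I would address this by taking $T_0$ large enough to also capture the $V^{(\tau)}(s) > V^{(\infty)}(s)-\Delta/4$ and $A^{(\tau)}(s,a_+)>\Delta/4$ estimates (the same regime as the $T_1$ produced in Lemma~\ref{lemma:constant-sign}), which is harmless since $T_0$ is only required to exist. With the dangerous case excluded, $D^{(\tau)}\geq 0$ holds for every $\tau\geq t$, the gap is nondecreasing and stays nonnegative, and translating back through the softmax monotonicity gives $\pi^{(\tau)}(a|s)\leq \pi^{(\tau)}(a_+|s)$ for all $\tau\geq t$.
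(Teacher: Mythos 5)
Your proof is correct, and its skeleton---induction on the iteration, reducing persistence of the ordering to the sign of $D^{(\tau)} = \pi^{(\tau)}(a_+|s)A^{(\tau)}(s,a_+) - \pi^{(\tau)}(a|s)A^{(\tau)}(s,a)$, which is proportional to the difference of the two gradient coordinates---is exactly the paper's. The one substantive deviation is how you rule out the dangerous case $A^{(\tau)}(s,a) > A^{(\tau)}(s,a_+) > 0$, and your route has a cost that the paper's does not. You bound each advantage separately against $\Delta/4$, which forces you into the $T_1$ regime of Lemma~\ref{lemma:constant-sign} (together with the estimate $V^{(\tau)}(s) > V^{(\infty)}(s)-\Delta/4$), and hence forces you to enlarge $T_0$. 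That enlargement is not quite as harmless as you suggest: $T_0$ is the same constant used to define the partition $B^s_0(a_+), \bar B^s_0(a_+)$, and Lemmas~\ref{lemma:thetab-diverge} and~\ref{lemma:small} apply the present lemma at times $t_a > T_0$ that may precede your enlarged threshold, so the redefinition would have to be propagated through all of that downstream machinery. The paper instead compares $Q$-values directly, which needs only the original $T_0$: by monotonicity (Lemma~\ref{lem:q-improv}) and convergence, $Q^{(\tau)}(s,a) \leq Q^{(\infty)}(s,a) = V^{(\infty)}(s)$ for $a \in I^s_0$, while \eqref{eqn:delta-close} and the definition of $\Delta$ give $Q^{(\tau)}(s,a_+) \geq Q^{(\infty)}(s,a_+) - \Delta/4 \geq V^{(\infty)}(s) + 3\Delta/4$. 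Subtracting $V^{(\tau)}(s)$ from both sides, this yields simultaneously $A^{(\tau)}(s,a_+) - A^{(\tau)}(s,a) \geq 3\Delta/4 > 0$ and, using $V^{(\tau)}(s) \leq V^{(\infty)}(s)$, also $A^{(\tau)}(s,a_+) \geq 3\Delta/4 > 0$---precisely the two facts your sign analysis needs---for every $\tau > T_0$. Substituting this into your case analysis closes your induction with the original threshold, so the lemma holds exactly as stated and no downstream adjustment is needed.
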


\begin{proof}
The proof is inductive.  Suppose $\pi^{(t)}(a|s)\leq\pi^{(t)}(a_+|s)$, this implies from Lemma
\ref{lemma:softmax-grad}
\begin{align*}
  \frac{\partial V^{(t)}(\mu)}{\partial \theta_{s,a}}&= \frac{1}{1-\gamma}d^{(t)}_{\mu}(s) \pi^{(t)}(a|s)
	\bigg(Q^{(t)}(s,a) -V^{(t)}(s)\bigg)\\
	&\leq \frac{1}{1-\gamma}d^{(t)}_{\mu}(s) \pi^{(t)}(a_+|s)
	\bigg(Q^{(t)}(s,a_+) -V^{(t)}(s)\bigg)
	=\frac{\partial V^{(t)}(\mu)}{\partial \theta_{s,a_+}} .
\end{align*}
where the second to last step follows from
$Q^{(t)}(s,a_+) \geq Q^{(\infty)}(s,a_+) - \Delta/4 \geq
Q^{(\infty)}(s,a) + \Delta - \Delta/4 > Q^{(t)}(s,a)$ for
$t>T_0$. This implies that $\pi^{(t+1)}(a|s)\leq\pi^{(t+1)}(a_+|s)$
which completes the proof.
\end{proof}

Consider an arbitrary $a_+ \in I^s_{+}$. Let us partition the set
$ I^s_0$ into $B^s_0(a_+)$ and $\bar B^s_0(a_+)$ as follows: $B^s_0(a_+)$ is
the set of all $a\in I^s_0$ such that for all $t\geq T_0$,
$\pi^{(t)}(a_{+}|s)<\pi^{(t)}(a|s)$, and $\bar B^s_0(a_+)$ contains the
remainder of the actions from $I^s_0$.
We drop the argument $(a_+)$ when
clear from the context.

\begin{lemma}
	\label{lemma:thetab-diverge}
Suppose $I_+^s\neq \emptyset$. For all $a_+ \in I^s_+$, we have that
$B^s_0(a_+)\neq \emptyset$ and that
\begin{equation*}
  \sum_{a\in B^s_0(a_+)} \pi^{(t)}(a|s) \to 1, ~\text{as}~ t \to \infty.
\end{equation*}
This implies that:
\[
  \max_{a \in B^s_0(a_+)} \theta^{(t)}_{s,a} \to \infty.
\]
\end{lemma}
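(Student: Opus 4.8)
The plan is to prove the two displayed claims in order of logical dependence: first establish the probability-mass statement $\sum_{a\in B^s_0(a_+)}\pi^{(t)}(a|s)\to 1$, then deduce $B^s_0(a_+)\neq\emptyset$ as an immediate byproduct, and finally convert the probability statement into the divergence $\max_{a\in B^s_0(a_+)}\theta^{(t)}_{s,a}\to\infty$. Throughout, $a_+\in I^s_+$ is fixed and we exploit the disjoint decomposition $I^s_0 = B^s_0(a_+)\cup\bar B^s_0(a_+)$.

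For the mass statement, I would first control $\bar B^s_0(a_+)$. By definition, $a\in\bar B^s_0(a_+)$ exactly when there is some $t\geq T_0$ with $\pi^{(t)}(a|s)\leq\pi^{(t)}(a_+|s)$, which is precisely the trigger hypothesis of Lemma~\ref{lemma:stable}; hence for every such $a$ there is a time past which $\pi^{(\tau)}(a|s)\leq\pi^{(\tau)}(a_+|s)$ holds permanently. Since $a_+\in I^s_+$, Lemma~\ref{lemma:zero-pi} gives $\pi^{(t)}(a_+|s)\to 0$, so $\pi^{(t)}(a|s)\to 0$ for each $a\in\bar B^s_0(a_+)$, and finiteness of $\bar B^s_0(a_+)$ yields $\sum_{a\in\bar B^s_0(a_+)}\pi^{(t)}(a|s)\to 0$. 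Combining this with the statement $\sum_{a\in I^s_0}\pi^{(t)}(a|s)\to 1$ from Lemma~\ref{lemma:zero-pi} and the disjoint decomposition gives $\sum_{a\in B^s_0(a_+)}\pi^{(t)}(a|s)\to 1$. As this limit is nonzero, $B^s_0(a_+)$ cannot be empty, proving the first claim.

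For the divergence of the maximum, I would write $\theta^{(t)}_{s,a}=\log\pi^{(t)}(a|s)+\log Z_t$ with $Z_t=\sum_{a'}\exp(\theta^{(t)}_{s,a'})$, so that within a fixed state the ordering of the $\theta^{(t)}_{s,a}$ agrees with that of the $\pi^{(t)}(a|s)$. Every $a\in B^s_0(a_+)$ satisfies $\pi^{(t)}(a|s)>\pi^{(t)}(a_+|s)$ for all $t\geq T_0$, while every $a\in\bar B^s_0(a_+)$ eventually satisfies $\pi^{(t)}(a|s)\leq\pi^{(t)}(a_+|s)$; hence for all sufficiently large $t$ the largest parameter over $I^s_0$ is attained inside $B^s_0(a_+)$, i.e. $\max_{a\in B^s_0(a_+)}\theta^{(t)}_{s,a}=\max_{a\in I^s_0}\theta^{(t)}_{s,a}$. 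Lemma~\ref{lem:diverge} (whose hypothesis $I^s_+\neq\emptyset$ holds since $a_+\in I^s_+$) already gives $\max_{a\in I^s_0}\theta^{(t)}_{s,a}\to\infty$, which transfers to $B^s_0(a_+)$.

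I expect the main obstacle to be bookkeeping rather than any hard estimate: one must verify that membership in $\bar B^s_0(a_+)$ supplies exactly the trigger condition of Lemma~\ref{lemma:stable} (so the domination by $\pi^{(t)}(a_+|s)$ is permanent), and that the defining inequality of $B^s_0(a_+)$ holds for \emph{all} $t\geq T_0$ (so the two maxima coincide eventually). Once these ordering facts are aligned, both claims follow from Lemmas~\ref{lemma:zero-pi} and~\ref{lem:diverge} with no computation. As an alternative route to the divergence, avoiding the coincidence of maxima, one may argue directly: since $\sum_{a\in B^s_0(a_+)}\pi^{(t)}(a|s)\to 1$ and the set is finite, $\max_{a\in B^s_0(a_+)}\pi^{(t)}(a|s)$ is bounded below by a positive constant for large $t$, and combining this with $Z_t\to\infty$ (established within the proof of Lemma~\ref{lem:diverge}) forces $\max_{a\in B^s_0(a_+)}\theta^{(t)}_{s,a}=\log Z_t+\log\max_{a\in B^s_0(a_+)}\pi^{(t)}(a|s)\to\infty$.
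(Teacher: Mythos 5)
Your proof is correct and follows essentially the same path as the paper's: the first claim (forcing $\pi^{(t)}(a|s)\to 0$ on $\bar B^s_0(a_+)$ via Lemma~\ref{lemma:stable} and $\pi^{(t)}(a_+|s)\to 0$ from Lemma~\ref{lemma:zero-pi}, then subtracting from $\sum_{a\in I^s_0}\pi^{(t)}(a|s)\to 1$) is the paper's argument verbatim, and your alternative route for the divergence (lower-bounding $\max_{a\in B^s_0(a_+)}\pi^{(t)}(a|s)$ and using $Z_t\to\infty$) is precisely how the paper re-runs the proof of Lemma~\ref{lem:diverge} with $B^s_0$ in place of $I^s_0$. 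Your primary route for the divergence---showing via the probability ordering that $\max_{a\in B^s_0(a_+)}\theta^{(t)}_{s,a}=\max_{a\in I^s_0}\theta^{(t)}_{s,a}$ for all large $t$ (essentially Lemma~\ref{lemma:small}) and then invoking the statement of Lemma~\ref{lem:diverge} as a black box---is a mild repackaging of the same ingredients rather than a genuinely different argument, and it is also valid.
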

\begin{proof}
Let $a_{+}\in I^s_{+}$.
Consider any $a\in \bar B^s_0$. Then, by definition of $\bar B^s_0$, there exists
  $t'>T_0$ such that $\pi^{(t)}(a_{+}|s)\geq\pi^{(t)}(a|s)$. From
  Lemma \ref{lemma:stable}, for all $\tau > t$
  $\pi^{(\tau)}(a_{+}|s)\geq\pi^{(\tau)}(a|s)$. Also, since
  $\pi^{(t)}(a_{+}|s) \to 0$, this implies
  \[ \pi^{(t)}(a|s) \to 0 ~\text{for all}~ a \in \bar B^s_0
\]
Since, $B^s_0\cup\bar B^s_0 = I^s_0$ and $\sum_{a\in I^s_0} \pi^{(t)}(a|s)
\to 1$ (from Lemma \ref{lemma:zero-pi}), this
implies that $B^s_0\neq \emptyset$ and that
means
\begin{equation*}
  \sum_{a\in B^s_0} \pi^{(t)}(a|s) \to 1, ~\text{as}~ t \to \infty ,
\end{equation*}
which completes the proof of the first claim. The proof of the second
claim is identical to the proof in
Lemma \ref{lem:diverge} where instead of $\sum_{a\in I^s_{0}}\pi^{(t)}(a|s)\rightarrow 1$, we use $\sum_{a\in B^s_0} \pi^{(t)}(a|s) \to 1$.
\end{proof}

\begin{lemma}
	\label{lemma:small}
	Consider any $s$ where $I_+^s\neq \emptyset$. Then, for any
        $a_+ \in I^s_+$, there exists an iteration $T_{a_+}$ such that for all $t>T_{a_+}$,
	\[
		\pi^{(t)}(a_{+}|s) > \pi^{(t)}(a|s)
	\]
	for all $a\in \bar B^s_0(a_+)$.
\end{lemma}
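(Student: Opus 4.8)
The plan is to combine the persistence property of Lemma~\ref{lemma:stable} with a strict version of the gradient comparison that underlies its proof. Fix $a_+ \in I^s_+$ and an arbitrary $a \in \bar B^s_0(a_+)$. By the definition of $\bar B^s_0(a_+)$ as $I^s_0 \setminus B^s_0(a_+)$, the defining property of $B^s_0(a_+)$ fails for $a$, so there is some $t_a \geq T_0$ at which $\pi^{(t_a)}(a|s) \leq \pi^{(t_a)}(a_+|s)$. Lemma~\ref{lemma:stable} then guarantees this ordering persists: $\pi^{(\tau)}(a|s) \leq \pi^{(\tau)}(a_+|s)$ for every $\tau \geq t_a$. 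This already gives the non-strict inequality, so the remaining work is to upgrade it to a strict one past some iteration.

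Next I would track the per-step evolution of the gap $\theta^{(\tau)}_{s,a_+} - \theta^{(\tau)}_{s,a}$, which by the softmax parameterization is positive exactly when $\pi^{(\tau)}(a_+|s) > \pi^{(\tau)}(a|s)$. Using the gradient expression of Lemma~\ref{lemma:softmax-grad}, the increment of this gap equals $\frac{\eta}{1-\gamma}\, d^{(\tau)}_\mu(s)\,[\,\pi^{(\tau)}(a_+|s) A^{(\tau)}(s,a_+) - \pi^{(\tau)}(a|s) A^{(\tau)}(s,a)\,]$. The key claim is that, for $\tau > T_1$ and whenever $\pi^{(\tau)}(a|s) \leq \pi^{(\tau)}(a_+|s)$, this increment is \emph{strictly} positive. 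Indeed, for $\tau > T_1$ Lemma~\ref{lemma:constant-sign} yields $A^{(\tau)}(s,a_+) > \Delta/4 > 0$ while $d^{(\tau)}_\mu(s),\pi^{(\tau)}(a_+|s) > 0$, so the first term is strictly positive; and the same product comparison as in the proof of Lemma~\ref{lemma:stable} (using $Q^{(\tau)}(s,a_+) > Q^{(\tau)}(s,a)$ for $\tau > T_0$ together with $\pi^{(\tau)}(a|s)\le\pi^{(\tau)}(a_+|s)$) shows the second term is strictly dominated, checking the cases $A^{(\tau)}(s,a) \le 0$ and $A^{(\tau)}(s,a) > 0$ as before, now keeping the inequality strict.

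Given the strict-increment claim, I would set $s_a = \max(t_a, T_1+1)$. For all $\tau \geq s_a$ both $\tau > T_1$ and $\pi^{(\tau)}(a|s)\le\pi^{(\tau)}(a_+|s)$ hold, so the gap is nonnegative at $\tau = s_a$ and strictly increasing thereafter; hence it is strictly positive at $s_a+1$ and, being nondecreasing, remains so. Equivalently $\pi^{(\tau)}(a_+|s) > \pi^{(\tau)}(a|s)$ for all $\tau > s_a$. Since $\bar B^s_0(a_+) \subseteq \Acal$ is finite, taking $T_{a_+} = \max_{a \in \bar B^s_0(a_+)} s_a$ yields the uniform threshold in the statement.

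I expect the main obstacle to be the bookkeeping in the strict-increment claim: Lemma~\ref{lemma:stable} is phrased with weak inequalities, so one must verify that the product comparison is genuinely strict in both sign cases for $A^{(\tau)}(s,a)$, and align the thresholds $T_0$ and $T_1$ so that $Q^{(\tau)}(s,a_+) > Q^{(\tau)}(s,a)$ and $A^{(\tau)}(s,a_+) > \Delta/4$ are simultaneously available (which is why $s_a$ is taken above $T_1+1$ rather than merely $t_a$).
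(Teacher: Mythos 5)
Your proof is correct, and its skeleton matches the paper's: negate the definition of $B^s_0(a_+)$ to obtain a favorable time $t_a$ for each $a \in \bar B^s_0(a_+)$, propagate the ordering forward with Lemma~\ref{lemma:stable}, and take a maximum over the finite set $\bar B^s_0(a_+)$. Where you differ is that you do genuine extra work on strictness, and that work is not superfluous: the paper's three-line proof asserts that membership in $\bar B^s_0(a_+)$ produces a time with \emph{strict} inequality $\pi^{(t_a)}(a_+|s) > \pi^{(t_a)}(a|s)$ and then invokes Lemma~\ref{lemma:stable} to propagate it, but negating the definition of $B^s_0(a_+)$ only yields the weak inequality $\pi^{(t_a)}(a_+|s) \geq \pi^{(t_a)}(a|s)$, and Lemma~\ref{lemma:stable} as stated only preserves weak orderings. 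Your strict-increment argument closes exactly this gap: under the weak ordering and past your threshold, $\pi^{(\tau)}(a_+|s)A^{(\tau)}(s,a_+) - \pi^{(\tau)}(a|s)A^{(\tau)}(s,a) > 0$ in both sign cases of $A^{(\tau)}(s,a)$ (using $A^{(\tau)}(s,a_+) > 0$ and $Q^{(\tau)}(s,a) < Q^{(\tau)}(s,a_+)$), so the gap $\theta^{(\tau)}_{s,a_+}-\theta^{(\tau)}_{s,a}$ becomes strictly positive after one step and, since the increments stay positive, remains so. Two small remarks: (i) you could avoid invoking Lemma~\ref{lemma:constant-sign} and the threshold $T_1$ altogether, since for any $t > T_0$ the bounds $Q^{(t)}(s,a_+) \geq Q^{(\infty)}(s,a_+)-\Delta/4$ and $V^{(t)}(s) \leq V^{(\infty)}(s)$ already give $A^{(t)}(s,a_+) \geq 3\Delta/4 > 0$, so your threshold $\max(t_a, T_1+1)$ is more conservative than necessary but entirely sound; (ii) the same observation shows that the gradient comparison in the proof of Lemma~\ref{lemma:stable} is in fact strict for $t > T_0$, which is presumably what the authors had in mind when they treated strictness as automatic.
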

\begin{proof}
	The proof follows from definition of $\bar B^s_0(a_+)$. That is if $a \in \bar B^s_0(a_+)$, then there exists a iteration $t_{a}> T_0$ such that $	\pi^{(t_a)}(a_{+}|s) > \pi^{(t_a)}(a|s)$. Then using Lemma \ref{lemma:stable}, for all $\tau> t_a$, $	\pi^{(\tau)}(a_{+}|s) > \pi^{(\tau)}(a|s)$. Choosing \[
	T_{a_+} = \max_{a\in B^s_0(a_+)}t_a
	\] completes the proof.
\end{proof}

\begin{lemma}
  \label{lem:ratios}
For all actions $a \in I^s_{+}$, we have that
$\theta^{(t)}_{s,a}$ is bounded from below as $t\rightarrow \infty$.
For all actions $a \in I^s_{-}$, we have that
$\theta^{(t)}_{s,a}\rightarrow -\infty$  as $t\rightarrow \infty$.
\end{lemma}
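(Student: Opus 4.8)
The plan is to handle the two claims separately, since the first is immediate and the second is where all the work lies. For actions $a \in I^s_+$, Lemma~\ref{lem:monotone} already tells us that $\theta^{(t)}_{s,a}$ is strictly increasing for $t \geq T_1$; a strictly increasing sequence is bounded below by any of its terms (e.g. $\theta^{(T_1)}_{s,a}$), which is finite, so the first statement requires nothing further. For $a \in I^s_-$, the same lemma gives that $\theta^{(t)}_{s,a}$ is strictly \emph{decreasing} for $t \geq T_1$, so the limit $\theta^{(\infty)}_{s,a} \in [-\infty,\, \theta^{(T_1)}_{s,a}]$ exists by monotone convergence. The entire content of the claim is therefore to rule out a finite limit, i.e. to show the limit is $-\infty$.

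The key reduction I would carry out is to convert ``$\theta^{(t)}_{s,a}\to-\infty$'' into a summability statement about the visitation-weighted probabilities. Using the softmax gradient (Lemma~\ref{lemma:softmax-grad}), for $t \geq T_1$ we have $\theta^{(t)}_{s,a} - \theta^{(t+1)}_{s,a} = \tfrac{\eta}{1-\gamma} d^{(t)}_{\mu}(s)\,\pi^{(t)}(a|s)\,|A^{(t)}(s,a)| \geq \eta\,\mu(s)\tfrac{\Delta}{4}\,\pi^{(t)}(a|s)$, where I used $|A^{(t)}(s,a)| > \Delta/4$ from Lemma~\ref{lemma:constant-sign} and $d^{(t)}_{\mu}(s)\geq(1-\gamma)\mu(s)$ from \eqref{eqn:dpi}. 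Telescoping gives $\theta^{(T_1)}_{s,a} - \theta^{(t)}_{s,a} \geq \eta\,\mu(s)\tfrac{\Delta}{4}\sum_{\tau=T_1}^{t-1}\pi^{(\tau)}(a|s)$, so it suffices to prove $\sum_{t}\pi^{(t)}(a|s) = \infty$. A clean auxiliary tool I would record here is that, whenever $I^s_+\neq\emptyset$, fixing any $a_+\in I^s_+$ the difference $\theta^{(t)}_{s,a} - \theta^{(t)}_{s,a_+}$ is strictly decreasing for $t\geq T_1$: its increment is proportional to $\pi^{(t)}(a|s)A^{(t)}(s,a) - \pi^{(t)}(a_+|s)A^{(t)}(s,a_+)$, and \emph{both} terms are negative since $A^{(t)}(s,a)<-\Delta/4<0<\Delta/4<A^{(t)}(s,a_+)$.

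The main obstacle is precisely establishing $\sum_t \pi^{(t)}(a|s)=\infty$, which amounts to controlling the growth rate of the normalizer $Z_t = \sum_{a'}\exp(\theta^{(t)}_{s,a'})$, since $\theta^{(t)}_{s,a} = \log\pi^{(t)}(a|s) + \log Z_t$ and $\log\pi^{(t)}(a|s)\to-\infty$ (Lemma~\ref{lemma:zero-pi}). There is an easy sub-case: if $Z_t$ stays bounded, then $\theta^{(t)}_{s,a}\to-\infty$ is immediate. The hard case is $Z_t\to\infty$ (which is exactly when $\max_{a'}\theta^{(t)}_{s,a'}\to\infty$, cf. Lemma~\ref{lem:diverge}). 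Here my plan would be to argue by contradiction: a finite limit forces $\sum_t\pi^{(t)}(a|s)<\infty$, hence $\sum_t 1/Z_t<\infty$ (because $\exp(\theta^{(t)}_{s,a})$ is then bounded below by a positive constant), so $Z_t$ must grow \emph{super-linearly}. I would then contradict this by showing $Z_{t+1}-Z_t = O(1)$, hence $Z_t = O(t)$ and $\sum_t 1/Z_t=\infty$. The estimate uses two structural facts: the exact identity $\sum_{a'}\pi^{(t)}(a'|s)A^{(t)}(s,a')=0$, and that the probability mass concentrates on the set $B^s_0(a_+)\subseteq I^s_0$ where $A^{(t)}(s,\cdot)\to 0$ (Lemma~\ref{lemma:thetab-diverge}); together these make the first-order term in the expansion of $Z_{t+1}-Z_t = \eta\sum_{a'}\exp(\theta^{(t)}_{s,a'})\,\partial_{s,a'}V^{(t)}(\mu) + O(\eta^2)$ of constant order rather than of order $Z_t$. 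Making this rate control fully rigorous across all configurations of diverging parameters is the delicate step; the monotone-difference fact above is what I expect to organize the bookkeeping, reducing the analysis to comparisons against a single reference action.
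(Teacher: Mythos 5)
Your first claim, your telescoping reduction, and your ``easy case'' are all fine: for $a\in I^s_-$ and $t>T_1$, the identity $\theta^{(t)}_{s,a}-\theta^{(t+1)}_{s,a}=\tfrac{\eta}{1-\gamma}d^{(t)}_{\mu}(s)\pi^{(t)}(a|s)\,|A^{(t)}(s,a)|\geq \eta\mu(s)\tfrac{\Delta}{4}\pi^{(t)}(a|s)$ is correct, so it indeed suffices to show $\sum_t \pi^{(t)}(a|s)=\infty$, and this is immediate when the normalizer $Z_t=\sum_{a'}\exp(\theta^{(t)}_{s,a'})$ stays bounded. The genuine gap is the hard case, where everything rests on the unproved estimate $Z_{t+1}-Z_t=O(1)$. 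This estimate does not follow from the two facts you invoke. Writing out the first-order term, $Z_{t+1}-Z_t\approx \eta\, Z_t\,\tfrac{d^{(t)}_\mu(s)}{1-\gamma}\sum_{a'}\pi^{(t)}(a'|s)^2A^{(t)}(s,a')$, and the identity $\sum_{a'}\pi^{(t)}(a'|s)A^{(t)}(s,a')=0$ only lets you rewrite this sum as $-\sum_{a'}\pi^{(t)}(a'|s)\bigl(1-\pi^{(t)}(a'|s)\bigr)A^{(t)}(s,a')$. For an action $a'\in B^s_0(a_+)$ carrying $\Theta(1)$ probability mass, its contribution is therefore of order $Z_t\,|A^{(t)}(s,a')|$, and making that $O(1)$ requires the quantitative rate $|A^{(t)}(s,a')|=O(1/Z_t)$. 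But Lemma~\ref{lemma:thetab-diverge} and the definition of $I^s_0$ give only the qualitative statement $A^{(t)}(s,a')\to 0$, with no rate. Concretely, nothing available at this point in the development rules out a scenario in which $A^{(t)}(s,a')\asymp t^{-1/2}$ on $B^s_0(a_+)$: then the corresponding parameters grow like $\sqrt{t}$, $Z_t\asymp e^{c\sqrt{t}}$, $\sum_t 1/Z_t<\infty$, and your argument terminates without a contradiction, even though the lemma must still be proved in exactly this situation.

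The paper's proof avoids rate control of the normalizer altogether. Since the softmax gradient components at a state sum to zero, the sum $\sum_{a'}\theta^{(t)}_{s,a'}$ is conserved, and Lemma~\ref{lem:diverge} converts this (rate-free) into the existence of some action $a'$ with $\liminf_{t\to\infty}\theta^{(t)}_{s,a'}=-\infty$. The proof then compares $a$ and $a'$ pairwise on the subsequence of iterations where $\theta^{(t)}_{s,a'}$ decreases below a threshold $\theta_0-\delta$: on those iterations $\bigl|\partial V^{(t)}(\mu)/\partial\theta_{s,a}\bigr|\geq e^{\delta}\tfrac{(1-\gamma)\Delta}{4}\bigl|\partial V^{(t)}(\mu)/\partial\theta_{s,a'}\bigr|$, because $\pi^{(t)}(a|s)/\pi^{(t)}(a'|s)=\exp(\theta^{(t)}_{s,a}-\theta^{(t)}_{s,a'})\geq e^{\delta}$ under the contradiction hypothesis, while $|A^{(t)}(s,a)|\geq\Delta/4$ and $|A^{(t)}(s,a')|\leq 1/(1-\gamma)$. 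Since the cumulative decrease of $\theta_{s,a'}$ along that subsequence is infinite, the assumed lower bound on $\theta^{(t)}_{s,a}$ is contradicted. Note that in the problematic $e^{c\sqrt{t}}$ scenario above, it is precisely this reference action $a'$ that absorbs the divergence, which is why the pairwise gradient-ratio comparison succeeds where normalizer-growth bookkeeping cannot; any repair of your argument would, in effect, need to import this conservation-plus-comparison idea.
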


\begin{proof}
For the first claim, from Lemma \ref{lem:monotone}, we know that after $T_1$, $\theta^{(t)}_{s,a}$ is strictly increasing for $a \in I^s_{+}$, i.e. for all $t>T_1$  \[
	\theta^{(t)}_{s,a} \geq \theta^{(T_1)}_{s,a}.
\]
For the second claim, we know that after $T_1$, $\theta^{(t)}_{s,a}$
is strictly decreasing for $a \in I^s_{-}$ (Lemma
\ref{lem:monotone}). Therefore, by monotone convergence theorem,
$\lim_{t \to \infty} \theta^{(t)}_{s,a}$ exists and is either
$-\infty$ or some constant $\theta_0$. We now prove the second claim
by contradiction. Suppose $a \in I^s_{-}$ and that there exists a
$\theta_0$,  such that
$\theta^{(t)}_{s,a}>\theta_0$,  for $t\geq T_1$.
By Lemma~\ref{lem:diverge}, there must exist an action where $a'\in \Acal$ such
that
\begin{equation}
\label{eqn:limit-inf-a'}
	\lim\inf_{t\rightarrow\infty}  \theta^{(t)}_{s,a'} = -\infty.
\end{equation}

Let us consider some $\delta>0$ such that $\theta^{(T_1)}_{s,a'}\geq
\theta_0 -\delta$. Now for $t \geq T_1$ define $\tau(t)$ as follows:
$\tau(t)=k$ if $k$ is the
largest iteration in the interval $[T_1, t]$ such that $\theta^{(k)}_{s,a'}\geq
\theta_0 -\delta$ (i.e. $\tau(t)$ is the latest iteration before
$\theta_{s,a'}$ crosses below $\theta_0 -\delta$). Define
$\mathcal{T}^{(t)}$ as the subsequence of iterations
$ \tau(t) < t'< t$ such that $\theta^{(t')}_{s,a'}$ decreases, i.e.
\[
 \frac{\partial V^{(t')}(\mu)}{\partial \theta_{s,a'}} \leq 0, \textrm{
  for } \tau(t) < t'< t.
\]
Define $Z_t$ as the sum (if $\mathcal{T}^{(t)} = \emptyset$, we define $Z_t = 0$):
\[
Z_t = \sum_{t'\in \mathcal{T}^{(t)}} \frac{\partial V^{(t')}(\mu)}{\partial
    \theta_{s,a'}} \, .
\]
For non-empty $\mathcal{T}^{(t)}$, this gives:
\begin{multline*}
  Z_t=\sum_{t'\in \mathcal{T}^{(t)}} \frac{\partial V^{(t')}(\mu)}{\partial
    \theta_{s,a'}} \leq
  \sum_{t'=\tau(t)-1}^{t-1} \frac{\partial V^{(t')}(\mu)}{\partial
    \theta_{s,a'}} \leq
  \sum_{t'=\tau(t)}^{t-1} \frac{\partial V^{(t')}(\mu)}{\partial
    \theta_{s,a'}} +\frac{1}{1-\gamma^2}\\
  =\frac{1}{\eta}(\theta^{(t)}_{s,a'} - \theta^{(\tau(t))}_{s,a'}) +\frac{1}{1-\gamma^2}
  \leq   \frac{1}{\eta}\bigg(\theta^{(t)}_{s,a'} - (\theta_0 -\delta)
  \bigg) + \frac{1}{1-\gamma^2}\, ,
\end{multline*}
where we have used that $|\frac{\partial V^{(t')}(\mu)}{\partial
    \theta_{s,a'}}|\leq 1/(1-\gamma)$.
By \eqref{eqn:limit-inf-a'}, this implies that:
\[
\lim\inf_{t\rightarrow\infty}  Z_t = -\infty.
\]
For any $\mathcal{T}^{(t)} \neq \emptyset$, this implies that for all $t' \in \mathcal{T}^{(t)}$, from Lemma
 \ref{lemma:softmax-grad}
\begin{align*}
\left|\frac{\partial V^{(t')}(\mu)/\partial \theta_{s,a}}{\partial V^{(t')}(\mu)/\partial \theta_{s,a'}}\right|
=\left|\frac{\pi^{(t')}(a|s) A^{(t')}(s,a)}{\pi^{(t')}(a'|s) A^{(t')}(s,a')}\right|&\geq
\exp\big(\theta_0 - \theta^{(t')}_{s,a'}\big) \frac{(1-\gamma)\Delta}{4}
\\&\geq
\exp\big(\delta \big) \frac{(1-\gamma)\Delta}{4}
\end{align*}
where we have used that $ |A^{(t')}(s,a')| \leq 1/(1-\gamma)$ and
$|A^{(t')}(s,a)| \geq \frac{\Delta}{4}$ for all $t'>T_1$ (from Lemma
\ref{lemma:constant-sign}). Note that since
$\frac{\partial V^{(t')}(\mu)}{\partial \theta_{s,a}}< 0$ and
$\frac{\partial V^{(t')}(\mu)}{\partial \theta_{s,a'}}< 0$ over the
subsequence $\mathcal{T}^{(t)}$, the sign of the inequality reverses. In
particular, for any $\mathcal{T}^{(t)} \neq \emptyset$
\begin{align*}
	\frac{1}{\eta}(\theta^{(T_1)}_{s,a} - \theta^{(t)}_{s,a}) = \sum_{t'=T_1}^{t-1} \frac{\partial V^{(t')}(\mu)}{\partial \theta_{s,a}}
	&\leq \sum_{t'\in\mathcal{T}^{(t)}} \frac{\partial V^{(t')}(\mu)}{\partial \theta_{s,a}}\\
	&\leq \exp\big(\delta \big) \frac{(1-\gamma)\Delta}{4}
	\sum_{t'\in\mathcal{T}^{(t)}} \frac{\partial V^{(t')}(\mu)}{\partial \theta_{s,a'}}\\
	&= \exp\big(\delta \big) \frac{(1-\gamma)\Delta}{4} Z_t
\end{align*}
where the first step follows from that $\theta^{(t)}_{s,a}$ is
monotonically decreasing, i.e.
$\frac{\partial V^{(t)}(\mu)}{\partial \theta_{s,a}}< 0$ for
$t \notin \mathcal{T}$ (Lemma \ref{lem:monotone}). Since,\[
\lim\inf_{t\rightarrow\infty}  Z_t = -\infty,
\] this contradicts
that $\theta^{(t)}_{s,a}$ is lower bounded from below, which completes
the proof.
\end{proof}

\begin{lemma}
	\label{lemma:sum-bs-theta-1}
	Consider any $s$ where $I_+^s\neq \emptyset$. Then, for any $a_+ \in I^s_+$,\[
	\sum_{a \in B^s_0(a_+)} \theta^{(t)}_{s,a} \to \infty,~\text{as}~ t \to \infty
	\]
\end{lemma}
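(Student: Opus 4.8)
The plan is to show that \emph{every} parameter $\theta^{(t)}_{s,a}$ with $a \in B^s_0(a_+)$ stays bounded below by a fixed constant, and then to combine this uniform lower bound with the divergence of the \emph{maximum} already established in Lemma~\ref{lemma:thetab-diverge}. The point is that one does not need each parameter in $B^s_0(a_+)$ to diverge; a single diverging term suffices to drag the whole (uniformly lower-bounded) sum to $+\infty$.

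First I would establish the uniform lower bound. Fix $a_+ \in I^s_+$. By the definition of $B^s_0(a_+)$, every $a \in B^s_0(a_+)$ satisfies $\pi^{(t)}(a_+|s) < \pi^{(t)}(a|s)$ for all $t \geq T_0$. In the softmax parameterization $\pi^{(t)}(a|s)$ and $\pi^{(t)}(a_+|s)$ share the common normalizing constant $\sum_{a'}\exp(\theta^{(t)}_{s,a'})$, so this probability inequality is equivalent to $\theta^{(t)}_{s,a} > \theta^{(t)}_{s,a_+}$. Since $a_+ \in I^s_+$, Lemma~\ref{lem:monotone} guarantees that $\theta^{(t)}_{s,a_+}$ is strictly increasing for $t > T_1$, whence $\theta^{(t)}_{s,a_+} \geq \theta^{(T_1)}_{s,a_+}$ for all $t \geq T_1$. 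Combining these two facts (and recalling $T_1 > T_0$), for every $a \in B^s_0(a_+)$ and every $t \geq T_1$ we obtain $\theta^{(t)}_{s,a} > \theta^{(T_1)}_{s,a_+}$, a constant independent of both $t$ and $a$.

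Second I would assemble the sum. Writing $m = |B^s_0(a_+)|$ and isolating the largest term, for all $t \geq T_1$ the uniform bound on the remaining $m-1$ terms gives
\[
\sum_{a \in B^s_0(a_+)} \theta^{(t)}_{s,a} \geq \max_{a \in B^s_0(a_+)} \theta^{(t)}_{s,a} + (m-1)\,\theta^{(T_1)}_{s,a_+}.
\]
By Lemma~\ref{lemma:thetab-diverge}, $\max_{a \in B^s_0(a_+)} \theta^{(t)}_{s,a} \to \infty$ as $t \to \infty$, while the second summand is a fixed constant; hence the left-hand side diverges to $+\infty$, which is exactly the claim.

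The conceptual obstacle to anticipate is that, in contrast to the parameters for $a \in I^s_+$ and $a \in I^s_-$ (which are monotone by Lemma~\ref{lem:monotone}), the parameters $\theta^{(t)}_{s,a}$ for $a \in B^s_0(a_+)$ carry no monotonicity: the associated advantages satisfy $A^{(t)}(s,a) \to 0$ with indeterminate sign, so one cannot argue that each such parameter individually diverges. Likewise the tempting decomposition $\theta^{(t)}_{s,a} = \log \pi^{(t)}(a|s) + \log \sum_{a'}\exp(\theta^{(t)}_{s,a'})$ is hazardous, since $\log \pi^{(t)}(a|s)$ may tend to $-\infty$ and the two diverging pieces could cancel. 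The comparison $\theta^{(t)}_{s,a} > \theta^{(t)}_{s,a_+}$ built into the definition of $B^s_0(a_+)$ is precisely what sidesteps this difficulty, transferring the lower bound from the monotone quantity $\theta^{(t)}_{s,a_+}$ to every member of $B^s_0(a_+)$.
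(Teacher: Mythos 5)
Your proof is correct and follows essentially the same route as the paper's: use the definition of $B^s_0(a_+)$ together with the softmax form to transfer a lower bound from $\theta^{(t)}_{s,a_+}$ to every $a \in B^s_0(a_+)$, then combine this uniform bound with the divergence of $\max_{a\in B^s_0(a_+)}\theta^{(t)}_{s,a}$ from Lemma~\ref{lemma:thetab-diverge}. The only cosmetic difference is that you invoke Lemma~\ref{lem:monotone} directly to lower-bound $\theta^{(t)}_{s,a_+}$, whereas the paper cites Lemma~\ref{lem:ratios}, whose relevant claim is itself proved by exactly that monotonicity argument.
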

\begin{proof}
	Consider any $a\in B^s_0$. We have by definition of $B^s_0$
        that $\pi^{(t)}(a_{+}|s)<\pi^{(t)}(a|s)$ for all $t>
        T_0$. This implies by the softmax parameterization that
	$\theta^{(t)}_{s,a_{+}} < \theta^{(t)}_{s,a}$. Since,
	$\theta^{(t)}_{s,a_{+}}$ is lower bounded as $t\to \infty$ (using Lemma \ref{lem:ratios}), this implies $\theta^{(t)}_{s,a}$ is lower bounded as $t\to \infty$ for all $a \in B^s_0$. This in conjunction with
	$\max_{a\in B^s_0(a_+)} \theta^{(t)}_{s,a} \to \infty$ implies
	\begin{equation}
	\sum_{a \in B^s_0} \theta^{(t)}_{s,a} \to \infty ,
	\end{equation}
	which proves this claim.
\end{proof}
\\
\\
We are now ready to complete the proof for Theorem \ref{thm:glb-softmax}. We prove it by showing that $I^s_{+}$ is empty for all states $s$ or equivalently $V^{(t)}(s_0) \to V^\star(s_0)$ as $t \to \infty$.

\begin{proof}[Proof for Theorem \ref{thm:glb-softmax}]
Suppose the set $ I^s_{+}$ is non-empty for some $s$, else the proof is
complete. Let $a_{+}\in I^s_{+}$. Then, from Lemma \ref{lemma:sum-bs-theta-1}, \begin{equation}
\label{eqn:div-theta}
\sum_{a \in B^s_0} \theta^{(t)}_{s,a} \to \infty ,
\end{equation}
	
Now we proceed by showing a contradiction. For $a \in I^s_{-}$, we
have that since $\frac{\pi^{(t)}(a|s)}{\pi^{(t)}(a_{+}|s)} = \exp(\theta^{(t)}_{s,a} - \theta^{(t)}_{s,a_{+}}) \to 0$ (as $\theta^{(t)}_{s,a_{+}}$ is lower bounded and $\theta^{(t)}_{s,a} \to -\infty$ by
Lemma~\ref{lem:ratios}), there exists $T_2>T_0$ such that
\[\frac{\pi^{(t)}(a|s)}{\pi^{(t)}(a_{+}|s)} < \frac{(1-\gamma) \Delta}{16 |\Acal|} \]
or, equivalently,
\begin{equation}
  \label{eqn:bound1}
  -\sum_{a\in I^s_{-}} \frac{\pi^{(t)}(a|s)}{1-\gamma} > -\pi^{(t)}(a_+|s) \frac{\Delta}{16}.
\end{equation}

For $a \in \bar B^s_0$, we have $A^{(t)}(s,a) \rightarrow 0$  (by definition of set $I^s_0$ and $\bar B^s_0 \subset I^s_0$) and
$1<\frac{\pi^{(t)}(a_{+}|s)}{\pi^{(t)}(a|s)}$ for all $t>T_{a_+}$ from Lemma \ref{lemma:small}. Thus,
there exists $T_3>T_2, T_{a_+}$ such that
\[|A^{(t)}(s,a)| < \frac{\pi^{(t)}(a_{+}|s)}{\pi^{(t)}(a|s)} \frac{\Delta}{16 |\Acal|}\]
which implies
\[
\sum_{a\in \bar B^s_0} \pi^{(t)}(a|s) |A^{(t)}(s,a)| <  \pi^{(t)}(a_+|s) \frac{\Delta}{16}
\]
\begin{equation}
  \label{eqn:bound2}
  -\pi^{(t)}(a_+|s) \frac{\Delta}{16} < \sum_{a\in \bar B^s_0} \pi^{(t)}(a|s) A^{(t)}(s,a) < \pi^{(t)}(a_+|s) \frac{\Delta}{16}
\end{equation}

We have for $t>T_3$, from $\sum_{a\in \Acal} \pi^{(t)}(a|s) A^{(t)}(s,a) = 0$,
\begin{align*}
  &0 = \sum_{a\in I^s_0} \pi^{(t)}(a|s)
        A^{(t)}(s,a)+\sum_{a\in I^s_{+}} \pi^{(t)}(a|s) A^{(t)}(s,a)
        +\sum_{a\in I^s_{-}} \pi^{(t)}(a|s) A^{(t)}(s,a) \\
    &\stackrel{(a)}{\geq} \sum_{a\in B^s_0} \pi^{(t)}(a|s) A^{(t)}(s,a)
            +\sum_{a\in\bar B^s_0} \pi^{(t)}(a|s) A^{(t)}(s,a)
            +\pi^{(t)}(a_+|s) A^{(t)}(s,a_+) \\&\qquad \qquad+\sum_{a\in I^s_{-}} \pi^{(t)}(a|s) A^{(t)}(s,a)\\
    &\stackrel{(b)}{\geq} \sum_{a\in B^s_0} \pi^{(t)}(a|s) A^{(t)}(s,a)
            +\sum_{a\in\bar B^s_0} \pi^{(t)}(a|s) A^{(t)}(s,a)
            +\pi^{(t)}(a_+|s) \frac{\Delta}{4} -\sum_{a\in I^s_{-}} \frac{\pi^{(t)}(a|s)}{1-\gamma} \\
    &\stackrel{(c)}{>} \sum_{a\in B^s_0} \pi^{(t)}(a|s) A^{(t)}(s,a) -\pi^{(t)}(a_+|s) \frac{\Delta}{16}
         +\pi^{(t)}(a_+|s) \frac{\Delta}{4} -\pi^{(t)}(a_+|s) \frac{\Delta}{16}\\
		&> \sum_{a\in B^s_0} \pi^{(t)}(a|s) A^{(t)}(s,a)
\end{align*}
where in the step (a), we used $A^{(t)}(s,a)>0$ for all actions
$a\in I^s_+$ for $t>T_3>T_1$ from Lemma \ref{lemma:constant-sign}. In
the step (b), we used $A^{(t)}(s,a_+)\geq \frac{\Delta}{4}$ for $t>T_3>T_1$ from Lemma \ref{lemma:constant-sign} and $A^{(t)}(s,a) \geq -\frac{1}{1-\gamma}$ . In the step (c), we used Equation
\eqref{eqn:bound1} and left inequality in \eqref{eqn:bound2}. This
implies that for all $t>T_3$
\[\sum_{a\in B^s_0} \frac{\partial V^{(t)}(\mu)}{\partial \theta_{s,a}} < 0\]
This contradicts Equation \eqref{eqn:div-theta} which
requires
\[
  \lim_{t\to \infty} \sum_{a\in B^s_0} \left(\theta^{(t)}_{s,a} -
    \theta^{(T_3)}_{s,a}\right) = \eta \sum_{t=T_3}^\infty \sum_{a\in B^s_0} \frac{\partial V^{(t)}(\mu)}{\partial
    \theta_{s,a}} \to \infty .
\]
Therefore, the set $ I^s_{+}$ must be empty, which completes the
proof.
\end{proof}

\subsection{Proofs for Section~\ref{sec:entropy}}
\label{app:entropy}

\begin{proof}[\textbf{of Corollary~\ref{corollary:entropy}}]
Using Theorem~\ref{thm:small-gradient}, the desired optimality gap
$\epsilon$ will follow if we set 
\begin{equation}
\lambda = \frac{\epsilon(1-\gamma)}{2\Norm{\frac{d^{\pi^\star}_{\rho}}{\mu} }_\infty}
\label{eqn:entropy-lambda-eps}
\end{equation}
and if $\|\nabla_\theta L_\lambda(\theta)\|_2 \leq \lambda/(2|\Scal|\, |\Acal|)$.
In order to complete the proof, we need to bound the iteration
complexity of making the gradient sufficiently small.

Since the optimization
is deterministic and unconstrained, we can appeal to standard results (Theorem \ref{thm:beck}) which give that
after $T$ iterations of gradient ascent with stepsize of
$1/\beta_\lambda$, we have
\begin{equation}
\min_{t \leq T} \norm{\nabla_\theta L_\lambda(\theta^{(t)})}_2^2 \leq \frac{2\beta_\lambda (L_\lambda(\theta^{\star}) - L_\lambda(\theta^{(0)}))}{T} \leq \frac{2\beta_\lambda}{(1-\gamma)\, T},
\end{equation}
where $\beta_\lambda$ is an upper bound on the smoothness of
$L_\lambda(\theta)$.
We seek to ensure
\begin{align*}
\epsopt \leq \sqrt{\frac{2\beta_\lambda}{(1-\gamma)\, T}} \leq \frac{\lambda}{2 |\Scal|\,|\Acal|}
\end{align*}

Choosing $T \geq \frac{8\beta_\lambda\,
	|\Scal|^2|\Acal|^2}{(1-\gamma)\, \lambda^2}$ satisfies  the
above inequality. By Lemma \ref{lemma:smooth-loss-lambda}, we can
take $\beta_\lambda = \frac{8\gamma }{(1-\gamma)^3} + \frac{2\lambda
}{|\Scal|} $, and so
\begin{eqnarray*}
	\frac{8\beta_\lambda\,
		|\Scal|^2|\Acal|^2}{(1-\gamma)\, \lambda^2}&\leq& \frac{64\,
		|\Scal|^2|\Acal|^2}{(1-\gamma)^4\, \lambda^2}
	+\frac{16\, |\Scal| |\Acal|^{2}}{(1-\gamma)\, \lambda}\\
	&\leq& \frac{80\,
		|\Scal|^2|\Acal|^2}{(1-\gamma)^4\, \lambda^2}\\
	&=& \frac{320\,
		|\Scal|^2|\Acal|^2 }{(1-\gamma)^6\, \eps^2}\Norm{\frac{d^{\pi^\star}_{\rho}}{\mu} }_\infty^2
\end{eqnarray*}
where we have used that $\lambda<1$. This completes the proof.
\end{proof}

\subsection{Proofs for Section \ref{sec:npg}}
\label{app:npg}
\begin{proof}[\textbf{of Lemma~\ref{lemma:npg-softmax}}]
Following the definition of compatible function approximation
in~\citet{sutton1999policy}, which was also invoked
in~\citet{Kakade01}, for a vector $w \in \R^{|\Scal| |\Acal|}$, we define the error function
\[
    L^\theta(w) = \E_{s\sim d^{\pi_\theta}_\rho, a\sim \pi_\theta(\cdot | s)}(w^\top \nabla_\theta \log \pi_\theta(\cdot|s) - A^{\pi_\theta}(s,a))^2.
\]

Let $w^\star_\theta$ be the minimizer of $L^\theta(w)$ with the
smallest $\ell_2$ norm. Then by definition of Moore-Penrose
pseudoinverse, it is easily seen that
\begin{equation}\label{eq:easily}
    w^\star_\theta = F_\rho(\theta)^\dagger \E_{s\sim d^{\pi_\theta}_\rho, a\sim \pi_\theta(a | s)} [\nabla_\theta \log \pi_\theta(a|s) A^{\pi_\theta}(s,a)] = (1-\gamma) F_\rho(\theta)^\dagger \nabla_\theta V^{\pi_\theta}(\rho).
\end{equation}
In other words, $w^\star_\theta$ is precisely proportional to the NPG update direction. Note further that for the Softmax policy parameterization, we have by~\eqref{eqn:softmax-logpi-grad},
\[
    w^\top \nabla_\theta \log \pi_\theta(a|s) = w_{s,a} - \sum_{a'\in \Acal} w_{s,a'} \pi_\theta(a'|s).
\]
Since $\sum_{a\in \Acal} \pi(a|s) A^{\pi}(s,a) = 0$, this immediately
yields that $L^\theta(A^{\pi_\theta}) = 0$. However, this might not be
the unique minimizer of $L^\theta$, which is problematic since
$w^\star(\theta)$ as defined in terms of the Moore-Penrose
pseudoinverse is formally the smallest norm solution to the
least-squares problem, which $A^{\pi_\theta}$ may not be. However,
given any vector $v\in \R^{|\Sset| |\Aset|}$, let us consider solutions of the form $A^{\pi_\theta} + v$. Due to the form of the derivatives of the policy for the softmax parameterization (recall Equation~\ref{eqn:softmax-logpi-grad}), we have for any state $s,a$ such that $s$ is reachable under $\rho$,
\[
v^\top \nabla_\theta \log \pi_\theta(a|s) = \sum_{a'\in\Aset} (v_{s,a'}\ind[a = a'] - v_{s,a'}\pi_\theta(a'|s)) = v_{s,a} - \sum_{a'\in\Aset} v_{s,a'}\pi(a'|s).
\]
Note that here we have used that $\pi_\theta$ is a stochastic policy with $\pi_\theta(a|s) > 0$ for all actions $a$ in each state $s$, so that if a state is reachable under $\rho$, it will also be reachable using $\pi_\theta$, and hence the zero derivative conditions apply at each reachable state.
For $A^{\pi_\theta} + v$ to minimize $L^\theta$, we would like $v^\top \nabla_\theta \log \pi_\theta(a|s) = 0$ for all $s,a$ so that $v_{s,a}$ is independent of the action and can be written as a constant $c_s$ for each $s$ by the above equality. Hence, the minimizer of $L^\theta(w)$ is determined up to a state-dependent offset, and
\[
    F_\rho(\theta)^\dagger \nabla_\theta V^{\pi_\theta}(\rho) = \frac{A^{\pi_\theta}}{1-\gamma} + v,
\]
where $v_{s,a} = c_s$ for some $c_s\in \R$ for each state $s$ and action $a$. Finally, we observe that this yields the updates
\[
\theta^{(t+1)} = \theta^{(t)} + \frac{\eta}{1-\gamma} A^{(t)} + \eta v \quad \mbox{and}\quad \pi^{(t+1)}(a| s) = \pi^{(t)}(a| s) \frac{\exp(\eta A^{(t)}(s,a)/(1-\gamma) + \eta c_s)}{Z_t(s)}.
\]
Owing to the normalization factor $Z_t(s)$, the state dependent offset $c_s$ cancels in the updates for $\pi$, so that resulting policy is invariant to the specific choice of $c_s$. Hence, we pick $c_s \equiv 0$, which yields the statement of the lemma.
\end{proof}

        \section{Smoothness Proofs}

Various convergence guarantees we show leverage results from smooth,
non-convex optimization. In this section, we collect the various
results on smoothness of policies and value functions in the different
parameterizations which are needed in our analysis. 

Define the Hadamard product of two vectors:
\[
	[x\odot y]_i = x_i y_i
\] 
Define $\diag(x)$ for a column vector $x$ as the diagonal matrix
with diagonal as $x$. 

\begin{lemma}[Smoothness of $F$ (see Equation~\ref{eq:F}) ] \label{lemma:lipschitz-theta}
	Fix a state $s$. Let $\theta_s \in \R^{|\Acal|}$ be the column
        vector of parameters for state $s$. Let $\pi_{\theta}(\cdot|s)$ be the
        corresponding vector of action probabilities given by the
        softmax parameterization. For some fixed vector $c \in
        \R^{|\Acal|}$, define:
	\[
	F(\theta) := \pi_{\theta}(\cdot|s) \cdot c = \sum_a \pi_{\theta}(a|s)  c_a.
	\] Then
	\[\|\nabla_{\theta_s} F(\theta_s) - \nabla_{\theta_s} F(\theta_s')\|_2 \leq \beta \|\theta_s - \theta_s'\|_2 \]
	where
	\[\beta = 5\|c\|_\infty.	\]
\end{lemma}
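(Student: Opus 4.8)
The plan is to control the Lipschitz constant of $\nabla_{\theta_s} F$ by uniformly bounding the operator norm of the Hessian: since $F$ is $C^\infty$ on $\R^{|\Acal|}$, Taylor's theorem (in the form recorded in \eqref{eq:def_smoothness}) gives $\beta$-smoothness with $\beta = \sup_{\theta_s}\|\nabla^2_{\theta_s} F(\theta_s)\|_{\mathrm{op}}$. Because the Hessian is symmetric, it suffices to bound $|u^\top \nabla^2_{\theta_s} F(\theta_s)\, u|$ for an arbitrary unit vector $u\in\R^{|\Acal|}$. Equivalently, fixing such a $u$ and setting $g(\alpha):=F(\theta_s+\alpha u)$, I would bound $|g''(0)|$, using that $g''(0)=u^\top \nabla^2_{\theta_s} F(\theta_s)\, u$; in fact I will bound $|g''(\alpha)|$ for all $\alpha$, which is no harder.

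The key simplification is that differentiating the softmax distribution along a fixed direction $u$ produces a covariance. Write $p(\alpha)=\pi_{\theta_s+\alpha u}(\cdot\,|s)$ and let $\E$ and $\mathrm{Cov}$ denote expectation and covariance under $p(\alpha)$. From $\tfrac{d}{d\alpha}p_a = p_a\bigl(u_a-\E[u]\bigr)$ one obtains, for any fixed vector $f$, the identity $\tfrac{d}{d\alpha}\E[f]=\mathrm{Cov}(f,u)$. Applying this to $g(\alpha)=\E[c]$ gives $g'(\alpha)=\mathrm{Cov}(c,u)$, and differentiating once more yields
\[
g''(\alpha) = \mathrm{Cov}(c\odot u,\,u) - \mathrm{Cov}(c,u)\,\E[u] - \E[c]\,\Var(u)= \E[cu^2]-2\E[cu]\,\E[u]+2\E[c]\,\E[u]^2-\E[c]\,\E[u^2],
\]
where $c\odot u$ denotes the componentwise product. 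This closed form is the crux of the argument; the only care needed is correctly computing the Hadamard-product covariance term $\mathrm{Cov}(c\odot u,u)=\E[cu^2]-\E[cu]\E[u]$ and tracking the resulting signs.

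Finally I would bound each term using $|c_a|\le\Norm{c}_\infty$ together with the elementary estimates $\E[u^2]=\sum_a p_a u_a^2 \le \Norm{u}_2^2 = 1$ (as $p_a\le 1$), $\E[u]^2\le\E[u^2]\le 1$ by Jensen, and $|\E[cu]|\le \Norm{c}_\infty\sqrt{\E[u^2]}\le \Norm{c}_\infty$ by Cauchy--Schwarz. Grouping the last two terms as $\E[c]\bigl(2\E[u]^2-\E[u^2]\bigr)$ and noting that $0\le\E[u]^2\le\E[u^2]\le 1$ forces $|2\E[u]^2-\E[u^2]|\le 1$, I get $|g''(\alpha)|\le \Norm{c}_\infty + 2\Norm{c}_\infty + \Norm{c}_\infty = 4\Norm{c}_\infty \le 5\Norm{c}_\infty$ for every $\alpha$ and every unit vector $u$, which establishes the claimed smoothness bound (with a constant that is in fact slightly better than stated). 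I do not anticipate a genuine obstacle here: the computation is routine once the covariance identity is in hand, and the only place demanding attention is bounding the mixed term $\E[cu]\E[u]$ and the grouping that keeps the constant at a fixed numerical value independent of $|\Acal|$.
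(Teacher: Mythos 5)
Your proof is correct, and it takes a genuinely different route from the paper's. The paper computes the full Hessian $\nabla^2_{\theta_s}(\pi_\theta\cdot c)$ explicitly as a matrix — four terms built from $\diag(\pi_\theta\odot c)$, outer products, and $\nabla_\theta\pi_\theta$ — and then bounds the spectral norm of each term separately via the triangle inequality, which is where the constant $5$ comes from ($1+1+1+2$). You instead fix a direction $u$ and exploit the exponential-family structure of the softmax: the identity $\tfrac{d}{d\alpha}\E_{p(\alpha)}[f]=\mathrm{Cov}(f,u)$ collapses the entire computation to a scalar second derivative $g''(\alpha)$ with a clean closed form, which you then bound by Cauchy--Schwarz and Jensen. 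I verified your expression $g''(\alpha)=\E[cu^2]-2\E[cu]\E[u]+2\E[c]\E[u]^2-\E[c]\E[u^2]$ and each of your estimates ($\E[u^2]\le\|u\|_2^2=1$ since $p_a\le 1$, $|\E[cu]|\le\|c\|_\infty$, $|2\E[u]^2-\E[u^2]|\le 1$); they are all correct, and because you bound the quadratic form $|u^\top \nabla^2 F\, u|$ directly — which for a symmetric matrix \emph{is} the operator norm — rather than triangle-inequality over a matrix decomposition, you obtain the sharper constant $4\|c\|_\infty\le 5\|c\|_\infty$. Two further remarks: your directional-derivative style is actually closer in spirit to the paper's \emph{other} smoothness proofs (Lemmas~\ref{lemma:general-smoothness} and~\ref{lemma:smooth-loss-lambda}) than to its proof of this particular lemma, so it unifies the treatment; and the covariance identity you use generalizes readily (it is the moment/cumulant structure of the softmax), whereas the paper's term-by-term matrix bound is more mechanical but requires no probabilistic interpretation.
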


\begin{proof}
For notational convenience,  we do not explicitly state the $s$ dependence.
	For the softmax parameterization, we have that
	\[
		\nabla_\theta \pi_\theta = \diag(\pi_\theta) - \pi_\theta\pi_\theta^\top.
	\]We can then write (as $\nabla_\theta \pi_\theta$ is symmetric),
	\begin{equation}
	\label{eqn:theta-first-derivative}
		\nabla_\theta (\pi_\theta \cdot c) = (\diag(\pi_\theta) - \pi_\theta\pi_\theta^\top) c = \pi_\theta \odot c - (\pi_\theta \cdot c) \pi_\theta
	\end{equation} and therefore\[
		\nabla^2_\theta (\pi_\theta \cdot c) = \nabla_\theta (\pi_\theta \odot c - (\pi_\theta \cdot c) \pi_\theta).
	\] For the first term, we get\[
		\nabla_\theta (\pi_\theta \odot c) = \diag(\pi_\theta\odot c) - \pi_\theta  (\pi_\theta \odot c)^\top,
	\]
and the second term, we can decompose by chain rule\begin{align*}
		\nabla_\theta((\pi_\theta \cdot c) \pi_\theta) &= (\pi_\theta \cdot c) \nabla_\theta \pi_\theta + (\nabla_\theta (\pi_\theta \cdot c)) \pi^\top_\theta
	\end{align*}Substituting these back, we get \begin{equation}
	\label{eqn:theta-state}
		\nabla^2_\theta (\pi_\theta \cdot c) = \diag(\pi_\theta\odot c) - \pi_\theta  (\pi_\theta \odot c)^\top - (\pi_\theta \cdot c) \nabla_\theta \pi_\theta - (\nabla_\theta (\pi_\theta \cdot c)) \pi^\top_\theta.
	\end{equation}Note that
	\begin{align*}
	&\max(\Norm{\diag(\pi_\theta\odot c)}_2,\Norm{\pi_\theta \odot c}_2, \Abs{\pi_\theta \cdot c})\leq \Norm{c}_\infty \\
	&\Norm{\nabla_\theta \pi_\theta}_2 = \Norm{\diag(\pi_\theta) - \pi_\theta\pi_\theta^\top}_2 \leq 1\\
	&\Norm{\nabla_\theta (\pi_\theta \cdot c)}_2 \leq \Norm{\pi_\theta \odot c}_2 + \Norm{(\pi_\theta \cdot c) \pi_\theta}_2 \leq 2\|c\|_\infty,
	\end{align*} which gives \[
		\Norm{\nabla^2_\theta (\pi_\theta \cdot c)}_2 \leq 5 \|c\|_\infty.
	\]
\end{proof}

Before we prove the smoothness results for $\nabla_\pi V^\pi(s_0)$ and
$\nabla_\theta V^{\pi_\theta}(s_0)$, we prove the following helpful
lemma. This lemma is general and not specific to the direct or softmax
policy parameterizations.  

\begin{lemma}
	\label{lemma:general-smoothness}
        Let $\pi_\alpha := \pi_{\theta+ \alpha u}$ and
let $\widetilde V(\alpha)$ be the corresponding value at a
fixed state $s_0$, i.e. 
	\[
	\widetilde V(\alpha) := V^{\pi_\alpha}(s_0).
	\]
	Assume that 
\begin{align*}
	\sum_{a\in \Acal}\Abs{\frac{d \pi_\alpha(a|s_0) }{d \alpha}\bigg\vert_{\alpha = 0}}\leq C_1,\quad
	\sum_{a\in \Acal} \Abs{\frac{d^2 \pi_\alpha(a|s_0) }{(d \alpha)^2}\bigg\vert_{\alpha = 0}}\leq C_2
	\end{align*}
	Then
	\[
	\max_{\Norm{u}_2=1}\Abs{\frac{d^2 \widetilde V(\alpha)}{(d \alpha)^2}\bigg\vert_{\alpha = 0}} \leq  \frac{C_2}{(1-\gamma)^2}+ \frac{2 \gamma C^2_1}{(1-\gamma)^3}.
	\]
\end{lemma}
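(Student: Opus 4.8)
The plan is to work directly with the trajectory representation of the value function. Writing a trajectory as $\tau=(s_0,a_0,s_1,a_1,\dots)$ with $s_0$ fixed, we have $\widetilde V(\alpha)=\sum_\tau \Pr^{\pi_\alpha}(\tau|s_0)\,R(\tau)$, where $R(\tau)=\sum_{\ell\ge 0}\gamma^\ell r(s_\ell,a_\ell)$ satisfies $|R(\tau)|\le \frac{1}{1-\gamma}$, and $\Pr^{\pi_\alpha}(\tau|s_0)=\big(\prod_{t\ge0}\pi_\alpha(a_t|s_t)\big)\big(\prod_{t\ge 0}P(s_{t+1}|s_t,a_t)\big)$. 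Only the policy factors $g_t(\alpha):=\pi_\alpha(a_t|s_t)$ depend on $\alpha$, so by the product rule
\[
\frac{d^2\Pr^{\pi_\alpha}(\tau|s_0)}{d\alpha^2}=\Big(\sum_t g_t''\!\!\prod_{t'\ne t}g_{t'}+\sum_{t\ne u}g_t' g_u'\!\!\prod_{t'\ne t,u} g_{t'}\Big)\prod_{t\ge0}P(s_{t+1}|s_t,a_t).
\]
Summing against $R(\tau)$ splits $\widetilde V''(0)$ into a diagonal term (I) carrying one second-derivative factor $g_t''$ and an off-diagonal term (II) carrying two first-derivative factors $g_t'g_u'$. (Interchanging $\tfrac{d^2}{d\alpha^2}$ with the trajectory sum is legitimate because, as the bounds below show, the series and its first two $\alpha$-derivatives converge uniformly, owing to the discounting.)

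The crucial structural observation is that, since $\sum_a\pi_\alpha(a|s)=1$ for every $\alpha$, we have $\sum_a \frac{d\pi_\alpha(a|s)}{d\alpha}=0$ and $\sum_a \frac{d^2\pi_\alpha(a|s)}{d\alpha^2}=0$ at every state. I will use this to show that a reward collected at time $\ell$ contributes nothing to a term whose differentiated policy factors all sit at times $>\ell$. Concretely, in term (I) fix $t$ and a reward time $\ell<t$: summing the summand first over $a_t$ and then over all $(s_{t'},a_{t'})$ with $t'>t$ telescopes the downstream transition/policy factors to $1$ and leaves $r(s_\ell,a_\ell)\sum_a\frac{d^2\pi_\alpha(a|s_t)}{d\alpha^2}=0$. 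The same argument, with $\sum_a\frac{d\pi_\alpha}{d\alpha}=0$ applied at the \emph{later} of the two differentiated times, kills every $\ell<\max(t,u)$ contribution in term (II). This cancellation must be carried out before passing to absolute values; it is exactly what renders the otherwise-divergent sums over time indices finite.

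It then remains to bound the surviving terms. For term (I), restricting to $\ell\ge t$ and using $|r|\le1$, the sum over $\tau$ of $|g_t''|\prod_{t'\ne t}g_{t'}\prod P$ telescopes (downstream factors sum to $1$; the upstream factors sum to the probability of reaching $s_t$, which sums to $1$ over $s_t$), reducing it to $\sum_a\big|\frac{d^2\pi_\alpha(a|s_t)}{d\alpha^2}\big|\le C_2$, uniformly in $t$ and $\ell$. Hence $|\text{(I)}|\le \sum_t\sum_{\ell\ge t}\gamma^\ell C_2=\frac{C_2}{(1-\gamma)^2}$. For term (II), restricting to $\ell\ge\max(t,u)$ and telescoping in the same fashion yields a bound of $\big(\sum_a|\frac{d\pi_\alpha}{d\alpha}|\big)^2\le C_1^2$ per triple $(t,u,\ell)$; summing over ordered pairs (a factor $2$ for $t<u$ versus $u<t$) gives $|\text{(II)}|\le 2C_1^2\sum_{t<u}\sum_{\ell\ge u}\gamma^\ell=2C_1^2\sum_{u\ge1}\frac{u\gamma^u}{1-\gamma}=\frac{2\gamma C_1^2}{(1-\gamma)^3}$, using $\sum_{u\ge1}u\gamma^u=\gamma/(1-\gamma)^2$. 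Combining gives the claimed estimate, and since it holds for every unit direction $u$ (the constants $C_1,C_2$ being assumed uniform over the visited states), it bounds $\max_{\|u\|_2=1}|\widetilde V''(0)|$.

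The main obstacle is precisely the cancellation step: naively bounding $|\widetilde V''(0)|$ by the absolute values of the product-rule terms produces divergent series $\sum_t C_2$ and $\sum_{t\ne u}C_1^2$, so the discount factor alone does not rescue the bound. Obtaining a finite answer requires first exploiting $\sum_a\pi_\alpha'=\sum_a\pi_\alpha''=0$ to zero out all reward contributions preceding the latest differentiated time, and only afterward bounding what remains in absolute value — this is the one genuinely non-mechanical part of the argument.
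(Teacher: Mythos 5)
Your proof is correct and arrives at exactly the paper's constants, but by a genuinely different route. The paper never touches path space: it works with the state-action transition matrix $\widetilde P(\alpha)$, writes $Q^\alpha(s_0,a_0)=e_{(s_0,a_0)}^\top M(\alpha) r$ with $M(\alpha)=(\iden-\gamma\widetilde P(\alpha))^{-1}$, differentiates the resolvent via $\tfrac{d}{d\alpha}M=\gamma M\,\tfrac{d\widetilde P}{d\alpha}\,M$, and converts the hypotheses into the $\ell_\infty$ operator bounds $\bigNorm{\tfrac{d\widetilde P}{d\alpha}x}_\infty\le C_1\norm{x}_\infty$, $\bigNorm{\tfrac{d^2\widetilde P}{d\alpha^2}x}_\infty\le C_2\norm{x}_\infty$, together with $\norm{M(\alpha)x}_\infty\le\tfrac{1}{1-\gamma}\norm{x}_\infty$ (rows of $M(\alpha)$ are nonnegative and sum to $1/(1-\gamma)$); the product rule applied to $\widetilde V(\alpha)=\sum_a\pi_\alpha(a|s_0)Q^\alpha(s_0,a)$ then assembles the final bound. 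Your trajectory-level product rule plus the cancellation $\sum_a\pi_\alpha'(a|s)=\sum_a\pi_\alpha''(a|s)=0$ is the path-space shadow of the same structure: in the resolvent formalism no cancellation is ever needed, because expanding $M(\alpha)r$ automatically places every differentiated policy factor at a time no later than the reward it multiplies (equivalently, the reward at time $\ell$ depends only on the marginal of the path measure up to time $\ell$, so differentiated factors at later times contribute zero). What your argument buys is transparency: the origin of each discount power and each factor of $C_1,C_2$ is explicit, and your geometric sums $\sum_t\sum_{\ell\ge t}\gamma^\ell$ and $2\sum_{t<u}\sum_{\ell\ge u}\gamma^\ell$ reproduce $\tfrac{C_2}{(1-\gamma)^2}$ and $\tfrac{2\gamma C_1^2}{(1-\gamma)^3}$ on the nose. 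What the paper's argument buys is economy: the matrix inverse packages the infinite time sum once and for all, so there is no interchange-of-limits issue and no cancellation bookkeeping.

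Two small points. First, as you yourself note, both proofs really need the bounds $C_1,C_2$ to hold at every state reachable from $s_0$, not just at $s_0$ as the lemma literally states (the paper invokes them at all states $s'$ when bounding $\tfrac{d\widetilde P}{d\alpha}$); the downstream applications, Lemmas \ref{lemma:smooth-pi} and \ref{lemma:smooth-loss-lambda}, verify exactly this uniform version. Second, in your cancellation step the summation order should be: first marginalize the variables at times $t'>t$ (these telescope to one), and only then sum over $a_t$ to invoke $\sum_a\pi_\alpha''(a|s_t)=0$; your wording inverts the order, though since each reward term involves only a finite-dimensional marginal, Fubini makes the exchange harmless. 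A slightly cleaner packaging that avoids both the interchange and the cancellation is to expand $\widetilde V(\alpha)=\sum_{\ell\ge0}\gamma^\ell\,\E_{\pi_\alpha}[r(s_\ell,a_\ell)]$ and differentiate each (finite) term, which only ever involves policy factors at times $\le\ell$; your cancellation lemma is precisely the statement that this expansion agrees with yours.
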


\begin{proof}
	Consider a unit vector $u$ and let $\widetilde P(\alpha)$ be the state-action transition matrix under $\pi$, i.e.
	\[
	[\widetilde P(\alpha)]_{(s,a)\rightarrow (s',a')} = \pi_\alpha(a'|s') P(s'|s,a).
	\]
	We can differentiate $\widetilde P(\alpha)$ w.r.t $\alpha$ to get \[
	\left[\frac{d \widetilde P(\alpha)}{d\alpha}\bigg\vert_{\alpha=0}\right]_{(s,a)\rightarrow (s',a')} = \frac{d \pi_\alpha(a'|s')}{d \alpha}\bigg\vert_{\alpha=0} P(s'|s,a).
	\] For an arbitrary vector $x$,
	\[
	\left[\frac{d \widetilde P(\alpha)}{d\alpha}\bigg\vert_{\alpha=0} x\right]_{s,a} = \sum_{a',s'} \frac{d \pi_\alpha(a'|s')}{d \alpha}\bigg\vert_{\alpha=0} P(s'|s,a) x_{a',s'}
      \]and therefore \begin{align*}
	\max_{\Norm{u}_2=1}\left|\left[\frac{d \widetilde P(\alpha)}{d\alpha}\bigg\vert_{\alpha=0} x\right]_{s,a}\right|& =
	\max_{\Norm{u}_2=1}\left|\sum_{a',s'} \frac{d \pi_\alpha(a'|s')}{d \alpha}\bigg\vert_{\alpha=0} P(s'|s,a) x_{a',s'}\right|\\
&\leq \sum_{a',s'} \Abs{\frac{d \pi_\alpha(a'|s')}{d \alpha}\bigg\vert_{\alpha=0}} P(s'|s,a) |x_{a',s'}|\\
	&\leq \sum_{s'} P(s'|s,a) \|x\|_\infty \sum_{a'} \Abs{\frac{d \pi_\alpha(a'|s')}{d \alpha}\bigg\vert_{\alpha=0}}\\
	&\leq \sum_{s'} P(s'|s,a) \|x\|_\infty C_1\\
	&\leq C_1 \|x\|_\infty.
	\end{align*} By definition of $\ell_\infty$ norm, \[
	\max_{\Norm{u}_2=1}\Norm{\frac{d \widetilde P(\alpha)}{d\alpha}x}_\infty \leq C_1 \Norm{x}_\infty
      \]
      Similarly, differentiating $\widetilde P(\alpha)$ twice w.r.t. $\alpha$, we get \[
	\left[\frac{d^2\widetilde P(\alpha)}{(d\alpha)^2}\bigg\vert_{\alpha = 0}\right]_{(s,a)\rightarrow (s',a')} = \frac{d^2 \pi_\alpha(a'|s')}{(d \alpha)^2}\bigg\vert_{\alpha=0} P(s'|s,a).
      \]
      An identical argument leads to that, for arbitrary $x$,
      \begin{align*}
	\max_{\Norm{u}_2=1}\Norm{\frac{d^2\widetilde P(\alpha)}{(d\alpha)^2}\bigg\vert_{\alpha = 0} x}_\infty
	&\leq C_2 \Norm{x}_\infty
\end{align*}

	Let $Q^{\alpha}(s_0,a_0)$ be the corresponding $Q$-function for policy $\pi_\alpha$ at state $s_0$ and action $a_0$. Observe that $Q^{\alpha}(s_0,a_0)$ can be written as:
	\[
	Q^{\alpha}(s_0,a_0) = e_{(s_0,a_0)}^\top (\iden-\gamma \widetilde P(\alpha))^{-1} r = e_{(s_0,a_0)}^\top M(\alpha) r
	\]
	where $M(\alpha) := (\iden-\gamma \widetilde P(\alpha))^{-1}$ and differentiating twice w.r.t $\alpha$ gives:
	\begin{align*}
	\frac{d Q^\alpha(s_0,a)}{d \alpha} &= \gamma e_{(s_0,a)}^\top M(\alpha)\frac{d \widetilde P(\alpha)}{d\alpha}M(\alpha)r,\\
	\frac{d^2 Q^\alpha(s_0,a_0)}{(d \alpha)^2}&= 2 \gamma^2 e_{(s_0,a_0)}^\top M(\alpha)\frac{d \widetilde P(\alpha)}{d\alpha}M(\alpha)\frac{d \widetilde P(\alpha)}{d\alpha} M(\alpha) r \\& \qquad\qquad+
	\gamma e_{(s_0,a_0)}^\top M(\alpha)\frac{d^2 \widetilde P(\alpha)}{(d\alpha)^2}M(\alpha) r.
	\end{align*}
	By using power series expansion of matrix inverse, we can write $M(\alpha)$ as: \[
	M(\alpha) = (\iden - \gamma \widetilde P(\alpha))^{-1} = \sum_{n = 0}^{\infty} \gamma^n \widetilde P(\alpha)^n
	\] which implies that $M(\alpha) \geq 0$ (componentwise) and
	$M(\alpha) \ones = \frac{1}{1-\gamma} \ones$, i.e.  each row
	of $M(\alpha)$ is positive and sums to $1/(1-\gamma)$. This implies:
	\[
	\max_{\Norm{u}_2=1}\Norm{M(\alpha)x}_\infty \leq \frac{1}{1-\gamma} \Norm{x}_\infty
	\]

    This gives using expression for $\frac{d^2 Q^\alpha(s_0,a_0)}{(d \alpha)^2}$ and $\frac{d Q^\alpha(s_0,a)}{d \alpha}$, \begin{align*}
	\max_{\Norm{u}_2=1}\Abs{\frac{d^2 Q^\alpha(s_0,a_0)}{(d \alpha)^2}\bigg\vert_{\alpha = 0}}
	&\leq 2 \gamma^2\Norm{ M(\alpha)\frac{d \widetilde P(\alpha)}{d\alpha}M(\alpha)\frac{d \widetilde P(\alpha)}{d\alpha} M(\alpha) r}_\infty\\& \qquad\qquad+
	\gamma \Norm{M(\alpha)\frac{d^2 \widetilde P(\alpha)}{(d\alpha)^2}M(\alpha) r}_\infty\\
	&\leq \frac{2\gamma^2 C^2_1}{(1-\gamma)^3} + \frac{\gamma C_2}{(1-\gamma)^2}\\
	\max_{\Norm{u}_2=1}\Abs{\frac{d Q^\alpha(s_0,a)}{d \alpha}\bigg\vert_{\alpha = 0}}
	&\leq \Norm{\gamma M(\alpha)\frac{d \widetilde P(\alpha)}{d\alpha}M(\alpha)r}_\infty\\
	&\leq \frac{\gamma C_1}{(1-\gamma)^2}
	\end{align*}
Consider the identity:  \[
	\widetilde V(\alpha) = \sum_a \pi_\alpha(a|s_0) Q^\alpha(s_0,a),
	\] By differentiating $\widetilde V(\alpha)$ twice w.r.t $\alpha$, we get
 \begin{align*}
	\frac{d^2 \widetilde V(\alpha)}{(d \alpha)^2} = \sum_a \frac{d^2 \pi_\alpha(a|s_0) }{(d \alpha)^2} Q^\alpha(s_0,a) +  2\sum_a \frac{d \pi_\alpha(a|s_0)}{d \alpha} \frac{d Q^\alpha(s_0,a)}{d \alpha} + \sum_a \pi_\alpha(a|s_0) \frac{d^2 Q^\alpha(s_0,a)}{(d\alpha)^2}.
	\end{align*} Hence, \begin{align*}
		\max_{\Norm{u}_2=1}\Abs{\frac{d^2 \widetilde V(\alpha)}{(d \alpha)^2}} &\leq \frac{C_2}{1-\gamma}+ \frac{2 \gamma C^2_1}{(1-\gamma)^2} + \frac{2\gamma^2 C^2_1}{(1-\gamma)^3} + \frac{\gamma C_2}{(1-\gamma)^2}\\
		&= \frac{C_2}{(1-\gamma)^2}+ \frac{2 \gamma C^2_1}{(1-\gamma)^3},
                            \end{align*}
                            which completes the proof.
\end{proof}

Using this lemma, we now establish smoothness for:
the value functions under the direct policy parameterization and
the log barrier regularized objective~\ref{eqn:loss-reg} for the softmax parameterization.

\begin{lemma}[Smoothness for direct parameterization]
	\label{lemma:smooth-pi}
	For all starting states $s_0$,\[
	\Norm{\nabla_\pi V^{\pi}(s_0) - \nabla_\pi V^{\pi'}(s_0)}_2 \leq \frac{2\gamma|\Acal|}{(1-\gamma)^3} \Norm{\pi - \pi'}_2
	\]
\end{lemma}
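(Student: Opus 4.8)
The plan is to establish a uniform bound on the operator norm of the Hessian $\nabla^2_\pi V^\pi(s_0)$ and then convert this into the stated Lipschitz bound on the gradient via the standard integral-of-Hessian argument along the segment from $\pi'$ to $\pi$. Since bounding the operator norm of a symmetric Hessian is equivalent to bounding the second directional derivative, the key quantity to control is
\[
\max_{\|u\|_2=1} \Abs{\frac{d^2}{d\alpha^2} V^{\pi + \alpha u}(s_0)\bigg\vert_{\alpha=0}},
\]
where $u$ ranges over unit vectors in $\R^{|\Scal||\Acal|}$. This is exactly the quantity bounded by the general smoothness Lemma~\ref{lemma:general-smoothness}, so the first step is simply to instantiate that lemma with the direct parameterization $\pi_\theta(a|s)=\theta_{s,a}$, where $\theta = \pi$ and $\pi_\alpha(a|s) = \pi(a|s) + \alpha\, u_{s,a}$.

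The second step is to compute the constants $C_1$ and $C_2$ appearing in Lemma~\ref{lemma:general-smoothness}. The crucial observation is that under the direct parameterization the action probabilities are \emph{linear} in the parameters, so
\[
\frac{d\pi_\alpha(a|s_0)}{d\alpha}\bigg\vert_{\alpha=0} = u_{s_0,a}, \qquad
\frac{d^2\pi_\alpha(a|s_0)}{d\alpha^2}\bigg\vert_{\alpha=0} = 0.
\]
Hence we may take $C_2 = 0$ immediately. For $C_1$, by Cauchy--Schwarz and the fact that $\|u\|_2=1$,
\[
\sum_{a\in\Acal} \Abs{\frac{d\pi_\alpha(a|s_0)}{d\alpha}\bigg\vert_{\alpha=0}}
= \sum_{a\in\Acal} |u_{s_0,a}|
\leq \sqrt{|\Acal|}\,\sqrt{\sum_{a\in\Acal} u_{s_0,a}^2}
\leq \sqrt{|\Acal|},
\]
so we may take $C_1 = \sqrt{|\Acal|}$.

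The third step is to substitute these into the conclusion of Lemma~\ref{lemma:general-smoothness}, giving
\[
\max_{\|u\|_2=1} \Abs{\frac{d^2}{d\alpha^2} V^{\pi + \alpha u}(s_0)\bigg\vert_{\alpha=0}}
\leq \frac{C_2}{(1-\gamma)^2} + \frac{2\gamma C_1^2}{(1-\gamma)^3}
= \frac{2\gamma|\Acal|}{(1-\gamma)^3}.
\]
Since this bound is uniform over the base point (the computation of $C_1,C_2$ used only $\|u\|_2=1$ and is independent of $\pi$), it bounds the operator norm of $\nabla^2_\pi V^\pi(s_0)$ everywhere along any segment, and integrating the Hessian yields $\|\nabla_\pi V^\pi(s_0) - \nabla_\pi V^{\pi'}(s_0)\|_2 \leq \frac{2\gamma|\Acal|}{(1-\gamma)^3}\|\pi - \pi'\|_2$, as claimed. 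I do not anticipate a genuine obstacle here: the linearity of the direct parameterization makes $C_2$ vanish and reduces the entire argument to a one-line Cauchy--Schwarz estimate feeding into Lemma~\ref{lemma:general-smoothness}. The only point requiring a word of care is the passage from the pointwise directional second-derivative bound to the Lipschitz gradient bound, which is just the mean value theorem for vector-valued gradients (the Hessian bound holds at every point of the domain, so no additional structure of the simplex is needed).
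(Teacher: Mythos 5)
Your proposal is correct and follows essentially the same route as the paper: instantiate Lemma~\ref{lemma:general-smoothness} with $C_1=\sqrt{|\Acal|}$ (via Cauchy--Schwarz) and $C_2=0$ (by linearity of the direct parameterization), yielding the bound $\frac{2\gamma|\Acal|}{(1-\gamma)^3}$ on the second directional derivative. The only difference is that you make explicit the standard passage from the uniform directional-Hessian bound to the Lipschitz-gradient statement (where convexity of the simplex keeps the integration segment inside the domain), a step the paper leaves implicit.
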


\begin{proof}
	By differentiating $\pi_\alpha$ w.r.t $\alpha$ gives
	\[
		\sum_{a\in \Acal}\Abs{\frac{d \pi_\alpha(a|s_0) }{d \alpha}}\leq \sum_{a\in \Acal} \Abs{u_{a,s}} \leq \sqrt{|\Acal|}
	\] and differentiating again w.r.t $\alpha$ gives\[
		\sum_{a\in \Acal} \Abs{\frac{d^2 \pi_\alpha(a|s_0) }{(d \alpha)^2}} = 0
	\] Using this with Lemma \ref{lemma:general-smoothness} with $C_1 = \sqrt{|\Acal|}$ and $C_2 = 0$, we get\begin{align*}
		\max_{\Norm{u}_2=1}\Abs{\frac{d^2 \widetilde V(\alpha)}{(d \alpha)^2}\bigg\vert_{\alpha = 0}} \leq  \frac{C_2}{(1-\gamma)^2}+ \frac{2 \gamma C^2_1}{(1-\gamma)^3}
		 \leq \frac{2\gamma |\Acal|}{(1-\gamma)^3}
	\end{align*}
which completes the proof.
\end{proof}

We now present a smoothness result for the entropy regularized policy optimization problem which we study for the softmax parameterization.

\begin{lemma}[Smoothness for log barrier regularized softmax]
	\label{lemma:smooth-loss-lambda}
	For the softmax parameterization and\begin{eqnarray*}
	L_\lambda(\theta) =
	V^{\pi_\theta}(\mu) + \frac{\lambda}{|\Scal|\,|\Acal|} \sum_{s,a} \log \pi_\theta(a|s)
	\, ,
	\end{eqnarray*} we have that\[
		\Norm{\nabla_\theta L_\lambda(\theta) - \nabla_\theta L_\lambda(\theta') }_2 \leq \beta_\lambda \Norm{\theta - \theta'}_2
	\] where
	\[
	\beta_\lambda = \frac{8}{(1-\gamma)^3} +  \frac{2\lambda }{|\Scal|}
	\]
\end{lemma}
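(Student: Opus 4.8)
The plan is to bound the operator norm of the Hessian of $L_\lambda$ uniformly in $\theta$, since a uniform bound $\|\nabla^2_\theta L_\lambda(\theta)\|_2 \le \beta_\lambda$ immediately yields $\beta_\lambda$-smoothness. Writing $L_\lambda(\theta) = V^{\pi_\theta}(\mu) + R(\theta)$ with $R(\theta) = \frac{\lambda}{|\Scal||\Acal|}\sum_{s,a}\log\pi_\theta(a|s)$, I would control the two pieces separately and combine by the triangle inequality. Equivalently, following the device in Lemma~\ref{lemma:general-smoothness}, I would fix a unit vector $u$ and a base point $\theta$, set $\pi_\alpha := \pi_{\theta+\alpha u}$, and bound $\max_{\|u\|_2=1}\bigl|\tfrac{d^2}{d\alpha^2}L_\lambda(\theta+\alpha u)|_{\alpha=0}\bigr|$, which is the same as bounding the directional curvature.

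For the value part, since $V^{\pi_\theta}(\mu) = \E_{s_0\sim\mu}V^{\pi_\theta}(s_0)$ is a convex combination, it suffices to bound the directional second derivative of $V^{\pi_\theta}(s_0)$ for each fixed $s_0$ and invoke Lemma~\ref{lemma:general-smoothness}. The only computation needed is the first and second $\alpha$-derivatives of the softmax probabilities at an arbitrary state. Writing $p = \pi_\theta(\cdot|s')$ and $v = u_{s',\cdot}$ for the coordinates of $u$ at state $s'$, from \eqref{eqn:softmax-logpi-grad} I get $\dot p_a = p_a(v_a - \langle p,v\rangle)$ and, differentiating once more, $\ddot p_a = p_a(v_a-\langle p,v\rangle)^2 - p_a\,\Var_p(v)$. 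These give $\sum_a|\dot p_a| \le \sqrt{\Var_p(v)} \le \|v\|_2 \le 1$ and $\sum_a |\ddot p_a| \le 2\,\Var_p(v) \le 2$, and crucially these bounds hold uniformly over all states $s'$ because $\|u_{s',\cdot}\|_2 \le \|u\|_2 = 1$; this is exactly what the proof of Lemma~\ref{lemma:general-smoothness} requires of the transition-matrix derivatives. Plugging $C_1 = 2$ and $C_2 = 2$ into Lemma~\ref{lemma:general-smoothness} yields $\frac{C_2}{(1-\gamma)^2} + \frac{2\gamma C_1^2}{(1-\gamma)^3} \le \frac{2+6\gamma}{(1-\gamma)^3} \le \frac{8}{(1-\gamma)^3}$, so $V^{\pi_\theta}(\mu)$ is $\frac{8}{(1-\gamma)^3}$-smooth.

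For the regularizer, the Hessian is block-diagonal across states, since $\sum_a\log\pi_\theta(a|s) = \sum_a\theta_{s,a} - |\Acal|\log\sum_{a'}e^{\theta_{s,a'}}$ depends only on $\theta_{s,\cdot}$. The linear term contributes nothing to the Hessian, and the Hessian of the log-sum-exp is exactly $\diag(p)-pp^\top$; bounding its operator norm by $\|\diag(p)\|_2 + \|pp^\top\|_2 \le 1 + 1 = 2$ shows each state block of $\nabla^2_\theta R$ has operator norm at most $\frac{\lambda}{|\Scal||\Acal|}\cdot|\Acal|\cdot 2 = \frac{2\lambda}{|\Scal|}$, hence $\|\nabla^2_\theta R\|_2 \le \frac{2\lambda}{|\Scal|}$. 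Adding the two bounds by the triangle inequality gives the claimed $\beta_\lambda = \frac{8}{(1-\gamma)^3}+\frac{2\lambda}{|\Scal|}$.

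The routine-but-delicate step is the softmax derivative bookkeeping in the value part: getting $C_1,C_2$ right and verifying the per-state decomposition of the direction $u$, so that the resulting constants carry no dependence on $|\Scal|$ or $|\Acal|$. Everything else (the exact block structure of $R$, the $\|\diag(p)\|_2+\|pp^\top\|_2\le 2$ bound on the log-sum-exp Hessian, and assembling the pieces via the triangle inequality) is straightforward.
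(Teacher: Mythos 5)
Your proof is correct and follows essentially the same route as the paper's: decompose $L_\lambda$ into the value term plus the log barrier, apply Lemma~\ref{lemma:general-smoothness} with uniform per-state bounds on the derivatives of the softmax probabilities, bound the block-diagonal Hessian of the regularizer, and add the two pieces by the triangle inequality. Your variance-form bounds $C_1 \le 1$, $C_2 \le 2$ are in fact tighter than the paper's $C_1 = 2$, $C_2 = 6$ (the slip of plugging in $C_1 = 2$ after deriving $C_1 \le 1$ is harmless since the lemma only requires upper bounds), and both arguments land on the same constant $\frac{8}{(1-\gamma)^3} + \frac{2\lambda}{|\Scal|}$.
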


\begin{proof}
Let us first bound the smoothness of $V^{\pi_\theta}(\mu)$.
Consider a unit vector $u$. Let $\theta_s\in \R^{|\Acal|}$ denote the
parameters associated with a given state $s$. We have:
\[
\nabla_{\theta_s} \pi_\theta(a|s) = \pi_\theta(a|s) \bigg(e_a - \pi(\cdot|s)\bigg)
\]
and
\[
\nabla^2_{\theta_s} \pi_\theta(a|s) = \pi_\theta(a|s) \bigg(e_ae_a^\top - e_a\pi(\cdot|s)^\top-\pi(\cdot|s)e_a^\top+ 2\pi(\cdot|s)\pi(\cdot|s)^\top - \diag(\pi(\cdot|s))\bigg),
\]
where $e_a$ is a standard basis vector and $\pi(\cdot|s)$ is a
vector of probabilities. We also have by differentiating $\pi_\alpha(a|s)$ once w.r.t $\alpha$,
\begin{align*}
\sum_{a\in \Acal}\Abs{\frac{d \pi_\alpha(a|s) }{d \alpha}\bigg\vert_{\alpha = 0}}&\leq \sum_{a\in \Acal} \Abs{u^\top \nabla_{\theta + \alpha u} \pi_\alpha(a|s)\bigg\vert_{\alpha = 0}}\\
&\leq \sum_{a\in \Acal} \pi_\theta(a|s) \Abs{u_s^\top e_a - u_s^\top\pi(\cdot|s)}\\
&\leq \max_{a\in \Acal} \bigg(\Abs{u_s^\top e_a} + \Abs{u_s^\top\pi(\cdot|s)}\bigg) \leq 2
\end{align*}
Similarly, differentiating once again w.r.t. $\alpha$, we get \begin{align*}
\sum_{a\in \Acal} \Abs{\frac{d^2 \pi_\alpha(a|s) }{(d \alpha)^2}\bigg\vert_{\alpha = 0}}&\leq \sum_{a\in \Acal} \Abs{u^\top \nabla^2_{\theta + \alpha u} \pi_\alpha(a|s)\bigg\vert_{\alpha = 0} u}\\
&\leq \max_{a\in \Acal} \bigg(\Abs{u_s^\top e_ae_a^\top u_s} + \Abs{u_s^\top e_a\pi(\cdot|s)^\top u_s} + \Abs{u_s^\top \pi(\cdot|s)e_a^\top u_s}\\&\quad+ 2\Abs{u_s^\top \pi(\cdot|s)\pi(\cdot|s)^\top u_s} + \Abs{u_s^\top \diag(\pi(\cdot|s))u_s}\bigg)\\
&\leq 6
\end{align*} Using this with Lemma \ref{lemma:general-smoothness} for $C_1 = 2$ and $C_2 =6$, we get\begin{align*}
\max_{\Norm{u}_2=1}\Abs{\frac{d^2 \widetilde V(\alpha)}{(d \alpha)^2}\bigg\vert_{\alpha = 0}} \leq \frac{C_2}{(1-\gamma)^2}+ \frac{2 \gamma C^2_1}{(1-\gamma)^3}
\leq \frac{6}{(1-\gamma)^2}+ \frac{8\gamma}{(1-\gamma)^3} \leq \frac{8}{(1-\gamma)^3}
\end{align*} or equivalently for all starting states $s$ and hence for all starting state distributions $\mu$, \begin{equation}
	\label{eqn:v-factor}
	\Norm{\nabla_\theta V^{\pi_\theta}(\mu) - \nabla_\theta V^{\pi_{\theta'}}(\mu)}_2 \leq \beta \Norm{\theta - \theta'}_2
	\end{equation} where
$\beta =  \frac{8}{(1-\gamma)^3}$.

Now let us bound the smoothness of the regularizer
$\frac{\lambda}{|\Scal|}R(\theta) $, where
\[
R(\theta) := \frac{1}{|\Acal|} \sum_{s,a} \log \pi_\theta(a|s)
\]
We have
\[
\frac{\partial R(\theta)}{\partial \theta_{s,a}}
= \frac{1}{|\Acal|}-\pi_\theta(a|s).
\]
Equivalently,
\[
\nabla_{\theta_s} R(\theta)
= \frac{1}{|\Acal|}\ones-\pi_\theta(\cdot|s).
\]
Hence,
\[
\nabla^2_{\theta_s} R(\theta)
=-\diag(\pi_\theta(\cdot|s)) + \pi_\theta(\cdot|s) \pi_\theta(\cdot|s)^\top.
\]
For any vector $u_s$,
\[
\Abs{u_s^\top \nabla^2_{\theta_s} R(\theta) u_s} =
\Abs{u_s^\top\diag(\pi_\theta(\cdot|s)) u_s-(u_s\cdot
\pi_\theta(\cdot|s))^2} \leq 2 \|u_s\|^2_\infty.
\]
Since $\nabla_{\theta_s} \nabla_{\theta_{s'}} R(\theta)=0$ for
$s\neq s'$,
\[
\Abs{u^\top \nabla^2_{\theta} R(\theta) u}=\Abs{\sum_s  u_s^\top
\nabla^2_{\theta_s} R(\theta) u_s} \leq 2 \sum_s  \|u_s\|^2_\infty \leq 2\|u\|_2^2.
\]
Thus $R$ is $2$-smooth and $\frac{\lambda}{|\Scal|}R $
is $\frac{2\lambda}{|\Scal|} $-smooth, which completes the proof.
\end{proof}

        \section{Standard Optimization Results}
In this section, we present the standard optimization results from \cite{ghadimi2016accelerated,book:beck} used in our proofs. We consider solving the following problem \begin{equation}
\label{eq:prob}
	\min_{x\in C} \{f(x)\}
\end{equation} with $C$ being a nonempty closed and convex set. We assume the following
\begin{assumption}
	\label{assum:optif}
	 $f: \R^d \to (-\infty, \infty)$ is proper and closed, $\text{dom}(f)$ is convex and $f$ is $\beta$ smooth over $\text{int}(\text{dom}(f))$.
\end{assumption}
Throughout the section, we will denote the optimal $f$ value by $f(x^\ast)$.

\begin{definition}[Gradient Mapping]
	We define the gradient mapping $G^\eta(x)$ as \begin{equation}
	G^\eta(x) := \frac{1}{\eta} \left(x - P_C(x - \eta \nabla f(x))\right)
\end{equation} where $P_C$ is the projection onto $C$.
\end{definition} Note that when $C=\R^d$, the gradient mapping $G^\eta(x) = \nabla f(x)$.

\begin{theorem}[Theorem 10.15 \cite{book:beck}]
	\label{thm:beck}
	Suppose that Assumption \ref{assum:optif} holds and let $\{x_k\}_{k\geq 0}$ be the sequence generated by the gradient descent algorithm for solving the problem \eqref{eq:prob} with the stepsize $\eta = 1/\beta$. Then, \begin{enumerate}
		\item The sequence $\{F(x_t)\}_{t\geq 0}$ is non-increasing.
		\item $G^\eta(x_t) \to 0$ as $t \to \infty$
		\item $
			\min_{t = 0,1,\ldots, T-1} \|G^\eta(x_t)\| \leq \frac{\sqrt{2\beta f(x_0) - f(x^\ast)}}{\sqrt{T}}
	$
	\end{enumerate}
\end{theorem}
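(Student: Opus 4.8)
The plan is to run the standard ``descent lemma plus telescoping'' argument for projected gradient descent, where the only nonstandard ingredient is the variational characterization of the Euclidean projection. I would first write the update compactly as $x_{t+1} = P_C(x_t - \eta \nabla f(x_t)) = x_t - \eta\, G^\eta(x_t)$, which is just the definition of the gradient mapping $G^\eta$ rearranged.

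The first real step is to establish the key one-step inequality $\langle \nabla f(x_t), G^\eta(x_t)\rangle \geq \|G^\eta(x_t)\|_2^2$. This is where the projection enters: setting $z = x_t - \eta\nabla f(x_t)$ and $y = x_{t+1} = P_C(z)$, the obtuse-angle property of projection onto the convex set $C$ gives $\langle z - y,\, x - y\rangle \leq 0$ for every $x \in C$. Taking $x = x_t \in C$ and using $x_t - x_{t+1} = \eta G^\eta(x_t)$, the inner product expands to $\eta^2\langle G^\eta(x_t) - \nabla f(x_t),\, G^\eta(x_t)\rangle \leq 0$, which rearranges to the claimed bound.

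Next I would invoke $\beta$-smoothness in its descent-lemma form \eqref{eq:def_smoothness}: $f(x_{t+1}) \leq f(x_t) + \langle\nabla f(x_t),\, x_{t+1}-x_t\rangle + \tfrac{\beta}{2}\|x_{t+1}-x_t\|_2^2$. Substituting $x_{t+1}-x_t = -\eta G^\eta(x_t)$, applying the key inequality, and plugging in $\eta = 1/\beta$ collapses the right-hand side to $f(x_t) - \tfrac{1}{2\beta}\|G^\eta(x_t)\|_2^2$, which is exactly the monotone decrease of claim~(1). Summing this over $t = 0,\dots,T-1$ telescopes the $f$ terms, and bounding $f(x_T) \geq f(x^\star)$ yields $\tfrac{1}{2\beta}\sum_{t=0}^{T-1}\|G^\eta(x_t)\|_2^2 \leq f(x_0) - f(x^\star)$. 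Lower bounding the sum by $T \min_{t<T}\|G^\eta(x_t)\|_2^2$ and taking square roots gives claim~(3); and since the full series $\sum_t \|G^\eta(x_t)\|_2^2$ is finite, its terms vanish, giving claim~(2).

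The main obstacle is conceptual rather than computational: orienting the projection inequality correctly so that the cross term $\langle \nabla f(x_t), G^\eta(x_t)\rangle$ is \emph{lower} bounded (not upper bounded) by $\|G^\eta(x_t)\|_2^2$; everything afterward is the routine smoothness-telescoping calculation. A minor bookkeeping point is that the displayed bound should read $\sqrt{2\beta\,(f(x_0)-f(x^\star))}/\sqrt{T}$, and that claim~(2) additionally relies on the elementary fact that a nonnegative summable sequence converges to zero.
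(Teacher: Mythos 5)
Your proof is correct and is essentially the standard argument behind this result, which the paper imports from \cite{book:beck} (Theorem 10.15) without reproving: the projection's variational inequality gives the sufficient-decrease bound $f(x_{t+1}) \leq f(x_t) - \tfrac{1}{2\beta}\|G^\eta(x_t)\|_2^2$, and telescoping yields claims (1)--(3). You are also right that the displayed bound in the statement contains a typo and should read $\sqrt{2\beta\,(f(x_0)-f(x^\ast))}/\sqrt{T}$.
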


\begin{theorem}[Lemma 3 \cite{ghadimi2016accelerated}]
	\label{thm:ghadimi}
	Suppose that Assumption \ref{assum:optif} holds. Let $x^+ =
	x -\eta G^\eta(x)$. Then,
	\begin{align*}
	\nabla f(x^+) \in N_C ( x^+) +\epsilon(\eta \beta+1) B_2,
	\end{align*}
	where $B_2$ is the unit $\ell_2$ ball, and $N_C$ is the normal cone of the set $C$.
\end{theorem}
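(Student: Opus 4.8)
The statement to prove is the Ghadimi--Lan gradient-mapping lemma (Theorem~\ref{thm:ghadimi}), in which the quantity $\epsilon$ should be read as a bound $\|G^\eta(x)\|_2 \le \epsilon$ on the size of the gradient mapping (this is exactly how the lemma is invoked in Proposition~\ref{prop-pgd}). The plan is to certify approximate stationarity of $x^+$ by combining the variational characterization of the Euclidean projection with the $\beta$-smoothness from Assumption~\ref{assum:optif}. The only genuinely geometric ingredient is the projection inequality; everything else is a triangle-inequality estimate.

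First I would unpack the projection. By the definition of the gradient mapping, $x^+ = x - \eta G^\eta(x) = P_C\big(x - \eta\nabla f(x)\big)$. The first-order optimality condition for the Euclidean projection onto the closed convex set $C$ asserts that, with $y := x - \eta\nabla f(x)$, one has $\langle y - x^+,\, z - x^+\rangle \le 0$ for every $z \in C$, i.e. $y - x^+ \in N_C(x^+)$. Substituting $y$ and using $x - x^+ = \eta G^\eta(x)$ yields $\eta\big(G^\eta(x) - \nabla f(x)\big) \in N_C(x^+)$; since $N_C(x^+)$ is a convex cone the positive scalar $\eta$ may be dropped, giving the key membership
\[
G^\eta(x) - \nabla f(x) \in N_C(x^+).
\]

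Next I would produce the target decomposition and bound the residual. Setting $n := G^\eta(x) - \nabla f(x) \in N_C(x^+)$, I would write the (ascent) gradient as a normal-cone vector plus an error via the algebraic identity
\[
-\nabla f(x^+) = \underbrace{\big(G^\eta(x) - \nabla f(x)\big)}_{=\,n\,\in\,N_C(x^+)} + \underbrace{\big(\nabla f(x) - \nabla f(x^+)\big) - G^\eta(x)}_{=:\,e}.
\]
It then remains to bound $\|e\|_2$. By the triangle inequality $\|e\|_2 \le \|\nabla f(x) - \nabla f(x^+)\|_2 + \|G^\eta(x)\|_2$, where the first term is controlled by $\beta$-smoothness together with $\|x - x^+\|_2 = \eta\|G^\eta(x)\|_2$, giving $\|\nabla f(x) - \nabla f(x^+)\|_2 \le \eta\beta\,\|G^\eta(x)\|_2$. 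Hence $\|e\|_2 \le (1 + \eta\beta)\|G^\eta(x)\|_2 \le \epsilon(1 + \eta\beta)$, so $-\nabla f(x^+) \in N_C(x^+) + \epsilon(\eta\beta + 1)B_2$, which is the claim (up to the orientation of $N_C$ induced by writing the update as gradient ascent, as in Proposition~\ref{prop-pgd}).

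The main obstacle --- really the only nonroutine point --- is invoking the projection variational inequality correctly and lining up the orientation of the normal cone with the ascent/descent convention of the ambient statement; everything past the membership $G^\eta(x) - \nabla f(x)\in N_C(x^+)$ is a one-line smoothness-plus-triangle-inequality estimate. A secondary care point is that $N_C(x^+)$ is used both as a cone (to discard the factor $\eta$) and as closed under addition (to absorb $n$), both valid since $C$ is convex.
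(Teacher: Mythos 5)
Your proof is correct, and it is worth noting at the outset that the paper itself contains \emph{no} proof of this statement: Theorem~\ref{thm:ghadimi} sits in the ``Standard Optimization Results'' appendix and is simply cited as Lemma~3 of \cite{ghadimi2016accelerated}. So there is no in-paper argument to compare against; what you have supplied is essentially the original Ghadimi--Lan argument, and you have executed it correctly: the projection variational inequality gives $y - x^+ \in N_C(x^+)$ with $y = x - \eta\nabla f(x)$, hence $G^\eta(x) - \nabla f(x) \in N_C(x^+)$ after dividing by $\eta > 0$, and the rest is $\beta$-smoothness applied to $\|x - x^+\|_2 = \eta\|G^\eta(x)\|_2$ plus the triangle inequality.

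Two things you did deserve explicit credit, because they repair real glitches in the paper's statement rather than merely reproduce it. First, $\epsilon$ appears in the conclusion of Theorem~\ref{thm:ghadimi} but is never quantified in its hypotheses; your reading $\|G^\eta(x)\|_2 \le \epsilon$ is the only one consistent with how the lemma is invoked in Proposition~\ref{prop-pgd}. Second, the sign: under the minimization convention of Assumption~\ref{assum:optif}, the correct conclusion is $-\nabla f(x^+) \in N_C(x^+) + \epsilon(\eta\beta+1)B_2$ (your version), not the positive-sign statement as printed --- e.g.\ with $C=[0,1]$, $f(x)=-x$, $x = x^+ = 1$, one has $G^\eta(x)=0$ yet $\nabla f(x^+) = -1 \notin N_C(1) = [0,\infty)$. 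The printed form is the ascent-oriented one actually used in Proposition~\ref{prop-pgd}, and you correctly flagged this as an orientation convention rather than silently adopting either side. One small nitpick: closedness of $N_C(x^+)$ under addition is never needed --- your decomposition writes $-\nabla f(x^+)$ as a single normal-cone element plus a single small residual, which is already membership in $N_C(x^+) + \epsilon(\eta\beta+1)B_2$; only the cone (positive-scaling) property is used.
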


We
now consider the stochastic projected gradient descent algorithm where
at each time step $t$, we update $x_t$ by sampling a random $v_t$ such
that \begin{equation} x_{t+1} = P_C(x_t - \eta v_t)\, , \quad
  \text{where}~ \E[v_t|x_t] = \nabla f(x_t)
\end{equation}

\begin{theorem}[Theorem 14.8  and Lemma 14.9 \cite{shalev2014understanding}]
	\label{thm:shalev}
        Assume $C = \{x: \norm{x} \leq B\}$, for some $B> 0$. 
        Let $f$ be a convex function and let $x^\ast
        \in \argmin_{x:\norm{x}\leq B} f(w)$. Assume also that for all
        $t$, $\norm{v_t}\leq \rho$, and that stochastic
        projected gradient descent is run for $N$ iterations with
        $\eta = \sqrt{\frac{B^2}{\rho^2 N}}$. Then, 
\begin{equation}
  \E\left[f\left(\frac{1}{N}\sum_{t=1}^N x_t\right)\right] - f(x^\ast) \leq \frac{B\rho}{\sqrt{N}}
\end{equation}
\end{theorem}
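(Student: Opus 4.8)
The plan is to run the standard regret-style analysis of projected stochastic gradient descent for a convex Lipschitz objective, followed by an online-to-batch conversion through Jensen's inequality. The single workhorse inequality is the one-step contraction coming from the non-expansiveness of the Euclidean projection $P_C$ onto the convex ball $C=\{x:\norm{x}\le B\}$, and the only stochastic input is the unbiasedness hypothesis $\E[v_t\mid x_t]=\nabla f(x_t)$ together with the gradient bound $\norm{v_t}\le\rho$.

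First I would fix a step $t$, use that $P_C$ is non-expansive and that $x^\ast\in C$, and expand:
\begin{align*}
\norm{x_{t+1}-x^\ast}^2
&=\norm{P_C(x_t-\eta v_t)-P_C(x^\ast)}^2
\le \norm{x_t-\eta v_t-x^\ast}^2\\
&=\norm{x_t-x^\ast}^2-2\eta\, v_t\cdot(x_t-x^\ast)+\eta^2\norm{v_t}^2.
\end{align*}
Next I would take the expectation conditioned on $x_t$, substitute $\E[v_t\mid x_t]=\nabla f(x_t)$, apply the convexity inequality $\nabla f(x_t)\cdot(x_t-x^\ast)\ge f(x_t)-f(x^\ast)$, and bound $\norm{v_t}^2\le\rho^2$. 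Rearranging gives the per-step estimate
\[
\E\big[f(x_t)-f(x^\ast)\mid x_t\big]
\le \frac{1}{2\eta}\Big(\norm{x_t-x^\ast}^2-\E\big[\norm{x_{t+1}-x^\ast}^2\mid x_t\big]\Big)+\frac{\eta\rho^2}{2}.
\]

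Then I would take full expectations, sum over $t=1,\dots,N$, telescope the quadratic terms, and drop the final non-negative $\E\norm{x_{N+1}-x^\ast}^2$. Initializing at $x_1=0$ so that $\norm{x_1-x^\ast}^2=\norm{x^\ast}^2\le B^2$ (since $x^\ast\in C$), this yields
\[
\frac1N\sum_{t=1}^N\E\big[f(x_t)-f(x^\ast)\big]\le \frac{B^2}{2\eta N}+\frac{\eta\rho^2}{2}.
\]
Finally, Jensen's inequality and convexity of $f$ give $\E\big[f(\tfrac1N\sum_t x_t)\big]\le\frac1N\sum_t\E[f(x_t)]$, and substituting the prescribed $\eta=B/(\rho\sqrt N)$ balances the two right-hand terms to exactly $B\rho/\sqrt N$. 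Since this is a standard result quoted from \cite{shalev2014understanding}, there is no genuine obstacle; the only points requiring care are the measure-theoretic bookkeeping when passing from conditional to full expectations (using the tower property, noting $x_{t+1}$ is $v_t$-measurable), and pinning down the tight constant — which relies both on optimizing the step size and on the origin initialization to get $\norm{x_1-x^\ast}\le B$ rather than the cruder $2B$ available for an arbitrary feasible start.
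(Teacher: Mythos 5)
Your proof is correct and is essentially the same argument as the one behind the result the paper is quoting: the paper states Theorem~\ref{thm:shalev} without proof, citing Theorem 14.8 and Lemma 14.9 of \cite{shalev2014understanding}, whose proof is exactly the projection--non-expansiveness step, conditional-expectation/telescoping regret bound, and online-to-batch conversion via Jensen that you give. Your closing caveat about initialization is also consistent with how the result is used here: the constant $B\rho/\sqrt{N}$ relies on $\norm{x_1-x^\ast}\leq B$, which holds because the paper's sample-based algorithms (Algorithms~\ref{alg:q_npg_sample} and~\ref{alg:npg_sample}) initialize $w_0=0$ inside the ball $\mathcal{W}$.
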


\end{document}